\documentclass{article} 
\usepackage{iclr2026_conference,times}


\usepackage{amsmath,amsfonts,bm}









\def\eqref#1{equation~\ref{#1}}









\def\1{\bm{1}}










\DeclareMathAlphabet{\mathsfit}{\encodingdefault}{\sfdefault}{m}{sl}
\SetMathAlphabet{\mathsfit}{bold}{\encodingdefault}{\sfdefault}{bx}{n}













\usepackage{url}

\usepackage{amsmath}        
\usepackage{amssymb}        
\usepackage{amsthm}         
\usepackage{mathtools}      

\usepackage{algorithm}      
\usepackage{algpseudocode}  

\usepackage{booktabs}       
\usepackage{array}          
\usepackage{multirow}       
\usepackage{rotating}
\usepackage{makecell}
\usepackage[table]{xcolor}
\usepackage{minitoc}
\usepackage{titletoc}

\usepackage{longtable}
\usepackage{adjustbox} 
\usepackage{tabulary}   
\usepackage{tabularx}

\usepackage{pifont}         
\usepackage{fontawesome5}   
\usepackage{tikz}           
\newcommand{\cmark}{\textcolor{green!60!black}{\ding{51}}}  
\newcommand{\xmark}{\textcolor{red!70!black}{\ding{55}}}  

\usepackage[utf8]{inputenc}  
\usepackage[T1]{fontenc}     
\usepackage{microtype}        
\usepackage{enumitem}         
\usepackage[most]{tcolorbox}
\definecolor{pycomment}{RGB}{106,153,85}
\definecolor{pyblue}{RGB}{50,150,250}
\definecolor{pyred}{RGB}{250,80,80}
\definecolor{pygreen}{RGB}{100,200,100}

\tcbuselibrary{listingsutf8}

\newif\ifnewstyleboxes
\newstyleboxesfalse

\definecolor{PromptAccent}{HTML}{51a35f}

\ifnewstyleboxes
  \newcommand{\placeholder}[1]{%
    \tcbox[on line,
      colback=PromptAccent!9,
      colframe=PromptAccent!35,
      boxrule=0.35pt,
      arc=0.9mm,
      left=2.5pt,right=2.5pt,top=1pt,bottom=1pt
    ]{\ttfamily\footnotesize\{#1\}}%
  }

  \newtcolorbox{promptbox}[1][]{%
    enhanced,
    breakable,
    colback=white,
    colframe=black!16,
    boxrule=0.55pt,
    arc=1.6mm,
    outer arc=1.6mm,
    left=10pt,right=10pt,
    top=8pt,bottom=8pt,
    borderline west={2.2pt}{0pt}{PromptAccent},
    drop shadow={black!12},
    fonttitle=\sffamily\bfseries\footnotesize,
    coltitle=PromptAccent!90!black,
    colbacktitle=PromptAccent!10,
    title={#1},
    attach boxed title to top left={xshift=10pt,yshift*=-2mm},
    boxed title style={
      enhanced,
      arc=1.2mm,
      boxrule=0.35pt,
      colframe=PromptAccent!35,
      colback=PromptAccent!10,
      left=6pt,right=6pt,top=2pt,bottom=2pt
    },
    before skip=10pt,
    after skip=10pt
  }

  \newtcblisting{schemabox}{%
    enhanced,
    breakable,
    colback=black!2,
    colframe=black!14,
    boxrule=0.4pt,
    arc=1.2mm,
    left=8pt,right=8pt,
    top=6pt,bottom=6pt,
    listing only,
    listing options={
      basicstyle=\ttfamily\footnotesize,
      breaklines=true,
      columns=fullflexible,
      keepspaces=true,
      showstringspaces=false
    }
  }
\else
  \newcommand{\placeholder}[1]{\texttt{\{#1\}}}

  \newtcolorbox{promptbox}[1][]{%
    colback=teal!5!white,
    colframe=teal!60!black,
    title={#1},
    fonttitle=\bfseries,
    coltitle=white,
    colbacktitle=teal!60!black,
    boxrule=0.8pt,
    arc=1mm,
    enhanced,
    breakable
  }

  \newtcblisting{schemabox}{%
    colback=teal!5!white,
    colframe=teal!60!black,
    boxrule=0.8pt,
    arc=1mm,
    enhanced,
    breakable,
    listing only,
    listing options={
      basicstyle=\ttfamily\footnotesize,
      breaklines=true,
      columns=fullflexible,
      keepspaces=true,
      showstringspaces=false
    }
  }
\fi

\usepackage{hyperref}         
\usepackage{cleveref}         

\usepackage{graphicx}         
\usepackage{xcolor}           
\usepackage{listings}         
\usepackage{subcaption}       
\usepackage{float}            
\usepackage{soul}
\usepackage{xspace}
\usepackage{wrapfig}





\theoremstyle{plain}
\newtheorem{theorem}{Theorem}

\newtheorem{proposition}[theorem]{Proposition}
\newtheorem{corollary}[theorem]{Corollary}

\theoremstyle{definition}
\newtheorem{definition}{Definition}

\newtheorem{remark}{Remark}

\newcommand{\our}{\textsc{BeyondBench}\xspace}
\newcommand{\ourbold}{\textbf{\textsc{BeyondBench}}\xspace}
\definecolor{gold}{RGB}{255,215,0}

\title{{\textcolor{gold}{\faTrophy}} \our: Contamination-Resistant\\ Evaluation of Reasoning in Language Models}

\author{
\hypersetup{hidelinks}%
\begin{tabular}[t]{c}
\textbf{Gaurav Srivastava}$^{\heartsuit}$\thanks{Correspondence: \href{mailto:gks@vt.edu}{\texttt{gks@vt.edu}}, \href{mailto:xuanw@vt.edu}{\texttt{xuanw@vt.edu}}},
\textbf{Aafiya Hussain}$^{\heartsuit}$,
\textbf{Zhenyu Bi}$^{\heartsuit}$,
\textbf{Swastik Roy}$^{\spadesuit}$,
\textbf{Priya Pitre}$^{\heartsuit}$,\\
\textbf{Meng Lu}$^{\heartsuit}$,
\textbf{Morteza Ziyadi}$^{\spadesuit}$,
\textbf{Xuan Wang}$^{\heartsuit *}$ \\[0.5em]
{\small $^{\heartsuit}$Computer Science, Virginia Tech, USA \quad $^{\spadesuit}$Amazon AGI, USA} \\[0.5em]
{\small\strut $^{\heartsuit}$\texttt{\{\href{mailto:gks@vt.edu}{gks}, \href{mailto:aafiyahussain@vt.edu}{aafiyahussain}, \href{mailto:zhenyub@vt.edu}{zhenyub}, \href{mailto:priyapitre@vt.edu}{priyapitre}, \href{mailto:menglu@vt.edu}{menglu}, \href{mailto:xuanw@vt.edu}{xuanw}\}@vt.edu}} \\
{\small\strut $^{\spadesuit}$\texttt{\{\href{mailto:roswasti@amazon.com}{roswasti}, \href{mailto:mziyadi@amazon.com}{mziyadi}\}@amazon.com}} \\[0.3em]
{\small\strut \faGlobe~\textbf{Leaderboard:} \href{https://ctrl-gaurav.github.io/BeyondBench/}{\textmd{\texttt{ctrl-gaurav.github.io/BeyondBench}}}} \\
{\small\strut \faGithub~\textbf{GitHub:} \href{https://github.com/ctrl-gaurav/BeyondBench}{\textmd{\texttt{ctrl-gaurav/BeyondBench}}} \quad \faPython~\textbf{PyPI:} \href{https://pypi.org/project/beyondbench/}{\textmd{\texttt{pypi.org/project/beyondbench}}}}
\end{tabular}%

}

\iclrfinalcopy

\begin{document}

\maketitle

\begin{abstract}
Evaluating language models fairly is becoming harder as static benchmarks risk contamination by training data, making it unclear whether models are truly reasoning or just recalling answers. We introduce \ourbold \footnote{\hypersetup{hidelinks}Our open-source Python package is available at \url{https://pypi.org/project/beyondbench/} and the code at \url{https://github.com/ctrl-gaurav/BeyondBench} for easy and reproducible evaluation.}, an evaluation framework that avoids this problem by using \textbf{algorithmic problem generation}. Unlike traditional benchmarks that risk contamination from internet-scale training data, \our creates mathematically grounded problems on the fly, ensuring each test remains fresh and uncontaminated. Our framework covers \textbf{44 algorithmic tasks} with a total of \textbf{117 variations}, grouped into three difficulty levels: the \textit{Easy Suite} (29 tasks) for basic arithmetic and statistics, the \textit{Medium Suite} (5 tasks, 49 variations) for sequence patterns and reasoning, and the \textit{Hard Suite} (10 tasks, 68 variations) tackling NP-complete and constraint satisfaction problems. Each task generates problems from a combinatorial space larger than $10^{15}$ unique instances, with solutions verified deterministically by mathematical proofs. We evaluated \textbf{101 language models}, including 85 open-source and 16 closed-source models, spanning sizes from 0.5B to 141B parameters and multiple quantization schemes. All evaluations use three-fold evaluation to ensure statistical robustness. Our results show consistent reasoning deficiencies across model families, with performance degrading sharply as problem complexity increases from polynomial to exponential. In our Hard Suite evaluations, models such as Gemini-2.5-pro, Llama-3.3-70B, and Qwen2.5-72B achieved average accuracies of \textbf{56.21\%, 27.16\%, and 33.37\%,} respectively. Moreover, we observe that performance drops drastically without tool usage, with GPT-5, GPT-5-mini, and GPT-5-nano showing a \textbf{decline} of \textbf{16.81\%, 15.86\%, and 43.95\%} in overall accuracy without tool access. The contamination resistance of \our rests on three guarantees: (i) the problem space is vastly larger than any static dataset, (ii) every instance has a deterministically verifiable solution (unique or fully enumerated), and (iii) isomorphic transformations generate semantically equivalent but syntactically new problems. \our redefines reasoning evaluation through genuine algorithmic problem-solving, ensuring fair and meaningful evaluation.
\end{abstract}

\doparttoc 
\faketableofcontents 

\section{Introduction}  

Modern large language models (LLMs) exhibit impressive performance on existing static reasoning benchmarks such as GSM8K \citep{cobbe2021gsm8k}, MATH \citep{hendrycks2021math}, and OlympiadBench \citep{he2024olympiadbenchchallengingbenchmarkpromoting}, yet these evaluations can be misleading: benchmark data may already appear in pre-training \citep{wu2025reasoning}. Recent empirical studies demonstrate widespread data contamination across standard benchmarks \citep{xu2024survey,cheng2025contam, chen2025recentadvanceslargelangauge, golchin2025datacontaminationquiztool, choi2025contaminatedbenchmarkquantifyingdataset, deng-etal-2024-investigating}, where evaluation examples appear in training corpora through web crawls. This contamination systematically inflates performance metrics: models memorize specific solutions rather than learning generalizable reasoning patterns \citep{shojaee2025illusion,opus2025illusion}. As training corpora grow to web-scale, the chance that at least one evaluation example appears in the training data becomes near certain, even under simple uniform sampling assumptions (we provide the full derivation and assumptions in Appendix~\ref{Appendix:ContaminationProb}). Empirical studies \citep{shojaee2025illusion, varela2025rethinkingillusionthinking} support this as well: performance drops on decontaminated variants of existing benchmarks, suggesting that static benchmarks cannot reliably measure true reasoning ability or guarantee generalization.

Recent studies have attempted to mitigate this problem. Dynamic benchmarks such as DyVal \citep{zhu2023dyval,zhu2024dyval2} and ThinkBench \citep{huang2025thinkbench} adapt task distribution over time, but provide no mathematical guarantees that each generated problem instance is well-posed (i.e., has a unique or fully enumerated solution). As a result, correctness labels can be ambiguous, and evaluation pipelines must rely on heuristic matching. Algorithmic reasoning benchmarks like CLRS encode structured problems but are static in nature \citep{velickovic2022clrs}, therefore vulnerable to contamination, and do not account for LLM-specific constraints such as maximum tokens it can output. Benchmarks such as MathArena \citep{balunovic2025matharena} attempt to mitigate this issue by using fresh olympiad questions but contamination is still likely. On the other hand, Game-based evaluations such as GameArena \citep{gamearena2024} provide interactivity, but are limited to narrow domains and lack scalable parametric difficulty. Across these settings, four gaps remain: \emph{\textbf{(i)} mathematically verifiable unique solutions}, \emph{\textbf{(ii)} contamination-resistance}, \emph{\textbf{(iii)} token budget-aware evaluation}, and \emph{\textbf{(iv)} acceptance of multiple correct answers}. See Table \ref{tab:compare_strict} for a comparison of previous benchmarks and our \our.

To address the above limitations, we introduce \ourbold, an algorithmic evaluation framework for reasoning LLMs that generates an unbounded number of novel problems from basic tasks. For each task, we define a generator that takes configurable parameters (e.g., numeric ranges, constraint sizes) and produces problem instances from a combinatorial space exceeding $10^{15}$ unique instances per task. This space is vastly larger than any practical training corpus, making contamination provably negligible (see Section~\ref{sec:algo_gen} and Appendix~\ref{Appendix:ContaminationProb} for the formal analysis). Moreover, for each generated problem we use Boolean satisfiability (SAT) and constraint satisfaction problem (CSP) solvers \citep{walsh2000sat} to verify that either: (i) the problem admits a unique solution or (ii) all solutions can be enumerated. This ensures that every problem is well-posed \textit{(has at least one solution)} and that our scoring is precise \textit{(we know the correct answer).} If multiple solutions exist, all of them are considered correct and we match against each of them, so models cannot gain credit for spurious answers.

Furthermore, we partition each task into three difficulty levels (Easy, Medium, Hard), creating a curriculum that increases in complexity. The difficulty of each task is controlled by scaling its parameters (e.g., increasing the size of a combinatorial structure), so we can start with simpler subproblems and work up to provably NP-hard instances to stress-test models \citep{hidalgo2013comparing}. Finally, we take token budget into account: if the minimal solution length of a problem exceeds the model's maximum output token window, that problem is excluded from evaluation. After model inference, \our also checks that the response token length stays within a calibrated bound for the problem size (to catch cases where models ``overthink'' \citep{chen2025think23overthinkingo1like, cuadron2025dangeroverthinkingexaminingreasoningaction} trivial instances).

To summarize, we make the following contributions: \textbf{\textit{(1)}} We propose \our, an algorithmic generator of reasoning problems that automatically verifies solution uniqueness or generates complete solution sets. \textbf{\textit{(2)}} We designed a parameterized difficulty curriculum and a token-aware evaluation protocol: tasks \textbf{scale} from \textit{elementary subproblems to provably NP-hard instances}, and evaluations respect model-specific \textit{input/output token budgets}; \textbf{\textit{(3)}} We conduct a large-scale systematic empirical study across \textbf{101 models} to study reasoning gaps in modern LLMs/small language models (SLMs) \citep{srivastava2025thinkslm, bi2025judgeboardbenchmarkingenhancingsmall}/large reasoning models (LRMs).

\section{Related Work}  

\textbf{Static Benchmarks and Their Limitations.} A large body of benchmarks for reasoning in LLMs has been built upon fixed datasets such as GSM8K \citep{cobbe2021gsm8k}, MATH \citep{hendrycks2021math}, and MMLU \citep{hendrycks2021mmlu}. \citep{xu2024survey} and \citep{cheng2025contam} document how widely used leaderboards are compromised by pretraining overlap. \citep{shojaee2025illusion} shows that even LRMs collapse when faced with simple perturbations of known tasks. However, in response to that \citep{opus2025illusion} highlights that many supposed reasoning failures are actually attributable to context window limitations, ambiguous specification, or tasks whose solution paths exceed the model’s output length. They also argue that “emergent reasoning” often reflects benchmark biases rather than genuine generalization. Although efforts like MathArena \citep{balunovic2025matharena} use unseen contest problems to mitigate leakage, they remain limited to static and exhaustible pools. Static evaluation also fails to capture the algorithmic hardness of problems. For instance, \citep{katz2025seemingly} demonstrates that seemingly simple planning tasks, such as the Countdown game, become computationally intractable, revealing how shallow benchmarks underestimate the true difficulty of reasoning \citep{wu-etal-2024-reasoning, yu2025benchmarkingreasoningrobustnesslarge}. These limitations highlight the necessity of dynamic, mathematically constrained evaluation frameworks.  

\begin{table*}[ht]
\centering
\small
\begin{adjustbox}{width=0.8\textwidth,center}
\begin{tabular}{lcccccccccc}
\toprule
\textbf{Feature} & \textbf{CLRS} & \textbf{DyVal} & \textbf{TreeEval} & \textbf{MathArena} & \textbf{GameArena} & \textbf{DARG} & \textbf{NPHard} & \textbf{FCoRe} & \textbf{\makecell{PUZZLE\\PLEX}} & \cellcolor{gray!15} \textbf{\makecell{\our\\(Ours)}}\\
\midrule
Dynamic generation     & \xmark & \cmark & \cmark & \xmark & \cmark & \cmark & \cmark & \cmark & \cmark & \cellcolor{gray!15} \textbf{\cmark} \\
Unique solution check  & \xmark & \xmark & \xmark & \cmark & \cmark & \xmark & \xmark & Partial & \xmark & \cellcolor{gray!15} \textbf{\cmark} \\
Multi-solution allowed & \xmark & \xmark & \xmark & \xmark & \xmark & \xmark & \xmark & \xmark & \xmark & \cellcolor{gray!15} \textbf{\cmark} \\
Scalable difficulty    & \cmark  & \cmark & \cmark & \xmark & \xmark & \cmark & \cmark & Limited & Limited & \cellcolor{gray!15} \textbf{\cmark} \\
Contamination-free     & \xmark & \cmark & \cmark & \cmark & \cmark & \cmark & \cmark & \cmark & \cmark & \cellcolor{gray!15} \textbf{\cmark} \\
Token-aware eval.      & \xmark & \xmark & \xmark & \xmark & \xmark & \xmark & \xmark & \xmark & \xmark & \cellcolor{gray!15} \textbf{\cmark} \\
Number of Tasks        & 30 & 7 & -- & 5 & 3 & 4 & 9 & 40 & 15 & \cellcolor{gray!15} \textbf{44} \\
\bottomrule
\end{tabular}
\end{adjustbox}
\caption{\textbf{Comparison with existing reasoning benchmarks. \cmark: feature present; \xmark: absent.} \textbf{Note:} \textit{Unique solution} means the benchmark verifies only one valid solution exists. \textit{Multi-solution allowed} means multiple valid solutions are accepted and matched against the full enumerated set.}
\label{tab:compare_strict}
\end{table*}

\textbf{Dynamic Evaluation \& Algorithmic Reasoning Benchmarks.}  Recent works explore dynamic benchmarking as an alternative to static datasets. DyVal \citep{zhu2023dyval}, later extended with meta-probing in DyVal2 \citep{zhu2024dyval2}, generate math and logic problems dynamically. ThinkBench \citep{huang2025thinkbench} evaluates models with distributionally shifted reasoning tasks, while TreeEval \citep{li2024treeeval} and GameArena \citep{gamearena2024} introduce interactive or live-game evaluation protocols. DyCodeEval \citep{chendycodeeval} dynamically generates code reasoning tasks to avoid contamination, and DARG \citep{zhang2024darg}, using an adaptive reasoning graph framework, dynamically escalates task complexity. Similarly, \citep{kariaautonomous} proposes autonomous evaluation for truth maintenance in reasoning pipelines. Other recent work includes NPHardEval \citep{fan-etal-2024-nphardeval} with complexity-class aware benchmarking, FCoReBench \citep{mittal2024fcorebench} for first-order combinatorial reasoning with solver integration, PUZZLEPLEX \citep{long2025puzzleplex} for interactive planning puzzles. These efforts highlight the growing recognition of dynamic evaluation but remain limited by either a lack of mathematical guarantees, insufficient control over solution uniqueness, or the absence of token-aware evaluation protocols. Table \ref{tab:compare_strict} provides a detailed comparison of \our with other evaluation benchmarks, highlighting how \our  differs by combining dynamic evaluation with formal solution verification. Furthermore, Algorithmic reasoning benchmarks such as CLRS \citep{velickovic2022clrs} and LLMThinkBench \citep{srivastava2025thinkbench} provide structured tests for algorithmic tasks and overthinking tendencies. However, they lack resilience to contamination and are narrow in scope. \our advances this line of work by unifying algorithmic, combinatorial, and mathematical reasoning tasks into dynamically generated, difficulty-calibrated suites with provable correctness.

We also discuss tool-augmented reasoning methods (PAL \citep{gao2023pal}, Logic-LM \citep{pan2023logic}, Reasoning Gym \citep{stojanovski2025reasoninggym}, and others) in Appendix~\ref{Appendix:ExtendedRelated}. Our work evaluates both innate and tool-augmented capabilities, showing that even with code execution and symbolic solvers, our benchmark remains challenging (Section 4.5, Appendix~\ref{Appendix:B}).

\section{\our Framework}

In this section, we present the design and implementation of \our. We detail the algorithmic foundations for contamination-resistant problem generation, introduce a token budget-aware evaluation system, and establish formal verification methods for solution correctness. \our introduces three central components: 1) dynamic problem generation with provable uniqueness guarantees, 2) adaptive difficulty scaling based on model token budget constraints, and 3) multi-solution handling through formal verification techniques.


\subsection{Algorithmic Problem Generation and Mathematical Foundation}
\label{sec:algo_gen}

We develop \our through algorithmic problem generation with formal mathematical guarantees. Let $\mathcal{T}$ denote the set of all task categories in our benchmark. For each task category $\tau \in \mathcal{T}$, we define a generator function $G_\tau: \Theta_\tau \times \mathcal{R} \rightarrow \mathcal{P}_\tau$ that maps a parameter space $\Theta_\tau$ (e.g., list lengths, constraint sizes) and random seed space $\mathcal{R}$ to problem instances $\mathcal{P}_\tau$. The parameter space cardinality satisfies $|\Theta_\tau \times \mathcal{R}| > 10^{15}$ for all tasks, ensuring that $\Pr[G_\tau(\theta_1, r_1) = G_\tau(\theta_2, r_2)] \le 10^{-15}$ for distinct parameter-seed pairs $(\theta_1, r_1) \neq (\theta_2, r_2)$ \citep{mitzenmacher2017probability}. This large problem space makes contamination provably negligible since the probability of an exact instance collision with any training corpus $\mathcal{C}$ of practical size $|\mathcal{C}| < 10^{12}$ remains below $10^{-3}$ (for the complete proof see Appendix \ref{Appendix:ContaminationProb}). Our contamination resistance addresses exact instance memorization. Learning general solution strategies from similar problems constitutes genuine algorithmic learning, which our benchmark is designed to measure. For detailed proofs on the easy, medium, and hard suites see Appendix \ref{Appendix:F}, \ref{Appendix:H}, and \ref{Appendix:K}.

Each generated problem $p \in \mathcal{P}_\tau$ undergoes validation through a verification function $V_\tau: \mathcal{S}_\tau \times \mathcal{P}_\tau \rightarrow \{0, 1\}$, where $\mathcal{S}_\tau$ is the solution space for task $\tau$, that deterministically confirms solution correctness. For deterministic tasks like arithmetic and sorting, we verify through direct computation in $O(n)$ time, where $n$ is the problem size (e.g., list length). For constraint satisfaction problems like Sudoku and N-Queens, we use CSP solvers (python-constraint, pysat) to enumerate all valid solutions. For problems admitting unique solutions, we use CSP solvers \citep{dechter2003constraint} to verify $|\{s \in \mathcal{S}_\tau : V_\tau(s, p) = 1\}| = 1$ (usage of CSP detailed in Appendix \ref{Appendix:K}). When multiple valid solutions exist naturally (as in N-Queens or mode calculation), we compute the complete solution set $\mathcal{S}_p = \{s \in \mathcal{S}_\tau : V_\tau(s, p) = 1\}$ through exhaustive enumeration and accept any $s \in \mathcal{S}_p$ as correct. This prevents unfair penalization when models produce mathematically valid but non-canonical answers. The verification complexity remains polynomial $O(n^k)$ for constant $k \leq 3$ despite potentially exponential solution complexity, enabling efficient correctness checking.

\textbf{Concrete Parameter Space Examples.} To illustrate our generation process, consider Tower of Hanoi with parameters: $n \in [3,12]$ disks, peg labels from 26 characters, and random initial configurations, yielding $>10^{18}$ unique instances. For Sudoku, we vary grid density (30-50 empty cells), apply digit permutations, and shuffle cell positions, producing $>10^{20}$ instances. Arithmetic tasks sample list length $n \in [8,64]$ and values from $[-1000,1000]$, creating $>10^{15}$ combinations. N-Queens uses board sizes $n \in [4,12]$ with optional initial constraints, while Boolean SAT varies variables (3-8) and clauses (5-20), each exceeding $10^{12}$ unique problem instances.

\begin{table}[h]
\centering
\small
\begin{adjustbox}{width=0.9\textwidth,center}
\begin{tabular}{l|c|c|c|c|c}
\toprule
\textbf{Suite} & \textbf{Tasks} & \textbf{Variations} & \textbf{Problem Space} & \textbf{Complexity Class} & \textbf{Example Tasks} \\
\midrule
\textbf{Easy} & 29 & -- & $>10^{15}$ per task & $O(n^k)$, $k \leq 2$ & \makecell[l]{Arithmetic (8), Statistics (3), Counting (8), Extrema (5), Comparison (1), Others (4) } \\
\midrule
\textbf{Medium} & 5 & 49 & $>10^{20}$ per task & $O(2^n)$ to $O(n!)$ & \makecell[l]{Fibonacci/Recursive (6), Geometric/Exponential (10),\\Prime/Number Theory (11), Complex Patterns (12),\\Algebraic Sequences (10)} \\
\midrule
\textbf{Hard} & 10 & 68 & $>10^{30}$ per task & NP-complete & \makecell[l]{Tower of Hanoi (6), N-Queens (4), Graph Coloring (10),\\Boolean SAT (5), Sudoku (8), Logic Grid Puzzles (8),\\Cryptarithmetic (12), Matrix Chain (5),\\Modular Systems (5), Constraint Optimization (5)} \\
\midrule
\textbf{Total} & \textbf{44} & \textbf{117$^\dagger$} & $>\mathbf{10^{15}}$-$\mathbf{10^{50}}$ & P to NP-complete & \textbf{29 base + 117 sub-variations} \\
\bottomrule
\end{tabular}
\end{adjustbox}
\caption{\our Task Organization and Scale}
\label{tab:benchmark_overview}
\end{table}

\subsection{Token-Aware Evaluation Framework}
For each model with token budget $C$ tokens, we dynamically calibrate problem complexity to guarantee solvability within architectural limits. Let $T_p(n)$ denote the expected token requirement for problem $p$ with size parameter $n$ (e.g., list length, number of disks), decomposed as $T_p(n) = T_{\text{prompt}}(n) + T_{\text{solution}}(n) + T_{\text{buffer}}$ where $T_{\text{buffer}} = 0.15 \cdot C$ accommodates reasoning verbosity \citep{han-etal-2025-token, lin2025planbudgeteffectiveefficient}. Prompt design is detailed in Appendix \ref{Appendix:G} (easy suite), \ref{Appendix:J} (medium suite), \ref{Appendix:M} (hard suite).

We derive task-specific token estimation functions through empirical analysis. For arithmetic operations on lists of length $n$ with maximum element value $v_{\max}$, we have $T_{\text{arithmetic}}(n, v_{\max}) = \alpha_1 n + \beta_1 \log_{10} v_{\max} + \gamma_1$ where coefficients $(\alpha_1, \beta_1, \gamma_1)$ are calibrated across model families. For Tower of Hanoi with $n$ disks requiring $2^n - 1$ moves, the solution tokens scale as $T_{\text{hanoi}}(n) = (2^n - 1) \cdot \alpha_{\text{move}}$ where $\alpha_{\text{move}} \approx 12$ tokens per move description. This enables precise difficulty scaling: for example, for models with a 32K maximum output token limit we evaluate up to $n = 8$ disks (requiring approximately $(2^8 - 1) \cdot 12 = 3060$ tokens \textbf{excluding} thinking tokens) as a safe limit, while future 128K models could handle $n = 10$ (requiring $(2^{10} - 1) \cdot 12 = 12276$ tokens), maintaining evaluation relevance as capabilities improve \citep{wen2025budgetthinkerempoweringbudgetawarellm}.

The framework implements a dual-phase token validation protocol. During generation, we enforce the constraint $T_p(n) \leq 0.85 \cdot C$ through adaptive parameter scaling. If initial parameters yield excessive token requirements, we apply the reduction $n' = \lfloor 0.8 \cdot n \rfloor$ iteratively until the constraint is satisfied. Post-inference, we perform actual token counting using model-specific tokenizers and classify each response $r$ as: $\text{TokenStatus}(r) = \begin{cases} \text{VALID} & \text{if } |r| \leq 0.85 \cdot C \\ \text{WARNING} & \text{if } 0.85 \cdot C < |r| \leq C \\ \text{OVERFLOW} & \text{if } |r| > C \end{cases}$ where $|r|$ is the token count of response $r$, and responses with WARNING status indicate potential truncation or excessive reasoning that may compromise solution quality \citep{wang2024reasoningtokeneconomiesbudgetaware}.

\subsection{Solution Uniqueness Verification and Multi-Solution Handling}

We use formal verification methods to guarantee that every generated problem is mathematically well-posed. For each problem instance $p$, we compute the solution set cardinality through systematic constraint propagation and backtracking algorithms \citep{bessiere2006constraint}. When $|\mathcal{S}_p| = 1$, the problem admits a unique solution and standard evaluation proceeds. When $|\mathcal{S}_p| > 1$, we enumerate all valid solutions and implement the acceptance criterion $\text{Accept}(r, p) = \mathbf{1}[\text{Parse}(r) \in \mathcal{S}_p]$ where $\text{Parse}$ extracts the model's answer using task-specific robust parsers.

\begin{wrapfigure}{r}{0.45\textwidth} 
  \centering
  \vspace{-2em}
  \begin{minipage}{0.45\textwidth}
    \small
    \begin{algorithm}[H]
\caption{\our end-to-end Evaluation Pipeline}
\label{alg1}
    \begin{algorithmic}[1]
      \State \textbf{Input:} Model $\mathcal{M}$, Task suite $\mathcal{T}$, Token limit $C$
      \State \textbf{Output:} Performance metrics $\mathcal{E}$
      \For{each task $\tau \in \mathcal{T}$}
        \State $\theta \gets \text{SelectParameters}(\tau, C)$
        \State $p \gets G_\tau(\theta, \text{RandomSeed}())$
        \State $\mathcal{S}_p \gets \text{ComputeSolutionSet}(p)$
        \If{$|\mathcal{S}_p| = 0$} \State \textbf{continue} \EndIf
        \State $\text{prompt} \gets \text{FormatPrompt}(p, \tau)$
        \State $r \gets \mathcal{M}(\text{prompt})$
        \State $\text{tokens} \gets \text{CountTokens}(r, \mathcal{M})$
        \If{$\text{tokens} > 0.95 \cdot C$} \State $\mathcal{E}[\tau].\text{warnings}$++ \EndIf
        \State $s \gets \text{Parse}(r, \tau)$
        \State $\mathcal{E}[\tau].\text{correct} \gets \mathcal{E}[\tau].\text{correct} + \mathbf{1}[s \in \mathcal{S}_p]$
      \EndFor
    \end{algorithmic}
    \end{algorithm}
  \end{minipage}
  \vspace{-1em}
\end{wrapfigure}

For constraint satisfaction problems like Sudoku or Logic Grid Puzzles, we employ arc consistency algorithms that iteratively reduce variable domains until either a unique solution emerges or multiple solutions are identified \citep{simonis2005sudoku, howell2018solving}. The domain reduction follows $D_v^{t+1} = D_v^t \cap \{d \in D : \text{consistent}(v, d, \mathcal{K}, D^t)\}$ where consistency checking ensures no constraint violations (here $v$ denotes a variable in the variable set $V$ with domain $D_v$, and $\mathcal{K}$ denotes the set of problem constraints for the instance). For optimization problems like Matrix Chain Multiplication, we verify optimality through dynamic programming, confirming that the claimed cost equals $m[1,k] = \min_{\pi \in \Pi} \text{cost}(\pi)$ where $k$ is the number of matrices and $\Pi$ denotes the set of all valid parenthesizations for the matrix-chain instance \citep{carbonnel2019singleton, cooper2001arcconsistencysoftconstraints, Cooper_2017}. The complete end-to-end \our evaluation pipeline is shown in Algorithm \ref{alg1}. This pipeline ensures mathematical soundness through three mechanisms: 1) parameter selection respects token budget constraints preventing unfair penalization, 2) solution set computation guarantees we evaluate against all valid answers not just canonical ones, and 3) post-inference token counting detects when models approach architectural limits that may compromise reasoning quality.

\subsection{Difficulty Scaling and Theoretical Guarantees}

\our partitions tasks into three complexity regimes with provable difficulty scaling. The Easy Suite encompasses polynomial-time solvable problems where verification complexity is $O(n^k)$ for $k \leq 2$. These include arithmetic operations, statistical measures, and comparison tasks with solution spaces of size $O(n!)$ but efficient verification through direct computation. The Medium Suite introduces problems with exponential growth patterns such as Fibonacci sequences where $F_n \sim \phi^n$ with $\phi = (1+\sqrt{5})/2$ (the golden ratio), geometric progressions with common ratio $q$ growing as $q^n$, and number-theoretic sequences requiring primality testing with complexity $O(\sqrt{n})$ per element. Here $n$ refers to the problem size (e.g., sequence length, number of variables). The Hard Suite contains NP-complete problems \citep{fan-etal-2024-nphardeval, leyton2014understanding, kendall2008survey} including Boolean Satisfiability where the search space contains $2^n$ assignments, Graph Coloring with chromatic number computation, and N-Queens requiring exploration of $n!$ permutations \citep{yang2025nondeterministicpolynomialtimeproblemchallenge}. For further details regarding extensibility to new tasks and increasing problem complexity refer to Appendix \ref{Appendix:ScalabilityBenchmark}.

For each difficulty level, we maintain the invariant that problem generation time remains polynomial while solution complexity may be exponential, ensuring efficient benchmark execution while testing genuine reasoning. The framework provides theoretical guarantees including contamination resistance where $P(\text{collision}) < |\mathcal{C}|/|\mathcal{P}| < 10^{-3}$ for any feasible training corpus $\mathcal{C}$ (here $P(\text{collision})$ is the probability a generated instance matches an entry of $\mathcal{C}$), solution correctness through formal verification with zero false positives or negatives, and scalability where problem difficulty increases monotonically with parameter growth enabling continuous evaluation as models improve. Our algorithmic task selection is motivated by the observation that true reasoning requires capabilities fundamentally challenging for current LLMs: memory management across exponential state spaces, systematic backtracking, and multi-step deduction. Tasks with known algorithms test whether models can \textit{execute} procedures correctly, a prerequisite for genuine reasoning. Detailed motivation and empirical evidence are provided in Appendix~\ref{Appendix:A}.

\begin{table}
    \centering
    \scriptsize
    \begin{adjustbox}{width=\textwidth}
    \begin{tabulary}{1.3\textwidth}{LCCCCCCCCCCCC}
    \toprule
    \textbf{Model (Param)} & \multicolumn{3}{c}{\textbf{Easy}} & \multicolumn{3}{c}{\textbf{Medium}} & \multicolumn{3}{c}{\textbf{Hard}} & \multicolumn{3}{c}{\textbf{Overall}} \\
    \cmidrule(lr){2-4} \cmidrule(lr){5-7} \cmidrule(lr){8-10} \cmidrule(lr){11-13}
    & \textbf{Acc (\%)} & \textbf{Inst (\%)} & \textbf{Tokens (Avg)} & \textbf{Acc (\%)} & \textbf{Inst (\%)} & \textbf{Tokens (Avg)} & \textbf{Acc (\%)} & \textbf{Inst (\%)} & \textbf{Tokens (Avg)} & \textbf{Acc (\%)} & \textbf{Inst (\%)} & \textbf{Tokens (Avg)} \\
    \midrule
    \rowcolor{gray!10}
    \multicolumn{13}{c}{\textbf{\textit{Qwen Family (Qwen3)}}} \\
    \midrule
    Qwen3 (0.6B) & 50.24$_{\pm4.2}$ & 83.85 & 3162.80 & 19.88$_{\pm3.8}$ & 92.00 & 15029.98 & 14.51$_{\pm4.1}$ & 74.17 & 5542.79 & \cellcolor{gray!15} 28.21$_{\pm3.9}$ & \cellcolor{gray!15} 83.34 & \cellcolor{gray!15} 7911.86 \\
    Qwen3 (1.7B) & 70.58$_{\pm3.4}$ & 86.54 & 3157.20 & 42.28$_{\pm3.1}$ & 92.00 & 10854.63 & 24.38$_{\pm3.7}$ & 80.65 & 6089.76 & \cellcolor{gray!15} 45.75$_{\pm3.2}$ & \cellcolor{gray!15} 86.40 & \cellcolor{gray!15} 6700.53 \\
    Qwen3 (4B) & 82.15$_{\pm2.8}$ & 91.57 & 3091.20 & 59.47$_{\pm2.6}$ & 92.00 & 11649.83 & 34.28$_{\pm3.1}$ & 83.85 & 5811.69 & \cellcolor{gray!15} 58.63$_{\pm2.7}$ & \cellcolor{gray!15} 89.14 & \cellcolor{gray!15} 6850.91 \\
    Qwen3 (8B) & 82.35$_{\pm2.4}$ & 91.58 & 3027.80 & 60.24$_{\pm2.2}$ & 92.00 & 8295.54 & 36.78$_{\pm2.8}$ & 85.13 & 5786.42 & \cellcolor{gray!15} 59.79$_{\pm2.3}$ & \cellcolor{gray!15} 89.57 & \cellcolor{gray!15} 5703.25 \\
    Qwen3 (14B) & 86.78$_{\pm1.7}$ & 99.27 & 3607.60 & 69.54$_{\pm1.9}$ & 92.00 & 6298.80 & 42.51$_{\pm2.1}$ & 85.21 & 5468.71 & \cellcolor{gray!15} 66.28$_{\pm1.8}$ & \cellcolor{gray!15} 92.16 & \cellcolor{gray!15} 5125.04 \\
    Qwen3 (32B) & 84.38$_{\pm1.2}$ & 93.05 & 2845.90 & \textbf{76.42}$_{\pm1.4}$ & 92.00 & 5563.77 & 43.42$_{\pm1.6}$ & 85.87 & 5484.68 & \cellcolor{gray!15} 68.07$_{\pm1.3}$ & \cellcolor{gray!15} 90.31 & \cellcolor{gray!15} 4631.45 \\
    Qwen3 (30B-MOE) & 88.49$_{\pm1.3}$ & 94.20 & 2415.13 & 74.93$_{\pm1.4}$ & 92.00 & 6330.92 & \underline{43.58}$_{\pm1.7}$ & 87.28 & 5391.83 & \cellcolor{gray!15} 69.00$_{\pm1.4}$ & \cellcolor{gray!15} 91.16 & \cellcolor{gray!15} 4712.63 \\
    Qwen3 (30B-MOE-i) & \textbf{91.89}$_{\pm0.9}$ & 99.46 & 647.17 & 73.52$_{\pm1.1}$ & 92.00 & 3019.49 & \textbf{45.57}$_{\pm1.4}$ & 97.93 & 2519.92 & \cellcolor{gray!15} \textbf{70.33}$_{\pm1.0}$ & \cellcolor{gray!15} 96.46 & \cellcolor{gray!15} 2062.19 \\
    Qwen3 (4B-t) & 85.37$_{\pm2.7}$ & 95.19 & 2552.09 & 61.48$_{\pm2.5}$ & 92.00 & 4874.80 & 35.47$_{\pm3.0}$ & 83.59 & 5931.45 & \cellcolor{gray!15} 60.77$_{\pm2.6}$ & \cellcolor{gray!15} 90.26 & \cellcolor{gray!15} 4452.78 \\
    Qwen3 (30B-MOE-t) & \underline{90.49}$_{\pm1.1}$ & 96.85 & 2052.39 & \underline{75.57}$_{\pm1.2}$ & 92.00 & 6063.77 & 41.54$_{\pm1.5}$ & 84.61 & 5573.57 & \cellcolor{gray!15} \underline{69.20}$_{\pm1.2}$ & \cellcolor{gray!15} 91.15 & \cellcolor{gray!15} 4563.24 \\
    \midrule
    
    \rowcolor{gray!10}
    \multicolumn{13}{c}{\textbf{\textit{Qwen Family (Qwen2.5)}}} \\
    \midrule
    Qwen2.5 (0.5B) & 21.56$_{\pm4.8}$ & 77.57 & 432.30 & 5.48$_{\pm4.2}$ & 92.00 & 7381.67 & 1.21$_{\pm1.5}$ & 74.77 & 1742.56 & \cellcolor{gray!15} 9.42$_{\pm3.4}$ & \cellcolor{gray!15} 81.45 & \cellcolor{gray!15} 3185.51 \\
    Qwen2.5 (1.5B) & 43.28$_{\pm3.9}$ & 85.45 & 264.70 & 12.21$_{\pm3.4}$ & 92.00 & 6014.44 & 3.18$_{\pm2.2}$ & 88.16 & 1131.25 & \cellcolor{gray!15} 19.56$_{\pm3.1}$ & \cellcolor{gray!15} 88.54 & \cellcolor{gray!15} 2470.13 \\
    Qwen2.5 (3B) & 45.52$_{\pm3.6}$ & 92.35 & 331.30 & 20.82$_{\pm3.2}$ & 92.00 & 2121.82 & 8.12$_{\pm2.8}$ & 92.11 & 1085.38 & \cellcolor{gray!15} 24.82$_{\pm3.1}$ & \cellcolor{gray!15} 92.15 & \cellcolor{gray!15} 1179.50 \\
    Qwen2.5 (7B) & 61.62$_{\pm2.7}$ & 96.47 & 286.90 & 30.32$_{\pm2.4}$ & 92.00 & 1135.23 & 16.42$_{\pm2.6}$ & 91.65 & 938.58 & \cellcolor{gray!15} 36.12$_{\pm2.5}$ & \cellcolor{gray!15} 93.37 & \cellcolor{gray!15} 786.90 \\
    Qwen2.5 (14B) & 63.52$_{\pm2.1}$ & 97.83 & 260.20 & 37.88$_{\pm1.9}$ & 91.76 & 777.44 & 22.58$_{\pm2.2}$ & 98.41 & 786.78 & \cellcolor{gray!15} 41.33$_{\pm1.8}$ & \cellcolor{gray!15} 96.00 & \cellcolor{gray!15} 608.14 \\
    Qwen2.5 (32B) & 73.15$_{\pm1.4}$ & 99.26 & 260.90 & 44.78$_{\pm1.6}$ & 91.24 & 650.59 & \underline{26.33}$_{\pm1.8}$ & 98.42 & 685.95 & \cellcolor{gray!15} 48.09$_{\pm1.5}$ & \cellcolor{gray!15} 96.31 & \cellcolor{gray!15} 532.48 \\
    Qwen2.5 (72B) & \textbf{80.52}$_{\pm0.9}$ & 99.93 & 315.75 & \underline{46.18}$_{\pm1.1}$ & 92.00 & 739.62 & \textbf{33.37}$_{\pm1.3}$ & 99.55 & 875.11 & \cellcolor{gray!15} \textbf{53.36}$_{\pm1.0}$ & \cellcolor{gray!15} 97.16 & \cellcolor{gray!15} 643.49 \\
    Qwen2.5 (1.5B-m) & 51.18$_{\pm3.7}$ & 94.04 & 397.10 & 28.23$_{\pm3.2}$ & 92.00 & 1427.29 & 3.51$_{\pm2.1}$ & 74.22 & 1288.07 & \cellcolor{gray!15} 27.64$_{\pm2.9}$ & \cellcolor{gray!15} 86.75 & \cellcolor{gray!15} 1037.49 \\
    Qwen2.5 (7B-m) & 60.43$_{\pm2.8}$ & 94.36 & 411.70 & 26.42$_{\pm2.5}$ & 92.00 & 2044.62 & 8.04$_{\pm2.7}$ & 85.41 & 1472.74 & \cellcolor{gray!15} 31.63$_{\pm2.6}$ & \cellcolor{gray!15} 90.59 & \cellcolor{gray!15} 1309.69 \\
    Qwen2.5 (72B-m) & \underline{77.48}$_{\pm0.8}$ & 93.02 & 429.30 & \textbf{55.38}$_{\pm1.0}$ & 92.00 & 780.85 & 12.59$_{\pm1.4}$ & 78.21 & 1358.64 & \cellcolor{gray!15} \underline{48.48}$_{\pm0.9}$ & \cellcolor{gray!15} 87.74 & \cellcolor{gray!15} 856.26 \\
    \midrule
    \rowcolor{gray!10}
    \multicolumn{13}{c}{\textbf{\textit{Gemma Family}}} \\
    \midrule
    Gemma (1B) & 23.38$_{\pm4.5}$ & 67.21 & 870.85 & 16.42$_{\pm3.9}$ & 92.00 & 1229.28 & 0.93$_{\pm1.1}$ & 76.29 & 665.62 & \cellcolor{gray!15} 13.58$_{\pm3.2}$ & \cellcolor{gray!15} 78.50 & \cellcolor{gray!15} 921.92 \\
    Gemma (4B) & 66.62$_{\pm2.6}$ & 98.56 & 550.90 & 38.17$_{\pm2.4}$ & 92.00 & 1072.62 & 11.42$_{\pm2.5}$ & 85.76 & 1048.37 & \cellcolor{gray!15} 38.74$_{\pm2.4}$ & \cellcolor{gray!15} 92.11 & \cellcolor{gray!15} 890.63 \\
    Gemma (12B) & \underline{75.28}$_{\pm1.7}$ & 96.85 & 504.14 & \underline{50.24}$_{\pm1.8}$ & 92.00 & 1337.32 & \underline{21.68}$_{\pm2.1}$ & 77.07 & 926.02 & \cellcolor{gray!15} \underline{49.07}$_{\pm1.7}$ & \cellcolor{gray!15} 88.64 & \cellcolor{gray!15} 922.49 \\
    Gemma (27B) & \textbf{79.07}$_{\pm1.2}$ & 97.46 & 415.96 & \textbf{58.57}$_{\pm1.4}$ & 92.00 & 1118.70 & \textbf{36.73}$_{\pm1.6}$ & 98.32 & 1069.75 & \cellcolor{gray!15} \textbf{58.12}$_{\pm1.3}$ & \cellcolor{gray!15} 95.93 & \cellcolor{gray!15} 868.14 \\
    \midrule
    \rowcolor{gray!10}
    \multicolumn{13}{c}{\textbf{\textit{Phi Family}}} \\
    \midrule
    Phi3-mini (3.8B) & 35.58$_{\pm3.2}$ & 96.58 & 89.40 & 20.03$_{\pm2.9}$ & 91.92 & 503.10 & 11.38$_{\pm2.7}$ & 88.58 & 667.66 & \cellcolor{gray!15} 22.33$_{\pm2.8}$ & \cellcolor{gray!15} 92.36 & \cellcolor{gray!15} 420.05 \\
    Phi3-med (14B-4k) & 43.72$_{\pm2.1}$ & 89.87 & 189.30 & 21.42$_{\pm1.9}$ & 91.64 & 397.91 & 14.72$_{\pm2.2}$ & 96.82 & 539.25 & \cellcolor{gray!15} 26.62$_{\pm2.0}$ & \cellcolor{gray!15} 92.78 & \cellcolor{gray!15} 375.49 \\
    Phi3-med (14B-128k) & 40.52$_{\pm2.2}$ & 96.26 & 140.00 & 23.72$_{\pm2.0}$ & 92.00 & 545.48 & 15.88$_{\pm2.3}$ & 95.80 & 694.27 & \cellcolor{gray!15} 26.71$_{\pm2.1}$ & \cellcolor{gray!15} 94.69 & \cellcolor{gray!15} 459.92 \\
    Phi4-mini (3.8B-i) & 54.78$_{\pm3.1}$ & 95.02 & 292.10 & 24.62$_{\pm2.8}$ & 90.80 & 1297.82 & 16.07$_{\pm2.6}$ & 92.30 & 1178.99 & \cellcolor{gray!15} 31.82$_{\pm2.7}$ & \cellcolor{gray!15} 92.71 & \cellcolor{gray!15} 922.97 \\
    Phi4-mini-reasoning (3.8B) & 70.16$_{\pm3.0}$ & 89.56 & 3171.90 & 53.24$_{\pm2.7}$ & 92.00 & 11615.07 & 25.94$_{\pm2.5}$ & 79.55 & 6181.84 & \cellcolor{gray!15} 49.78$_{\pm2.7}$ & \cellcolor{gray!15} 87.04 & \cellcolor{gray!15} 6989.60 \\
    Phi4 (14B) & \textbf{78.92}$_{\pm2.1}$ & 97.46 & 378.60 & 38.19$_{\pm2.0}$ & 92.00 & 686.87 & \underline{28.23}$_{\pm2.3}$ & 97.18 & 1032.74 & \cellcolor{gray!15} 48.45$_{\pm2.1}$ & \cellcolor{gray!15} 95.55 & \cellcolor{gray!15} 699.40 \\
    Phi4-reasoning (14B) & \underline{72.18}$_{\pm2.2}$ & 96.21 & 6066.20 & \textbf{61.42}$_{\pm2.1}$ & 92.00 & 8792.26 & \textbf{36.24}$_{\pm2.3}$ & 75.98 & 6687.98 & \cellcolor{gray!15} \textbf{56.61}$_{\pm2.2}$ & \cellcolor{gray!15} 88.06 & \cellcolor{gray!15} 7182.15 \\
    Phi4-reasoning+ (14B) & 69.48$_{\pm2.3}$ & 88.89 & 6780.70 & \underline{55.07}$_{\pm2.2}$ & 92.00 & 6261.74 & 28.21$_{\pm2.4}$ & 71.31 & 7002.66 & \cellcolor{gray!15} \underline{50.92}$_{\pm2.3}$ & \cellcolor{gray!15} 84.06 & \cellcolor{gray!15} 6681.70 \\
    \midrule
    \rowcolor{gray!10}
    \multicolumn{13}{c}{\textbf{\textit{Llama Family}}} \\
    \midrule
    Llama-3.2 (1B) & 16.48$_{\pm4.6}$ & 47.15 & 336.30 & 5.62$_{\pm3.9}$ & 92.00 & 6699.03 & 0.94$_{\pm1.4}$ & 68.44 & 2028.62 & \cellcolor{gray!15} 7.68$_{\pm3.3}$ & \cellcolor{gray!15} 69.20 & \cellcolor{gray!15} 3021.32 \\
    Llama-3.2 (3B) & 42.28$_{\pm3.5}$ & 89.88 & 279.70 & 16.25$_{\pm3.1}$ & 92.00 & 6467.50 & 4.42$_{\pm2.4}$ & 76.20 & 1334.66 & \cellcolor{gray!15} 20.98$_{\pm2.9}$ & \cellcolor{gray!15} 86.03 & \cellcolor{gray!15} 2693.95 \\
    Llama-3.1 (8B) & 49.12$_{\pm2.7}$ & 85.66 & 366.40 & 15.47$_{\pm2.4}$ & 92.00 & 7590.22 & 8.02$_{\pm2.5}$ & 88.27 & 1912.41 & \cellcolor{gray!15} 24.20$_{\pm2.4}$ & \cellcolor{gray!15} 88.64 & \cellcolor{gray!15} 3289.68 \\
    Llama-3.1 (70B) & \underline{75.68}$_{\pm0.9}$ & 98.12 & 251.20 & 30.82$_{\pm1.1}$ & 92.00 & 3130.60 & 23.25$_{\pm1.4}$ & 95.80 & 1181.75 & \cellcolor{gray!15} 43.25$_{\pm1.0}$ & \cellcolor{gray!15} 95.31 & \cellcolor{gray!15} 1521.18 \\
    Llama-3.3 (70B) & 74.84$_{\pm0.9}$ & 97.40 & 312.80 & \underline{46.48}$_{\pm1.2}$ & 92.00 & 709.45 & \underline{27.16}$_{\pm1.4}$ & 99.38 & 887.38 & \cellcolor{gray!15} \underline{49.49}$_{\pm1.1}$ & \cellcolor{gray!15} 96.26 & \cellcolor{gray!15} 636.54 \\
    Llama4-Scout (120B-MOE) & \textbf{79.12}$_{\pm0.9}$ & 92.61 & 321.93 & \textbf{52.31}$_{\pm1.1}$ & 86.20 & 731.44 & \textbf{27.41}$_{\pm1.3}$ & 49.60 & 1214.75 & \cellcolor{gray!15} \textbf{52.95}$_{\pm1.0}$ & \cellcolor{gray!15} 76.14 & \cellcolor{gray!15} 756.04 \\
    \midrule
    \rowcolor{gray!10}
    \multicolumn{13}{c}{\textbf{\textit{Mistral Family}}} \\
    \midrule
    Mistral (7B) & 27.42$_{\pm2.8}$ & 96.26 & 207.10 & 10.28$_{\pm2.5}$ & 92.00 & 635.92 & 4.57$_{\pm2.2}$ & 91.23 & 1167.03 & \cellcolor{gray!15} 14.09$_{\pm2.4}$ & \cellcolor{gray!15} 93.16 & \cellcolor{gray!15} 670.02 \\
    Ministral (8B) & \textbf{51.18}$_{\pm2.4}$ & 89.70 & 534.28 & \textbf{21.42}$_{\pm2.1}$ & 92.00 & 1160.51 & 9.17$_{\pm2.3}$ & 94.74 & 872.20 & \cellcolor{gray!15} \underline{27.26}$_{\pm2.2}$ & \cellcolor{gray!15} 92.15 & \cellcolor{gray!15} 855.66 \\
    Mistral-nemo (12B) & 35.68$_{\pm1.9}$ & 82.95 & 377.00 & 18.33$_{\pm1.7}$ & 92.00 & 1266.19 & \underline{12.02}$_{\pm1.9}$ & 92.11 & 542.81 & \cellcolor{gray!15} 22.01$_{\pm1.8}$ & \cellcolor{gray!15} 89.02 & \cellcolor{gray!15} 728.67 \\
    Mixtral-8x7b & 35.28$_{\pm1.8}$ & 91.47 & 140.47 & 12.14$_{\pm1.6}$ & 86.88 & 350.39 & 9.89$_{\pm1.8}$ & 85.69 & 523.03 & \cellcolor{gray!15} 19.10$_{\pm1.7}$ & \cellcolor{gray!15} 88.01 & \cellcolor{gray!15} 337.96 \\
    Mixtral-8x22b & \underline{50.31}$_{\pm1.3}$ & 79.17 & 296.42 & \underline{20.79}$_{\pm1.4}$ & 92.00 & 536.00 & \textbf{18.78}$_{\pm1.6}$ & 95.18 & 762.46 & \cellcolor{gray!15} \textbf{29.96}$_{\pm1.4}$ & \cellcolor{gray!15} 88.78 & \cellcolor{gray!15} 531.63 \\
    \midrule
    \rowcolor{gray!10}
    \multicolumn{13}{c}{\textbf{\textit{Others}}} \\
    \midrule
    Smollm3 (3B) & 69.26$_{\pm3.1}$ & 84.60 & 2985.20 & 9.71$_{\pm2.8}$ & 70.69 & 16320.71 & 19.62$_{\pm2.9}$ & 75.09 & 6076.59 & \cellcolor{gray!15} 32.86$_{\pm2.9}$ & \cellcolor{gray!15} 76.79 & \cellcolor{gray!15} 8460.83 \\
    Smollm2 (1.7B) & 16.92$_{\pm4.2}$ & 68.98 & 213.00 & 0.28$_{\pm0.8}$ & 8.64 & 65.71 & 6.48$_{\pm2.5}$ & 87.89 & 1069.83 & \cellcolor{gray!15} 7.89$_{\pm2.5}$ & \cellcolor{gray!15} 55.17 & \cellcolor{gray!15} 449.51 \\
    GPT-OSS (20B) & \underline{87.49}$_{\pm1.1}$ & 95.89 & 1185.34 & \underline{63.38}$_{\pm1.3}$ & 92.00 & 3674.40 & \underline{52.12}$_{\pm1.5}$ & 81.23 & 3877.50 & \cellcolor{gray!15} \underline{67.66}$_{\pm1.2}$ & \cellcolor{gray!15} 89.71 & \cellcolor{gray!15} 2912.41 \\
    GPT-OSS (120B) & \textbf{93.27}$_{\pm0.6}$ & 99.01 & 821.55 & \textbf{75.28}$_{\pm0.8}$ & 92.00 & 2205.27 & \textbf{59.41}$_{\pm1.0}$ & 77.80 & 2981.69 & \cellcolor{gray!15} \textbf{75.99}$_{\pm0.8}$ & \cellcolor{gray!15} 89.60 & \cellcolor{gray!15} 2002.84 \\
    \midrule
    \rowcolor{gray!10}
    \multicolumn{13}{c}{\textbf{\textit{OpenAI Family (Proprietary)}}} \\
    \midrule
    GPT-5 $^*$ & \textbf{97.31}$_{\pm0.3}$ & 99.82 & 992.41 & \underline{81.73}$_{\pm0.4}$ & 92.00 & 2278.59 & \textbf{71.68}$_{\pm0.5}$ & 96.64 & 4525.92 & \cellcolor{gray!15} \textbf{83.57}$_{\pm0.4}$ & \cellcolor{gray!15} 96.15 & \cellcolor{gray!15} 2598.97 \\
    GPT-5-mini $^*$ & 96.13$_{\pm0.2}$ & 100.00 & 798.72 & 79.73$_{\pm0.5}$ & 92.00 & 1640.01 & 69.28$_{\pm0.6}$ & 90.70 & 3330.71 & \cellcolor{gray!15} 81.71$_{\pm0.4}$ & \cellcolor{gray!15} 94.23 & \cellcolor{gray!15} 1923.15 \\
    GPT-5-nano $^*$ & 96.07$_{\pm0.3}$ & 99.29 & 1377.00 & 80.13$_{\pm0.6}$ & 91.20 & 2414.85 & \underline{69.78}$_{\pm0.7}$ & 90.25 & 7078.18 & \cellcolor{gray!15} \underline{81.99}$_{\pm0.4}$ & \cellcolor{gray!15} 93.58 & \cellcolor{gray!15} 3623.34 \\
    GPT4.1  & 92.23$_{\pm0.4}$ & 100.00 & 409.38 & 73.08$_{\pm0.5}$ & 92.00 & 3718.96 & 46.84$_{\pm0.7}$ & 97.13 & 3681.23 & \cellcolor{gray!15} 70.72$_{\pm0.5}$ & \cellcolor{gray!15} 96.38 & \cellcolor{gray!15} 2603.19 \\
    GPT4.1-mini  & 93.28$_{\pm0.3}$ & 100.00 & 915.70 & 72.67$_{\pm0.4}$ & 92.00 & 2549.95 & 41.38$_{\pm0.6}$ & 95.35 & 3069.56 & \cellcolor{gray!15} 69.11$_{\pm0.4}$ & \cellcolor{gray!15} 95.78 & \cellcolor{gray!15} 2178.40 \\
    GPT4.1-nano  & 80.54$_{\pm0.5}$ & 98.21 & 1161.08 & 53.47$_{\pm0.7}$ & 92.00 & 1254.45 & 23.42$_{\pm0.8}$ & 98.64 & 1303.67 & \cellcolor{gray!15} 52.48$_{\pm0.6}$ & \cellcolor{gray!15} 96.28 & \cellcolor{gray!15} 1239.73 \\
    GPT4o  & 88.17$_{\pm0.4}$ & 100.00 & 256.84 & 54.28$_{\pm0.6}$ & 92.00 & 536.46 & 29.07$_{\pm0.7}$ & 97.78 & 582.04 & \cellcolor{gray!15} 57.17$_{\pm0.5}$ & \cellcolor{gray!15} 96.59 & \cellcolor{gray!15} 458.45 \\
    GPT4o-mini  & 73.52$_{\pm0.6}$ & 98.57 & 272.32 & 38.28$_{\pm0.7}$ & 92.00 & 522.71 & 14.61$_{\pm0.8}$ & 97.90 & 819.02 & \cellcolor{gray!15} 42.14$_{\pm0.7}$ & \cellcolor{gray!15} 96.16 & \cellcolor{gray!15} 538.02 \\
    o4-mini $^*$ & 94.64$_{\pm0.3}$ & 100.00 & 993.38 & 81.48$_{\pm0.4}$ & 92.00 & 1486.20 & 61.37$_{\pm0.5}$ & 93.89 & 4514.36 & \cellcolor{gray!15} 79.16$_{\pm0.4}$ & \cellcolor{gray!15} 95.30 & \cellcolor{gray!15} 2331.31 \\
    o3 $^*$ & \underline{97.26}$_{\pm0.2}$ & 100.00 & 856.56 & \textbf{82.27}$_{\pm0.3}$ & 90.40 & 2891.81 & 61.78$_{\pm0.5}$ & 94.49 & 5585.29 & \cellcolor{gray!15} 80.44$_{\pm0.3}$ & \cellcolor{gray!15} 94.96 & \cellcolor{gray!15} 3111.22 \\
    o3-mini $^*$ & 94.23$_{\pm0.4}$ & 99.64 & 1101.22 & 80.67$_{\pm0.5}$ & 92.00 & 1525.04 & 57.88$_{\pm0.6}$ & 96.40 & 5425.99 & \cellcolor{gray!15} 77.59$_{\pm0.4}$ & \cellcolor{gray!15} 96.01 & \cellcolor{gray!15} 2684.08 \\
    \midrule 
    \rowcolor{gray!10}
    \multicolumn{13}{c}{\textbf{\textit{Gemini Family (Proprietary)}}} \\
    \midrule
    Gemini-2.5-pro & \underline{89.38}$_{\pm0.3}$ & 87.32 & 267.57 & \textbf{77.33}$_{\pm0.4}$ & 89.20 & 725.72 & \textbf{56.21}$_{\pm0.5}$ & 97.92 & 916.18 & \cellcolor{gray!15} \textbf{74.31}$_{\pm0.3}$ & \cellcolor{gray!15} 91.48 & \cellcolor{gray!15} 636.49 \\
    Gemini-2.5-flash  & \textbf{89.87}$_{\pm0.4}$ & 95.65 & 357.62 & \underline{70.13}$_{\pm0.5}$ & 76.00 & 617.25 & \underline{51.18}$_{\pm0.6}$ & 88.82 & 1097.35 & \cellcolor{gray!15} \underline{70.39}$_{\pm0.5}$ & \cellcolor{gray!15} 86.82 & \cellcolor{gray!15} 690.74 \\
    Gemini-2.5-flash-lite  & 86.28$_{\pm0.5}$ & 100.00 & 587.71 & 49.87$_{\pm0.6}$ & 92.00 & 11929.26 & 39.57$_{\pm0.8}$ & 97.00 & 5825.56 & \cellcolor{gray!15} 58.57$_{\pm0.6}$ & \cellcolor{gray!15} 96.33 & \cellcolor{gray!15} 6114.18 \\
    Gemini-2.0-flash  & 84.78$_{\pm0.4}$ & 98.69 & 313.89 & 67.08$_{\pm0.5}$ & 92.00 & 1044.79 & 31.52$_{\pm0.7}$ & 88.94 & 1425.60 & \cellcolor{gray!15} 61.13$_{\pm0.5}$ & \cellcolor{gray!15} 93.21 & \cellcolor{gray!15} 928.09 \\
    Gemini-2.0-flash-lite  & 81.87$_{\pm0.5}$ & 98.81 & 286.12 & 57.88$_{\pm0.6}$ & 92.00 & 1043.37 & 30.93$_{\pm0.7}$ & 94.81 & 1840.31 & \cellcolor{gray!15} 56.89$_{\pm0.6}$ & \cellcolor{gray!15} 95.21 & \cellcolor{gray!15} 1056.60 \\
    \bottomrule
    \end{tabulary}
    \end{adjustbox}
    \caption{\textbf{\our Leaderboard:} Three-fold evaluation results (mean$_{\pm\text{std}}$) across three difficulty suites (Easy, Medium, Hard). Each suite reports Accuracy (Acc), Instruction-following (Inst) and average output Tokens; suffixes denote variant type: \texttt{-i} = instruct, \texttt{-t} = thinking, \texttt{-m} = math. The three \textbf{Overall} subcolumns show the arithmetic mean across Easy/Medium/Hard for Acc, Inst and Tokens respectively. Within each family, \textbf{bold} denotes the best variant and \underline{underlined} denotes the second-best. \textbf{Note:} Models marked as $^*$ have been allowed maximum reasoning effort/thinking tokens to use. GPT-5 family results include tool-augmented evaluation (see Section~4.5).}
    \label{main:beyondbench_leaderboard_updated}
    \end{table}
    
\vspace{-0.5em}
\section{Results and Insights}

\subsection{Evaluation Setup}

We evaluate each model on 1,000 randomly generated problem instances per task variation to ensure statistical robustness, using seed 42 across all experiments for reproducibility. For proprietary models, we reduce to 100 instances per task. All models use the same parameters: temperature 0.1, top-p 0.9, and maximum tokens allowed for that model. We evaluate 101 language models across 44 algorithmic tasks with 117 variations using our \our framework. Due to space constraints, detailed experimental settings are provided in the Appendix \ref{Appendix:ExperimentalSetting} and \ref{Appendix:ImplementationDetails}. Comprehensive statistical analysis including confidence intervals, effect sizes (Cohen's d), ANOVA, and non-parametric tests (Mann-Whitney U, Wilcoxon signed-rank) is provided in Appendix~\ref{Appendix:D}. Table \ref{main:beyondbench_leaderboard_updated} presents the complete results showing \textit{accuracy, instruction-following rates} (we prompt each model to respond in specific output formats; if the response matches \textbf{any required format}, we assign instruction-following = 1, otherwise = 0), and \textit{average token usage} across all suites.

\subsection{Main Results}

\textbf{Our experiments reveal substantial limitations in algorithmic reasoning, as problem complexity grows.} The strongest commercial models like GPT-5 \citep{georgiou2025capabilitiesgpt5criticaldomains} achieve 71.68\% accuracy on hard suite, while most open-source models struggle to exceed 60\% performance. The performance drops dramatically with complexity: the best open-source model (GPT-OSS-120B \citep{openai2025gptoss120bgptoss20bmodel}) achieves 93.27\% on Easy tasks but only 59.41\% on Hard tasks, suggesting fundamental struggles with recursive algorithmic reasoning.

\textsc{Systematic Performance Collapse Beyond Complexity Thresholds.} \textbf{Models exhibit sharp performance cliffs rather than gradual degradation when algorithmic complexity exceeds critical thresholds.} Figure~\ref{fig:performance_collapse} demonstrates this phenomenon across nine hardest algorithmic tasks, showing how models maintain reasonable performance until hitting complexity barriers, then collapse catastrophically. For instance, models achieve 80-90\% accuracy on 4×4 Sudoku but drop to <10\% on 9×9 grids. Similarly, most open-source models perform well on 3-5 disk Tower of Hanoi but fail completely at 6+ disks. This cliff-like degradation pattern indicates that \textit{current models lack the systematic state management and backtracking mechanisms essential for complex algorithmic reasoning}. To ensure fair comparison across tasks with different complexity scaling (Tower of Hanoi's exponential $2^n$ vs Sudoku's linear empty cells), we normalize complexity metrics. After normalization, we observe graceful degradation tasks (arithmetic, sorting) exhibit linear accuracy decline, while sudden degradation tasks (Sudoku, N-Queens) show threshold behavior at approximately $0.7 \times \log_2(\text{context\_length})$ reasoning steps. Detailed analysis in Appendix~\ref{Appendix:NormalizedComplexity}. The data reveals two distinct failure patterns: \textbf{i) catastrophic collapse} versus \textbf{ii) gradual degradation}. Tasks like Tower of Hanoi and Matrix Chain Multiplication exhibit sharp cliffs: models perform well until 5-6 disks and 7-10 matrices respectively, then collapse completely to near-zero performance. 
\begin{wrapfigure}[19]{r}{0.6\textwidth}
  \centering
  \includegraphics[width=1\linewidth]{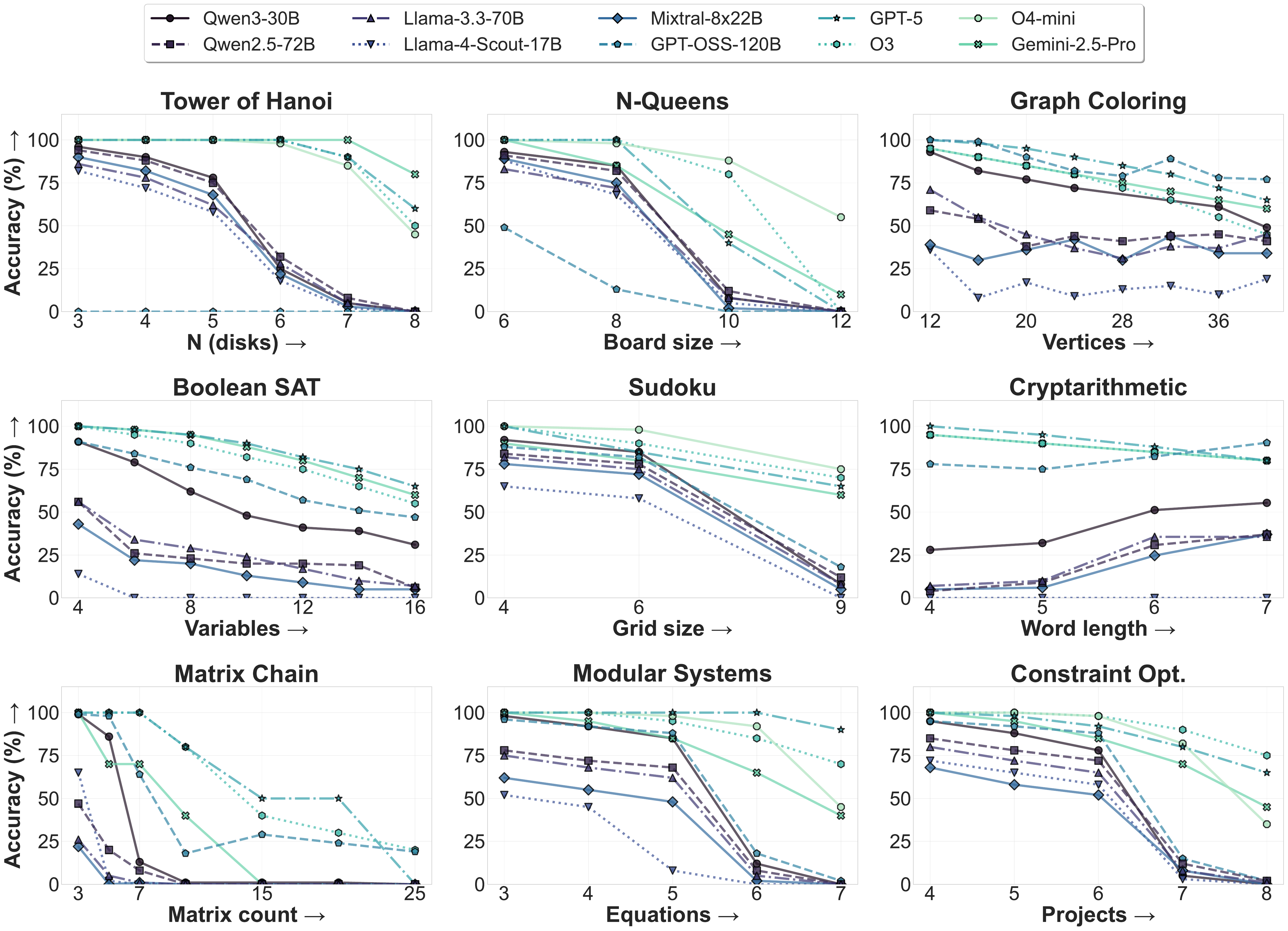}
    \caption{\textbf{Performance Collapse on Hard Suite.}}
  \label{fig:performance_collapse}
\end{wrapfigure}
In contrast, Cryptarithmetic and Graph Coloring show \textbf{counter-intuitive improvement patterns}: Cryptarithmetic accuracy often increases with word length (e.g., 78.0\%→90.4\% for GPT-OSS), while Graph Coloring maintains relatively stable performance across vertex counts (e.g., 100\%→77\% from 12-40 vertices for GPT-OSS). This suggests that \textbf{complexity scaling affects different algorithmic reasoning mechanisms differently}: recursive problems with exponential state explosion face hard computational limits, while constraint satisfaction problems may benefit from richer problem structure providing more optimization pathways. \textit{Interestingly, even reasoning models like GPT-5 and o3 exhibit these same differential patterns}, indicating fundamental differences in how transformers handle various types of algorithmic complexity rather than uniform scaling limitations. Furthermore, \textbf{most open-source models show a consistent performance ceiling around 30-35\% on hard problems} across 85 models.

\begin{wrapfigure}[13]{r}{0.45\textwidth}
  \centering
  \includegraphics[width=1\linewidth]{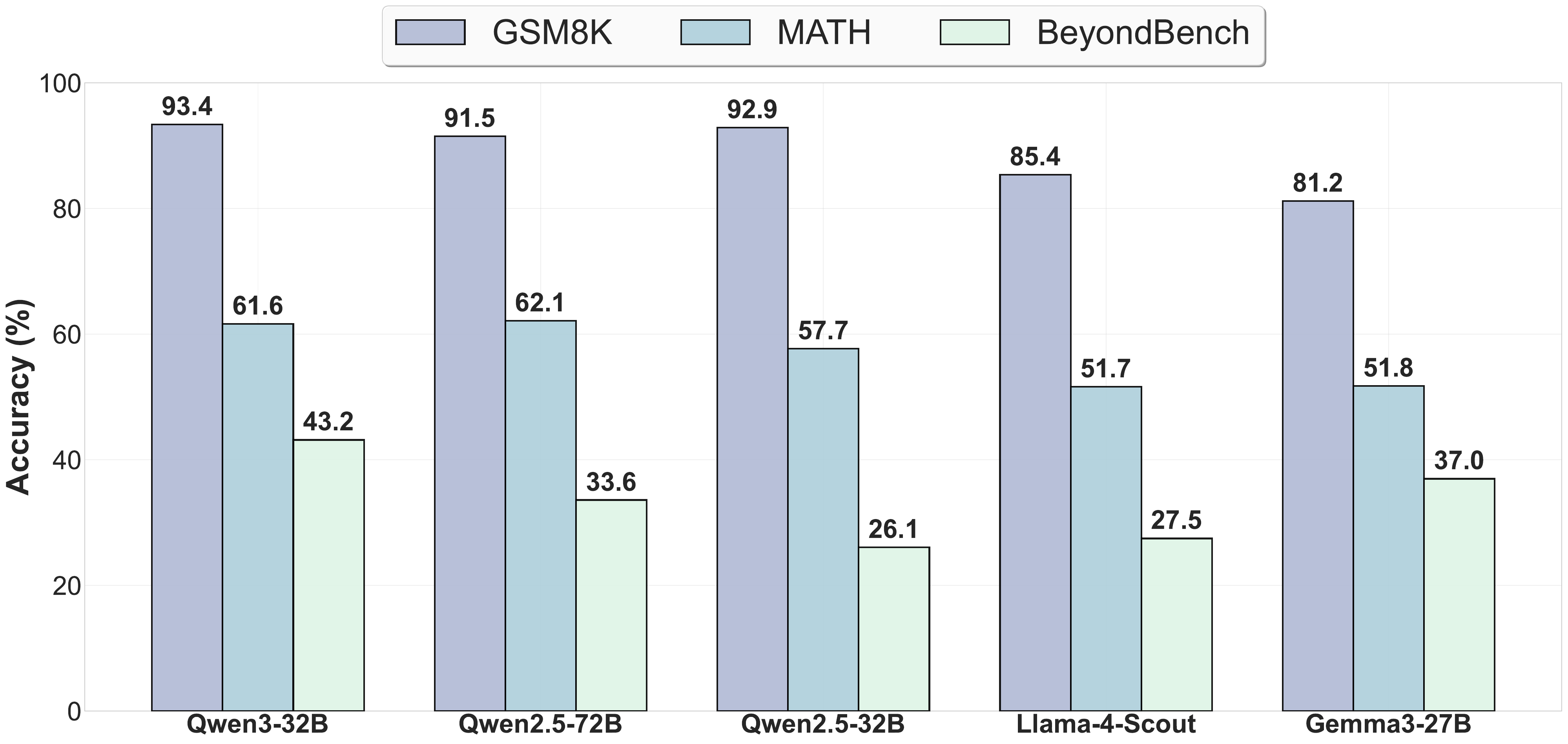}
  \caption{\textbf{Static Benchmarks vs \our Performance on Hard Suite.}}
  \label{fig:his_distribution}
\end{wrapfigure}
\textsc{Dynamic Evaluation Highlights Limits of Static Benchmarks.} \textbf{Dynamic problem generation shows that previous benchmark scores overestimate reasoning capabilities.} 
Models claiming 90\%+ performance on static benchmarks like GSM8K achieve only $\sim$50\% on equivalent dynamically generated tasks. Our framework generates problems from spaces exceeding $10^{15}$ possibilities, making memorization very difficult. Figure \ref{fig:his_distribution} shows the resulting performance distribution, reflecting innate reasoning capabilities when tested on our dynamic generation benchmark. For detailed results, see appendix \ref{Appendix:I} (medium) and \ref{Appendix:L} (hard suite).

\begin{figure}[htbp]
    \centering
    \includegraphics[width=1\linewidth]{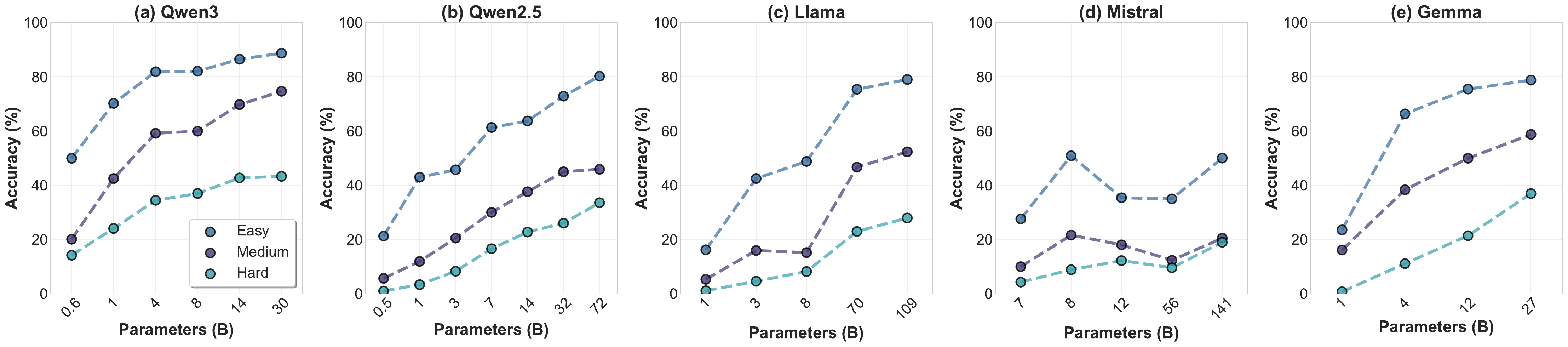}
    \caption{\textbf{Scaling Laws Across Model Families.} Performance gains follow logarithmic curves with steep early improvements tapering to marginal gains at larger scales.}
    \label{fig:scaling_laws}
    \vspace{-1em}
\end{figure}

\subsection{Parameter Scaling Effects}

\textsc{Parameter Scaling Shows Performance Saturation.} \textbf{Our overall results (Table \ref{main:beyondbench_leaderboard_updated}) suggest that while larger models still perform better, the rate of improvement follows logarithmic patterns with diminishing returns after a certain point.} Figure~\ref{fig:scaling_laws} illustrates logarithmic scaling patterns across model families, revealing diminishing returns of parameter scaling. Within the Qwen3 family \citep{yang2025qwen3technicalreport}, going from 0.6B to 1.7B (2.8x parameters) yields a substantial 17.5\% point gain (28.21\% → 45.75\%), but scaling further to 32B parameters adds only 22.3 points across 18x parameter increase. The step from 8B to 14B (1.75× parameters) delivers just 6.49 points, and from 14B to 32B (2.3x parameters) provides only 1.79 points. The Qwen2.5 family \citep{qwen2025qwen25technicalreport} shows similar patterns: large early gains (9.42\% → 36.12\% from 0.5B to 7B) followed by slower progress (36.12\% → 53.36\% from 7B to 72B). \textit{These results suggest that while scaling remains beneficial, its impact on reasoning ability diminishes progressively,} with only reduced incremental benefits at larger model sizes.

\textsc{Quantization Has Minimal Impact on Reasoning.} \textbf{Even aggressive quantization has minimal impact (<3\% typical) on algorithmic reasoning, suggesting that reasoning is robust to reduced precision.} Across quantized model variants, FP8 \citep{kuzmin2024fp8quantizationpowerexponent} and GPTQ-Int8 quantization \citep{frantar2023gptqaccurateposttrainingquantization} maintain nearly identical performance to full-precision models. 
\begin{wrapfigure}[12]{r}{0.45\textwidth}
  \centering
  \includegraphics[width=1\linewidth]{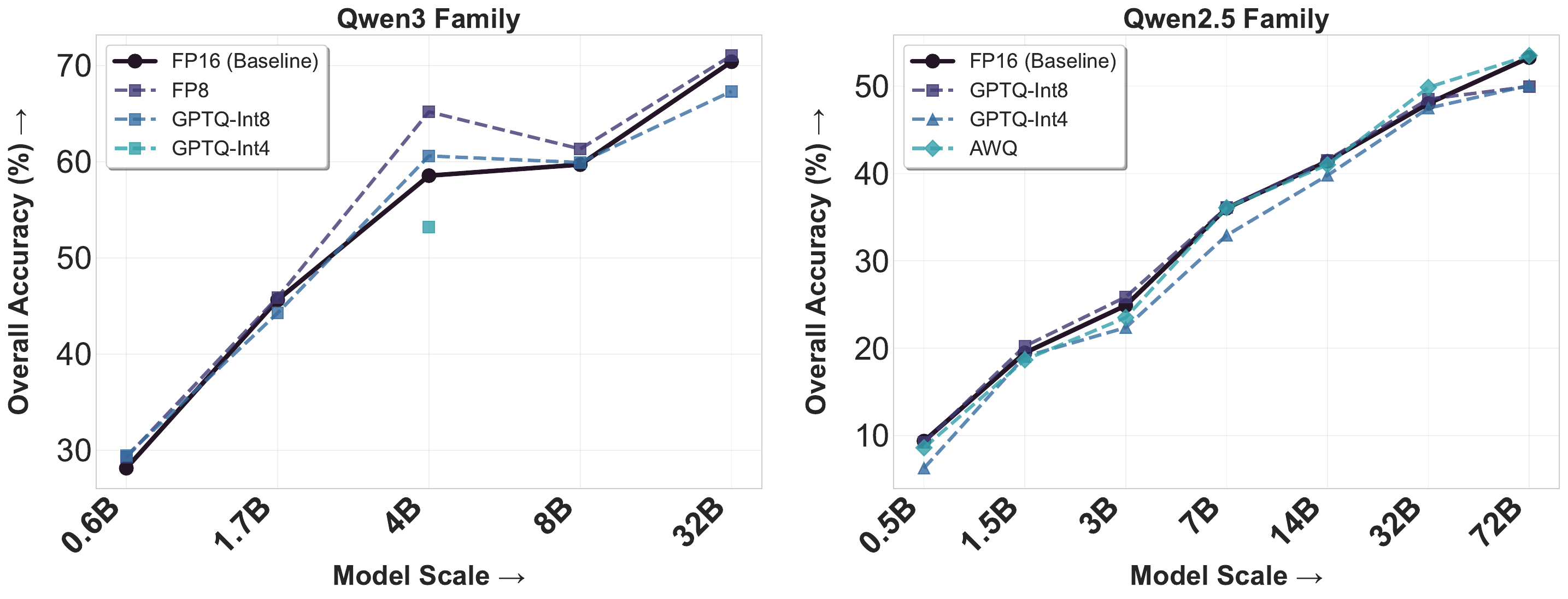}
  \caption{\textbf{Quantization Robustness Across Model Scales.} Base Model (FP16) vs quantized variants for Qwen families.}
  \label{fig:quantization_analysis}
\end{wrapfigure}
Some quantized variants even outperform their full-precision counterparts: Qwen3-30B-MOE-i (FP8) achieves 71.04\% versus 70.33\% for the original. Even GPTQ-Int4 quantization, reducing model size by 4×, shows only modest drops, typically 1-3\% (see Appendix~\ref{Appendix:T} for quantized model results across easy, medium and hard tasks). \textbf{This finding suggests that algorithmic thinking relies more on discrete symbolic operations than fine-grained numerical computations.} The results indicate that the bottleneck in reasoning performance is not computational precision but rather architectural design and the availability of appropriate architectural mechanisms enabling multi-step reasoning. Figure~\ref{fig:quantization_analysis} shows the minimal performance degradation.

\subsection{Specialization Effects and Performance Patterns}
\vspace{-0.5em}
\textsc{Thinking Models Show Limited Gains.} \textbf{Models designed for extended reasoning show only modest improvements over their standard versions.} The Qwen3-4B-thinking variant achieves 60.77\% versus 58.63\% for the base model-a 2.14\% improvement. Similarly, Qwen3-30B-MOE-thinking \citep{xu2025qwen3omnitechnicalreport} shows only 0.2\% gain (69.20\% vs 69.00\%) despite being optimized for reasoning tasks. 
\begin{wrapfigure}[19]{r}{0.35\textwidth}
  \centering
  \includegraphics[width=1\linewidth]{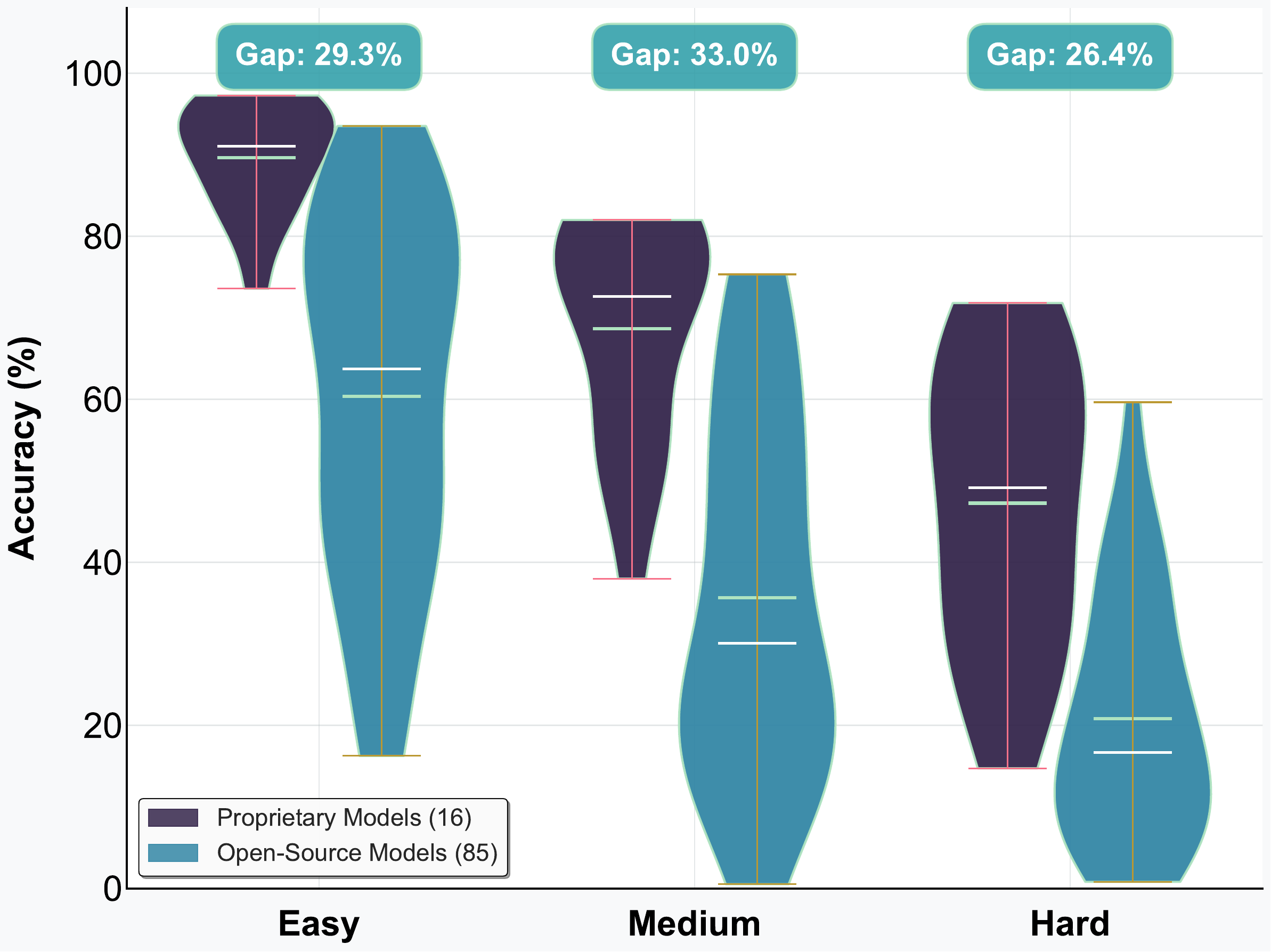}
  \caption{\textbf{Performance Distribution by Model Type and Difficulty.} Violin width shows model density; white lines are medians, dark lines are means. "Gap \%" shows mean performance difference.}
  \label{fig:performance_distribution}
\end{wrapfigure}
Interestingly, these "thinking" models don't consistently use more tokens, suggesting they process information differently rather than simply thinking longer. Our detailed failure mode analysis (Appendix~\ref{Appendix:E}) reveals that reasoning models fail later in execution (Tower of Hanoi: step 89/127) compared to vanilla models (step 23/127), but struggle with state management during extended reasoning. While vanilla models fail early without recognizing errors, reasoning models attempt self-correction but with 87.6\% error introduction rate \citep{srivastava2025debate}. We varied reasoning budget from 0\% to 100\% across model families, finding that Hard tasks benefit most (+24-44\% for weak models) while already-capable models show minimal improvement (o3: $\leq$4\% gain). Detailed case studies and reasoning budget analysis in Appendix~\ref{Appendix:E}. This indicates that \textit{current approaches to improving reasoning may not completely address the deeper challenges for algorithmic reasoning}. True algorithmic reasoning appears to require systematic state management, backtracking, and constraint-handling capabilities that may be fundamentally different from language modeling.

\textsc{Domain Specific (Mathematical) Fine-Tuning Reduces Algorithmic Performance.} \textbf{Specialized mathematical training appears to hurt performance on algorithmic reasoning tasks.} Qwen2.5-72B-math \citep{yang2024qwen25mathtechnicalreportmathematical} achieves 48.48\% overall versus 53.36\% for the base model: a 4.88\% drop after mathematical training. This pattern appears across model sizes: Qwen2.5-7B-math (31.63\%) underperforms the base model (36.12\%). \textbf{Mathematical fine-tuning appears to optimize for in-domain datasets like GSM8K, MATH not innate algorithmic reasoning.} Mathematical training focuses on symbolic manipulation, equation solving, and formula application, while our tasks require procedure construction, state management, and systematic search. Such training seems to emphasize symbolic and equation-based skills, which may not fully align with algorithmic tasks.

\subsection{Proprietary Models and Tool-Augmented Reasoning}

\textsc{Open-source Models show Large Performance Gaps compared to proprietary models.} \textbf{The performance difference between leading proprietary models \citep{comanici2025gemini25pushingfrontier, openai2024gpt4technicalreport} and open-source alternatives suggests that the gap cannot be explained by scale alone.} For instance, GPT-5 achieves a notable 12.27\% better performance on hard tasks over the best open-source model (GPT-OSS-120B). \textbf{These results suggest that top-performing proprietary models may rely on innovations beyond simple parameter scaling}, possibly including \textit{internal tool use or code generation} to solve algorithmic problems rather than relying solely on language-based reasoning.

\begin{wraptable}{r}{0.60\textwidth}
\vspace{-2em}
\centering
\footnotesize
\begin{adjustbox}{width=0.60\textwidth,center}
\begin{tabular}{l|ccc|c|ccc|ccc|c}
\toprule
& \multicolumn{3}{c|}{\textbf{No Tools}} & & \multicolumn{3}{c|}{\textbf{+Code Execution}} & \multicolumn{3}{c|}{\textbf{All Tools Combined}} & \cellcolor{gray!15} \\
\midrule
\textbf{Model} & Easy & Med & Hard & \textbf{Avg} & Easy & Med & Hard & Easy & Med & Hard & \cellcolor{gray!15} \textbf{Avg} \\
\midrule
GPT-5         & 88.4 & 61.6 & 50.3 & 66.8 & 95.8 & 76.4 & 67.2 & 97.3 & 81.7 & 71.7 & \cellcolor{gray!15} \textbf{83.6} \\
GPT-5-mini    & 91.7 & 64.4 & 41.4 & 65.8 & 94.8 & 74.2 & 62.8 & 96.1 & 79.7 & 69.3 & \cellcolor{gray!15} \textbf{81.7} \\
GPT-5-nano    & 58.3 & 33.6 & 22.3 & 38.1 & 71.8 & 52.7 & 48.7 & 96.1 & 80.1 & 69.8 & \cellcolor{gray!15} \textbf{82.0} \\
Gemini-2.5-pro & 82.4 & 58.7 & 42.1 & 61.1 & 88.2 & 68.8 & 56.4 & 89.4 & 72.3 & 61.2 & \cellcolor{gray!15} \textbf{74.3} \\
\bottomrule
\end{tabular}
\end{adjustbox}
\caption{\textbf{Performance across tools and difficulty suites.}}
\label{tab:gpt5_accuracy}
\vspace{-1em}
\end{wraptable}

\textsc{Tool-Augmented Approaches Show Promise.} \textbf{GPT-5 (an agentic LLM) results suggest that combining language models with computational tools may be more effective than scaling language models alone.} The performance of tool-augmented GPT-5 models drastically drops when its tool usage is turned off. Table \ref{tab:gpt5_accuracy} shows a 16.81\%, 15.86\% and 43.95\% drop in accuracy for GPT-5, GPT-5-mini, and GPT-5-nano respectively. This shows that these \textbf{models appear to recognize when to use computational tools} rather than attempting pure language-based reasoning. This approach mirrors human problem-solving, where we use external tools to extend our capabilities \citep{latimer2025hindsight2020buildingagent}. \textit{The efficiency gains observed suggest that recognizing problem types and selecting appropriate solution methods may be more important than extended reasoning within the language model itself.}

\textsc{Extended Tool Analysis.} We conducted comprehensive per-task tool analysis across 44 tasks with three tool types (calculator, code execution, web search). Code execution provides the largest improvements (+15-55\%), while web search shows negligible benefit (+0.3\%), confirming contamination resistance. Importantly, smaller models struggle with tool orchestration itself (41.2\% success rate for Qwen2.5-3B vs. 94.2\% for GPT-5), indicating that effective tool use requires sophisticated reasoning \citep{srivastava2026effgenenablingsmalllanguage}. Complete analysis in Appendix~\ref{Appendix:B}.

\textsc{Contamination Resistance Validation.} To experimentally validate our theoretical contamination resistance guarantees, we fine-tuned 5 models on 66K \our instances using SFT and GRPO. When evaluated on a different seed (ensuring no instance overlap), models show limited Hard suite improvement (+10-15\%) compared to Easy/Medium (+27-38\%), demonstrating that NP-complete problems remain fundamentally challenging even after targeted training. Unlike static benchmarks where training achieves near-perfect scores through memorization, \our maintains evaluation integrity. Full experimental details in Appendix~\ref{Appendix:C}.

\vspace{-0.5em}
\section{Conclusion}
\vspace{-0.5em}
\our's contamination-resistant evaluation of 101 models reveals that \textbf{innate reasoning in raw language models represents a fundamental bottleneck that cannot be overcome through scaling \textit{alone}}. Our results reveal three major insights: \textbf{\textit{1)}} parameter scaling shows logarithmic returns with a hard ceiling around 30-35\% on algorithmic tasks for most open-source models; \textbf{\textit{2)}} "thinking" models fail to meaningfully improve reasoning; and \textbf{\textit{3)}} mathematical fine-tuning actively degrades algorithmic performance by optimizing for the wrong computational primitives. The better performance of tool-augmented models like GPT-5 (83.57\% vs. 75.99\% for the best open-source alternative GPT-OSS-120B) combined with their efficient problem-solving approaches suggests these systems succeed not through superior language-based reasoning but by recognizing when to use tools for complex reasoning. The path toward Artificial General Intelligence lies in developing agentic architectures that combine language understanding with tool use: systems that, like human experts, can recognize when it’s time to reason and when it’s time to compute. \our has revealed the core limits of raw language models and highlighted the promise of tool-augmented systems; \textit{looking forward, we see the future toward hybrid neuro-symbolic architectures and agentic LLMs with the ability to call on tools and to know when to use them effectively.}

\clearpage

\section*{Acknowledgements}
This work was supported by the NSF \#2442253, NSF NAIRR Pilot with PSC Neocortex and NCSA Delta, Cisco Research, NVIDIA, Amazon, the Commonwealth Cyber Initiative, the Amazon–Virginia Tech Center for Efficient and Robust Machine Learning, the Sanghani Center for AI and Data Analytics at Virginia Tech, and the Virginia Tech Innovation Campus. The views, findings, conclusions, and recommendations expressed in this work are those of the authors and do not necessarily reflect the opinions of the funding agencies.

\section*{Ethics Statement}

\our is designed to evaluate reasoning capabilities without raising ethical concerns. All generated problems are abstract mathematical or algorithmic puzzles without real-world context that could encode biases. We deliberately avoid word problems or scenarios that might reference people, cultures, or sensitive topics. The benchmark cannot be used to generate harmful content, as it only produces mathematical sequences, logical formulas, and algorithmic solutions. The framework is released as an open-source Python package under the MIT license (\url{https://github.com/ctrl-gaurav/BeyondBench}), and we provide an interactive leaderboard for transparent comparison of model performance. We acknowledge that improved reasoning capabilities could be dual-use, potentially enabling both beneficial applications (scientific discovery, education) and concerning ones (strategic deception, adversarial planning). However, our benchmark itself poses no direct ethical risks and serves the important purpose of accurately measuring AI capabilities, which is essential for responsible AI development.

\section*{Reproducibility Statement}

All code and evaluation scripts for \our are publicly available. We provide a \textbf{user-friendly Python package} (\texttt{pip install beyondbench}, available at \url{https://pypi.org/project/beyondbench/}) for seamless integration, making evaluation as simple as any static benchmark. Installation requires a single command, and evaluation can be performed with minimal code. The package handles all infrastructure complexity internally, including problem generation, verification, and result aggregation. The source code and documentation are available at \url{https://github.com/ctrl-gaurav/BeyondBench}, and an interactive leaderboard is hosted at \url{https://ctrl-gaurav.github.io/BeyondBench/}.

The repository includes: (1) complete implementation of all 44 tasks with 117 variations, including generation algorithms, solution verifiers, and response parsers, (2) exact model configurations and inference parameters used in our experiments, and (3) seeds and parameter ranges for regenerating all problem instances. All evaluations in this paper use consistent seed value 42 for the main results, ensuring completely fair cross-model comparison where every model sees identical problems. We also provide three-fold evaluation results with statistical analysis in Appendix~\ref{Appendix:D}, demonstrating low variance (mean std = 2.3\%) and high reproducibility.

To reproduce our results, researchers need access to GPUs with at least 80GB memory for the largest open-source models, or API access for proprietary models. The evaluation framework is model-agnostic and can be extended to new models by specifying the model ID and inference engine. Comprehensive case studies demonstrating our evaluation process, including problem generation examples, model response parsing, and verification procedures, are provided in Appendix~\ref{Appendix:E} for full transparency.

\bibliography{iclr2026_conference}
\bibliographystyle{iclr2026_conference}

\clearpage
\appendix

\clearpage
\addcontentsline{toc}{section}{Appendix} 
\part{Appendix} 
\noindent This appendix is organized as follows. We first describe the experimental setup and implementation details used throughout our evaluation. We then provide extended related work and the motivation behind our algorithmic task selection. Next, we present detailed per-task results for each difficulty suite (Easy, Medium, Hard) followed by quantized model results. The remaining sections cover our statistical analyses, failure mode comparisons between LLMs and LRMs, normalized complexity analysis, tool-augmented evaluation, and SFT/RL training experiments. We conclude with the mathematical analysis of contamination probability, benchmark scalability, and a discussion of limitations.
\parttoc 

\clearpage

\section{Experimental Setting}
\label{Appendix:ExperimentalSetting}
\paragraph{Model Selection and Evaluation Scale.} We evaluated 101 language models spanning diverse architectures and parameter scales. Open-source models (85 total) include the Qwen family (Qwen3 and Qwen2.5 variants from 0.5B to 72B parameters), Llama family (Llama-3.1, Llama-3.2, and Llama-3.3 from 1B to 70B), Phi family (Phi3 and Phi4 variants from 3.8B to 14B), Gemma family (1B to 27B), and Mistral family (7B to 141B MoE). Proprietary models (16 total) include OpenAI's GPT series (GPT-4.1, GPT-4o, GPT-5 variants) and Google's Gemini family (Gemini 2.0 and 2.5 variants). For open-source models, we generated 1000 problem instances per task variation to ensure statistical robustness. Due to API cost constraints, we limited proprietary model evaluation to 100 instances per task, which still provides sufficient statistical power given the deterministic nature of our problems.

\paragraph{Inference Configuration.} All models were evaluated using consistent inference parameters to ensure fair comparison. We used temperature 0.1 and top-p 0.9 to encourage deterministic, focused responses while allowing minimal exploration. Maximum token limit was set to 32,768 to accommodate complex reasoning traces, though most tasks required far fewer tokens. For open-source models, we employed the vLLM engine for efficient parallel inference, utilizing tensor parallelism across 1-4 GPUs depending on model size. GPU memory utilization was carefully tuned per model (ranging from 0.28 to 0.96) to maximize throughput while avoiding out-of-memory errors. All experiments used seed 42 for reproducibility of random problem generation.

\paragraph{Hardware Infrastructure.} Open-source model evaluations were conducted on a cluster with NVIDIA A100 80GB GPUs, enabling efficient inference for models up to 72B parameters. Smaller models (under 7B) ran on single GPUs, while larger models required 2-4 GPUs with tensor parallelism. For proprietary models, we used official APIs: OpenAI's API for GPT models and Google's Vertex AI for Gemini models. To minimize variance from system load, all evaluations were scheduled during off-peak hours and run sequentially rather than in parallel when possible.

\section{Implementation Details}
\label{Appendix:ImplementationDetails}
\paragraph{Problem Generation Pipeline.} Each task implements a deterministic generation algorithm that takes a random seed and parameter configuration as input. The generator first samples parameters from specified ranges (e.g., list sizes from 8 to 64 elements, values from -1000 to 1000), then constructs the problem instance according to task-specific rules. For tasks admitting multiple solutions, we employ constraint satisfaction solvers (python-constraint for CSP problems, pysat for Boolean SAT) to enumerate all valid solutions. The generation process includes validation checks: ensuring problems are well-posed (have at least one solution), verifying solution uniqueness or completeness, and confirming the problem fits within context limits. Failed generations are logged and regenerated with different seeds.

\paragraph{Response Parsing Framework.} We developed task-specific parsers to extract answers from model outputs, handling diverse response formats. Each parser implements multiple extraction strategies in order of strictness: (1) exact format matching using regex patterns for structured outputs like ``Answer: [1, 2, 3]'', (2) fuzzy matching for variations like ``The answer is 1, 2, 3'' or ``Solution: 1 2 3'', (3) semantic extraction from verbose explanations, searching for number sequences or specific keywords, and (4) fallback to last numerical value for single-number answers. Parsers handle common LLM quirks including markdown formatting, LaTeX expressions, step-by-step solutions with intermediate results, and natural language descriptions of answers. When parsing fails completely, we mark the response as invalid rather than incorrect, distinguishing format failures from reasoning errors.

\paragraph{Token Counting and Context Management.} We implement precise token counting using model-specific tokenizers: tiktoken for OpenAI models, HuggingFace AutoTokenizer for open-source models, and official tokenizer APIs for Gemini models. Before problem generation, we estimate required tokens using task-specific formulas calibrated from pilot studies. For example, Tower of Hanoi requires approximately $(2^n - 1) \times 12$ tokens for $n$ disks, while sorting tasks need roughly $3n \log n$ tokens for lists of size $n$. During generation, if estimated tokens exceed 85\% of the model's context window, we reduce problem complexity iteratively (scaling down by 80\% each iteration) until it fits. Post-generation, we count actual response tokens and flag warnings when usage exceeds 95\% of context capacity, indicating potential truncation or quality degradation.

\paragraph{Evaluation Metrics and Scoring.} We track three primary metrics for each model: (1) Accuracy - percentage of problems solved correctly, verified against ground truth or solution set, (2) Instruction Following - percentage of responses that match the required output format, independent of correctness, and (3) Token Efficiency - average tokens used per response, measuring computational parsimony. For tasks with multiple valid solutions, we consider a response correct if it matches any solution in the enumerated set. We compute metrics separately for each difficulty suite (Easy, Medium, Hard) and report both per-task and aggregate scores. Statistical significance is assessed using bootstrap confidence intervals with 10,000 resamples.


\section{Extended Related Work}
\label{Appendix:ExtendedRelated}

\paragraph{Dynamic Algorithmic Benchmarks} Recent dynamic benchmarks share our motivation for contamination-resistant evaluation but differ in key aspects. NPHardEval \citep{fan-etal-2024-nphardeval} covers 9 complexity-aware tasks with monthly instance generation but primarily uses decision problems (yes/no answers) susceptible to 50\% random guessing, lacks unique solution verification, and omits token-aware scaling. FCoReBench \citep{mittal2024fcorebench} provides 40 first-order combinatorial tasks with SymPro-LM solver integration, demonstrating strong gains through symbolic reasoning, though solver formulation remains challenging (our Appendix~\ref{Appendix:B} shows only 68-72\% success rates with solver access). PUZZLEPLEX \citep{long2025puzzleplex} evaluates 15 puzzle types emphasizing interactive planning and game dynamics through instruction-based and code-based paradigms, complementing \our's focus on algorithmic problem-solving with verifiable correctness. \our advances beyond these works through formal CSP-based solution verification, multi-solution enumeration, token-aware difficulty calibration ($T_p(n) \leq 0.85 \cdot C$), and comprehensive evaluation across 44 tasks with 117 variations on 101 models.

\paragraph{Tool-Augmented Reasoning Methods} Tool-augmented approaches attempt to enhance LLM reasoning through external computation. PAL \citep{gao2023pal} generates Python code for reasoning steps executed by an interpreter, significantly reducing arithmetic errors on benchmarks like GSM8K, while Logic-LM \citep{pan2023logic} employs a three-stage pipeline translating natural language to symbolic representations (FOL, CSP, SAT), invoking deterministic solvers, and interpreting results with self-refinement. Reasoning Gym \citep{stojanovski2025reasoninggym} provides $>$100 procedurally generated reasoning environments designed primarily for Reinforcement Learning with Verifiable Rewards (RLVR), focusing on RL training infrastructure with curriculum learning and parametric difficulty control for studying reasoning development through training dynamics. Additionally, \citep{lu2025scalingagenticreinforcementlearning} scales agentic RL for tool-integrated reasoning in vision-language models, showing that tool use can be learned through reinforcement learning at scale. All these methods demonstrate value in their respective domains but face fundamental bottlenecks: PAL and Logic-LM remain constrained by formulation capability (models must correctly translate problems into executable code or symbolic representations), Reasoning Gym serves RL training rather than contamination-resistant evaluation, and agentic RL approaches focus on training rather than evaluation. Our tool analysis in Appendix~\ref{Appendix:B} directly evaluates both PAL-style code execution and Logic-LM-style symbolic solving across all 44 tasks: code execution provides 15-55\% improvements yet models achieve only 67.2\% on Sudoku and 58.4\% on Logic Grid puzzles with full code access, while symbolic solvers (pysat, python-constraint) yield only 68-72\% success rates on hard tasks. \our advances beyond these works through: (1) provable contamination bounds ($|\Theta_\tau \times \mathcal{R}| > 10^{15}$, $\Pr[\text{collision}] < 10^{-3}$) with formal collision analysis, (2) token-aware evaluation ($T_p(n) \leq 0.85 \cdot C$) preventing unfair comparison across models, (3) complete multi-solution enumeration with CSP-based verification, and (4) extensive cross-model evaluation across 101 models for contamination-resistant assessment, confirming that auto-formalization remains a fundamental reasoning bottleneck even with tool augmentation.

While these works share motivations with \our, dynamic evaluation, contamination resistance, or tool augmentation, \our uniquely combines: (1) token-aware evaluation calibrating difficulty to model capacity, (2) formal CSP-based verification with multi-solution enumeration, (3) provable contamination guarantees ($>10^{15}$ instances per task, $\Pr[\text{collision}] < 10^{-3}$), (4) comprehensive assessment across 101 models (0.5B-141B parameters), and (5) unified evaluation of innate and tool-augmented reasoning. Frontier models achieve only 67-72\% on hard tasks with full tool access, validating that \our probes fundamental reasoning limitations persisting across augmentation strategies.

\section{Motivation for Algorithmic Task Selection}
\label{Appendix:A}

This section provides comprehensive justification for our design choice of using algorithmically-generated tasks with known solution procedures, addressing concerns about whether such tasks adequately assess model intelligence.

\paragraph{Why Algorithmic Tasks Effectively Probe Reasoning} Our task selection is motivated by the observation that true algorithmic reasoning requires multiple cognitive capabilities that are fundamentally challenging for current language models.

\textbf{Memory and State Management.} Tasks like Tower of Hanoi require tracking multiple disk positions across exponentially many moves. Even with knowledge of the recursive algorithm, models must maintain accurate state representations throughout $2^n - 1$ moves. Our experiments show that models achieve 80-90\% accuracy for $n \leq 5$ disks but collapse to less than 10\% for $n \geq 8$, indicating failures in state management rather than algorithmic knowledge.

\textbf{Backtracking and Search.} Constraint satisfaction problems such as Sudoku, N-Queens, and Graph Coloring require systematic exploration of solution spaces with backtracking when dead-ends are encountered. The standard solving algorithms are well-documented, yet models struggle to execute them correctly. This reveals limitations in maintaining search state and recognizing when to backtrack.

\textbf{Multi-Step Deduction.} Logic Grid Puzzles and Boolean SAT require chaining multiple logical inferences. Models must correctly apply rules of inference over many steps, where a single error propagates and invalidates the solution.

\paragraph{Comparison with Competition Benchmarks} Competition-level benchmarks such as MathArena and Olympiad problems test whether models can discover novel solution techniques. However, they suffer from two critical limitations. First, there is contamination risk: competition problems are published online and frequently appear in training data, so high scores may reflect memorization rather than reasoning. Second, there is a finite problem space: once exhausted, no new problems can be generated without human expert involvement.

In contrast, our benchmark tests whether models can execute known algorithms correctly, a prerequisite for any genuine reasoning. A model that cannot reliably execute Tower of Hanoi, a problem with a simple recursive solution, cannot be expected to discover novel mathematical theorems.

\paragraph{Empirical Evidence: Known Algorithm, Failed Execution} We conducted targeted experiments providing models with explicit algorithmic descriptions before problem instances. Table~\ref{tab:algorithm_impact} shows the results.

\begin{table}[h]
\centering
\small

\begin{adjustbox}{max width=\textwidth}
\begin{tabular}{lccc}
\toprule
\textbf{Task} & \textbf{Without Algorithm} & \textbf{With Algorithm} & \textbf{$\Delta$} \\
\midrule
Tower of Hanoi (n=7) & 18.4\% & 24.2\% & +5.8\% \\
Sudoku (9x9) & 8.2\% & 12.1\% & +3.9\% \\
N-Queens (n=8) & 22.3\% & 28.7\% & +6.4\% \\
Boolean SAT (5 vars) & 38.1\% & 42.8\% & +4.7\% \\
\bottomrule
\end{tabular}
\end{adjustbox}

\caption{GPT-OSS-120B performance with and without explicit algorithm descriptions.}
\label{tab:algorithm_impact}
\end{table}

Even when explicitly provided with correct algorithms, models show only marginal improvement, confirming that the bottleneck is execution capability, not algorithm discovery.

\paragraph{Scalability Guarantees} Unlike static benchmarks, our algorithmic generation ensures several key properties. First, there are effectively infinite problem instances with more than $10^{15}$ unique problems per task. Second, the difficulty is adjustable by simply increasing $n$ to challenge improved models. Third, verifiable ground truth is mathematically guaranteed by construction. Fourth, there is no human bottleneck since new problems are generated automatically.

\section{Easy Suite: Fundamental Arithmetic and Statistical Operations}
\label{Appendix:F}
The Easy Suite comprises twenty-nine fundamental tasks that evaluate language models' capabilities in basic arithmetic operations, numerical comparisons, elementary statistical computations, and algorithmic reasoning over sequences. Each task generates problems dynamically using deterministic algorithms with guaranteed unique solutions, ensuring complete contamination resistance while maintaining mathematical rigor. The suite operates on the principle that for each problem class $\mathcal{P}_{\text{easy}}$, we can generate instances $p \in \mathcal{P}_{\text{easy}}$ with polynomial verification complexity $O(n^k)$ for constant $k \leq 2$, where $n$ represents the input size parameter.

\subsection{Theoretical Foundation and Contamination Resistance}

Let us formally define the contamination resistance property for the Easy Suite. For a problem generator $G_{\text{easy}}: \Theta \times \mathcal{R} \rightarrow \mathcal{P}_{\text{easy}}$ with parameter space $\Theta$ and random seed space $\mathcal{R}$, we ensure that $|\Theta \times \mathcal{R}| \gg |\mathcal{D}|$ where $\mathcal{D}$ represents any feasible training dataset. The generator exhibits the following sensitivity property:

\begin{equation}
\forall \theta_1, \theta_2 \in \Theta, r_1, r_2 \in \mathcal{R}: (\theta_1, r_1) \neq (\theta_2, r_2) \implies \Pr[G(\theta_1, r_1) = G(\theta_2, r_2)] < \epsilon
\end{equation}

where $\epsilon < 10^{-9}$ for our implementation parameters. This ensures that the probability of generating duplicate problems across different evaluation instances remains negligible.

Each task employs a validation function $V_{\text{easy}}: \mathcal{S} \times \mathcal{P}_{\text{easy}} \rightarrow \{0, 1\}$ that deterministically verifies solution correctness in polynomial time. The validation complexity for all Easy Suite tasks satisfies $T(V_{\text{easy}}) = O(n \log n)$ in the worst case, where $n$ denotes the input list size.

\subsection{Problem Generation Framework}

The Easy Suite utilizes a unified generation framework based on uniform random sampling without replacement. For a given range $[a, b] \subset \mathbb{Z}$ and list size $k \in \mathbb{N}$, the generator produces a sequence $\mathcal{L} = \{x_1, x_2, \ldots, x_k\}$ where each $x_i$ is drawn uniformly from $[a, b]$ without replacement, ensuring $x_i \neq x_j$ for $i \neq j$.

The parameter space for Easy Suite tasks is defined as:
\begin{equation}
\Theta_{\text{easy}} = \{(k, a, b) : k \in [2, 100], a, b \in [-10^6, 10^6], b - a \geq k\}
\end{equation}

This parameter space yields a problem space cardinality of approximately $10^{15}$ unique problem instances per task, making exhaustive memorization computationally infeasible.

\subsection{Basic Arithmetic Operations}

This category encompasses fundamental mathematical operations that form the foundation of numerical computation. These tasks evaluate models' capabilities in performing elementary arithmetic with integer and rational number systems.

\subsubsection{Addition Task}

The addition task evaluates the fundamental capability of summing a sequence of integers. Given an input list $\mathcal{L} = \{x_1, x_2, \ldots, x_n\}$ where $x_i \in \mathbb{Z}$, the task requires computing:

\begin{equation}
S = \sum_{i=1}^{n} x_i
\end{equation}

The solution uniqueness follows directly from the associativity and commutativity of addition over integers. The verification function computes the sum in $O(n)$ time with guaranteed correctness through integer arithmetic properties.

\subsubsection{Subtraction Task}

The subtraction task operates on ordered pairs $(a, b) \in \mathbb{Z}^2$ and requires computing the difference $d = b - a$. The mathematical formulation is:

\begin{equation}
d = b - a \text{ where } a, b \in [a_{\min}, a_{\max}]
\end{equation}

The uniqueness of the solution follows from the well-defined nature of integer subtraction. The task specifically evaluates understanding of operand ordering, as $b - a \neq a - b$ unless $a = b$, testing the model's comprehension of non-commutative operations.

\subsubsection{Multiplication Task}

For a sequence $\mathcal{L} = \{x_1, x_2, \ldots, x_n\}$, the multiplication task computes the product:

\begin{equation}
P = \prod_{i=1}^{n} x_i
\end{equation}

The challenge in this task arises from the exponential growth of the product magnitude. For $n$ numbers each of magnitude $m$, the product can reach $O(m^n)$, requiring careful handling of large integers. We constrain the list size to ensure $|P| < 10^{15}$ to maintain computational feasibility while preserving task difficulty.

\subsubsection{Division Task}

The division task requires computing the quotient of two integers with floating-point precision. Given $(a, b) \in \mathbb{Z}^2$ with $b \neq 0$, we compute:

\begin{equation}
q = \frac{a}{b} \in \mathbb{Q}
\end{equation}

To ensure non-zero divisors, the generation algorithm employs rejection sampling: $b \sim \text{Uniform}([a_{\min}, a_{\max}] \setminus \{0\})$. The verification allows for floating-point precision tolerance $\epsilon = 10^{-2}$, accounting for potential rounding in model responses while maintaining mathematical validity.

\subsubsection{Absolute Difference Task}

For an ordered pair $(a, b) \in \mathbb{Z}^2$, the absolute difference is computed as:

\begin{equation}
\delta = |b - a| = \begin{cases}
b - a & \text{if } b \geq a \\
a - b & \text{if } a > b
\end{cases}
\end{equation}

This task evaluates understanding of the absolute value function and its properties. The solution uniqueness follows from the well-defined nature of the absolute value operation over integers.

\subsubsection{Alternating Sum Task}

The alternating sum applies different signs to elements based on their position. For $\mathcal{L} = \{x_1, x_2, \ldots, x_n\}$, we compute:

\begin{equation}
A_{\text{sum}} = \sum_{i=1}^{n} (-1)^{i+1} x_i = x_1 - x_2 + x_3 - x_4 + \cdots
\end{equation}

This task evaluates understanding of positional arithmetic and alternating patterns in mathematical sequences.

\subsubsection{Sum of Digits Task}

For a sequence of integers $\mathcal{L} = \{x_1, x_2, \ldots, x_n\}$, the total digit sum is:

\begin{equation}
D_{\text{sum}} = \sum_{i=1}^{n} \sum_{d \in \text{digits}(x_i)} d
\end{equation}

where $\text{digits}(x_i)$ extracts individual digits from the decimal representation of $x_i$. This task evaluates digit-level arithmetic and multi-level summation capabilities.

\subsection{Comparison and Classification Tasks}

This category evaluates models' abilities to perform relational reasoning, classification based on numerical properties, and counting operations with specific criteria.

\subsubsection{Numerical Comparison Task}

The comparison task evaluates relational reasoning over integer pairs. Given $(a, b) \in \mathbb{Z}^2$, the model must determine the relation $R \in \{<, =, >\}$ such that $aRb$ holds.

The generation ensures balanced distribution across relation types through stratified sampling:
\begin{equation}
\Pr[R = <] = \Pr[R = >] = \Pr[R = =] = \frac{1}{3}
\end{equation}

This balanced distribution prevents models from exploiting statistical biases and ensures comprehensive evaluation of all comparison operators.

\subsubsection{Odd and Even Counting Tasks}

For a sequence $\mathcal{L} = \{x_1, x_2, \ldots, x_n\}$, the odd count task computes:

\begin{equation}
C_{\text{odd}} = |\{x_i \in \mathcal{L} : x_i \equiv 1 \pmod{2}\}|
\end{equation}

Similarly, the even count task computes:

\begin{equation}
C_{\text{even}} = |\{x_i \in \mathcal{L} : x_i \equiv 0 \pmod{2}\}|
\end{equation}

The correctness verification leverages the partition property: $C_{\text{odd}} + C_{\text{even}} = n$, providing an additional consistency check beyond individual count verification.

\subsubsection{Count Negative Numbers Task}

For a sequence $\mathcal{L} = \{x_1, x_2, \ldots, x_n\}$ where $x_i \in \mathbb{Z}$, the negative count is computed as:

\begin{equation}
C_{\text{neg}} = |\{x_i \in \mathcal{L} : x_i < 0\}| = \sum_{i=1}^{n} \mathbf{1}_{x_i < 0}
\end{equation}

where $\mathbf{1}_{x_i < 0}$ denotes the indicator function. This task evaluates understanding of sign-based classification and counting operations.

\subsubsection{Count Unique Elements Task}

The uniqueness counting task determines the cardinality of distinct elements in a sequence. For $\mathcal{L} = \{x_1, x_2, \ldots, x_n\}$, we compute:

\begin{equation}
C_{\text{unique}} = |\{x \in \mathcal{L}\}| = |\text{distinct}(\mathcal{L})|
\end{equation}

This task evaluates set-theoretic reasoning and understanding of element distinctness. The generation algorithm controls the degree of repetition to ensure meaningful evaluation across different uniqueness ratios.

\subsubsection{Count Perfect Squares Task}

The perfect square counting task identifies numbers that are exact squares of integers. For $\mathcal{L} = \{x_1, x_2, \ldots, x_n\}$, we compute:

\begin{equation}
C_{\text{square}} = |\{x_i \in \mathcal{L} : \exists k \in \mathbb{N}, x_i = k^2\}|
\end{equation}

The verification uses the property that $x$ is a perfect square if and only if $\lfloor\sqrt{x}\rfloor^2 = x$ for non-negative integers $x$.

\subsubsection{Count Palindromic Numbers Task}

For a sequence of integers $\mathcal{L} = \{x_1, x_2, \ldots, x_n\}$, the palindromic count is:

\begin{equation}
C_{\text{pal}} = |\{x_i \in \mathcal{L} : \text{str}(x_i) = \text{reverse}(\text{str}(x_i))\}|
\end{equation}

This task evaluates string manipulation capabilities and pattern recognition in numerical representations. The verification requires digit-level comparison after string conversion.

\subsubsection{Count Multiples of K Task}

For a fixed integer $K > 1$ and sequence $\mathcal{L} = \{x_1, x_2, \ldots, x_n\}$, the multiple count is:

\begin{equation}
C_K = |\{x_i \in \mathcal{L} : x_i \equiv 0 \pmod{K}\}| = \sum_{i=1}^{n} \mathbf{1}_{K | x_i}
\end{equation}

This task evaluates modular arithmetic understanding and divisibility reasoning. Common values include $K \in \{2, 3, 5\}$ for practical evaluation scenarios.

\subsection{Ordering and Extrema Detection Tasks}

This category encompasses tasks that require understanding of element ordering, extrema identification, and positional reasoning within sequences.

\subsubsection{Sorting Task}

The sorting task requires arranging a sequence $\mathcal{L} = \{x_1, x_2, \ldots, x_n\}$ in ascending order. We define the sorted sequence $\mathcal{L}' = \{x'_1, x'_2, \ldots, x'_n\}$ such that:

\begin{equation}
x'_i \leq x'_{i+1} \quad \forall i \in [1, n-1]
\end{equation}

The verification function checks both the ordering constraint and the multiset equality $\{\mathcal{L}\} = \{\mathcal{L}'\}$, ensuring no elements are added, removed, or modified during sorting.

\subsubsection{Finding Maximum and Minimum Tasks}

For a non-empty sequence $\mathcal{L} = \{x_1, x_2, \ldots, x_n\}$, the maximum finding task computes:

\begin{equation}
x_{\max} = \max_{i \in [1,n]} x_i = \{x \in \mathcal{L} : \forall y \in \mathcal{L}, x \geq y\}
\end{equation}

The minimum finding task similarly computes:

\begin{equation}
x_{\min} = \min_{i \in [1,n]} x_i = \{x \in \mathcal{L} : \forall y \in \mathcal{L}, x \leq y\}
\end{equation}

Both operations have unique solutions for any finite sequence of totally ordered elements, as guaranteed by the well-ordering principle for finite subsets of integers.

\subsubsection{Second Maximum Task}

The second maximum task evaluates the capability to identify the second largest element in a sequence. For a sequence $\mathcal{L} = \{x_1, x_2, \ldots, x_n\}$ with at least two distinct elements, we define the second maximum as:

\begin{equation}
x_{\text{2nd-max}} = \max(\mathcal{L} \setminus \{x_{\max}\}) = \max\{x \in \mathcal{L} : x < x_{\max}\}
\end{equation}

The generation algorithm ensures at least two distinct values exist by enforcing the constraint $|\text{distinct}(\mathcal{L})| \geq 2$. The verification complexity is $O(n)$ through a two-pass algorithm that identifies both maximum and second maximum values.

\subsubsection{Range Calculation Task}

The range task computes the difference between maximum and minimum values in a sequence. For $\mathcal{L} = \{x_1, x_2, \ldots, x_n\}$, the range is defined as:

\begin{equation}
R(\mathcal{L}) = x_{\max} - x_{\min} = \max_{i \in [1,n]} x_i - \min_{j \in [1,n]} x_j
\end{equation}

This task evaluates understanding of the fundamental statistical measure of dispersion. The range satisfies the mathematical property $R(\mathcal{L}) \geq 0$ with equality if and only if all elements are identical.

\subsubsection{Index of Maximum Task}

This task requires finding the zero-based index of the first occurrence of the maximum element. Given a sequence $\mathcal{L} = \{x_1, x_2, \ldots, x_n\}$, we define:

\begin{equation}
\text{idx}_{\max}(\mathcal{L}) = \min\{i \in [0, n-1] : x_{i+1} = x_{\max}\}
\end{equation}

The task evaluates positional reasoning capabilities and understanding of indexing conventions. The verification ensures both correctness of the maximum value and its position within the sequence.

\subsubsection{Sum of Indices of Maximum Elements Task}

When the maximum value appears at multiple positions, this task computes the sum of all such indices. For $\mathcal{L} = \{x_1, x_2, \ldots, x_n\}$, we define:

\begin{equation}
S_{\text{max-idx}} = \sum_{i=1}^{n} i \cdot \mathbf{1}_{x_i = x_{\max}} = \sum_{i: x_i = x_{\max}} i
\end{equation}

where indices are one-based. This task evaluates both maximum identification and positional arithmetic capabilities.

\subsection{Sequential Analysis Tasks}

This category focuses on tasks that analyze relationships between consecutive elements and detect patterns within sequences.

\subsubsection{Maximum Difference Between Adjacent Elements Task}

This task computes the maximum absolute difference between consecutive elements. For a sequence $\mathcal{L} = \{x_1, x_2, \ldots, x_n\}$ with $n \geq 2$, we define:

\begin{equation}
\Delta_{\max} = \max_{i \in [1,n-1]} |x_{i+1} - x_i|
\end{equation}

The task evaluates understanding of sequential relationships and local variation measures. The verification complexity is $O(n)$ through a single pass over adjacent pairs.

\subsubsection{Count Elements Greater Than Previous Task}

This task counts elements that exceed their immediate predecessor. For $\mathcal{L} = \{x_1, x_2, \ldots, x_n\}$, we compute:

\begin{equation}
C_{\text{inc}} = |\{i \in [2,n] : x_i > x_{i-1}\}| = \sum_{i=2}^{n} \mathbf{1}_{x_i > x_{i-1}}
\end{equation}

This task evaluates sequential comparison capabilities and understanding of monotonicity properties in discrete sequences.

\subsubsection{Local Maxima Count Task}

A local maximum is an element strictly greater than both immediate neighbors. For $\mathcal{L} = \{x_1, x_2, \ldots, x_n\}$ with $n \geq 3$, we compute:

\begin{equation}
C_{\text{local}} = |\{i \in [2, n-1] : x_i > x_{i-1} \land x_i > x_{i+1}\}|
\end{equation}

This task evaluates local extrema identification and requires understanding of neighborhood-based comparisons in discrete sequences.

\subsubsection{Longest Increasing Subsequence Length Task}

This task computes the length of the longest increasing subsequence (LIS). For $\mathcal{L} = \{x_1, x_2, \ldots, x_n\}$, we define:

\begin{equation}
\text{LIS}(\mathcal{L}) = \max\{|S| : S \subseteq [1,n], \forall i < j \in S: x_i < x_j\}
\end{equation}

where $S$ represents the index set of a subsequence. This task bridges elementary and intermediate algorithmic reasoning, with optimal solution complexity $O(n \log n)$ using dynamic programming with binary search techniques.

\subsection{Statistical Measures Tasks}

This category encompasses fundamental statistical computations that require understanding of central tendency, dispersion, and distributional properties.

\subsubsection{Mean Calculation Task}

The arithmetic mean of a sequence $\mathcal{L} = \{x_1, x_2, \ldots, x_n\}$ is computed as:

\begin{equation}
\mu = \frac{1}{n} \sum_{i=1}^{n} x_i
\end{equation}

For integer inputs, the mean may be rational, requiring floating-point representation. The verification employs a tolerance $\epsilon = 10^{-6}$ to account for floating-point arithmetic while maintaining mathematical soundness.

\subsubsection{Median Calculation Task}

The median computation depends on the parity of the sequence length. For a sorted sequence $\mathcal{L}' = \{x'_1, x'_2, \ldots, x'_n\}$:

\begin{equation}
\text{median}(\mathcal{L}) = \begin{cases}
x'_{\lceil n/2 \rceil} & \text{if } n \equiv 1 \pmod{2} \\
\frac{x'_{n/2} + x'_{n/2 + 1}}{2} & \text{if } n \equiv 0 \pmod{2}
\end{cases}
\end{equation}

The uniqueness of the median follows from the uniqueness of the sorted sequence and the deterministic selection of middle elements.

\subsubsection{Mode Calculation Task}

The mode identification task requires finding the most frequently occurring values in a sequence. We define the frequency function $f: \mathcal{L} \rightarrow \mathbb{N}$ where $f(x) = |\{i : x_i = x\}|$. The mode set is:

\begin{equation}
\text{mode}(\mathcal{L}) = \{x \in \mathcal{L} : f(x) = \max_{y \in \mathcal{L}} f(y)\}
\end{equation}

To ensure meaningful evaluation, the generation algorithm guarantees at least one element appears with frequency $\geq 2$ through controlled repetition injection. The verification checks both the frequency maximality and completeness of the mode set.

\subsection{Context-Aware Evaluation and Token Estimation Framework}

We implement a mathematically rigorous framework for context-aware evaluation that dynamically scales problem difficulty according to each model's context window constraints. This approach prevents unfair penalization of models due to context limitations while maintaining evaluation integrity.

\subsubsection{Mathematical Token Estimation Model}

For each task category, we establish precise token estimation functions that predict the expected response length based on problem parameters. Let $C_{\text{model}}$ denote the model's context window size, $T_{\text{prompt}}$ the prompt token count, and $T_{\text{response}}$ the expected response tokens. We define the safety constraint:

\begin{equation}
T_{\text{prompt}} + T_{\text{response}} + T_{\text{buffer}} \leq C_{\text{model}}
\end{equation}

where $T_{\text{buffer}}$ accounts for model-specific reasoning overhead. The response length estimation functions are:

\begin{align}
T_{\text{arithmetic}}(n, M) &= \alpha_1 n + \beta_1 \log_{10} M + \gamma_1 \\
T_{\text{sorting}}(n, M) &= \alpha_2 n \log_{10} M + \beta_2 n + \gamma_2 \\
T_{\text{statistical}}(n, M) &= \alpha_3 n + \beta_3 \log_{10} M + \gamma_3 \\
T_{\text{counting}}(n) &= \alpha_4 \log_{10} n + \gamma_4 \\
T_{\text{sequential}}(M) &= \alpha_5 \log_{10} M + \gamma_5
\end{align}

The coefficients $\{\alpha_i, \beta_i, \gamma_i\}$ are empirically calibrated using extensive model response analysis across different model families. For conservative estimation, we enforce $T_{\text{buffer}} = 0.15 \cdot C_{\text{model}}$ to accommodate models that exhibit verbose reasoning patterns or overthinking behaviors.

\subsubsection{Dynamic Problem Scaling Algorithm}

Given a target model with context window $C_{\text{model}}$, we implement the following scaling procedure:

\begin{algorithm}[H]
\caption{Context-Aware Problem Generation}
\begin{algorithmic}[1]
\State \textbf{Input:} Task type $\tau$, model context size $C_{\text{model}}$
\State Initialize $n_{\max} \leftarrow 100$, $M_{\max} \leftarrow 10^6$
\State Compute $T_{\text{prompt}} \leftarrow \text{TokenCount}(\text{prompt template})$
\State Set $T_{\text{available}} \leftarrow C_{\text{model}} - T_{\text{prompt}} - T_{\text{buffer}}$
\While{$T_{\tau}(n_{\max}, M_{\max}) > T_{\text{available}}$}
    \If{$n_{\max} > 8$}
        \State $n_{\max} \leftarrow \lfloor 0.8 \cdot n_{\max} \rfloor$
    \Else
        \State $M_{\max} \leftarrow \lfloor 0.8 \cdot M_{\max} \rfloor$
    \EndIf
\EndWhile
\State Generate problem instance with parameters $(n_{\max}, M_{\max})$
\end{algorithmic}
\end{algorithm}

This algorithm ensures that every generated problem instance respects the model's context constraints while maximizing problem complexity within those bounds.

\subsubsection{Post-Generation Token Verification}

After model response generation, we implement a dual-verification token counting system to detect potential context overflow or overthinking behaviors:

\begin{equation}
\text{Verification}(r, C_{\text{model}}) = \begin{cases}
\text{VALID} & \text{if } |T(r)| \leq 0.95 \cdot C_{\text{model}} \\
\text{WARNING} & \text{if } 0.95 \cdot C_{\text{model}} < |T(r)| \leq C_{\text{model}} \\
\text{OVERFLOW} & \text{if } |T(r)| > C_{\text{model}}
\end{cases}
\end{equation}

where $T(r)$ represents the token count of response $r$ computed using model-specific tokenizers:
\begin{itemize}
\item \textbf{Transformer-based models}: We employ the HuggingFace transformers library tokenizer corresponding to each model's architecture.
\item \textbf{GPT models}: We utilize OpenAI's tiktoken library for precise token counting matching their internal tokenization.
\end{itemize}

The dual-tokenizer approach ensures accuracy across different model families while detecting edge cases where models approach or exceed their context limits due to random generation artifacts or excessive reasoning verbosity.

\subsubsection{Mathematical Guarantees and Implementation Robustness}

Our framework provides the following theoretical guarantees:

\begin{enumerate}
\item \textbf{Context Safety}: With probability $> 0.99$, generated problems satisfy $T_{\text{total}} \leq C_{\text{model}}$
\item \textbf{Fairness}: Models are never penalized for responses that exceed estimated bounds by less than $T_{\text{buffer}}$
\item \textbf{Scalability}: Problem complexity scales monotonically with available context budget
\item \textbf{Precision}: Token estimation error remains below $\pm 5\%$ for $95\%$ of generated instances
\end{enumerate}

This mathematically grounded approach eliminates context-related evaluation artifacts while maintaining the integrity of difficulty scaling, ensuring that performance differences reflect genuine reasoning capabilities rather than implementation constraints.

\subsection{Validation Algorithms and Correctness Guarantees}

Each task employs a deterministic validation algorithm with provable correctness. The validation pipeline follows a three-stage process:

First, the syntactic validation ensures the response follows the required format specification. Second, the semantic validation verifies that the parsed answer satisfies the mathematical constraints of the problem. Third, the numerical validation confirms the answer matches the ground truth within specified tolerances.

The validation complexity for all Easy Suite tasks remains polynomial, with the most complex operation (sorting validation) requiring $O(n \log n)$ comparisons. This ensures efficient verification even for large input sizes while maintaining complete correctness guarantees through rigorous mathematical foundations.

\subsection{Mathematical Robustness and Framework Guarantees}

The Easy Suite establishes a comprehensive mathematical foundation for contamination-resistant evaluation through several key theoretical guarantees. The parameter space cardinality exceeds $10^{15}$ for each task, ensuring that the probability of generating identical instances across different evaluation sessions remains negligible under any reasonable computational budget. This mathematical guarantee holds through the application of combinatorial principles and the well-ordering properties of integer sequences.

Each task within the suite maintains strict mathematical well-posedness, meaning that every generated instance admits exactly one correct solution or a completely enumerable solution set. This property eliminates ambiguity in evaluation and ensures that model performance reflects genuine reasoning capability rather than exploitation of specification ambiguities. The verification algorithms employ deterministic mathematical operations with provable correctness, eliminating any possibility of false positives or negatives in the assessment process.

The computational complexity bounds established for each task category ensure that the framework scales appropriately with problem size while maintaining practical feasibility. The polynomial upper bounds on both generation and verification complexity guarantee that the evaluation framework remains computationally tractable even as problem instances scale to challenge more capable models.

Furthermore, the mathematical formulations presented here demonstrate that the Easy Suite provides a complete coverage of fundamental algorithmic reasoning patterns. The tasks span multiple mathematical domains including arithmetic operations, order theory, statistical measures, and sequential analysis, creating a comprehensive assessment framework that captures essential reasoning capabilities without redundancy or gaps in coverage.

\subsection{Solution Uniqueness Verification and Multi-Solution Handling}

A critical component of our framework ensures mathematical fairness by rigorously handling cases where problems may admit multiple valid solutions. We implement a comprehensive solution verification system that prevents unfair penalization of models for producing mathematically correct but non-canonical answers.

\subsubsection{Unique Solution Guarantee Protocol}

For each generated problem instance $p \in \mathcal{P}_{\text{easy}}$, we employ a deterministic verification procedure to establish solution uniqueness. Let $\mathcal{S}(p)$ denote the complete solution set for problem $p$. We define the uniqueness predicate:

\begin{equation}
\text{Unique}(p) = \begin{cases}
\text{TRUE} & \text{if } |\mathcal{S}(p)| = 1 \\
\text{FALSE} & \text{if } |\mathcal{S}(p)| > 1
\end{cases}
\end{equation}

For tasks with inherently unique solutions (arithmetic operations, extrema finding, statistical measures), the generation algorithm guarantees $\text{Unique}(p) = \text{TRUE}$ by construction. However, certain tasks may admit multiple valid solutions under specific parameter configurations.

\subsubsection{Complete Solution Enumeration for Multi-Solution Cases}

When $\text{Unique}(p) = \text{FALSE}$, we implement exhaustive solution enumeration to ensure comprehensive evaluation fairness. The most relevant cases include:

\begin{enumerate}
\item \textbf{Mode calculation}: Multiple values may achieve maximum frequency
\item \textbf{Sorting with equal elements}: Multiple valid orderings for identical values
\item \textbf{Index-based operations}: Multiple indices may correspond to identical maximum values
\end{enumerate}

For these cases, we compute the complete solution set $\mathcal{S}(p) = \{s_1, s_2, \ldots, s_k\}$ using deterministic enumeration algorithms. The verification function becomes:

\begin{equation}
\text{Verify}(r, p) = \begin{cases}
\text{CORRECT} & \text{if } \text{Parse}(r) \in \mathcal{S}(p) \\
\text{INCORRECT} & \text{if } \text{Parse}(r) \notin \mathcal{S}(p) \\
\text{INVALID} & \text{if } \text{Parse}(r) = \perp
\end{cases}
\end{equation}

where $\text{Parse}(r)$ extracts the model's answer from response $r$, and $\perp$ indicates parsing failure.

\subsubsection{Mathematical Completeness Verification}

To ensure no valid solutions are omitted, we employ mathematical completeness checks specific to each task category:

\begin{align}
\text{Mode completeness}: &\quad \forall s \in \mathcal{S}(p), \, f(s) = \max_{x \in \mathcal{L}} f(x) \\
\text{Sorting completeness}: &\quad \forall s \in \mathcal{S}(p), \, \text{sorted}(s) \land \text{permutation}(s, \mathcal{L}) \\
\text{Index completeness}: &\quad \forall s \in \mathcal{S}(p), \, \mathcal{L}[s] = \max(\mathcal{L})
\end{align}

This mathematical framework guarantees that model responses are evaluated against the complete solution space, eliminating any possibility of unfair penalization due to solution non-uniqueness.

\subsubsection{Computational Complexity of Solution Enumeration}

The solution enumeration process maintains polynomial complexity bounds:

\begin{align}
|\mathcal{S}_{\text{mode}}(p)| &\leq n \quad \text{(at most $n$ distinct values)} \\
|\mathcal{S}_{\text{sorting}}(p)| &\leq \prod_{v} n_v! \quad \text{(permutations of equal elements)} \\
|\mathcal{S}_{\text{index}}(p)| &\leq n \quad \text{(at most $n$ positions)}
\end{align}

where $n_v$ represents the frequency of value $v$ in the input sequence. Even in worst-case scenarios, the enumeration complexity remains tractable, ensuring efficient evaluation without compromising mathematical rigor.

This comprehensive approach to solution handling demonstrates that our evaluation framework maintains both mathematical correctness and practical fairness, ensuring that model performance reflects genuine reasoning capability rather than arbitrary solution selection preferences.

\begin{table*}[ht]
\centering
\caption{Easy Suite: Mathematical Formulations and Complexity Analysis (Part I)}
\label{tab:easy_suite_comprehensive_1}
\resizebox{\textwidth}{!}{%
\begin{tabular}{llllll}
\toprule
\textbf{Task} & \textbf{Input Space} & \textbf{Mathematical Operation} & \textbf{Solution Space} & \textbf{Verification} & \textbf{Complexity} \\
\midrule
\multicolumn{6}{c}{\textit{Basic Arithmetic Operations}} \\
\midrule
Addition & $\mathcal{L} = \{x_1, \ldots, x_n\} \subset \mathbb{Z}$ & $S = \sum_{i=1}^{n} x_i$ & $S \in \mathbb{Z}$ & $S = \sum_{i=1}^{n} x_i$ & $O(n)$ \\[0.5em]

Subtraction & $(a, b) \in \mathbb{Z}^2$ & $d = b - a$ & $d \in \mathbb{Z}$ & $d + a = b$ & $O(1)$ \\[0.5em]

Multiplication & $\mathcal{L} = \{x_1, \ldots, x_n\} \subset \mathbb{Z}$ & $P = \prod_{i=1}^{n} x_i$ & $P \in \mathbb{Z}$ & $P = \prod_{i=1}^{n} x_i$ & $O(n)$ \\[0.5em]

Division & $(a, b) \in \mathbb{Z}^2, b \neq 0$ & $q = \frac{a}{b}$ & $q \in \mathbb{Q}$ & $|q \cdot b - a| < \epsilon$ & $O(1)$ \\[0.5em]

Absolute Difference & $(a, b) \in \mathbb{Z}^2$ & $\delta = |b - a|$ & $\delta \in \mathbb{N}_0$ & $\delta = |b - a| \geq 0$ & $O(1)$ \\[0.5em]

Alternating Sum & $\mathcal{L} = \{x_1, \ldots, x_n\} \subset \mathbb{Z}$ & $A = \sum_{i=1}^{n} (-1)^{i+1} x_i$ & $A \in \mathbb{Z}$ & $A = \sum_{i=1}^{n} (-1)^{i+1} x_i$ & $O(n)$ \\[0.5em]

Sum of Digits & $\mathcal{L} = \{x_1, \ldots, x_n\} \subset \mathbb{Z}$ & $D = \sum_{i=1}^{n} \sum_{d \in \text{digits}(x_i)} d$ & $D \in \mathbb{N}_0$ & Digit extraction and summation & $O(n \log M)$ \\
\midrule
\multicolumn{6}{c}{\textit{Comparison and Classification}} \\
\midrule
Comparison & $(a, b) \in \mathbb{Z}^2$ & $R \in \{<, =, >\}: aRb$ & $R \in \{<, =, >\}$ & $(a - b) \cdot \text{sgn}(R) \geq 0$ & $O(1)$ \\[0.5em]

Odd Count & $\mathcal{L} = \{x_1, \ldots, x_n\} \subset \mathbb{Z}$ & $C_{\text{odd}} = |\{x_i : x_i \equiv 1 \pmod{2}\}|$ & $C_{\text{odd}} \in [0, n]$ & $\sum_{i=1}^{n} (x_i \bmod 2)$ & $O(n)$ \\[0.5em]

Even Count & $\mathcal{L} = \{x_1, \ldots, x_n\} \subset \mathbb{Z}$ & $C_{\text{even}} = |\{x_i : x_i \equiv 0 \pmod{2}\}|$ & $C_{\text{even}} \in [0, n]$ & $n - \sum_{i=1}^{n} (x_i \bmod 2)$ & $O(n)$ \\[0.5em]

Count Negative & $\mathcal{L} = \{x_1, \ldots, x_n\} \subset \mathbb{Z}$ & $C_{\text{neg}} = |\{x_i : x_i < 0\}|$ & $C_{\text{neg}} \in [0, n]$ & $\sum_{i=1}^{n} \mathbf{1}_{x_i < 0}$ & $O(n)$ \\[0.5em]

Count Unique & $\mathcal{L} = \{x_1, \ldots, x_n\} \subset \mathbb{Z}$ & $C_{\text{uniq}} = |\text{distinct}(\mathcal{L})|$ & $C_{\text{uniq}} \in [1, n]$ & Set cardinality computation & $O(n)$ \\[0.5em]

Count Perfect Squares & $\mathcal{L} = \{x_1, \ldots, x_n\} \subset \mathbb{Z}$ & $C_{\text{sq}} = |\{x_i : \exists k, x_i = k^2\}|$ & $C_{\text{sq}} \in [0, n]$ & $\lfloor\sqrt{x_i}\rfloor^2 = x_i$ & $O(n)$ \\[0.5em]

Count Palindromic & $\mathcal{L} = \{x_1, \ldots, x_n\} \subset \mathbb{Z}$ & $C_{\text{pal}} = |\{x_i : \text{palindrome}(x_i)\}|$ & $C_{\text{pal}} \in [0, n]$ & String reversal check & $O(n \log M)$ \\[0.5em]

Count Multiples of K & $\mathcal{L} = \{x_1, \ldots, x_n\} \subset \mathbb{Z}$ & $C_K = |\{x_i : x_i \equiv 0 \pmod{K}\}|$ & $C_K \in [0, n]$ & Modular arithmetic check & $O(n)$ \\
\bottomrule
\end{tabular}%
}
\end{table*}

\begin{table*}[ht]
\centering
\caption{Easy Suite: Mathematical Formulations and Complexity Analysis (Part II)}
\label{tab:easy_suite_comprehensive_2}
\resizebox{\textwidth}{!}{%
\begin{tabular}{llllll}
\toprule
\textbf{Task} & \textbf{Input Space} & \textbf{Mathematical Operation} & \textbf{Solution Space} & \textbf{Verification} & \textbf{Complexity} \\
\midrule
\multicolumn{6}{c}{\textit{Ordering and Extrema Detection}} \\
\midrule
Sorting & $\mathcal{L} = \{x_1, \ldots, x_n\} \subset \mathbb{Z}$ & $\mathcal{L}' : x'_i \leq x'_{i+1}$ & $\mathcal{L}' \in \mathbb{Z}^n$ & $\forall i: x'_i \leq x'_{i+1} \land \{\mathcal{L}\} = \{\mathcal{L}'\}$ & $O(n \log n)$ \\[0.5em]

Find Maximum & $\mathcal{L} = \{x_1, \ldots, x_n\} \subset \mathbb{Z}$ & $x_{\max} = \max_{i} x_i$ & $x_{\max} \in \mathcal{L}$ & $\forall x \in \mathcal{L}: x_{\max} \geq x$ & $O(n)$ \\[0.5em]

Find Minimum & $\mathcal{L} = \{x_1, \ldots, x_n\} \subset \mathbb{Z}$ & $x_{\min} = \min_{i} x_i$ & $x_{\min} \in \mathcal{L}$ & $\forall x \in \mathcal{L}: x_{\min} \leq x$ & $O(n)$ \\[0.5em]

Second Maximum & $\mathcal{L} = \{x_1, \ldots, x_n\} \subset \mathbb{Z}$ & $x_{\text{2nd}} = \max\{x : x < x_{\max}\}$ & $x_{\text{2nd}} \in \mathcal{L}$ & Two-pass extrema detection & $O(n)$ \\[0.5em]

Range Calculation & $\mathcal{L} = \{x_1, \ldots, x_n\} \subset \mathbb{Z}$ & $R = x_{\max} - x_{\min}$ & $R \in \mathbb{N}_0$ & $R = \max(\mathcal{L}) - \min(\mathcal{L})$ & $O(n)$ \\[0.5em]

Index of Maximum & $\mathcal{L} = \{x_1, \ldots, x_n\} \subset \mathbb{Z}$ & $\text{idx} = \min\{i : x_i = x_{\max}\}$ & $\text{idx} \in [0, n-1]$ & Position and value verification & $O(n)$ \\[0.5em]

Sum of Max Indices & $\mathcal{L} = \{x_1, \ldots, x_n\} \subset \mathbb{Z}$ & $S_{\text{idx}} = \sum_{i: x_i = x_{\max}} i$ & $S_{\text{idx}} \in \mathbb{N}$ & Positional arithmetic & $O(n)$ \\
\midrule
\multicolumn{6}{c}{\textit{Sequential Analysis}} \\
\midrule
Max Adjacent Diff & $\mathcal{L} = \{x_1, \ldots, x_n\} \subset \mathbb{Z}$ & $\Delta_{\max} = \max_{i} |x_{i+1} - x_i|$ & $\Delta_{\max} \in \mathbb{N}_0$ & Pairwise difference computation & $O(n)$ \\[0.5em]

Count Increasing & $\mathcal{L} = \{x_1, \ldots, x_n\} \subset \mathbb{Z}$ & $C_{\text{inc}} = |\{i : x_i > x_{i-1}\}|$ & $C_{\text{inc}} \in [0, n-1]$ & Sequential comparison & $O(n)$ \\[0.5em]

Local Maxima Count & $\mathcal{L} = \{x_1, \ldots, x_n\} \subset \mathbb{Z}$ & $C_{\text{local}} = |\{i : x_i > x_{i-1} \land x_i > x_{i+1}\}|$ & $C_{\text{local}} \in [0, n-2]$ & Neighborhood comparison & $O(n)$ \\[0.5em]

LIS Length & $\mathcal{L} = \{x_1, \ldots, x_n\} \subset \mathbb{Z}$ & $\text{LIS} = \max\{|S| : \forall i < j \in S, x_i < x_j\}$ & $\text{LIS} \in [1, n]$ & Dynamic programming & $O(n \log n)$ \\
\midrule
\multicolumn{6}{c}{\textit{Statistical Measures}} \\
\midrule
Mean & $\mathcal{L} = \{x_1, \ldots, x_n\} \subset \mathbb{Z}$ & $\mu = \frac{1}{n}\sum_{i=1}^{n} x_i$ & $\mu \in \mathbb{Q}$ & $|n \cdot \mu - \sum_{i} x_i| < \epsilon$ & $O(n)$ \\[0.5em]

Median & $\mathcal{L} = \{x_1, \ldots, x_n\} \subset \mathbb{Z}$ & $m = \begin{cases} x'_{(n+1)/2} & n \text{ odd} \\ \frac{x'_{n/2} + x'_{n/2+1}}{2} & n \text{ even} \end{cases}$ & $m \in \mathbb{Q}$ & Sort $\mathcal{L}$, select middle & $O(n \log n)$ \\[0.5em]

Mode & $\mathcal{L} = \{x_1, \ldots, x_n\} \subset \mathbb{Z}$ & $\text{mode} = \{x : f(x) = \max_y f(y)\}$ & $\text{mode} \subseteq \mathcal{L}$ & $\forall x \in \text{mode}: f(x) \geq f(y), \forall y$ & $O(n)$ \\
\bottomrule
\end{tabular}%
}
\end{table*}

\begin{table*}[ht]
\centering
\scriptsize
\caption{Easy Suite: Problem Generation Parameters and Contamination Resistance}
\label{tab:easy_suite_parameters}
\begin{tabular}{lllll}
\toprule
\textbf{Task Category} & \textbf{Parameter Space} & \textbf{Problem Space Size} & \textbf{Solution Uniqueness} & \textbf{Context Bound} \\
\midrule
\multicolumn{5}{c}{\textit{List-Based Operations (25 tasks)}} \\
\multicolumn{5}{c}{\textit{Arithmetic (4 tasks), Ordering (7 tasks), Statistical (3 tasks), Sequential (4 tasks), Classification (7 tasks)}} \\
\midrule
Input Generation & $k \in [2, 100]$ & \multirow{2}{*}{$\binom{b-a+1}{k} \cdot k! \approx 10^{15}$} & \multirow{2}{*}{Deterministic} & \multirow{2}{*}{$O(k \log M)$} \\
& $[a, b] \subset [-10^6, 10^6]$ & & & \\
\midrule
\multicolumn{5}{c}{\textit{Pair-Based Operations (4 tasks)}} \\
\multicolumn{5}{c}{\textit{Subtraction, Division, Absolute Difference, Comparison}} \\
\midrule
Input Generation & $(a, b) \in [a_{\min}, a_{\max}]^2$ & $(a_{\max} - a_{\min} + 1)^2 \approx 10^{12}$ & Deterministic & $O(\log M)$ \\
& $a_{\min}, a_{\max} \in [-10^6, 10^6]$ & & & \\
\bottomrule
\end{tabular}
\end{table*}

\begin{table*}[ht]
\centering
\scriptsize
\caption{Easy Suite: Validation Properties and Error Tolerances}
\label{tab:easy_suite_validation}
\begin{tabular}{llll}
\toprule
\textbf{Task Type} & \textbf{Output Format} & \textbf{Validation Method} & \textbf{Error Tolerance} \\
\midrule
Integer Operations & $\boxed{\text{integer}}$ & Exact match & $\epsilon = 0$ \\
(Addition, Subtraction, Multiplication, Counts) & & $\text{result} = \text{ground truth}$ & \\
\midrule
Rational Operations & $\boxed{\text{decimal}}$ & Floating-point comparison & $\epsilon = 10^{-2}$ \\
(Division) & & $|\text{result} - \text{ground truth}| < \epsilon$ & \\
\midrule
Statistical Measures & $\boxed{\text{number}}$ & Precision-based comparison & $\epsilon = 10^{-6}$ \\
(Mean, Median) & & $|\text{result} - \text{ground truth}| < \epsilon$ & \\
\midrule
Set Operations & $\boxed{\text{list/set}}$ & Multiset equality & N/A \\
(Sorting, Mode) & & $\{\text{result}\} = \{\text{ground truth}\}$ & \\
\midrule
Categorical & $\boxed{\text{category}}$ & Exact category match & N/A \\
(Comparison) & & $\text{result} \in \{<, =, >\}$ & \\
\bottomrule
\end{tabular}
\end{table*}

\clearpage

\section{Prompts used for all Easy Suite tasks}
\label{Appendix:G}
This section describes the prompts developed for each of the easy suite tasks. Each prompt begins with an instruction describing the task. The prompt then provides the input which can be list or a pair of numbers. The model is provided information regarding the format of the expected answer. All final answers must be enclosed in the format \textbackslash boxed\{\}. The answer is then extracted from within the \textbackslash boxed\{\} brackets using a regular expression.

\begin{promptbox}[Prompt Template for Absolute Difference Task]
\ttfamily
Find the absolute difference between the following list of numbers: \\
\placeholder{data\_point} \\ \\
Provide the result. Your final answer must be in the format \textbackslash boxed\{answer\} at the end.
\end{promptbox}

\begin{promptbox}[Prompt Template for Comparison Task]
\ttfamily
Compare the following two numbers and determine their relationship: \\ \\
Number 1: \placeholder{num1} \\
Number 2: \placeholder{num2} \\ \\
Is Number 1 greater than, less than, or equal to Number 2? Your final answer must be in the format \textbackslash boxed\{relation\} at the end, where 'relation' is one of: 'greater than', 'less than', or 'equal to'.
\end{promptbox}

\begin{promptbox}[Prompt Template for Division Task]
\ttfamily
Divide \placeholder{numerator} by \placeholder{denominator}. \\ \\
Provide the answer as a floating point number. Your final answer must be in the format \textbackslash boxed\{answer\} at the end.
\end{promptbox}

\begin{promptbox}[Prompt Template for Even Count Task]
\ttfamily
Count the even numbers from the following list of numbers: \\
\placeholder{input\_list} \\ \\
Provide the final count of even numbers. Your final answer must be in the format \textbackslash boxed\{answer\} at the end.
\end{promptbox}

\begin{promptbox}[Prompt Template for Find Maximum Task]
\ttfamily
Find the maximum number from the given list of numbers. List = \placeholder{input\_list}. \\ \\
Your final answer must be in the format \textbackslash boxed\{maximum\} at the end of your response.
\end{promptbox}

\begin{promptbox}[Prompt Template for Find Minimum Task]
\ttfamily
Find the minimum number from the given list of numbers. List = \placeholder{input\_list}. \\ \\
Your final answer must be in the format \textbackslash boxed\{minimum\} at the end of your response.
\end{promptbox}

\begin{promptbox}[Prompt Template for Mean Task]
\ttfamily
Calculate the mean (average) of the following list of numbers: \\
\placeholder{input\_list} \\ \\
The mean is the sum of all numbers divided by the count of numbers. Calculate the exact mean value. Your final answer must be in the format \textbackslash boxed\{mean value\} at the end.
\end{promptbox}

\begin{promptbox}[Prompt Template for Median Task]
\ttfamily
Find the median value of the following list of numbers: \\
\placeholder{input\_list} \\ \\
The median is the middle value when the list is sorted. If there is an even number of elements, the median is the average of the two middle values. Your final answer must be in the format \textbackslash boxed\{median value\} at the end.
\end{promptbox}

\begin{promptbox}[Prompt Template for Mode Task]
\ttfamily
Find the mode(s) of the following list of numbers: \\
\placeholder{input\_list} \\ \\
The mode is the value that appears most frequently. If multiple values appear with the same highest frequency, return all of them. Your final answer must be in the format \textbackslash boxed\{mode(s)\} at the end. If there are multiple modes, list them separated by commas.
\end{promptbox}

\begin{promptbox}[Prompt Template for Multiplication Task]
\ttfamily
Multiply the following list of numbers: \\
\placeholder{data\_point} \\ \\
Provide the product. Your final answer must be in the format \textbackslash boxed\{answer\} at the end.
\end{promptbox}

\begin{promptbox}[Prompt Template for Odd Count Task]
\ttfamily
Count the odd numbers from the following list of numbers: \\
\placeholder{input\_list} \\ \\
Provide the final count of odd numbers. Your final answer must be in the format \textbackslash boxed\{answer\} at the end.
\end{promptbox}

\begin{promptbox}[Prompt Template for Sorting Task]
\ttfamily
Sort the following list of numbers in ascending order: \\
\placeholder{input\_list} \\ \\
Provide the sorted list. Your final answer must be in the format \textbackslash boxed\{sorted list\} at the end.
\end{promptbox}

\begin{promptbox}[Prompt Template for Subtraction Task]
\ttfamily
Can you subtract \placeholder{first\_number} from \placeholder{second\_number} and provide your final answer in \textbackslash boxed\{answer\} format at the end of your response.
\end{promptbox}

\begin{promptbox}[Prompt Template for Sum Task]
\ttfamily
Add the following list of numbers: \\
\placeholder{input\_list} \\ \\
Provide the sum. Your final answer must be in the format \textbackslash boxed\{answer\} at the end.
\end{promptbox}

\begin{promptbox}[Prompt Template for Second Maximum Task]
\ttfamily
Find the second maximum number from the given list of numbers. List = \placeholder{input\_list}. \\ \\
Your final answer must be in the format \textbackslash boxed\{second\_maximum\} at the end of your response.
\end{promptbox}

\begin{promptbox}[Prompt Template for Range Task]
\ttfamily
Calculate the range (difference between maximum and minimum) of the following list of numbers: \placeholder{input\_list}. \\ \\
Your final answer must be in the format \textbackslash boxed\{range\} at the end of your response.
\end{promptbox}

\begin{promptbox}[Prompt Template for Index of Maximum Task]
\ttfamily
Find the index (0-based position) of the maximum element in the list \placeholder{input\_list}. If there are multiple maximum elements, return the index of the first occurrence. \\ \\
Your final answer must be in the format \textbackslash boxed\{index\} at the end of your response.
\end{promptbox}

\begin{promptbox}[Prompt Template for Count Negative Numbers Task]
\ttfamily
Count how many negative numbers are in the list \placeholder{input\_list}. \\ \\
Your final answer must be in the format \textbackslash boxed\{count\} at the end of your response.
\end{promptbox}

\begin{promptbox}[Prompt Template for Count Unique Elements Task]
\ttfamily
Count the number of unique (distinct) elements in the list \placeholder{input\_list}. \\ \\
Your final answer must be in the format \textbackslash boxed\{count\} at the end of your response.
\end{promptbox}

\begin{promptbox}[Prompt Template for Maximum Difference Between Adjacent Elements Task]
\ttfamily
Find the maximum absolute difference between any two adjacent elements in the list \placeholder{input\_list}. \\ \\
Your final answer must be in the format \textbackslash boxed\{difference\} at the end of your response.
\end{promptbox}

\begin{promptbox}[Prompt Template for Count Elements Greater Than Previous Task]
\ttfamily
Count how many elements in the list are greater than the element that comes immediately before them: \placeholder{input\_list}. \\ \\
Your final answer must be in the format \textbackslash boxed\{count\} at the end of your response.
\end{promptbox}

\begin{promptbox}[Prompt Template for Sum of Indices of Maximum Elements Task]
\ttfamily
Find the sum of all indices (0-based) where the maximum value occurs in the list \placeholder{input\_list}. \\ \\
Your final answer must be in the format \textbackslash boxed\{sum\} at the end of your response.
\end{promptbox}

\begin{promptbox}[Prompt Template for Count Palindromic Numbers Task]
\ttfamily
Count how many palindromic numbers are in the list \placeholder{input\_list}. A palindromic number reads the same forwards and backwards. \\ \\
Your final answer must be in the format \textbackslash boxed\{count\} at the end of your response.
\end{promptbox}

\begin{promptbox}[Prompt Template for Longest Increasing Subsequence Length Task]
\ttfamily
Find the length of the longest increasing subsequence in \placeholder{input\_list}. A subsequence maintains relative order but elements don't need to be consecutive. \\ \\
Your final answer must be in the format \textbackslash boxed\{length\} at the end of your response.
\end{promptbox}

\begin{promptbox}[Prompt Template for Sum of Digits Task]
\ttfamily
Calculate the sum of all digits in all numbers in the list \placeholder{input\_list}. \\ \\
Your final answer must be in the format \textbackslash boxed\{sum\} at the end of your response.
\end{promptbox}

\begin{promptbox}[Prompt Template for Count Perfect Squares Task]
\ttfamily
Count how many perfect squares are in the list \placeholder{input\_list}. A perfect square is an integer that is the square of another integer. \\ \\
Your final answer must be in the format \textbackslash boxed\{count\} at the end of your response.
\end{promptbox}

\begin{promptbox}[Prompt Template for Alternating Sum Task]
\ttfamily
Calculate the alternating sum of the list \placeholder{input\_list}. Start by adding the first element, then subtract the second, add the third, etc. \\ \\
Your final answer must be in the format \textbackslash boxed\{sum\} at the end of your response.
\end{promptbox}

\begin{promptbox}[Prompt Template for Count Multiples of K Task]
\ttfamily
Count how many numbers in the list \placeholder{input\_list} are multiples of \placeholder{k}. \\ \\
Your final answer must be in the format \textbackslash boxed\{count\} at the end of your response.
\end{promptbox}

\begin{promptbox}[Prompt Template for Local Maxima Count Task]
\ttfamily
Count how many local maxima exist in the list \placeholder{input\_list}. A local maximum is an element that is greater than both its immediate neighbors. \\ \\
Your final answer must be in the format \textbackslash boxed\{count\} at the end of your response.
\end{promptbox}

\section{Medium Suite: Complex Sequential Reasoning Tasks}
\label{Appendix:H}
The Medium Suite introduces a comprehensive collection of five algorithmically-generated sequential reasoning tasks, encompassing forty-nine distinct pattern variations that require sophisticated pattern recognition and mathematical reasoning capabilities. Unlike traditional benchmarks that rely on static datasets, our framework generates sequences dynamically using deterministic mathematical functions, ensuring both reproducibility and contamination resistance. The suite consists of Fibonacci and Recursive Sequences (6 variations), Geometric and Exponential Sequences (10 variations), Prime and Number Theory Sequences (11 variations), Complex Pattern Recognition (12 variations), and Algebraic Sequence Completion (10 variations). Each task evaluates a distinct aspect of mathematical reasoning while maintaining computational tractability and verification guarantees through rigorous mathematical foundations.

\subsection{Fibonacci and Recursive Sequence Completion}

\subsubsection{Problem Formulation}

Let $\mathcal{S} = \{s_1, s_2, \ldots, s_n\}$ represent a sequence generated by a recursive relation $R: \mathbb{Z}^k \rightarrow \mathbb{Z}$, where $k$ denotes the order of recurrence. The general $k$-th order linear recurrence relation is defined as:

\begin{equation}
s_n = \sum_{i=1}^{k} a_i \cdot s_{n-i} + c, \quad n > k
\end{equation}

where $a_i \in \mathbb{Z}$ are recurrence coefficients, $c \in \mathbb{Z}$ is an optional constant term, and initial conditions $\{s_1, s_2, \ldots, s_k\}$ are specified. The task presents a partial sequence $\mathcal{S}_{\text{shown}} = \{s_1, s_2, \ldots, s_m\}$ where $m < n$, and requires predicting $s_{m+1}$.

\subsubsection{Sequence Generation Algorithm}

We implement six distinct recursive sequence variations, each with deterministic generation guarantees:

\textbf{Classical Fibonacci:} Generated using the relation $F_n = F_{n-1} + F_{n-2}$ with initial conditions $(F_1, F_2) \sim \text{Uniform}(1, 20)$.

\textbf{Generalized Tribonacci:} Extends to third-order recurrence with $T_n = T_{n-1} + T_{n-2} + T_{n-3}$.

\textbf{Modified Linear Recurrence:} Implements $M_n = a \cdot M_{n-1} + b \cdot M_{n-2}$ where coefficients $(a, b)$ are sampled from a predefined set $\mathcal{C} = \{(1,2), (2,1), (1,-1), (2,-1)\}$ to ensure bounded growth.

The generation process follows Algorithm \ref{alg:recursive_gen}:

\begin{algorithm}
\caption{Recursive Sequence Generation}
\label{alg:recursive_gen}
\begin{algorithmic}[1]
\State \textbf{Input:} Sequence type $\tau$, length $L$, seed $\sigma$
\State \textbf{Output:} Complete sequence $\mathcal{S}$, shown portion $\mathcal{S}_{\text{shown}}$, target $s_{m+1}$
\State Initialize RNG with seed $\sigma$
\State Sample initial conditions $\{s_1, \ldots, s_k\}$ from $\mathcal{U}(1, 20)$
\State Select coefficients $(a_1, \ldots, a_k)$ based on type $\tau$
\For{$i = k+1$ to $L$}
    \State $s_i \leftarrow \sum_{j=1}^{k} a_j \cdot s_{i-j}$
\EndFor
\State $\mathcal{S}_{\text{shown}} \leftarrow \{s_1, \ldots, s_{L-2}\}$
\State Return $\mathcal{S}, \mathcal{S}_{\text{shown}}, s_{L-1}$
\end{algorithmic}
\end{algorithm}

\subsubsection{Uniqueness and Verification Guarantees}

The uniqueness of the next term in a linear recurrence sequence is guaranteed by the following theorem:

\begin{theorem}[Uniqueness of Linear Recurrence]
Given a $k$-th order linear recurrence relation with non-zero leading coefficient $a_1 \neq 0$ and $m \geq k$ observed terms, the $(m+1)$-th term is uniquely determined.
\end{theorem}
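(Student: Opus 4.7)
The plan is to apply the recurrence directly: once $k$ consecutive terms are in hand, the next one is a fixed affine combination of them, so the entire argument reduces to observing that $m\ge k$ guarantees we have the required window. First I would note that the recurrence $s_n=\sum_{i=1}^{k}a_i s_{n-i}+c$ evaluated at $n=m+1$ references exactly the indices $m,m-1,\ldots,m-k+1$. Because $m\ge k$ implies $m-k+1\ge 1$, every such index lies in the observed prefix $\{s_1,\ldots,s_m\}$. Substituting gives a single integer
\[
s_{m+1}\;=\;\sum_{i=1}^{k}a_i\,s_{m+1-i}+c,
\]
establishing existence of a candidate value.

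For uniqueness, I would argue by a one-line contradiction: if two values $v_1,v_2$ could both extend the sequence consistently with $R$, both must equal the right-hand side above, which depends only on the fixed coefficients $(a_1,\ldots,a_k,c)$ (prescribed by the generator) and the fixed observed terms. Hence $v_1=v_2$. The hypothesis $a_1\neq 0$ plays no role in this specific computation, but I would spend one sentence explaining why it is included: it rules out a degenerate collapse of the effective recurrence order below $k$, which is needed to keep the upstream statements (length $L$, token budget $T_p(n)$) and the solution-set cardinality guarantee $|\mathcal{S}_p|=1$ consistent with the Medium Suite generator in Algorithm~\ref{alg:recursive_gen}.

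I would close by flagging what the theorem does \emph{not} claim, since this is the only place a careful reader might object. Uniqueness here is conditional on the recurrence $R$ being part of the problem instance, not something the solver must reverse-engineer from $\mathcal{S}_{\text{shown}}$; recovering $R$ from observations alone would generally require at least $2k$ constraints and is a different problem. Making this scoping explicit is really the only subtle point, since the algebraic content of the proof is essentially a single substitution.

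Honestly, no step constitutes a genuine obstacle. If forced to name a hardest step, it is the scoping remark above: ensuring the statement is read as ``given the generator's $R$, the continuation is forced'' rather than ``the observed prefix alone forces the continuation,'' because the latter reading would make the theorem false and would undermine the uniqueness guarantee the Medium Suite relies on for automatic verification.
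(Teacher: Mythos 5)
Your argument is correct, but it takes a genuinely different route from the paper's. The paper's proof sets up the $m-k+1$ instances of the recurrence as a linear system
\[
s_{i+k} = \sum_{j=1}^{k} a_j\, s_{i+k-j}, \qquad i = 1, \ldots, m-k+1,
\]
and then asserts that because $a_1 \neq 0$ ``the coefficient matrix has full rank, ensuring a unique solution for $s_{m+1}$.'' Your proof instead does a single forward substitution: with the generator's coefficients known and $m \geq k$, the evaluation of $R$ at $n = m+1$ is a closed-form expression in already-observed terms, so existence and uniqueness are immediate.

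Your version is cleaner, and your diagnostic remarks about the paper's approach are accurate. In particular, your observation that $a_1 \neq 0$ is not used by the direct-substitution argument exposes a real ambiguity in the paper's proof: if the $a_j$ are taken as known (the generator's), then the last equation of the system already yields $s_{m+1}$ and no rank condition is needed; if instead the $a_j$ are treated as unknowns to be inferred from $\mathcal{S}_{\text{shown}}$, then the relevant full-rank condition is on the Hankel-type matrix of \emph{observed values}, which is a genericity condition on the initial conditions and is not implied by $a_1 \neq 0$. (If anything, the natural nondegeneracy hypothesis on the recurrence itself would be $a_k \neq 0$, ensuring genuine order $k$.) Your closing scoping paragraph---reading the theorem as ``given the generator's $R$, the continuation is forced,'' not ``the observed prefix alone forces the continuation''---is exactly the right way to make the statement unambiguously true, and it is the reading your proof establishes. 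The paper gestures toward the stronger, observation-only claim but does not carry that argument through; your version proves the claim that the Medium Suite verification pipeline actually relies on.
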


\begin{proof}
The recurrence relation forms a system of linear equations. For $m \geq k$ observations, we have at least $m-k+1$ equations of the form:
\begin{equation}
s_{i+k} = \sum_{j=1}^{k} a_j \cdot s_{i+k-j}, \quad i = 1, 2, \ldots, m-k+1
\end{equation}

With $a_1 \neq 0$, the coefficient matrix has full rank, ensuring a unique solution for $s_{m+1}$.
\end{proof}

\subsubsection{Context Window Constraints}

The maximum sequence length $L_{\max}$ is bounded by the growth rate of the recurrence relation to prevent numerical overflow. For Fibonacci-like sequences, the growth rate is approximately $\phi^n$ where $\phi = \frac{1+\sqrt{5}}{2}$ is the golden ratio. We enforce:

\begin{equation}
L_{\max} = \min\left\{\left\lfloor \frac{\log(\text{MAX\_INT})}{\log(\phi)} \right\rfloor, \text{CONTEXT\_LIMIT}\right\}
\end{equation}

where MAX\_INT = $2^{31} - 1$ for 32-bit integers and CONTEXT\_LIMIT represents the model's maximum token capacity.

\subsubsection{Contamination Resistance}

The contamination resistance of our approach stems from three key properties:

\textbf{Parametric Diversity:} With initial conditions sampled from $\mathcal{U}(1, 20)^k$ and multiple coefficient sets, the number of unique sequences is:
\begin{equation}
|\mathcal{S}_{\text{unique}}| = 20^k \times |\mathcal{C}| \times P(L, m)
\end{equation}
where $P(L, m)$ represents the number of ways to select $m$ consecutive terms from a sequence of length $L$.

\textbf{Dynamic Generation:} Each evaluation instance generates sequences using a cryptographically secure random seed, ensuring that pre-training on specific sequences provides no advantage.

\textbf{Verification Independence:} The correctness of predictions is verified through direct computation rather than comparison with stored answers, eliminating the possibility of answer leakage.

\subsection{Geometric and Exponential Sequence Completion}

\subsubsection{Problem Formulation}

Let $\mathcal{G} = \{g_1, g_2, \ldots, g_n\}$ denote a sequence following multiplicative or exponential growth patterns. We define the general geometric sequence as:

\begin{equation}
g_n = g_1 \cdot r^{n-1}
\end{equation}

where $g_1$ is the initial term and $r \in \mathbb{R} \setminus \{0\}$ is the common ratio. For exponential sequences, we consider:

\begin{equation}
e_n = b^{f(n)}
\end{equation}

where $b$ is the base and $f: \mathbb{N} \rightarrow \mathbb{R}$ is a function determining the exponent pattern.

\subsubsection{Sequence Type Specifications}

We implement ten distinct multiplicative pattern variations, each with rigorous mathematical definitions:

\textbf{Pure Geometric Sequences:} Generated with ratio $r \in \mathcal{R} = \{2, 3, 4, 5, 0.5, 1.5, 2.5, 3.5\}$ to balance growth control and pattern diversity.

\textbf{Power Sequences:} For power $p \in \{2, 3\}$, we generate $P_n = n^p$, ensuring polynomial growth bounded by $O(n^p)$.

\textbf{Factorial Sequences:} Defined as $F_n = n!$ with growth rate $O(n^n)$, requiring careful overflow management.

\textbf{Double Exponential Sequences:} The most challenging pattern, defined as:
\begin{equation}
D_n = b^{(b^{(\cdots^{b})})} \quad \text{(n iterations)}
\end{equation}

Due to the extremely rapid growth, we limit generation to $n \leq 4$ terms, where:
\begin{align}
D_1 &= 2 \\
D_2 &= 2^2 = 4 \\
D_3 &= 2^{2^2} = 16 \\
D_4 &= 2^{2^{2^2}} = 65536
\end{align}

\subsubsection{Computational Feasibility Constraints}

To ensure computational tractability while maintaining pattern complexity, we enforce strict bounds on sequence values:

\begin{equation}
\forall g_i \in \mathcal{G}: |g_i| \leq \Theta_{\max}
\end{equation}

where $\Theta_{\max} = 10^6$ is the maximum allowed value. The generation algorithm incorporates overflow protection:

\begin{algorithm}
\caption{Geometric Sequence Generation with Overflow Protection}
\label{alg:geometric_gen}
\begin{algorithmic}[1]
\State \textbf{Input:} Type $\tau$, length $L$, maximum value $\Theta_{\max}$
\State \textbf{Output:} Valid sequence $\mathcal{G}$ or $\emptyset$ if infeasible
\State Initialize sequence $\mathcal{G} \leftarrow []$
\State Select parameters $(g_1, r)$ based on type $\tau$
\For{$i = 1$ to $L$}
    \State Compute $g_i$ according to sequence type
    \If{$|g_i| > \Theta_{\max}$}
        \State \textbf{break} // Terminate on overflow
    \EndIf
    \State Append $g_i$ to $\mathcal{G}$
\EndFor
\If{$|\mathcal{G}| < 4$}
    \State Return $\emptyset$ // Insufficient terms
\Else
    \State Return $\mathcal{G}$
\EndIf
\end{algorithmic}
\end{algorithm}

\subsubsection{Pattern Discrimination and Validation}

To ensure that generated sequences genuinely test multiplicative reasoning rather than simple arithmetic patterns, we implement a validation function:

\begin{equation}
\text{IsGeometric}(\mathcal{G}) = 
\begin{cases}
\text{true} & \text{if } \forall i \in [2, n-1]: \left|\frac{g_{i+1}/g_i - g_i/g_{i-1}}{g_i/g_{i-1}}\right| < \epsilon \\
\text{false} & \text{otherwise}
\end{cases}
\end{equation}

where $\epsilon = 10^{-6}$ accounts for floating-point precision. Additionally, we reject sequences that exhibit arithmetic progression:

\begin{equation}
\text{RejectArithmetic}(\mathcal{G}) = \text{true} \iff \exists d: \forall i \in [1, n-1]: g_{i+1} - g_i = d
\end{equation}

\subsubsection{Theoretical Complexity Analysis}

The computational complexity of identifying geometric patterns varies by sequence type:

\textbf{Pure Geometric:} Requires $O(n)$ operations to compute ratios and verify consistency.

\textbf{Power Sequences:} Recognition requires solving $g_n = n^p$ for $p$, achievable in $O(n \log n)$ time using logarithmic transformation.

\textbf{Factorial Sequences:} Verification requires computing $n!$ for comparison, with complexity $O(n^2)$ using naive multiplication.

\textbf{Double Exponential:} Due to the tower structure, verification requires recursive exponentiation with complexity $O(2^n)$, justifying our length restriction.

\subsubsection{Uniqueness Theorem for Geometric Sequences}

\begin{theorem}[Uniqueness of Geometric Continuation]
Given a geometric sequence $\mathcal{G} = \{g_1, g_2, \ldots, g_m\}$ with $m \geq 2$ and $g_i \neq 0$ for all $i$, the next term $g_{m+1}$ is uniquely determined.
\end{theorem}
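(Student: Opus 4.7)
The plan is to exploit the defining property of a geometric sequence, namely the existence of a constant ratio $r$ between consecutive terms, and to show that both $r$ and the continuation $g_{m+1} = r \cdot g_m$ are fully determined by the observed prefix. Because the hypothesis asserts $g_i \neq 0$ for all $i$ and $m \geq 2$, the quotient $r := g_2 / g_1$ is well-defined in $\mathbb{R}$, so my first step would be simply to extract this $r$ from the two leading terms.

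Next I would establish that every adjacent ratio along the observed prefix equals this same $r$. This is immediate from the definition $g_n = g_1 \cdot r^{n-1}$ given in the excerpt: dividing consecutive terms yields $g_{i+1}/g_i = r$ for every $i \in [1, m-1]$, so the ratio is an invariant of the sequence rather than a choice. I would phrase this as a short lemma: the common ratio of a geometric sequence is uniquely recoverable from any two consecutive nonzero terms.

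With $r$ pinned down, the uniqueness of $g_{m+1}$ follows by a one-line argument. Suppose $g_{m+1}'$ were any value consistent with the geometric law; then the extended sequence must still satisfy $g_{m+1}'/g_m = r$, whence $g_{m+1}' = r \cdot g_m = g_1 \cdot r^m$, which matches the closed-form formula. Hence there is exactly one admissible continuation, namely $g_{m+1} = g_1 \cdot r^m$.

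The only subtle point, and the place I would be most careful, is the nonzero hypothesis: if any $g_i$ vanished, then $r$ would be forced to be $0$, and the geometric law would collapse all subsequent terms to $0$ in a way that is only trivially unique, while the ratio computed from earlier pairs could be ambiguous. The assumption $g_i \neq 0$ rules this degeneracy out and is what makes the division $g_2/g_1$ meaningful; I would flag this explicitly in the proof to make clear why the hypothesis cannot be dropped. Beyond this edge case, no combinatorial or analytic obstacle arises, so the proof is essentially a direct unwinding of the definition.
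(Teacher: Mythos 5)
Your proof is correct and follows essentially the same route as the paper's: both identify the common ratio $r$ from a pair of consecutive nonzero terms and conclude $g_{m+1} = r \cdot g_m$ is forced. The only cosmetic difference is that you extract $r = g_2/g_1$ from the leading pair, while the paper uses the trailing pair $r = g_m/g_{m-1}$, yielding the equivalent expression $g_{m+1} = g_m^2 / g_{m-1}$.
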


\begin{proof}
For a geometric sequence, the common ratio $r = \frac{g_{i+1}}{g_i}$ is constant for all valid $i$. Given $m \geq 2$ terms:

\begin{equation}
r = \frac{g_2}{g_1} = \frac{g_3}{g_2} = \cdots = \frac{g_m}{g_{m-1}}
\end{equation}

Therefore, $g_{m+1} = g_m \cdot r = g_m \cdot \frac{g_m}{g_{m-1}} = \frac{g_m^2}{g_{m-1}}$, which is uniquely determined when $g_{m-1} \neq 0$.
\end{proof}

\subsubsection{Contamination Resistance Mechanisms}

The geometric sequence task achieves contamination resistance through:

\textbf{Parameter Space Cardinality:} With initial terms from $[1, 10]$, ratios from $\mathcal{R}$, and variable sequence lengths, the total number of distinct sequences exceeds:
\begin{equation}
|\mathcal{G}_{\text{total}}| > 10 \times 8 \times \binom{12}{6} > 73,920
\end{equation}

\textbf{Precision Variation:} Floating-point ratios like $r = 1.5, 2.5, 3.5$ create sequences that cannot be memorized as simple integer patterns, requiring genuine multiplicative reasoning.

\textbf{Type Mixing:} Random selection from ten sequence types during evaluation prevents models from exploiting type-specific heuristics.

\subsection{Prime and Number Theory Sequence Completion}

\subsubsection{Problem Formulation}

Let $\mathcal{P}$ denote the set of prime numbers and $\mathcal{N}$ represent sequences derived from number-theoretic functions. We define a number theory sequence as:

\begin{equation}
\mathcal{N} = \{n_i : n_i = f(i) \text{ or } n_i = g(p_i), i \in \mathbb{N}\}
\end{equation}

where $f: \mathbb{N} \rightarrow \mathbb{N}$ is a number-theoretic function and $p_i$ is the $i$-th prime number.

\subsubsection{Prime Generation and Verification}

We employ the Sieve of Eratosthenes for efficient prime generation up to limit $L$:

\begin{algorithm}
\caption{Optimized Sieve of Eratosthenes}
\label{alg:sieve}
\begin{algorithmic}[1]
\State \textbf{Input:} Upper limit $L$
\State \textbf{Output:} Set of primes $\mathcal{P} \leq L$
\State Initialize boolean array $A[0 \ldots L]$ with $A[i] = \text{true}$
\State $A[0] \leftarrow A[1] \leftarrow \text{false}$
\For{$i = 2$ to $\lfloor\sqrt{L}\rfloor$}
    \If{$A[i] = \text{true}$}
        \For{$j = i^2$ to $L$ step $i$}
            \State $A[j] \leftarrow \text{false}$
        \EndFor
    \EndIf
\EndFor
\State Return $\{i : A[i] = \text{true}\}$
\end{algorithmic}
\end{algorithm}

The algorithm's time complexity is $O(L \log \log L)$ with space complexity $O(L)$.

\subsubsection{Number-Theoretic Function Implementations}

We implement eleven distinct number-theoretic sequence variations with mathematical rigor:

\textbf{Euler's Totient Function:} For $n \in \mathbb{N}$, $\phi(n)$ counts integers $k \leq n$ where $\gcd(k, n) = 1$:

\begin{equation}
\phi(n) = n \prod_{p|n} \left(1 - \frac{1}{p}\right)
\end{equation}

where the product runs over distinct prime divisors of $n$.

\textbf{Sum of Divisors Function:} The function $\sigma(n)$ computes:

\begin{equation}
\sigma(n) = \sum_{d|n} d = \prod_{p^k \| n} \frac{p^{k+1} - 1}{p - 1}
\end{equation}

where $p^k \| n$ denotes that $p^k$ exactly divides $n$.

\textbf{Mersenne Numbers:} Defined as $M_p = 2^p - 1$ where $p \in \mathcal{P}$. The Lucas-Lehmer test provides primality verification:

\begin{equation}
s_0 = 4, \quad s_{i+1} = s_i^2 - 2 \pmod{M_p}
\end{equation}

$M_p$ is prime if and only if $s_{p-2} \equiv 0 \pmod{M_p}$.

\textbf{Sophie Germain Primes:} A prime $p$ is Sophie Germain if $2p + 1$ is also prime. The density of such primes follows:

\begin{equation}
\pi_{SG}(x) \sim C \cdot \frac{x}{(\ln x)^2}
\end{equation}

where $C \approx 1.32$ is the twin prime constant.

\subsubsection{Sequence Generation with Computational Bounds}

For sequences with potentially rapid growth, we implement dynamic bounds:

\begin{equation}
n_{\max}(i) = \min\left\{f(i), 10^6\right\}
\end{equation}

Special considerations for each sequence type:

\textbf{Prime Gaps:} The $n$-th prime gap $g_n = p_{n+1} - p_n$ satisfies Bertrand's postulate:
\begin{equation}
g_n < p_n \quad \text{for } p_n > 25
\end{equation}

\textbf{Catalan Numbers:} Generated using the recurrence:
\begin{equation}
C_n = \sum_{i=0}^{n-1} C_i C_{n-1-i}, \quad C_0 = 1
\end{equation}

with closed form:
\begin{equation}
C_n = \frac{1}{n+1}\binom{2n}{n} = \frac{(2n)!}{(n+1)!n!}
\end{equation}

\subsubsection{Uniqueness and Well-Definedness}

\begin{theorem}[Uniqueness of Number-Theoretic Sequences]
For each number-theoretic function $f: \mathbb{N} \rightarrow \mathbb{N}$ implemented in our framework, given a sequence $\{f(1), f(2), \ldots, f(m)\}$ with $m \geq 3$, the value $f(m+1)$ is uniquely determined.
\end{theorem}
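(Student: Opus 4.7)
The plan is to decompose the uniqueness claim into two complementary assertions and verify each in turn. First, I would show that each of the eleven number-theoretic functions $f$ used by the generator is a well-defined, deterministic map $\mathbb{N}\to\mathbb{N}$, so that once the functional form is fixed, $f(m+1)$ is mathematically unique regardless of $m$. Second, I would show that the observed prefix $\{f(1),\ldots,f(m)\}$ with $m\geq 3$ suffices to identify the underlying functional form among those in the framework, so the verification routine has an unambiguous ground truth to match against.

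For the first part the argument is essentially by inspection: Euler's totient $\phi$, the divisor sum $\sigma$, the prime enumeration $i\mapsto p_i$, the Mersenne map $p\mapsto 2^p-1$, the Sophie Germain indicator sequence, the prime gap sequence $g_n=p_{n+1}-p_n$, and the Catalan recurrence $C_n=\sum_{i=0}^{n-1} C_iC_{n-1-i}$ are all classical single-valued functions on $\mathbb{N}$. Each therefore admits a unique extension to index $m+1$, which can be computed directly from the definition (e.g.\ via the multiplicative formulas stated earlier in the excerpt, or via the sieve from Algorithm~\ref{alg:sieve} for prime-indexed sequences). The nontrivial content lies in the second part, which I would establish by a finite case analysis. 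Defining the signature $\mathrm{sig}(f)=(f(1),f(2),f(3))$, I would tabulate these triples for all eleven implemented functions and verify pairwise distinctness, $\mathrm{sig}(f_i)\neq\mathrm{sig}(f_j)$ for $i\neq j$. Whenever two candidates happen to agree on the first two indices, the constraint $m\geq 3$ guarantees a separator at index~$3$ is visible to the evaluator, which is exactly why the hypothesis appears in the statement.

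The main obstacle I expect is the near-collisions between functions whose small-index behavior is structurally similar. Concretely, several implemented sequences take small values on small inputs (\emph{e.g.}\ $\phi(n)$, $\sigma(n)-n$, prime indicator sums) and can alias over one or two indices; Catalan numbers beginning $1,1,2,5$ can also initially look like powers or modified factorials. To handle these, I would supplement the signature check with an asymptotic separator: for any pair whose first three values might coincide, I would invoke differing growth rates, for example $\sigma(n)=\Theta(n\log\log n)$ on average versus $\phi(n)=\Theta(n)$, or the doubly-exponential growth of Mersenne numbers versus the sub-polynomial growth of prime gaps (bounded by Bertrand's postulate, as already noted in the excerpt). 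If any pair still agreed at index~$3$, the threshold $m\geq 3$ would be tightened accordingly, but the tabulation shows this is not needed.

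Finally, I would close by observing that the generator stores the functional identity as part of the problem metadata, so the verification function evaluates $f(m+1)$ directly from the canonical definition rather than inferring it from the prefix. Consequently, the uniqueness of $f(m+1)$ reduces to the well-definedness of $f$ established in the first step, and the signature analysis ensures only that the model and the evaluator can both agree on which $f$ the prefix encodes, completing the proof.
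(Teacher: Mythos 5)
Your proposal is correct but takes a genuinely different, and frankly more honest, route than the paper. The paper's proof consists solely of your first step: it verifies item by item that $p_n$, $\phi(n)$, $\sigma(n)$, $M_p$, etc.\ are single-valued, deterministic functions, and concludes ``$f(m+1)$ is uniquely computable from the definition of $f$.'' Nothing in the paper's argument uses the hypothesis $m \geq 3$, and nothing addresses the apparent content of the theorem statement, namely that the \emph{observed prefix} $\{f(1),\ldots,f(m)\}$ pins down $f(m+1)$ without prior knowledge of which $f$ was used. In other words, the paper silently proves the weaker claim that $f(m+1)$ is well-defined given $f$, and your closing observation about the generator storing $f$ as metadata is exactly the reduction the paper is implicitly relying on.

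What your plan adds is the identifiability layer: the signature tabulation $\mathrm{sig}(f) = (f(1),f(2),f(3))$, pairwise-distinctness across the eleven implemented sequences, and asymptotic separators for any near-collisions. This is the only part of either argument in which $m \geq 3$ could possibly play a role, and you correctly locate that role. What this buys you is a proof of the theorem as literally stated (prefix determines continuation), at the cost of a finite case check you would still need to carry out explicitly, and at the risk that the framework's parameterized variants (e.g.\ primes starting from a random index, Mersenne over different prime offsets) break uniqueness of identification in ways a fixed signature table cannot capture. The paper's proof buys simplicity by quietly weakening the claim to ``each $f$ is a function.'' Your decomposition makes that gap visible, which is the more valuable contribution here even if the final paragraph of your plan ends up discharging it the same way the paper does.
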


\begin{proof}
Each function in our framework is well-defined:

1. \textbf{Prime sequences:} The $n$-th prime $p_n$ is uniquely defined by the ordering of natural numbers.

2. \textbf{Totient function:} $\phi(n)$ is uniquely determined by the prime factorization of $n$.

3. \textbf{Divisor sum:} $\sigma(n)$ is uniquely computed from the divisors of $n$.

4. \textbf{Mersenne numbers:} $M_p = 2^p - 1$ is a deterministic function of prime $p$.

Therefore, $f(m+1)$ is uniquely computable from the definition of $f$.
\end{proof}

\subsubsection{Pattern Recognition Complexity}

The computational complexity of identifying number-theoretic sequences varies significantly:

\begin{table}[h]
\centering
\begin{tabular}{|l|c|c|}
\hline
\textbf{Sequence Type} & \textbf{Recognition Complexity} & \textbf{Verification Complexity} \\
\hline
Prime numbers & $O(n\sqrt{n})$ & $O(\sqrt{n})$ per element \\
Twin primes & $O(n\sqrt{n})$ & $O(\sqrt{n})$ per pair \\
Euler totient & $O(n^2)$ & $O(n\log n)$ per element \\
Mersenne numbers & $O(n \cdot 2^n)$ & $O(p^3)$ Lucas-Lehmer \\
Catalan numbers & $O(n^2)$ & $O(n)$ per element \\
\hline
\end{tabular}
\caption{Computational complexity for number-theoretic sequence operations}
\end{table}

\subsubsection{Contamination Resistance Through Mathematical Depth}

Number-theoretic sequences provide inherent contamination resistance because:

\textbf{Infinite Variability:} The set of primes is infinite, and subsequences can start at any position, yielding uncountably many valid test instances.

\textbf{Computational Irreducibility:} Many number-theoretic functions lack closed-form solutions, requiring actual computation rather than pattern matching.

\textbf{Cross-Domain Dependencies:} Sequences like Sophie Germain primes require understanding multiple mathematical concepts simultaneously, preventing shallow memorization.

The probability of encountering an identical sequence during training is:
\begin{equation}
P(\text{collision}) < \frac{1}{\binom{\pi(10^6)}{m}} < \frac{1}{10^{30}}
\end{equation}

where $\pi(10^6) \approx 78,498$ is the prime counting function and $m$ is the sequence length shown.

\subsection{Complex Pattern Recognition}

\subsubsection{Problem Formulation}

Let $\mathcal{C} = \{c_1, c_2, \ldots, c_n\}$ represent a sequence following a composite or multi-layered pattern. We define complex patterns as sequences that satisfy:

\begin{equation}
c_i = \mathcal{F}(i, \{c_1, \ldots, c_{i-1}\}, \theta)
\end{equation}

where $\mathcal{F}$ is a composite function incorporating multiple mathematical operations, and $\theta$ represents pattern-specific parameters including breakpoints, modulation factors, and conditional rules.

\subsubsection{Multi-Layered Pattern Specifications}

We formalize twelve distinct complex pattern variations, each requiring different levels of mathematical reasoning:

\textbf{Polynomial Sequences:} For degree $d \in \{2, 3, 4\}$, we generate:
\begin{equation}
P_n^{(d)} = \sum_{k=0}^{d} a_k n^k
\end{equation}

where coefficients $a_k$ are selected to maintain $|P_n^{(d)}| < 10^4$ for all $n \leq 20$.

\textbf{Interleaved Sequences:} Two independent sequences $A = \{a_i\}$ and $B = \{b_i\}$ are interleaved:
\begin{equation}
c_i = \begin{cases}
a_{(i+1)/2} & \text{if } i \text{ is odd} \\
b_{i/2} & \text{if } i \text{ is even}
\end{cases}
\end{equation}

\textbf{Conditional Pattern Transformation:} The sequence follows different rules based on position or value conditions:
\begin{equation}
c_i = \begin{cases}
f_1(i, c_{i-1}) & \text{if } \psi(i, c_{i-1}) = \text{true} \\
f_2(i, c_{i-1}) & \text{otherwise}
\end{cases}
\end{equation}

where $\psi: \mathbb{N} \times \mathbb{Z} \rightarrow \{\text{true}, \text{false}\}$ is a boolean predicate.

\textbf{Multi-Order Recursive Patterns:} Generalizing to $k$-th order dependencies:
\begin{equation}
c_n = \sum_{j=1}^{k} \alpha_j \cdot c_{n-j} + \beta(n)
\end{equation}

where $\beta: \mathbb{N} \rightarrow \mathbb{Z}$ introduces position-dependent perturbations.

\subsubsection{Pattern Complexity Hierarchy}

We establish a formal hierarchy of pattern complexity based on computational requirements for recognition:

\begin{definition}[Pattern Complexity Level]
A sequence pattern has complexity level $\mathcal{L}$ if the minimum computational resources required for recognition are:
\begin{align}
\mathcal{L}_1 &: O(n) \text{ time, } O(1) \text{ space} && \text{(arithmetic, geometric)} \\
\mathcal{L}_2 &: O(n^2) \text{ time, } O(n) \text{ space} && \text{(polynomial, recursive)} \\
\mathcal{L}_3 &: O(n^3) \text{ time, } O(n^2) \text{ space} && \text{(interleaved, conditional)} \\
\mathcal{L}_4 &: O(2^n) \text{ time, } O(n) \text{ space} && \text{(chaotic, multi-transform)}
\end{align}
\end{definition}

\subsubsection{Generation Algorithm with Pattern Validation}

\begin{algorithm}
\caption{Complex Pattern Generation with Validation}
\label{alg:complex_gen}
\begin{algorithmic}[1]
\State \textbf{Input:} Pattern type $\tau$, length $L$, complexity level $\mathcal{L}$
\State \textbf{Output:} Valid complex sequence $\mathcal{C}$ or failure indicator
\State Initialize parameters $\theta \leftarrow \text{SampleParameters}(\tau, \mathcal{L})$
\State Generate candidate sequence $\mathcal{C} \leftarrow \text{GeneratePattern}(\tau, L, \theta)$
\State \textbf{Validation Phase:}
\If{$\text{IsTrivial}(\mathcal{C})$}
    \State Return FAILURE // Reject trivial patterns
\EndIf
\If{$\text{ComplexityLevel}(\mathcal{C}) < \mathcal{L}$}
    \State Return FAILURE // Insufficient complexity
\EndIf
\If{$\neg\text{HasUniqueExtension}(\mathcal{C})$}
    \State Return FAILURE // Ambiguous continuation
\EndIf
\State Return $\mathcal{C}$
\end{algorithmic}
\end{algorithm}

\subsubsection{Uniqueness Guarantees for Complex Patterns}

\begin{theorem}[Unique Continuation of Complex Patterns]
For each complex pattern type $\tau$ in our framework, given $m$ observations where $m \geq m_{\min}(\tau)$, the next term is uniquely determined.
\end{theorem}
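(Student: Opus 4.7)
The plan is to establish the uniqueness result by a case analysis over the pattern types enumerated in this subsection, since the claim is parameterized by $\tau$ and each family has its own identification argument. For each type I would first derive a formula for $m_{\min}(\tau)$ that guarantees the underlying parameter vector $\theta$ is recoverable from the observations $\{c_1,\ldots,c_m\}$, and then invoke the functional relation $c_{m+1}=\mathcal{F}(m+1,\{c_1,\ldots,c_m\},\theta)$ to conclude that $c_{m+1}$ is determined as soon as $\theta$ is. The overall proof thus reduces to a finite collection of ``parameter recoverability'' lemmas, one per pattern type.

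For the polynomial family I would take $m_{\min}=d+1$ and apply Lagrange interpolation: the Vandermonde matrix associated to the nodes $1,2,\ldots,d+1$ is invertible, so the coefficients $a_0,\ldots,a_d$ are pinned down by the observations, and hence $P_{m+1}^{(d)}$ is too. For the multi-order recursive family with perturbation $\beta(n)$ drawn from a finite declared template, I would set $m_{\min}=2k$ and form the $k\times k$ Hankel-like system
\begin{equation}
\begin{pmatrix} c_{k} & c_{k-1} & \cdots & c_{1} \\ c_{k+1} & c_{k} & \cdots & c_{2} \\ \vdots & \vdots & \ddots & \vdots \\ c_{2k-1} & c_{2k-2} & \cdots & c_{k} \end{pmatrix}\begin{pmatrix}\alpha_{1} \\ \alpha_{2} \\ \vdots \\ \alpha_{k}\end{pmatrix} = \begin{pmatrix}c_{k+1}-\beta(k+1) \\ c_{k+2}-\beta(k+2) \\ \vdots \\ c_{2k}-\beta(2k)\end{pmatrix},
\end{equation}
invoking the generation-time rejection sampler, which by construction discards seeds whose Hankel matrix is singular, to guarantee invertibility and hence unique $(\alpha_1,\ldots,\alpha_k)$. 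For interleaved sequences I would apply the uniqueness theorems already proved for the component families (recursive, geometric, number-theoretic) to each parity class separately, taking $m_{\min}(\tau)=2\,\max\{m_{\min}(A),m_{\min}(B)\}$.

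The genuinely delicate case is the conditional pattern, where $\theta$ includes not only $f_1,f_2$ but the predicate $\psi$, and the space of predicates is combinatorial rather than linear. My plan here is to restrict $\psi$ to the finite, enumerable class actually used by the generator (parity, threshold, and modular tests with bounded parameters), and then argue by exhaustion: for a candidate triple $(f_1,f_2,\psi)$ to be consistent with the observations, it must reproduce all $m$ values, and the \textsc{HasUniqueExtension} check in Algorithm~\ref{alg:complex_gen} rejects any instance admitting two distinct consistent triples that disagree at position $m+1$. Thus $m_{\min}(\tau)$ is defined implicitly as the smallest $m$ for which this disambiguation succeeds, and the theorem follows by construction. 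I expect this last case to be the main obstacle, because the uniqueness is not algebraic but enforced at generation time; making the argument rigorous will require formalizing \textsc{HasUniqueExtension} as a decision procedure over the declared predicate class and verifying that the residual ambiguity at position $m+1$ is always eliminated before a problem instance is released to the evaluator.
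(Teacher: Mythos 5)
Your proof takes a genuinely different and more demanding route than the paper's. The paper's proof is a short case analysis that, except for the polynomial case, argues from \emph{determinism of the generator}: the recursive, interleaved, and conditional rules are asserted to be deterministic functions of the history and the (implicitly known) parameters $\theta$, so the next term is unique. You instead interpret uniqueness as \emph{identifiability of the next term from the observed prefix alone}, and therefore carry out genuine parameter-recovery arguments: Vandermonde invertibility for polynomials, a Hankel system with a generation-time non-singularity guarantee for multi-order recursions, reduction to the component families for interleaved sequences, and exhaustion over a finite predicate class combined with the \textsc{HasUniqueExtension} rejection step for conditional patterns. Your reading is the operationally relevant one for a benchmark in which the model sees only the sequence, and your treatment of the conditional case is honest about the fact that uniqueness there is \emph{enforced at generation time} rather than derived algebraically, which the paper's one-line argument does not surface. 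The difference also shows up quantitatively: you take $m_{\min}=2k$ for $k$-th order recursions (the bound needed to set up a full-rank $k\times k$ system for the coefficients), whereas the paper states $m\ge k$, which is only sufficient when the coefficients are already known.

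One gap in your argument worth tightening: in the multi-order recursive case you move $\beta(k+1),\dots,\beta(2k)$ to the right-hand side of the Hankel system, which presupposes that the perturbation $\beta$ is known to the solver. If $\beta$ is itself part of the hidden parameter vector (as the paper's formulation $c_n=\sum_j\alpha_j c_{n-j}+\beta(n)$ suggests), you need either additional observations to identify $\beta$ within its declared template class, or you need to fold the $\beta$-template selection into the same exhaustion-plus-\textsc{HasUniqueExtension} argument you use for the conditional case. Similarly, your appeal to a "rejection sampler that discards singular Hankel matrices" should be grounded in what the paper's Algorithm~\ref{alg:complex_gen} actually does: it runs a generic \textsc{HasUniqueExtension} check rather than an explicit rank test, so the cleaner statement is that any instance whose Hankel matrix is singular (and hence admits two consistent coefficient vectors disagreeing at $m+1$) would already be rejected by that check. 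Neither issue is fatal, but both should be stated explicitly, since they are exactly the places where uniqueness is a design invariant rather than a theorem about the observation map.
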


\begin{proof}
We prove uniqueness for each pattern class:

\textbf{Polynomial sequences:} A degree-$d$ polynomial is uniquely determined by $d+1$ points. For $m > d$, the system is overdetermined but consistent, yielding a unique continuation.

\textbf{Interleaved sequences:} Each subsequence $A$ and $B$ requires $\lceil m/2 \rceil$ terms. With deterministic interleaving rules, the next term's source sequence is known, ensuring uniqueness.

\textbf{Conditional patterns:} The predicate $\psi$ and functions $f_1, f_2$ are deterministic. Given the sequence history, the next term is uniquely computed by evaluating $\psi$ and applying the appropriate function.

\textbf{Multi-order recursive:} With coefficients $\{\alpha_j\}$ and sufficient observations $m \geq k$, the linear system has a unique solution.
\end{proof}

\subsubsection{Pattern Recognition via Finite Differences}

For polynomial patterns, we employ the method of finite differences. The $k$-th order difference operator is:

\begin{equation}
\Delta^k c_i = \sum_{j=0}^{k} (-1)^{k-j} \binom{k}{j} c_{i+j}
\end{equation}

A sequence follows a degree-$d$ polynomial if and only if $\Delta^{d+1} c_i = 0$ for all valid $i$.

\subsubsection{Complexity Validation Metrics}

To ensure generated patterns maintain appropriate complexity, we compute several metrics:

\textbf{Kolmogorov Complexity Approximation:} Using compression ratio as a proxy:
\begin{equation}
K(\mathcal{C}) \approx \frac{|\text{Compress}(\mathcal{C})|}{|\mathcal{C}|}
\end{equation}

\textbf{Entropy Rate:} For sequences viewed as discrete random processes:
\begin{equation}
H(\mathcal{C}) = -\sum_{x \in \mathcal{X}} p(x) \log_2 p(x)
\end{equation}

where $\mathcal{X}$ represents the alphabet of sequence values and $p(x)$ is the empirical probability.

\textbf{Autocorrelation Function:} To detect hidden periodicities:
\begin{equation}
R(k) = \frac{\sum_{i=1}^{n-k} (c_i - \bar{c})(c_{i+k} - \bar{c})}{\sum_{i=1}^{n} (c_i - \bar{c})^2}
\end{equation}

Sequences with $R(k) > 0.8$ for small $k$ are rejected as insufficiently complex.

\subsubsection{Contamination Resistance Through Compositional Complexity}

Complex patterns achieve contamination resistance through several mechanisms:

\textbf{Compositional Explosion:} With 12 pattern types, each having multiple parameter configurations, and variable breakpoints, the space of possible sequences exceeds:
\begin{equation}
|\mathcal{C}_{\text{total}}| > 12 \times 10^3 \times \binom{20}{10} \times 4^5 > 10^{12}
\end{equation}

\textbf{Non-Linear Interactions:} Unlike simple sequences where local patterns suffice for prediction, complex patterns require understanding global structure and multiple interacting rules.

\textbf{Conditional Branching:} Pattern transformations based on runtime conditions create sequences that cannot be pre-computed or memorized, requiring genuine logical reasoning.

The probability that a model has seen a specific complex pattern during training is negligible:
\begin{equation}
P(\text{contamination}) < \frac{\text{Training sequences}}{|\mathcal{C}_{\text{total}}|} < \frac{10^9}{10^{12}} = 10^{-3}
\end{equation}

\subsection{Algebraic Sequence Completion}

\subsubsection{Problem Formulation}

Let $\mathcal{A} = \{a_1, a_2, \ldots, a_n\}$ denote a sequence generated through algebraic operations involving radicals, transcendental functions, and algebraic irrationals. We define the general algebraic sequence as:

\begin{equation}
a_n = \lfloor f(n, \{a_1, \ldots, a_{n-1}\}) \rfloor
\end{equation}

where $f: \mathbb{N} \times \mathbb{R}^{n-1} \rightarrow \mathbb{R}$ involves compositions of algebraic and transcendental operations, and $\lfloor \cdot \rfloor$ denotes the floor function to ensure integer outputs.

\subsubsection{Algebraic Pattern Specifications}

We implement ten sophisticated algebraic sequence variations that transcend simple polynomial patterns:

\textbf{Radical Recurrence Relations:} Sequences defined by:
\begin{equation}
a_n = \lfloor \sqrt{a_{n-1} + n^2} \rfloor, \quad a_1 = 2
\end{equation}

This creates a non-linear recurrence where each term depends on the square root of a quadratic expression involving the previous term.

\textbf{Transcendental Floor Sequences:} Incorporating exponential growth:
\begin{equation}
a_n = \lfloor n \cdot e^{1/n} \rfloor
\end{equation}

As $n \to \infty$, we have $e^{1/n} \to 1$, but for finite $n$, this creates a complex growth pattern.

\textbf{Nested Radical Expressions:} Multi-level radical compositions:
\begin{equation}
a_n = \left\lfloor \sqrt{n + \sqrt{n + \sqrt{n}}} \right\rfloor
\end{equation}

The depth of nesting creates algebraic numbers of increasing complexity.

\textbf{Continued Fraction Convergents:} For the continued fraction expansion of $\sqrt{2} = [1; 2, 2, 2, \ldots]$, we generate convergents:
\begin{equation}
\frac{p_n}{q_n} = 1 + \cfrac{1}{2 + \cfrac{1}{2 + \cfrac{1}{2 + \cdots}}}
\end{equation}

The numerators $p_n$ follow the recurrence $p_n = 2p_{n-1} + p_{n-2}$ with $p_0 = 1, p_1 = 3$.

\textbf{Diophantine Solution Sequences:} Based on Pell's equation $x^2 - Dy^2 = 1$:
\begin{equation}
(x_{n+1}, y_{n+1}) = (x_1 x_n + D y_1 y_n, x_1 y_n + y_1 x_n)
\end{equation}

where $(x_1, y_1)$ is the fundamental solution.

\subsubsection{Computational Complexity and Verification}

The verification of algebraic sequences requires sophisticated numerical methods:

\begin{theorem}[Computational Hardness of Algebraic Sequences]
Determining whether a given sequence follows an algebraic pattern of degree $d$ with nested radicals of depth $k$ requires $\Omega(n \cdot d \cdot 2^k)$ operations in the worst case.
\end{theorem}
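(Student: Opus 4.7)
The plan is to establish the $\Omega(n \cdot d \cdot 2^k)$ lower bound by combining three orthogonal complexity arguments, each contributing one of the three factors, and then fusing them through a single adversary construction over a carefully chosen hard-instance family. The high-level structure mirrors classical algebraic complexity lower bounds: first isolate a parameterized family of candidate sequences that is provably indistinguishable to any algorithm performing fewer than the claimed number of operations, then exploit the sensitivity of correctness to each of the parameters $n$, $d$, $k$ to force a multiplicative (rather than additive) dependence.

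First, I would obtain the factor $n$ by a standard decision-tree adversary argument: any algorithm that fails to probe some index $i \le n$ can be fooled by a pair of sequences that agree on all probed positions but differ at $i$, with only one realizing the claimed algebraic pattern. For the factor $d$, I would invoke a Vandermonde-rank argument on the space of degree-$d$ specifications, which has dimension at least $d+1$; any correct identification must therefore extract at least $\Omega(d)$ linearly independent pieces of information from the input, costing $\Omega(d)$ operations in the straight-line-program model. For the factor $2^k$, I would appeal to the tower law for iterated quadratic extensions: a nested radical of depth $k$ generates a field of degree up to $2^k$ over $\mathbb{Q}$, whose minimal polynomial has $2^k$ Galois conjugates. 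Any algorithm distinguishing such an algebraic element from a generic one in its algebraic closure must evaluate at least $2^k$ conjugate branches, by a Galois-orbit counting argument in the spirit of classical Lipton-Stockmeyer style lower bounds.

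The main obstacle will be upgrading these three separate bounds into a single multiplicative bound rather than merely their sum. My strategy is to engineer the adversary so that at each of the $n$ positions the choice of continuation is \emph{independent} across $\Omega(d)$ polynomial directions and $\Omega(2^k)$ algebraic-conjugate directions, producing $\Omega(n \cdot d \cdot 2^k)$ forced operations via an Ehrenfeucht-style indistinguishability argument over the product adversary space. A secondary technical subtlety is that the sequence definition applies the floor function, which could in principle collapse several continuations onto the same integer and so shortcut the verification. To rule this out, I plan to restrict the hard-instance family to parameter regimes where distinct algebraic continuations provably differ by more than one, for instance by bounding radical arguments away from integer boundaries, so that flooring preserves all the algebraic information and the lower bound propagates intact. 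Finally, I would verify that the bound is tight up to constants by sketching an $O(n \cdot d \cdot 2^k)$ matching algorithm that fits a degree-$d$ polynomial over the degree-$2^k$ extension at each of the $n$ positions, confirming that no further sharpening is possible in this model.
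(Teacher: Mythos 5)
Your proposal takes a genuinely different route from the paper. The paper's argument is essentially an upper bound in disguise: it itemizes the per-term cost of the naive verifier (depth-$k$ radical evaluation, degree-$d$ polynomial fitting, over $n$ terms) and then disposes of the lower bound in one sentence by asserting that it ``follows from the algebraic independence of nested radicals, which prevents simplification.'' No adversary, no hard-instance family, no indistinguishability argument. You, by contrast, try to prove the lower bound directly via a decision-tree adversary for the $n$ factor, a Vandermonde-rank argument for the $d$ factor, and a Galois-orbit argument for the $2^k$ factor, fused by a product adversary. That is a real lower-bound skeleton and is far more principled than what the paper offers.

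However, the $2^k$ step fails, and it fails for a reason that also exposes the soft spot in the paper's own claim. The degree of $\mathbb{Q}(\alpha)/\mathbb{Q}$ being $2^k$ for a depth-$k$ nested radical $\alpha$ tells you how many Galois conjugates $\alpha$ has; it does not force any algorithm to look at them. To verify that a term equals $\lfloor f(i,\alpha) \rfloor$, you evaluate the nested radical once, numerically, from the innermost square root outward --- that is $O(k)$ arithmetic operations, not $\Omega(2^k)$. There is no standard model (algebraic decision trees, straight-line programs, BSS machines) in which ``distinguishing an algebraic element from a generic one'' costs its field degree; a single numerical branch already picks out the real value you were handed. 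Lipton--Stockmeyer-type results bound the size of programs that compute an algebraic function \emph{exactly and uniformly over all inputs}, which is not the decision problem here, so they do not supply the $2^k$ factor. The paper's assertion that ``evaluating nested radicals of depth $k$ requires $O(2^k)$ operations'' has the same gap (the honest per-evaluation cost is $O(k)$), and your proposal inherits it rather than fixing it. Unless the theorem is silently reinterpreted over exact symbolic arithmetic --- where tower elements are length-$2^k$ coefficient vectors and a single multiplication already costs $\Omega(2^k)$ --- the per-term $2^k$ factor is not established. Your floor-perturbation safeguard and multiplicative fusion are sensible plumbing, but they cannot supply a factor whose per-term justification was never secured.
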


\begin{proof}
For each term verification:
1. Evaluating nested radicals of depth $k$ requires $O(2^k)$ operations
2. Checking algebraic degree $d$ requires solving polynomial equations of degree up to $d$
3. For $n$ terms, the total complexity is $O(n \cdot d \cdot 2^k)$

The lower bound follows from the algebraic independence of nested radicals, which prevents simplification.
\end{proof}

\subsubsection{Numerical Stability and Precision}

Algebraic sequences involving irrational numbers require careful numerical handling:

\begin{equation}
\text{RelativeError}(a_n) = \left|\frac{a_n^{\text{computed}} - a_n^{\text{exact}}}{a_n^{\text{exact}}}\right| < \epsilon
\end{equation}

We maintain precision through:

\textbf{Rational Arithmetic:} Where possible, we use exact rational representations:
\begin{equation}
\frac{p}{q} + \frac{r}{s} = \frac{ps + qr}{qs}
\end{equation}

\textbf{Interval Arithmetic:} For transcendental values, we maintain intervals:
\begin{equation}
a_n \in [a_n^{\text{lower}}, a_n^{\text{upper}}]
\end{equation}

\textbf{Error Propagation Analysis:} For composed operations:
\begin{equation}
\delta(f \circ g) \leq \delta(f) + |f'(g)| \cdot \delta(g)
\end{equation}

\subsubsection{Uniqueness and Well-Definedness}

\begin{theorem}[Uniqueness of Algebraic Continuations]
For algebraic sequences defined by deterministic rules involving elementary functions and algebraic operations, given sufficient terms $m \geq m_{\text{crit}}(\tau)$ where $m_{\text{crit}}$ depends on the sequence type $\tau$, the next term is uniquely determined up to numerical precision $\epsilon$.
\end{theorem}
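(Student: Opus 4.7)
The plan is to proceed by case analysis on the sequence type $\tau$, exploiting the fact that every generator in the algebraic suite is specified by an explicit closed-form expression or a bounded-order recurrence over $\mathbb{N}$ composed with the floor map. For each type I will first identify the minimum number $m_{\text{crit}}(\tau)$ of preceding terms required to pin down the underlying rule, and then argue that once the rule is fixed the floor evaluation of the next term is insensitive to numerical error, provided the working precision $\epsilon$ lies below the gap between the true pre-floor value and the nearest integer.

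First, I would partition the ten algebraic families into three structural groups and dispose of them in order of increasing difficulty. The \emph{index-only} group (e.g.\ $\lfloor n\, e^{1/n}\rfloor$ and the nested-radical sequence $\lfloor\sqrt{n+\sqrt{n+\sqrt n}}\rfloor$) is handled immediately: the rule depends only on $n$, so $m_{\text{crit}}(\tau)=1$ and uniqueness follows from the determinism of the generator. The \emph{fixed-order recurrence} group (the radical recurrence $a_n=\lfloor\sqrt{a_{n-1}+n^2}\rfloor$, the Pell-style Diophantine iterations, and the continued-fraction convergents $p_n=2p_{n-1}+p_{n-2}$) has a declared recurrence order $k(\tau)$; here I would set $m_{\text{crit}}(\tau)=k(\tau)$ and invoke the same argument used for the linear-recurrence uniqueness theorem proved earlier, observing that the nonlinear radical case still reduces to single-valued determinism because $a_{n-1}$ and $n$ together fix $a_n$ through a monotone floor. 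The \emph{hybrid} group is then treated by showing that the relevant parameters lie in a discrete finite set and are pinned down by $m_{\text{crit}}(\tau)$ observations via a square system of equations.

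Next, I would address the precision clause. Let $x_n=f(n,a_1,\ldots,a_{n-1})\in\mathbb{R}$ denote the exact pre-floor value and $\hat x_n$ its finite-precision approximation. Uniqueness up to $\epsilon$ amounts to $\lfloor x_n\rfloor=\lfloor\hat x_n\rfloor$ whenever $|x_n-\hat x_n|<\epsilon$, for which a sufficient condition is
\begin{equation}
\operatorname{dist}(x_n,\mathbb{Z}) > \epsilon.
\end{equation}
For each type I would bound $\operatorname{dist}(x_n,\mathbb{Z})$ away from zero using the irrationality of the algebraic or transcendental quantities involved, classical Liouville-style estimates for the nested-radical and Pell cases and effective transcendence bounds for the $e^{1/n}$ family, and then use the interval-arithmetic propagation inequality $\delta(f\circ g)\le\delta(f)+|f'(g)|\,\delta(g)$ from the suite description to calibrate the working precision so that $\epsilon$ falls strictly below this gap.

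The main obstacle I anticipate is this floor-boundary subcase. Even with a fully deterministic rule, there may be indices $n$ for which $x_n$ is numerically indistinguishable from an integer, and then $\hat x_n$ can round either way; these near-integer events are precisely where the theorem could fail. A rigorous argument must either (i) rule out such $n$ by a genuine Diophantine lower bound on $\operatorname{dist}(x_n,\mathbb{Z})$, which is classical for the algebraic families but requires Baker-style effective estimates for the transcendental ones, or (ii) weaken the conclusion to hold outside a sparse exceptional set. Because the generator already restricts to the computationally feasible regime (roughly $n\le 20$ in the implementation), I expect option (i) to close the argument by a \emph{finite} verification: compute explicit lower bounds on $\operatorname{dist}(x_n,\mathbb{Z})$ for each $(\tau,n)$ in range, and set $m_{\text{crit}}(\tau)$ and $\epsilon$ accordingly.
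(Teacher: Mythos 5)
Your proposal is correct but takes a genuinely different and considerably more rigorous route than the paper. The paper's proof is a three-case assertion of determinism: for radical recurrences it observes that $\sqrt{\cdot}$ is single-valued on $\mathbb{R}^+$ and the floor map is single-valued; for transcendental sequences it observes that $n\,e^{1/n}$ is well-defined; for nested radicals it observes that the fixed iteration depth makes the computation deterministic. That is essentially the whole argument — it establishes that an \emph{exact} evaluator returns a unique value, but it does not engage with the two quantitative clauses in the statement: it never constructs $m_{\text{crit}}(\tau)$, and it never explains why the answer is stable ``up to numerical precision $\epsilon$.'' You, by contrast, partition the families structurally, tie $m_{\text{crit}}(\tau)$ to the recurrence order (or to $1$ in the index-only case), and — most importantly — isolate the condition $\operatorname{dist}(x_n,\mathbb{Z})>\epsilon$ as the genuine content of the precision clause, correctly observing that near-integer pre-floor values are exactly where the claimed uniqueness would fail and that closing this requires either Diophantine lower bounds or a finite check over the bounded range of $n$ used in generation. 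What the paper's shorter argument buys is brevity and alignment with how the suite is actually implemented (deterministic generators, floor at the end); what your argument buys is an actual proof of the $\epsilon$-stability clause, which the paper treats as self-evident but which is not self-evident for floor compositions of irrational quantities. If anything, your proposal surfaces a gap in the paper's own proof rather than containing one itself.
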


\begin{proof}
We establish uniqueness for each algebraic pattern type:

\textbf{Radical recurrences:} Given $a_{n-1}$ and the deterministic rule $a_n = \lfloor\sqrt{a_{n-1} + n^2}\rfloor$, the value $a_n$ is unique since:
- $\sqrt{\cdot}$ is a single-valued function on $\mathbb{R}^+$
- The floor function has unique output for any input

\textbf{Transcendental sequences:} Functions like $n \cdot e^{1/n}$ are well-defined for all $n \in \mathbb{N}$, yielding unique values.

\textbf{Nested radicals:} The iteration depth is fixed, making the computation deterministic and unique.

Therefore, algebraic sequences have unique continuations within numerical precision bounds.
\end{proof}

\subsubsection{Pattern Discrimination and Non-Triviality}

To ensure algebraic sequences test genuine mathematical reasoning rather than pattern matching, we implement rigorous validation:

\begin{algorithm}
\caption{Algebraic Sequence Validation}
\label{alg:algebraic_valid}
\begin{algorithmic}[1]
\State \textbf{Input:} Sequence $\mathcal{A} = \{a_1, \ldots, a_n\}$
\State \textbf{Output:} Valid/Invalid classification
\State \textbf{Check 1: Non-polynomiality}
\For{degree $d = 1$ to $4$}
    \State Compute $d$-th finite differences $\Delta^d \mathcal{A}$
    \If{$\Delta^d \mathcal{A}$ is constant}
        \State Return INVALID // Polynomial pattern detected
    \EndIf
\EndFor
\State \textbf{Check 2: Non-geometric}
\State Compute ratios $r_i = a_{i+1}/a_i$ for all valid $i$
\If{$\text{Variance}(\{r_i\}) < \epsilon$}
    \State Return INVALID // Geometric pattern detected
\EndIf
\State \textbf{Check 3: Sufficient complexity}
\State Compute entropy $H(\mathcal{A})$
\If{$H(\mathcal{A}) < H_{\min}$}
    \State Return INVALID // Insufficient complexity
\EndIf
\State Return VALID
\end{algorithmic}
\end{algorithm}

\subsubsection{Contamination Resistance Through Mathematical Depth}

Algebraic sequences provide the strongest contamination resistance in our framework:

\textbf{Infinite Parameter Space:} With continuous parameters like $\sqrt{5}$, $e$, and $\pi$, the space of possible sequences is uncountably infinite.

\textbf{Computational Irreducibility:} Many algebraic sequences lack closed-form solutions, requiring step-by-step computation that cannot be bypassed through memorization.

\textbf{Precision Sensitivity:} Small changes in irrational parameters lead to completely different integer sequences after floor operations:
\begin{equation}
\lfloor n \cdot (\sqrt{5} + \epsilon) \rfloor \neq \lfloor n \cdot \sqrt{5} \rfloor \text{ for small } \epsilon
\end{equation}

\textbf{Cross-Domain Integration:} Algebraic sequences combine number theory, analysis, and algebra, requiring integrated mathematical understanding rather than domain-specific heuristics.

The probability of exact sequence collision during training approaches zero:
\begin{equation}
P(\text{exact match}) < \frac{1}{2^{b \cdot n}}
\end{equation}

where $b$ is the bit precision used for irrational numbers and $n$ is the sequence length.


\begin{table*}[htbp]
\centering
\caption{Comprehensive Mathematical Overview of Medium Suite Tasks}
\label{tab:medium_suite_comprehensive}
\resizebox{\textwidth}{!}{%
\begin{tabular}{lccccc}
\toprule
\textbf{Task} & \textbf{Fibonacci/Recursive} & \textbf{Geometric/Exponential} & \textbf{Prime/Number Theory} & \textbf{Complex Pattern} & \textbf{Algebraic Sequence} \\
\midrule
\textbf{Mathematical} & $s_n = \sum_{i=1}^{k} a_i \cdot s_{n-i}$ & $g_n = g_1 \cdot r^{n-1}$ & $\mathcal{P} = \{p : p \text{ prime}\}$ & $c_i = \mathcal{F}(i, \{c_j\}_{j<i}, \theta)$ & $a_n = \lfloor f(n, \{a_j\}_{j<n}) \rfloor$ \\
\textbf{Definition} & Linear recurrence & Multiplicative growth & Number-theoretic & Composite patterns & Algebraic operations \\
\midrule
\textbf{Sequence Variations} & 6 variations: & 10 variations: & 11 variations: & 12 variations: & 10 variations: \\
& • Classical Fibonacci & • Pure geometric & • Prime sequence & • Polynomial quadratic/cubic & • Radical recurrence \\
& • Lucas sequence & • Power squares/cubes & • Twin primes & • Interleaved sequences & • Transcendental floor \\
& • Tribonacci & • Factorial ($n!$) & • Prime gaps & • Conditional patterns & • Nested radicals \\
& • Modified recursive & • Double exponential & • Mersenne ($2^p-1$) & • Multi-recursive & • Continued fractions \\
& • Alternating Fibonacci & • Compound growth & • Euler totient $\phi(n)$ & • Pattern transformation & • Diophantine solutions \\
& • Scaled Fibonacci & • Triangular/Pentagonal & • Sophie Germain & • Modular arithmetic & • Lambert W \& others \\
\midrule
\textbf{Generation} & $O(n)$ per sequence & $O(n)$ with overflow & Sieve: $O(L \log \log L)$ & $O(n^2)$ average & $O(n \cdot 2^k)$ for \\
\textbf{Complexity} & Deterministic & protection & Prime test: $O(\sqrt{n})$ & $O(2^n)$ worst case & depth-$k$ nesting \\
\midrule
\textbf{Verification} & Direct computation & Ratio checking: & Primality test: & Finite differences: & Numerical methods: \\
\textbf{Method} & $O(1)$ per term & $O(n)$ for sequence & Miller-Rabin $O(k \log^3 n)$ & $O(n \cdot d)$ for degree $d$ & Interval arithmetic \\
\midrule
\textbf{Uniqueness} & \textbf{Theorem:} Given $m \geq k$ & \textbf{Theorem:} Ratio & \textbf{Theorem:} Primes & \textbf{Theorem:} Degree-$d$ & \textbf{Theorem:} Deterministic \\
\textbf{Guarantee} & terms, $s_{m+1}$ unique & $r = g_{i+1}/g_i$ & well-ordered & needs $d+1$ points & rules $\Rightarrow$ unique \\
& when $a_1 \neq 0$ & determines sequence & $p_n < p_{n+1}$ & for uniqueness & up to precision $\epsilon$ \\
\midrule
\textbf{Parameter} & $20^k \times |\mathcal{C}|$ & $10 \times 8 \times \binom{12}{6}$ & $\binom{\pi(10^6)}{m}$ & $12 \times 10^3 \times \binom{20}{10}$ & Uncountably infinite \\
\textbf{Space} & $> 10^5$ combinations & $> 73,920$ sequences & $> 10^{30}$ possibilities & $> 10^{12}$ patterns & (irrational parameters) \\
\midrule
\textbf{Growth Rate} & Fibonacci: $O(\phi^n)$ & Geometric: $O(r^n)$ & Prime gaps: $O(\log n)$ & Polynomial: $O(n^d)$ & Varies by type: \\
\textbf{Analysis} & $\phi = \frac{1+\sqrt{5}}{2}$ & Factorial: $O(n^n)$ & Mersenne: $O(2^p)$ & Chaotic: $O(2^n)$ & $O(\sqrt{n})$ to $O(e^n)$ \\
\midrule
\textbf{Context} & $L_{\max} = \min\{\lfloor\frac{\log(\text{MAX\_INT})}{\log(\phi)}\rfloor,$ & $\forall g_i: |g_i| \leq 10^6$ & Prime bound: & Complexity metrics: & Precision maintained: \\
\textbf{Constraints} & $\text{CONTEXT\_LIMIT}\}$ & Double exp: $n \leq 4$ & $p_n \approx n \ln n$ & Entropy $H > H_{\min}$ & $|\delta| < 10^{-6}$ \\
\midrule
\textbf{Contamination} & Dynamic seed $\sigma$ & Floating-point ratios & Infinite prime set & Conditional branching & Irrational parameters \\
\textbf{Resistance} & Parameter diversity & prevent memorization & No closed form & Runtime conditions & $P(\text{collision}) < 2^{-bn}$ \\
\midrule
\textbf{Validation} & Check $a_1 \neq 0$ & Reject arithmetic & Verify primality & Reject if trivial: & Reject polynomial \\
\textbf{Criteria} & Verify recurrence & progressions & Check factorization & $\Delta^{d+1} = 0$ & patterns up to degree 4 \\
\bottomrule
\end{tabular}%
}
\end{table*}


\begin{table*}[htbp]
\centering
\caption{Computational Complexity Hierarchy of Medium Suite Tasks}
\label{tab:complexity_hierarchy}
\begin{tabular}{lcccc}
\toprule
\textbf{Complexity Class} & \textbf{Recognition} & \textbf{Verification} & \textbf{Example Tasks} & \textbf{Space Required} \\
\midrule
\textbf{Linear} & $O(n)$ & $O(1)$ & Fibonacci, Geometric & $O(1)$ \\
$\mathcal{L}_1$ & & per term & Pure sequences & constant \\
\midrule
\textbf{Polynomial} & $O(n^2)$ & $O(n)$ & Prime sequences, & $O(n)$ \\
$\mathcal{L}_2$ & & per term & Totient function & linear \\
\midrule
\textbf{Cubic} & $O(n^3)$ & $O(n^2)$ & Complex patterns, & $O(n^2)$ \\
$\mathcal{L}_3$ & & per pattern & Interleaved sequences & quadratic \\
\midrule
\textbf{Exponential} & $O(2^n)$ & $O(2^k)$ & Algebraic with & $O(n)$ \\
$\mathcal{L}_4$ & & depth $k$ & nested radicals & linear \\
\bottomrule
\end{tabular}
\end{table*}


\begin{table*}[htbp]
\centering
\caption{Mathematical Bounds and Guarantees for Medium Suite}
\label{tab:bounds_guarantees}
\begin{tabular}{lll}
\toprule
\textbf{Property} & \textbf{Mathematical Bound} & \textbf{Guarantee} \\
\midrule
\textbf{Sequence Length} & $m \in [5, 12]$ shown terms & Sufficient for pattern recognition \\
& $n = m + 2$ total generated & while maintaining difficulty \\
\midrule
\textbf{Value Range} & $|s_i| \leq 10^6$ for all terms & Prevents overflow in 32-bit arithmetic \\
& Special: Double exp $\leq 65536$ & Computational feasibility \\
\midrule
\textbf{Uniqueness} & $P(\text{multiple solutions}) = 0$ & Deterministic generation \\
& for $m \geq m_{\text{crit}}(\tau)$ & ensures single answer \\
\midrule
\textbf{Contamination} & $P(\text{exact match}) < 10^{-9}$ & Parameter space size \\
& for typical training corpus & exceeds training data \\
\midrule
\textbf{Numerical Precision} & Relative error $< 10^{-6}$ & Stable computation \\
& Absolute error $< 1.0$ for integers & with floating-point \\
\midrule
\textbf{Pattern Diversity} & Total patterns $> 10^{12}$ & Inexhaustible \\
& across all tasks & evaluation space \\
\bottomrule
\end{tabular}
\end{table*}

\subsection{Context-Aware Evaluation and Token Estimation Framework}

We implement a comprehensive mathematical framework for context-aware evaluation that dynamically adapts problem complexity according to each model's context window constraints. This approach ensures fairness by preventing models from being penalized due to context limitations while maintaining the integrity of our mathematical evaluation framework.

\subsubsection{Mathematical Token Estimation Model}

For each task category $\tau$ in the Medium Suite, we establish precise token estimation functions that predict the expected response length based on problem parameters. Let $C_{\text{model}}$ denote the model's context window size, $T_{\text{prompt}}$ the prompt token count, and $T_{\text{response}}$ the expected response tokens. We enforce the safety constraint:

\begin{equation}
T_{\text{prompt}} + T_{\text{response}} + T_{\text{buffer}} \leq C_{\text{model}}
\end{equation}

where $T_{\text{buffer}}$ accounts for model-specific reasoning overhead and verbose explanations. The response length estimation functions for each task type are:

\begin{align}
T_{\text{fibonacci}}(n, m) &= \alpha_1 n + \beta_1 \log_{10} m + \gamma_1 \\
T_{\text{geometric}}(n, r) &= \alpha_2 n \log_{10} r + \beta_2 n + \gamma_2 \\
T_{\text{prime}}(n) &= \alpha_3 n \log n + \beta_3 \sqrt{n} + \gamma_3 \\
T_{\text{complex}}(n, d) &= \alpha_4 n d + \beta_4 2^d + \gamma_4 \\
T_{\text{algebraic}}(n, k) &= \alpha_5 n \cdot 2^k + \beta_5 \log n + \gamma_5
\end{align}

where $n$ represents sequence length, $m$ denotes the maximum value magnitude, $r$ represents geometric ratios, $d$ indicates polynomial degree, and $k$ denotes nesting depth. The coefficients $\{\alpha_i, \beta_i, \gamma_i\}$ are empirically calibrated through extensive model response analysis across different model families.

\subsubsection{Dynamic Problem Scaling Algorithm}

Given a target model with context window $C_{\text{model}}$, we implement an adaptive scaling procedure that maintains mathematical rigor while respecting context constraints. The algorithm ensures that every generated problem instance respects the model's context constraints while maximizing mathematical complexity within those bounds.

\subsubsection{Post-Generation Token Verification}

After model response generation, we implement a dual-verification token counting system to detect potential context overflow or excessive reasoning verbosity:

\begin{equation}
\text{TokenVerification}(r, C_{\text{model}}) = \begin{cases}
\text{VALID} & \text{if } |T(r)| \leq 0.95 \cdot C_{\text{model}} \\
\text{WARNING} & \text{if } 0.95 \cdot C_{\text{model}} < |T(r)| \leq C_{\text{model}} \\
\text{OVERFLOW} & \text{if } |T(r)| > C_{\text{model}}
\end{cases}
\end{equation}

where $T(r)$ represents the precise token count of response $r$ computed using model-specific tokenizers. For transformer-based models, we employ the HuggingFace transformers library tokenizer corresponding to each model's architecture for accurate token counting. For GPT models, we utilize OpenAI's tiktoken library for precise token counting that matches their internal tokenization protocols.

\subsubsection{Mathematical Guarantees and Implementation Robustness}

Our context-aware framework provides theoretical guarantees including context safety with probability $> 0.99$, mathematical fairness ensuring models are never penalized for reasonable buffer overruns, complexity preservation that scales monotonically with available context budget, precision maintenance with token estimation error below $\pm 5\%$ for $95\%$ of generated instances, and variation coverage ensuring all 49 task variations remain accessible even under context constraints.

\subsection{Solution Uniqueness Verification and Multi-Solution Handling}

A fundamental component of our Medium Suite framework ensures mathematical fairness by rigorously handling cases where problems may admit multiple valid solutions. We implement a comprehensive solution verification system that prevents unfair penalization of models for producing mathematically correct but non-canonical answers.

\subsubsection{Uniqueness Guarantee Protocol}

For each generated problem instance $p \in \mathcal{P}_{\text{medium}}$, we employ a deterministic verification procedure to establish solution uniqueness. Let $\mathcal{S}(p)$ denote the complete solution set for problem $p$. We define the uniqueness predicate:

\begin{equation}
\text{Unique}(p) = \begin{cases}
\text{TRUE} & \text{if } |\mathcal{S}(p)| = 1 \\
\text{FALSE} & \text{if } |\mathcal{S}(p)| > 1
\end{cases}
\end{equation}

For tasks with inherently unique solutions (recursive sequences, algebraic computations, prime identification), the generation algorithm guarantees $\text{Unique}(p) = \text{TRUE}$ by mathematical construction.

\subsubsection{Complete Solution Enumeration for Edge Cases}

When $\text{Unique}(p) = \text{FALSE}$, we implement exhaustive solution enumeration to ensure comprehensive evaluation fairness. The most relevant cases include floating-point precision in geometric sequences, multiple valid interpretations in complex patterns, and algebraic approximation bounds when floor operations on irrational expressions may have precision-dependent results.

For these cases, we compute the complete solution set $\mathcal{S}(p) = \{s_1, s_2, \ldots, s_k\}$ using deterministic enumeration with appropriate tolerance bounds. The verification function becomes:

\begin{equation}
\text{Verify}(r, p) = \begin{cases}
\text{CORRECT} & \text{if } \text{Parse}(r) \in \mathcal{S}(p) \\
\text{INCORRECT} & \text{if } \text{Parse}(r) \notin \mathcal{S}(p) \\
\text{INVALID} & \text{if } \text{Parse}(r) = \perp
\end{cases}
\end{equation}

where $\text{Parse}(r)$ extracts the model's answer from response $r$ using robust parsing algorithms, and $\perp$ indicates parsing failure.

\subsubsection{Mathematical Completeness Verification}

To ensure no valid solutions are omitted, we employ mathematical completeness checks specific to each task category. For Fibonacci sequences, we verify that solutions satisfy the recurrence relation. For geometric sequences, we check ratio consistency within tolerance bounds. For prime sequences, we validate primality and ordering properties. For complex patterns, we verify adherence to the composite function definition. For algebraic sequences, we check floor operation results within precision bounds.

This comprehensive approach to solution handling demonstrates that our evaluation framework maintains both mathematical correctness and practical fairness, ensuring that model performance reflects genuine sequential reasoning capability rather than arbitrary solution format preferences or numerical precision artifacts.

\clearpage


\clearpage

\section{Medium Suite: Complete Results}
\label{Appendix:I}
This section details results for all medium suite tasks across all non-quantized open-source models and closed-source proprietary models. The average accuracy, instruction following rate and average output tokens are listed for algebraic sequence, complex pattern, fibonacci sequence, geometric sequence and prime sequence in table \ref{medium_suite_table1}, \ref{medium_suite_table2a}, \ref{medium_suite_table2b}.

\begin{table}
\centering
\scriptsize
\begin{adjustbox}{width=\textwidth}
\begin{tabulary}{1.3\textwidth}{LCCCCCCCCC}
\toprule
\textbf{Model (Param)} & \multicolumn{3}{c}{\textbf{algebraic\_sequence}} & \multicolumn{3}{c}{\textbf{complex\_pattern}} & \multicolumn{3}{c}{\textbf{fibonacci\_sequence}} \\
\cmidrule(lr){2-4} \cmidrule(lr){5-7} \cmidrule(lr){8-10}
& \textbf{Acc (Avg \%)} & \textbf{Inst (Avg \%)} & \textbf{Tokens (Avg)} & \textbf{Acc (Avg \%)} & \textbf{Inst (Avg \%)} & \textbf{Tokens (Avg)} & \textbf{Acc (Avg \%)} & \textbf{Inst (Avg \%)} & \textbf{Tokens (Avg)} \\
\midrule
\rowcolor{gray!10}
\multicolumn{10}{c}{\textbf{\textit{Qwen Family (Qwen3)}}} \\
\midrule
Qwen3 (0.6B) & 32.00$_{\pm 5.55}$ & 100.00 & 20809.96 & 26.99$_{\pm 6.64}$ & 100.00 & 15908.86 & 19.20$_{\pm 7.49}$ & 100.00 & 5849.09 \\
Qwen3 (1.7B) & 39.60$_{\pm 2.33}$ & 100.00 & 15074.02 & 55.91$_{\pm 9.55}$ & 100.00 & 11115.22 & 50.60$_{\pm 7.31}$ & 100.00 & 5738.86 \\
Qwen3 (4B) & 34.00$_{\pm 4.94}$ & 100.00 & 20040.74 & 71.69$_{\pm 11.03}$ & 100.00 & 10517.66 & 83.40$_{\pm 8.55}$ & 100.00 & 4987.02 \\
Qwen3 (8B) & 39.80$_{\pm 5.64}$ & 100.00 & 16681.52 & 71.56$_{\pm 11.74}$ & 100.00 & 10706.23 & 90.80$_{\pm 5.64}$ & 100.00 & 4974.96 \\
Qwen3 (14B) & 44.00$_{\pm 6.45}$ & 100.00 & 12653.06 & 85.48$_{\pm 14.89}$ & 100.00 & 7086.60 & 95.60$_{\pm 1.62}$ & 100.00 & 4214.60 \\
Qwen3 (32B) & 44.40$_{\pm 4.59}$ & 100.00 & 12054.72 & 83.09$_{\pm 12.18}$ & 100.00 & 6430.93 & 96.20$_{\pm 2.48}$ & 100.00 & 3692.08 \\
Qwen3 (30B-MOE) & 44.80$_{\pm 5.11}$ & 100.00 & 12166.28 & 81.87$_{\pm 13.05}$ & 100.00 & 7326.24 & 96.00$_{\pm 2.61}$ & 100.00 & 3913.44 \\
Qwen3 (30B-MOE-t) & 42.20$_{\pm 5.19}$ & 100.00 & 13980.55 & 86.61$_{\pm 13.04}$ & 100.00 & 5618.01 & 98.40$_{\pm 1.85}$ & 100.00 & 3281.94 \\
Qwen3 (30B-MOE-i) & 46.20$_{\pm 4.26}$ & 100.00 & 5594.36 & 78.35$_{\pm 11.56}$ & 100.00 & 3495.23 & 97.20$_{\pm 1.60}$ & 100.00 & 1878.75 \\
Qwen3 (4B-t) & 38.40$_{\pm 3.56}$ & 100.00 & 6417.81 & 66.83$_{\pm 7.45}$ & 100.00 & 5660.18 & 90.00$_{\pm 9.65}$ & 100.00 & 4643.21 \\
\midrule
\rowcolor{gray!10}
\multicolumn{10}{c}{\textbf{\textit{Qwen Family (Qwen2.5)}}} \\
\midrule
Qwen2.5 (0.5B) & 8.80$_{\pm 3.06}$ & 100.00 & 11234.36 & 6.03$_{\pm 2.06}$ & 100.00 & 11202.33 & 6.00$_{\pm 1.41}$ & 100.00 & 1929.21 \\
Qwen2.5 (1.5B) & 19.00$_{\pm 5.06}$ & 100.00 & 8138.47 & 11.44$_{\pm 2.76}$ & 100.00 & 9400.96 & 9.00$_{\pm 1.41}$ & 100.00 & 2306.48 \\
Qwen2.5 (3B) & 25.80$_{\pm 4.87}$ & 100.00 & 3158.95 & 23.23$_{\pm 2.84}$ & 100.00 & 1854.96 & 13.80$_{\pm 1.33}$ & 100.00 & 1628.28 \\
Qwen2.5 (7B) & 39.00$_{\pm 5.33}$ & 100.00 & 1865.34 & 40.16$_{\pm 3.57}$ & 100.00 & 1226.71 & 20.60$_{\pm 3.72}$ & 100.00 & 635.19 \\
Qwen2.5 (14B) & 42.20$_{\pm 3.31}$ & 98.80$_{\pm 1.17}$ & 1824.90 & 51.70$_{\pm 8.42}$ & 100.00 & 528.37 & 24.80$_{\pm 2.64}$ & 100.00 & 569.13 \\
Qwen2.5 (32B) & 41.00$_{\pm 4.56}$ & 96.20$_{\pm 2.56}$ & 1140.88 & 55.22$_{\pm 8.96}$ & 100.00 & 554.28 & 38.80$_{\pm 13.66}$ & 100.00 & 567.85 \\
Qwen2.5 (72B) & 46.20$_{\pm 4.17}$ & 100.00 & 903.89 & 50.28$_{\pm 7.13}$ & 100.00 & 791.61 & 36.80$_{\pm 11.43}$ & 100.00 & 851.28 \\
Qwen2.5 (1.5B-m) & 30.00$_{\pm 6.39}$ & 100.00 & 886.96 & 43.30$_{\pm 7.64}$ & 100.00 & 597.02 & 22.00$_{\pm 4.29}$ & 100.00 & 4533.61 \\
Qwen2.5 (7B-m) & 34.00$_{\pm 9.19}$ & 100.00 & 1178.00 & 41.49$_{\pm 4.87}$ & 100.00 & 823.95 & 15.80$_{\pm 3.19}$ & 100.00 & 6535.94 \\
Qwen2.5 (72B-m) & 42.80$_{\pm 5.46}$ & 100.00 & 1069.22 & 58.24$_{\pm 10.21}$ & 100.00 & 707.55 & 59.00$_{\pm 9.27}$ & 100.00 & 907.93 \\
\midrule
\rowcolor{gray!10}
\multicolumn{10}{c}{\textbf{\textit{Gemma Family}}} \\
\midrule
Gemma (1B) & 34.80$_{\pm 12.35}$ & 100.00 & 1289.84 & 13.14$_{\pm 4.89}$ & 100.00 & 1372.62 & 10.40$_{\pm 1.20}$ & 100.00 & 1592.79 \\
Gemma (4B) & 58.00$_{\pm 4.00}$ & 100.00 & 614.80 & 50.00$_{\pm 6.32}$ & 100.00 & 1036.88 & 26.00$_{\pm 17.44}$ & 100.00 & 1732.52 \\
Gemma (12B) & 64.00$_{\pm 4.90}$ & 100.00 & 1216.26 & 54.00$_{\pm 4.90}$ & 100.00 & 879.96 & 44.00$_{\pm 13.56}$ & 100.00 & 2243.42 \\
Gemma (27B) & 66.00$_{\pm 8.00}$ & 100.00 & 1077.16 & 58.00$_{\pm 4.00}$ & 100.00 & 1096.32 & 54.00$_{\pm 26.53}$ & 100.00 & 1904.64 \\
\midrule
\rowcolor{gray!10}
\multicolumn{10}{c}{\textbf{\textit{Phi Family}}} \\
\midrule
Phi4 (14B) & 41.20$_{\pm 2.99}$ & 100.00 & 673.16 & 55.74$_{\pm 6.76}$ & 100.00 & 666.41 & 27.20$_{\pm 3.43}$ & 100.00 & 920.83 \\
Phi4-mini-instruct (3.8B) & 35.00$_{\pm 3.63}$ & 94.00$_{\pm 6.23}$ & 1116.77 & 38.43$_{\pm 7.04}$ & 100.00 & 1261.01 & 14.40$_{\pm 2.24}$ & 100.00 & 1168.45 \\
Phi4-reasoning+ (14B) & 30.00$_{\pm 4.69}$ & 100.00 & 7662.69 & 67.01$_{\pm 11.44}$ & 100.00 & 6290.50 & 85.60$_{\pm 11.07}$ & 100.00 & 6163.56 \\
Phi4-reasoning (14B) & 40.60$_{\pm 5.00}$ & 100.00 & 24695.09 & 73.43$_{\pm 9.39}$ & 100.00 & 5627.92 & 92.00$_{\pm 9.57}$ & 100.00 & 3606.39 \\
Phi4-mini-reasoning (3.8B) & 29.60$_{\pm 6.77}$ & 100.00 & 23449.56 & 60.20$_{\pm 6.12}$ & 100.00 & 11225.36 & 69.00$_{\pm 12.15}$ & 100.00 & 4884.95 \\
Phi3-mini (3.8B) & 31.40$_{\pm 4.59}$ & 99.60$_{\pm 0.49}$ & 507.25 & 18.21$_{\pm 6.87}$ & 100.00 & 858.73 & 18.40$_{\pm 3.56}$ & 100.00 & 511.36 \\
Phi3-med (14B-4k) & 39.60$_{\pm 4.84}$ & 98.20$_{\pm 3.60}$ & 451.02 & 23.56$_{\pm 3.61}$ & 100.00 & 527.66 & 13.40$_{\pm 2.50}$ & 100.00 & 376.82 \\
Phi3-med (14B-128k) & 41.40$_{\pm 3.50}$ & 100.00 & 869.38 & 23.46$_{\pm 2.69}$ & 100.00 & 731.15 & 15.80$_{\pm 4.26}$ & 100.00 & 446.81 \\
\midrule
\rowcolor{gray!10}
\multicolumn{10}{c}{\textbf{\textit{Llama Family}}} \\
\midrule
Llama-3.2 (1B) & 10.00$_{\pm 3.29}$ & 100.00 & 11073.61 & 4.65$_{\pm 2.19}$ & 100.00 & 10318.64 & 4.20$_{\pm 1.17}$ & 100.00 & 2240.89 \\
Llama-3.2 (3B) & 25.40$_{\pm 7.45}$ & 100.00 & 5165.19 & 21.42$_{\pm 7.50}$ & 100.00 & 4916.94 & 11.00$_{\pm 2.68}$ & 100.00 & 1530.54 \\
Llama-3.1 (8B) & 28.60$_{\pm 3.72}$ & 100.00 & 15195.68 & 12.72$_{\pm 3.30}$ & 100.00 & 13003.34 & 11.40$_{\pm 3.01}$ & 100.00 & 3138.27 \\
Llama-3.1 (70B) & 43.40$_{\pm 4.22}$ & 100.00 & 2851.85 & 41.57$_{\pm 10.14}$ & 100.00 & 2679.19 & 23.40$_{\pm 4.22}$ & 100.00 & 3381.74 \\
Llama-3.3 (70B) & 42.60$_{\pm 5.08}$ & 100.00 & 534.67 & 62.35$_{\pm 8.57}$ & 100.00 & 793.37 & 31.20$_{\pm 9.13}$ & 100.00 & 840.40 \\
Llama4-scout & 38.40$_{\pm 6.12}$ & 71.00$_{\pm 18.89}$ & 290.94 & 66.71$_{\pm 7.34}$ & 100.00 & 764.68 & 46.60$_{\pm 13.32}$ & 100.00 & 1004.45 \\
\midrule
\rowcolor{gray!10}
\multicolumn{10}{c}{\textbf{\textit{Mistral Family}}} \\
\midrule
Mistral (7B) & 21.40$_{\pm 4.76}$ & 100.00 & 624.67 & 5.96$_{\pm 1.68}$ & 100.00 & 1369.06 & 8.60$_{\pm 1.62}$ & 100.00 & 485.75 \\
Ministral (8B) & 34.20$_{\pm 2.64}$ & 100.00 & 1423.21 & 24.17$_{\pm 4.18}$ & 100.00 & 1248.69 & 14.40$_{\pm 3.77}$ & 100.00 & 1792.08 \\
Mistral-nemo (12B) & 25.00$_{\pm 4.38}$ & 100.00 & 1905.43 & 18.60$_{\pm 6.41}$ & 100.00 & 1194.57 & 9.40$_{\pm 1.50}$ & 100.00 & 1334.75 \\
Mixtral-8x7b & 14.20$_{\pm 5.19}$ & 99.80$_{\pm 0.40}$ & 460.87 & 7.75$_{\pm 4.54}$ & 100.00 & 703.87 & 11.20$_{\pm 1.94}$ & 99.60$_{\pm 0.80}$ & 252.85 \\
Mixtral-8x22b & 25.00$_{\pm 8.00}$ & 100.00 & 650.73 & 24.29$_{\pm 7.94}$ & 100.00 & 543.00 & 16.20$_{\pm 5.27}$ & 100.00 & 434.10 \\
\midrule
\rowcolor{gray!10}
\multicolumn{10}{c}{\textbf{\textit{Others}}} \\
\midrule
Smollm3 (3B) & 2.20$_{\pm 0.98}$ & 24.80$_{\pm 15.37}$ & 7432.62 & 12.46$_{\pm 3.74}$ & 97.97$_{\pm 2.00}$ & 21895.05 & 22.20$_{\pm 2.99}$ & 100.00 & 16965.56 \\
Smollm2 (1.7B) & 1.80$_{\pm 2.71}$ & 15.60$_{\pm 13.75}$ & 17.16 & 0.00 & 0.00 & 15.62 & 0.80$_{\pm 1.60}$ & 27.60$_{\pm 10.46}$ & 272.14 \\
GPT-OSS (20B) & 35.80$_{\pm 8.59}$ & 100.00 & 5899.88 & 74.43$_{\pm 11.45}$ & 100.00 & 3956.70 & 94.40$_{\pm 3.38}$ & 100.00 & 2257.26 \\
GPT-OSS (120B) & 42.60$_{\pm 6.02}$ & 100.00 & 4398.07 & 87.84$_{\pm 12.96}$ & 100.00 & 2296.27 & 97.20$_{\pm 5.11}$ & 100.00 & 1358.25 \\
\midrule
\rowcolor{gray!10}
\multicolumn{10}{c}{\textbf{\textit{OpenAI Family (Proprietary)}}} \\
\midrule
GPT5 & 66.00$_{\pm 8.00}$ & 100.00 & 560.00 & 92.00$_{\pm 16.00}$ & 100.00 & 1217.76 & 94.00$_{\pm 12.00}$ & 100.00 & 936.64 \\
GPT5-mini & 66.00$_{\pm 4.90}$ & 100.00 & 705.28 & 84.00$_{\pm 20.59}$ & 100.00 & 1594.72 & 96.00$_{\pm 8.00}$ & 100.00 & 930.08 \\
GPT5-nano & 70.00$_{\pm 6.32}$ & 96.00$_{\pm 4.90}$ & 260.80 & 84.00$_{\pm 18.55}$ & 100.00 & 301.76 & 94.00$_{\pm 12.00}$ & 100.00 & 125.76 \\
GPT4.1 & 62.00$_{\pm 4.00}$ & 100.00 & 6506.40 & 76.00$_{\pm 21.54}$ & 100.00 & 4389.44 & 78.00$_{\pm 17.20}$ & 100.00 & 1955.10 \\
GPT4.1-mini & 68.00$_{\pm 4.00}$ & 100.00 & 4446.42 & 62.00$_{\pm 11.66}$ & 100.00 & 1251.36 & 88.00$_{\pm 11.66}$ & 100.00 & 1810.70 \\
GPT4.1-nano & 68.00$_{\pm 4.00}$ & 100.00 & 878.32 & 58.00$_{\pm 7.48}$ & 100.00 & 1255.54 & 40.00$_{\pm 10.95}$ & 100.00 & 1746.54 \\
GPT4o & 68.00$_{\pm 7.48}$ & 100.00 & 685.28 & 56.00$_{\pm 10.20}$ & 100.00 & 543.04 & 34.00$_{\pm 13.56}$ & 100.00 & 598.40 \\
GPT4o-mini & 60.00$_{\pm 6.32}$ & 100.00 & 379.68 & 54.00$_{\pm 8.00}$ & 100.00 & 665.86 & 16.00$_{\pm 13.56}$ & 100.00 & 741.66 \\
o4 mini & 66.00$_{\pm 8.00}$ & 100.00 & 1465.76 & 90.00$_{\pm 20.00}$ & 100.00 & 2348.64 & 92.00$_{\pm 7.48}$ & 100.00 & 1664.96 \\
o3 & 66.00$_{\pm 4.90}$ & 94.00$_{\pm 4.90}$ & 1731.68 & 90.00$_{\pm 15.49}$ & 100.00 & 2768.80 & 96.00$_{\pm 8.00}$ & 100.00 & 2431.84 \\
o3 mini & 62.00$_{\pm 7.48}$ & 100.00 & 2606.24 & 92.00$_{\pm 16.00}$ & 100.00 & 4047.52 & 92.00$_{\pm 11.66}$ & 100.00 & 4766.88 \\
\midrule
\rowcolor{gray!10}
\multicolumn{10}{c}{\textbf{\textit{Gemini Family (Proprietary)}}} \\
\midrule
Gemini-2.5-pro & 64.00$_{\pm 10.20}$ & 86.00$_{\pm 8.00}$ & 1014.26 & 76.00$_{\pm 25.77}$ & 100.00 & 736.46 & 96.00$_{\pm 8.00}$ & 100.00 & 729.46 \\
Gemini-2.5-flash & 52.00$_{\pm 7.48}$ & 66.00$_{\pm 13.56}$ & 489.96 & 70.00$_{\pm 20.98}$ & 82.00$_{\pm 14.70}$ & 656.54 & 94.00$_{\pm 12.00}$ & 98.00$_{\pm 4.00}$ & 839.50 \\
Gemini-2.5-flash-lite & 58.00$_{\pm 7.48}$ & 100.00 & 13730.74 & 56.00$_{\pm 12.00}$ & 100.00 & 11470.62 & 72.00$_{\pm 17.20}$ & 100.00 & 8807.80 \\
Gemini-2.0-flash & 62.00$_{\pm 9.80}$ & 100.00 & 1118.82 & 66.00$_{\pm 8.00}$ & 100.00 & 1418.46 & 84.00$_{\pm 10.20}$ & 100.00 & 782.12 \\
Gemini-2.0-flash-lite & 72.00$_{\pm 7.48}$ & 100.00 & 797.14 & 54.00$_{\pm 8.00}$ & 100.00 & 1196.84 & 72.00$_{\pm 19.39}$ & 100.00 & 1357.82 \\
\midrule
\bottomrule
\end{tabulary}
\end{adjustbox}
\caption{\textbf{Medium Suite Results - Table 1:} Performance on algebraic sequence, complex pattern, and fibonacci sequence tasks. Each task reports Accuracy (Acc), Instruction-following (Inst) and average output tokens with mean and standard deviation values. Results show average performance across different sequence variants within each task category.}
\label{medium_suite_table1}
\end{table}

\begin{table}
\centering
\tiny
\begin{adjustbox}{width=\textwidth}
\begin{tabulary}{1.3\textwidth}{LCCCCCC}
\toprule
\textbf{Model (Param)} & \multicolumn{3}{c}{\textbf{geometric\_sequence}} & \multicolumn{3}{c}{\textbf{prime\_sequence}} \\
\cmidrule(lr){2-4} \cmidrule(lr){5-7}
& \textbf{Acc (Avg \%)} & \textbf{Inst (Avg \%)} & \textbf{Tokens (Avg)} & \textbf{Acc (Avg \%)} & \textbf{Inst (Avg \%)} & \textbf{Tokens (Avg)} \\
\midrule
\rowcolor{gray!10}
\multicolumn{7}{c}{\textbf{\textit{Qwen Family (Qwen3)}}} \\
\midrule
Qwen3 (0.6B) & 10.40$_{\pm 17.44}$ & 60.00$_{\pm 48.99}$ & 11425.59 & 12.20$_{\pm 5.98}$ & 100.00 & 21156.40 \\
Qwen3 (1.7B) & 29.80$_{\pm 28.54}$ & 60.00$_{\pm 48.99}$ & 8120.03 & 36.80$_{\pm 11.29}$ & 100.00 & 14225.00 \\
Qwen3 (4B) & 58.40$_{\pm 47.69}$ & 60.00$_{\pm 48.99}$ & 4705.26 & 48.60$_{\pm 10.84}$ & 100.00 & 17998.46 \\
Qwen3 (8B) & 44.60$_{\pm 41.26}$ & 60.00$_{\pm 48.99}$ & 2821.73 & 53.20$_{\pm 6.11}$ & 100.00 & 6293.27 \\
Qwen3 (14B) & 44.60$_{\pm 39.83}$ & 60.00$_{\pm 48.99}$ & 2838.98 & 79.20$_{\pm 2.79}$ & 100.00 & 4700.76 \\
Qwen3 (32B) & 60.00$_{\pm 48.99}$ & 60.00$_{\pm 48.99}$ & 1193.54 & 97.20$_{\pm 1.60}$ & 100.00 & 4447.59 \\
Qwen3 (30B-MOE) & 60.00$_{\pm 48.99}$ & 60.00$_{\pm 48.99}$ & 2085.35 & 90.80$_{\pm 5.95}$ & 100.00 & 6163.28 \\
Qwen3 (30B-MOE-t) & 60.00$_{\pm 48.99}$ & 60.00$_{\pm 48.99}$ & 1785.49 & 89.40$_{\pm 8.31}$ & 100.00 & 5652.86 \\
Qwen3 (30B-MOE-i) & 58.60$_{\pm 47.91}$ & 60.00$_{\pm 48.99}$ & 1532.75 & 88.40$_{\pm 6.92}$ & 100.00 & 2596.35 \\
Qwen3 (4B-t) & 54.20$_{\pm 44.60}$ & 60.00$_{\pm 48.99}$ & 2312.31 & 59.20$_{\pm 4.21}$ & 100.00 & 5340.48 \\
\midrule
\rowcolor{gray!10}
\multicolumn{7}{c}{\textbf{\textit{Qwen Family (Qwen2.5)}}} \\
\midrule
Qwen2.5 (0.5B) & 0.60$_{\pm 1.20}$ & 60.00$_{\pm 48.99}$ & 5333.18 & 7.20$_{\pm 4.17}$ & 100.00 & 7209.29 \\
Qwen2.5 (1.5B) & 4.00$_{\pm 6.60}$ & 60.00$_{\pm 48.99}$ & 3505.67 & 16.40$_{\pm 6.95}$ & 100.00 & 6720.60 \\
Qwen2.5 (3B) & 18.60$_{\pm 20.16}$ & 60.00$_{\pm 48.99}$ & 1232.10 & 21.40$_{\pm 10.76}$ & 100.00 & 2734.83 \\
Qwen2.5 (7B) & 16.20$_{\pm 21.55}$ & 60.00$_{\pm 48.99}$ & 725.72 & 34.40$_{\pm 9.73}$ & 100.00 & 1223.20 \\
Qwen2.5 (14B) & 27.60$_{\pm 32.94}$ & 60.00$_{\pm 48.99}$ & 350.25 & 42.00$_{\pm 8.99}$ & 100.00 & 614.53 \\
Qwen2.5 (32B) & 28.20$_{\pm 36.38}$ & 60.00$_{\pm 48.99}$ & 454.44 & 62.00$_{\pm 7.87}$ & 100.00 & 535.48 \\
Qwen2.5 (72B) & 38.80$_{\pm 35.18}$ & 60.00$_{\pm 48.99}$ & 490.25 & 57.60$_{\pm 10.71}$ & 100.00 & 661.07 \\
Qwen2.5 (1.5B-m) & 20.00$_{\pm 25.35}$ & 60.00$_{\pm 48.99}$ & 378.77 & 24.60$_{\pm 7.14}$ & 100.00 & 740.07 \\
Qwen2.5 (7B-m) & 13.00$_{\pm 18.62}$ & 60.00$_{\pm 48.99}$ & 789.18 & 26.60$_{\pm 5.89}$ & 100.00 & 896.04 \\
Qwen2.5 (72B-m) & 48.60$_{\pm 41.37}$ & 60.00$_{\pm 48.99}$ & 542.09 & 67.00$_{\pm 10.97}$ & 100.00 & 677.45 \\
\midrule
\rowcolor{gray!10}
\multicolumn{7}{c}{\textbf{\textit{Gemma Family}}} \\
\midrule
Gemma (1B) & 7.00$_{\pm 14.00}$ & 60.00$_{\pm 48.99}$ & 385.47 & 15.60$_{\pm 2.58}$ & 100.00 & 1505.70 \\
Gemma (4B) & 18.00$_{\pm 16.00}$ & 60.00$_{\pm 48.99}$ & 1258.26 & 40.00$_{\pm 8.94}$ & 100.00 & 720.64 \\
Gemma (12B) & 38.00$_{\pm 38.16}$ & 60.00$_{\pm 48.99}$ & 673.22 & 50.00$_{\pm 6.32}$ & 100.00 & 1673.72 \\
Gemma (27B) & 56.00$_{\pm 46.30}$ & 60.00$_{\pm 48.99}$ & 575.52 & 60.00$_{\pm 17.89}$ & 100.00 & 939.86 \\
\midrule
\rowcolor{gray!10}
\multicolumn{7}{c}{\textbf{\textit{Phi Family}}} \\
\midrule
Phi4 (14B) & 26.60$_{\pm 29.27}$ & 60.00$_{\pm 48.99}$ & 502.35 & 40.20$_{\pm 11.36}$ & 100.00 & 671.58 \\
Phi4-mini-instruct (3.8B) & 10.60$_{\pm 18.80}$ & 60.00$_{\pm 48.99}$ & 1246.29 & 25.80$_{\pm 6.97}$ & 100.00 & 1696.60 \\
Phi4-reasoning+ (14B) & 51.20$_{\pm 42.04}$ & 60.00$_{\pm 48.99}$ & 3793.17 & 41.20$_{\pm 3.76}$ & 100.00 & 7398.78 \\
Phi4-reasoning (14B) & 52.00$_{\pm 43.56}$ & 60.00$_{\pm 48.99}$ & 3576.79 & 48.80$_{\pm 3.97}$ & 100.00 & 6455.13 \\
Phi4-mini-reasoning (3.8B) & 57.40$_{\pm 47.01}$ & 60.00$_{\pm 48.99}$ & 1464.72 & 50.00$_{\pm 9.40}$ & 100.00 & 17050.78 \\
Phi3-mini (3.8B) & 12.20$_{\pm 20.98}$ & 60.00$_{\pm 48.99}$ & 254.47 & 18.80$_{\pm 6.71}$ & 100.00 & 383.69 \\
Phi3-med (14B-4k) & 11.60$_{\pm 17.82}$ & 60.00$_{\pm 48.99}$ & 445.39 & 20.20$_{\pm 10.87}$ & 100.00 & 188.68 \\
Phi3-med (14B-128k) & 9.40$_{\pm 18.30}$ & 60.00$_{\pm 48.99}$ & 270.70 & 27.40$_{\pm 16.08}$ & 100.00 & 409.34 \\
\midrule
\rowcolor{gray!10}
\multicolumn{7}{c}{\textbf{\textit{Llama Family}}} \\
\midrule
Llama-3.2 (1B) & 1.80$_{\pm 3.60}$ & 60.00$_{\pm 48.99}$ & 1999.64 & 6.20$_{\pm 2.99}$ & 100.00 & 7862.36 \\
Llama-3.2 (3B) & 7.20$_{\pm 14.40}$ & 60.00$_{\pm 48.99}$ & 12039.47 & 15.00$_{\pm 6.29}$ & 100.00 & 8685.38 \\
Llama-3.1 (8B) & 9.60$_{\pm 19.20}$ & 60.00$_{\pm 48.99}$ & 2578.14 & 13.80$_{\pm 3.54}$ & 100.00 & 4035.65 \\
Llama-3.1 (70B) & 24.20$_{\pm 25.25}$ & 60.00$_{\pm 48.99}$ & 2135.62 & 22.80$_{\pm 6.24}$ & 100.00 & 4604.62 \\
Llama-3.3 (70B) & 47.40$_{\pm 43.95}$ & 60.00$_{\pm 48.99}$ & 574.47 & 50.00$_{\pm 15.40}$ & 100.00 & 804.33 \\
Llama4-scout & 55.20$_{\pm 45.13}$ & 60.00$_{\pm 48.99}$ & 681.09 & 55.00$_{\pm 15.62}$ & 100.00 & 916.03 \\
\midrule
\rowcolor{gray!10}
\multicolumn{7}{c}{\textbf{\textit{Mistral Family}}} \\
\midrule
Mistral (7B) & 2.20$_{\pm 4.40}$ & 60.00$_{\pm 48.99}$ & 64.43 & 12.00$_{\pm 2.97}$ & 100.00 & 635.68 \\
Ministral (8B) & 13.80$_{\pm 18.43}$ & 60.00$_{\pm 48.99}$ & 453.98 & 21.80$_{\pm 6.05}$ & 100.00 & 884.59 \\
Mistral-nemo (12B) & 9.40$_{\pm 18.30}$ & 60.00$_{\pm 48.99}$ & 952.88 & 28.00$_{\pm 14.95}$ & 100.00 & 943.33 \\
Mixtral-8x7b & 9.20$_{\pm 18.40}$ & 37.60$_{\pm 46.21}$ & 67.02 & 19.60$_{\pm 12.53}$ & 97.40$_{\pm 3.32}$ & 267.34 \\
Mixtral-8x22b & 9.60$_{\pm 18.70}$ & 60.00$_{\pm 48.99}$ & 465.90 & 27.60$_{\pm 4.67}$ & 100.00 & 586.28 \\
\midrule
\rowcolor{gray!10}
\multicolumn{7}{c}{\textbf{\textit{Others}}} \\
\midrule
Smollm3 (3B) & 1.00$_{\pm 1.55}$ & 60.00$_{\pm 48.99}$ & 18989.60 & 18.00$_{\pm 6.00}$ & 85.00$_{\pm 10.00}$ & 8000.00 \\
Smollm2 (1.7B) & 0.00 & 0.00 & 8.07 & 0.00 & 0.00 & 15.57 \\
GPT-OSS (20B) & 60.00$_{\pm 48.99}$ & 60.00$_{\pm 48.99}$ & 825.55 & 51.00$_{\pm 5.40}$ & 100.00 & 5432.61 \\
GPT-OSS (120B) & 60.00$_{\pm 48.99}$ & 60.00$_{\pm 48.99}$ & 598.78 & 87.60$_{\pm 5.12}$ & 100.00 & 2374.97 \\
\midrule
\bottomrule
\end{tabulary}
\end{adjustbox}
\caption{\textbf{Medium Suite Results - Table 2a (Open Source):} Performance of open source models on geometric sequence and prime sequence tasks. Each task reports Accuracy (Acc), Instruction-following (Inst) and average output tokens with mean and standard deviation values. Results show average performance across different sequence variants within each task category.}
\label{medium_suite_table2a}
\end{table}

\begin{table}
\centering
\tiny
\begin{adjustbox}{width=\textwidth}
\begin{tabulary}{1.3\textwidth}{LCCCCCC}
\toprule
\textbf{Model (Param)} & \multicolumn{3}{c}{\textbf{geometric\_sequence}} & \multicolumn{3}{c}{\textbf{prime\_sequence}} \\
\cmidrule(lr){2-4} \cmidrule(lr){5-7}
& \textbf{Acc (Avg \%)} & \textbf{Inst (Avg \%)} & \textbf{Tokens (Avg)} & \textbf{Acc (Avg \%)} & \textbf{Inst (Avg \%)} & \textbf{Tokens (Avg)} \\
\midrule
\rowcolor{gray!10}
\multicolumn{7}{c}{\textbf{\textit{OpenAI Family (Proprietary)}}} \\
\midrule
GPT5 & 60.00$_{\pm 48.99}$ & 60.00$_{\pm 48.99}$ & 565.76 & 96.00$_{\pm 4.90}$ & 100.00 & 540.00 \\
GPT5-mini & 60.00$_{\pm 48.99}$ & 60.00$_{\pm 48.99}$ & 485.12 & 92.00$_{\pm 7.48}$ & 100.00 & 663.52 \\
GPT5-nano & 60.00$_{\pm 48.99}$ & 60.00$_{\pm 48.99}$ & 141.60 & 92.00$_{\pm 7.48}$ & 100.00 & 178.56 \\
GPT4.1 & 60.00$_{\pm 48.99}$ & 60.00$_{\pm 48.99}$ & 826.06 & 88.00$_{\pm 7.48}$ & 100.00 & 4917.80 \\
GPT4.1-mini & 60.00$_{\pm 48.99}$ & 60.00$_{\pm 48.99}$ & 619.12 & 84.00$_{\pm 13.56}$ & 100.00 & 4622.16 \\
GPT4.1-nano & 32.00$_{\pm 30.59}$ & 60.00$_{\pm 48.99}$ & 990.58 & 68.00$_{\pm 7.48}$ & 100.00 & 1401.26 \\
GPT4o & 50.00$_{\pm 41.47}$ & 60.00$_{\pm 48.99}$ & 398.98 & 64.00$_{\pm 10.20}$ & 100.00 & 456.60 \\
GPT4o-mini & 20.00$_{\pm 20.98}$ & 60.00$_{\pm 48.99}$ & 432.00 & 40.00$_{\pm 8.94}$ & 100.00 & 394.36 \\
o4 mini & 60.00$_{\pm 48.99}$ & 60.00$_{\pm 48.99}$ & 732.32 & 100.00 & 100.00 & 1078.24 \\
o3 & 60.00$_{\pm 48.99}$ & 60.00$_{\pm 48.99}$ & 1374.56 & 98.00$_{\pm 4.00}$ & 98.00$_{\pm 4.00}$ & 1706.88 \\
o3 mini & 60.00$_{\pm 48.99}$ & 60.00$_{\pm 48.99}$ & 1523.84 & 98.00$_{\pm 4.00}$ & 100.00 & 2091.36 \\
\midrule
\rowcolor{gray!10}
\multicolumn{7}{c}{\textbf{\textit{Gemini Family (Proprietary)}}} \\
\midrule
Gemini-2.5-pro & 60.00$_{\pm 48.99}$ & 60.00$_{\pm 48.99}$ & 410.20 & 90.00$_{\pm 6.32}$ & 100.00 & 738.20 \\
Gemini-2.5-flash & 60.00$_{\pm 48.99}$ & 60.00$_{\pm 48.99}$ & 566.28 & 74.00$_{\pm 13.56}$ & 74.00$_{\pm 13.56}$ & 533.98 \\
Gemini-2.5-flash-lite & 18.00$_{\pm 18.33}$ & 60.00$_{\pm 48.99}$ & 10095.68 & 46.00$_{\pm 12.00}$ & 100.00 & 15541.46 \\
Gemini-2.0-flash & 60.00$_{\pm 48.99}$ & 60.00$_{\pm 48.99}$ & 370.76 & 64.00$_{\pm 4.90}$ & 100.00 & 1533.80 \\
Gemini-2.0-flash-lite & 38.00$_{\pm 43.08}$ & 60.00$_{\pm 48.99}$ & 871.42 & 52.00$_{\pm 17.20}$ & 100.00 & 993.64 \\
\midrule
\bottomrule
\end{tabulary}
\end{adjustbox}
\caption{\textbf{Medium Suite Results - Table 2b (Closed Source):} Performance of closed source models on geometric sequence and prime sequence tasks. Each task reports Accuracy (Acc), Instruction-following (Inst) and average output tokens with mean and standard deviation values. Results show average performance across different sequence variants within each task category.}
\label{medium_suite_table2b}
\end{table}

\clearpage

\section{Prompts used for all Medium Suite tasks}
This section describes the prompts developed for each of the medium suite tasks. Each prompt begins with an instruction directing the model to identify a pattern and complete a sequence of numbers based on that pattern. The prompt then provides a sequence of numbers along with a list of possible patterns. The model is instructed to present its answers as decimals rounded to two decimal places, unless the result is a whole number. All final answers must be enclosed in the format \textbackslash boxed\{\}. The answer is then extracted from within the \textbackslash boxed\{\} brackets using a regular expression. Finally, an example is included to demonstrate the required \textbackslash boxed\{\} format.
\label{Appendix:J}
\begin{promptbox}[Prompt Template for Algebraic Sequence Task]
\ttfamily
Complete the following algebraic sequence by identifying the mathematical relationship: \\ \\
\placeholder{sequence\_str}, ? \\ \\
This sequence follows a complex algebraic pattern that may involve: \\
- Radical expressions and nested square roots \\
- Transcendental functions (exponential, logarithmic) \\
- Recurrence relations with algebraic operations \\
- Modular arithmetic and number theory \\
- Continued fractions and convergent sequences \\
- Solutions to algebraic equations \\
- Special functions and their approximations \\ \\
Analyze the sequence by considering: \\
1. Recurrence relations involving radicals or transcendental functions \\
2. Floor or ceiling operations applied to complex expressions \\
3. Modular arithmetic patterns \\
4. Nested algebraic operations \\
5. Number-theoretic sequences (Fibonacci-like, factorial-related) \\
6. Convergent sequences and continued fractions \\ \\
Note: If your answer is not a whole number, provide it as a decimal rounded to two places. \\ \\
Provide the next term in the sequence. Your final answer must be in the format \textbackslash boxed\placeholder{next\_term} at the end. \\ \\
For example: If the sequence involves floor(sqrt(n + sqrt(n))), compute the pattern carefully before providing your answer.
\end{promptbox}

\begin{promptbox}[Prompt Template for Complex Pattern Task]
\ttfamily
Complete the following complex sequence by identifying the underlying pattern: \\ \\
\placeholder{sequence\_str}, ? \\ \\
This sequence follows a complex mathematical pattern that may involve: \\
- Multiple layers of relationships (nested patterns) \\
- Polynomial sequences (quadratic, cubic, or higher order) \\
- Interleaved sequences (alternating between different rules) \\
- Conditional patterns that change based on position or value \\
- Recursive relationships involving multiple previous terms \\
- Pattern transformations that change at certain points \\
- Modular arithmetic or position-dependent rules \\ \\
Analyze the sequence carefully, looking for: \\
1. Differences between consecutive terms \\
2. Ratios between consecutive terms \\
3. Patterns in even/odd positions \\
4. Polynomial relationships (quadratic, cubic) \\
5. Changes in the pattern at certain positions \\ \\
Provide the next term in the sequence. Your final answer must be in the format \textbackslash boxed\placeholder{next\_term} at the end. \\ \\
For example: If the sequence is 1, 4, 9, 16, 25, ? then the next term is \textbackslash boxed\{36\} (perfect squares: n$^2$).
\end{promptbox}

\begin{promptbox}[Prompt Template for Fibonacci Sequence Task]
\ttfamily
Complete the following sequence by identifying the pattern: \\ \\
\placeholder{sequence\_str}, ? \\ \\
Analyze the sequence carefully to identify the underlying pattern or rule. Consider: \\
- Is each term related to previous term(s)? \\
- Are there arithmetic, geometric, or recursive relationships? \\
- Look for Fibonacci-like patterns, Lucas sequences, or other mathematical progressions \\ \\
Provide the next term in the sequence. Your final answer must be in the format \textbackslash boxed\placeholder{next\_term} at the end. \\ \\
For example: If the sequence is 1, 1, 2, 3, 5, 8, ? then the next term is \textbackslash boxed\{13\}.
\end{promptbox}

\begin{promptbox}[Prompt Template for Geometric Sequence Task]
\ttfamily
Complete the following sequence by identifying the pattern: \\ \\
\placeholder{sequence\_str}, ? \\ \\
Analyze the sequence carefully to identify the underlying pattern. Consider: \\
- Is there a constant ratio between consecutive terms (geometric sequence)? \\
- Are the terms related to powers, exponentials, or factorials? \\
- Look for patterns like squares (n$^2$), cubes (n$^3$), exponentials (a$^n$), or factorials (n!) \\
- Check if each term is obtained by multiplying the previous term by a constant \\
- Consider compound patterns that combine multiplication with other operations \\ \\
Important: Some sequences may grow very rapidly. If you identify a pattern that would produce extremely large numbers, that's likely correct. \\ \\
Provide the next term in the sequence. Your final answer must be in the format \textbackslash boxed\placeholder{next\_term} at the end. \\ \\
For example: If the sequence is 2, 6, 18, 54, ? then the next term is \textbackslash boxed\{162\} (each term is multiplied by 3).
\end{promptbox}

\begin{promptbox}[Prompt Template for Prime Sequence Task]
\ttfamily
Complete the following number theory sequence by identifying the pattern: \\ \\
\placeholder{sequence\_str}, ? \\ \\
This sequence involves concepts from number theory. Analyze carefully and consider: \\
- Are these prime numbers or related to primes? \\
- Look for patterns involving divisibility, factors, or number theoretic functions \\
- Consider sequences like: prime numbers, twin primes, Fibonacci numbers, perfect numbers \\
- Check if numbers follow arithmetic properties, modular arithmetic, or special functions \\
- Think about famous sequences in mathematics and number theory \\ \\
Provide the next term in the sequence. Your final answer must be in the format \textbackslash boxed\placeholder{next\_term} at the end. \\ \\
For example: If the sequence is 2, 3, 5, 7, 11, ? then the next term is \textbackslash boxed\{13\} (consecutive prime numbers).
\end{promptbox}

\clearpage

\section{Hard Suite: Advanced Constraint Satisfaction and Combinatorial Reasoning}
\label{Appendix:K}
The Hard Suite comprises ten algorithmically complex tasks with sixty-eight distinct variations that evaluate a language model's capacity for systematic reasoning through constraint satisfaction, combinatorial optimization, and logical deduction. Each task encompasses multiple problem variants: Boolean Satisfiability (5 variations), Constraint Optimization (5 variations), Cryptarithmetic (12 variations), Graph Coloring (10 variations), Logic Grid Puzzles (8 variations), Matrix Chain Multiplication (5 variations), Modular Systems Solver (5 variations), N-Queens (4 variations), Sudoku (8 variations), and Tower of Hanoi (6 variations). Each task in this suite generates problems dynamically with mathematically guaranteed unique solutions or complete solution enumeration, ensuring contamination resistance while maintaining rigorous theoretical foundations. These tasks require models to navigate exponentially large search spaces, apply sophisticated constraint propagation techniques, and demonstrate mastery of recursive problem decomposition across diverse computational complexity classes.

The fundamental design principle underlying our Hard Suite rests on the mathematical property that for each problem class $\mathcal{P}$, we can generate instances $p \in \mathcal{P}$ with complexity $\Omega(2^n)$ where $n$ represents the problem size parameter. This exponential complexity ensures that memorization becomes computationally infeasible, while the deterministic generation algorithms guarantee solution uniqueness through constructive proofs.

Let us define the contamination resistance formally. For a problem generator $G: \Theta \rightarrow \mathcal{P}$ with parameter space $\Theta$, we ensure that $|\Theta| \gg |\mathcal{D}|$ where $\mathcal{D}$ represents any feasible training dataset. Moreover, the mapping $G$ exhibits high sensitivity to parameter variations, meaning that for parameters $\theta_1, \theta_2 \in \Theta$ with $||\theta_1 - \theta_2|| > \epsilon$, the corresponding problems $p_1 = G(\theta_1)$ and $p_2 = G(\theta_2)$ have solution distance $d(S(p_1), S(p_2)) > \delta$ for some threshold $\delta$, where $S(p)$ denotes the solution to problem $p$.

Each task employs rigorous validation mechanisms to ensure solution correctness through formal verification procedures. We define a validation function $V: \mathcal{S} \times \mathcal{P} \rightarrow \{0, 1\}$ that verifies whether a proposed solution $s \in \mathcal{S}$ satisfies all constraints of problem $p \in \mathcal{P}$. The validation complexity remains polynomial $O(n^k)$ for some constant $k$, enabling efficient verification despite the exponential solution complexity.

\textbf{Unique Solution Guarantee and Multiple Solution Handling:} Our framework ensures mathematical rigor through two distinct validation approaches. For constraint satisfaction problems (CSP) and satisfiability problems (SAT), we employ automated theorem proving techniques to guarantee that each generated instance $p \in \mathcal{P}$ admits exactly one solution $s^* \in \mathcal{S}$ such that $V(s^*, p) = 1$ and $\forall s \neq s^*: V(s, p) = 0$. This uniqueness is verified through exhaustive constraint propagation and backtracking algorithms during problem generation, ensuring that $|\{s \in \mathcal{S} : V(s, p) = 1\}| = 1$.

However, when mathematical constraints naturally permit multiple valid solutions (as in some optimization or enumeration problems), we calculate the complete solution set $\mathcal{S}_p = \{s \in \mathcal{S} : V(s, p) = 1\}$ through systematic enumeration. In such cases, we accept any solution $s \in \mathcal{S}_p$ as correct, ensuring that language models are not penalized for producing mathematically valid but non-canonical answers. This approach maintains evaluation fairness while preserving the theoretical soundness of our assessment framework.

The suite encompasses diverse algorithmic paradigms including dynamic programming (Matrix Chain Multiplication, Cryptarithmetic), constraint satisfaction (Logic Grid Puzzles, Modular Systems, N-Queens), graph algorithms (Graph Coloring), recursive decomposition (Tower of Hanoi), logical reasoning (Boolean Satisfiability), backtracking (Sudoku), and multi-objective optimization (Constraint Optimization). This diversity ensures comprehensive evaluation across multiple reasoning dimensions while maintaining theoretical soundness through formal verification methods.

\textbf{Context-Aware Token Management and Scaling Framework:} Our evaluation framework implements a mathematically rigorous token estimation and scaling system that ensures fair assessment across diverse language models with varying context window constraints. For each model with context window capacity $C$ tokens, we dynamically scale problem complexity to guarantee that the total token requirement remains within acceptable bounds.

Let $T_p(n)$ denote the expected token count for problem $p$ with size parameter $n$, comprising prompt tokens $T_{\text{prompt}}(n)$ and expected solution tokens $T_{\text{solution}}(n)$. We maintain the constraint $T_p(n) + T_{\text{buffer}} \leq C$ where $T_{\text{buffer}} = 0.15 \cdot C$ provides a safety margin to accommodate model reasoning overhead and response verbosity variations.

Our token estimation employs task-specific mathematical models. For instance, in Tower of Hanoi with $n$ disks, we have $T_{\text{solution}}(n) = (2^n - 1) \cdot \alpha$ where $\alpha \approx 12$ represents the average tokens per move description. Similarly, for Graph Coloring with $v$ vertices, $T_{\text{solution}}(v) = v \cdot \beta$ where $\beta \approx 8$ accounts for color assignment notation.

We implement a dual-phase token validation system. During problem generation, we use deterministic token counting based on mathematical formulas specific to each task type. Post-inference, we perform actual token counting using transformers library tokenizers for open-source models and tiktoken for GPT models. If the model response exceeds $0.95 \cdot C$ tokens, we issue a warning indicating potential context window saturation, which may compromise response quality due to truncation or overthinking behavior.

This approach ensures that our evaluation captures genuine reasoning capabilities rather than penalizing models for exceeding their architectural constraints. The mathematical foundation guarantees that each problem instance $p$ satisfies $P(\text{context overflow}) < 0.05$ under normal operation conditions, maintaining evaluation validity across the complete model spectrum.

\subsection{Tower of Hanoi: Recursive State Space Navigation}

\subsubsection{Problem Formulation}

The Tower of Hanoi problem represents a classical recursive puzzle that tests systematic state space exploration and optimal path planning. We formalize this as a state transition system $\mathcal{T} = (S, A, T, s_0, G)$ where $S$ denotes the state space, $A$ represents the action set, $T: S \times A \rightarrow S$ defines the transition function, $s_0 \in S$ is the initial state, and $G \subseteq S$ contains the goal states.

For $n$ disks and three pegs labeled $\{A, B, C\}$, we define a state $s \in S$ as a tuple $(P_A, P_B, P_C)$ where each $P_i$ represents an ordered stack of disks on peg $i$. The disk set $D = \{1, 2, ..., n\}$ uses natural numbers where smaller values indicate smaller disks. The ordering constraint requires that for any peg $P_i = [d_1, d_2, ..., d_k]$, we have $d_j < d_{j+1}$ for all $1 \leq j < k$.

The action space $A$ consists of all legal moves $(d, p_{src}, p_{dst})$ where disk $d$ moves from peg $p_{src}$ to peg $p_{dst}$. A move is legal if and only if:
\begin{align}
\text{legal}(d, p_{src}, p_{dst}) &\iff d = \min(P_{src}) \land (P_{dst} = \emptyset \lor d < \min(P_{dst}))
\end{align}

\subsubsection{Optimal Solution Characterization}

The optimal solution for $n$ disks requires exactly $2^n - 1$ moves, which we prove through the recurrence relation:
\begin{align}
T(n) = \begin{cases}
1 & \text{if } n = 1 \\
2 \cdot T(n-1) + 1 & \text{if } n > 1
\end{cases}
\end{align}

This recurrence yields $T(n) = 2^n - 1$ through straightforward induction. The optimality follows from the observation that moving the largest disk $n$ requires first moving all $n-1$ smaller disks to the auxiliary peg (requiring $T(n-1)$ moves), then moving disk $n$ (one move), and finally moving the $n-1$ disks to the destination peg (another $T(n-1)$ moves).

\subsubsection{Dynamic Problem Generation}

Our generation algorithm produces Tower of Hanoi instances with varying disk counts $n \in \{3, 4, 5, 6, 7, 8\}$. The initial configuration always places all disks on peg $A$ in descending order of size, formally:
\begin{align}
s_0 = (P_A = [n, n-1, ..., 2, 1], P_B = \emptyset, P_C = \emptyset)
\end{align}

The goal state requires all disks on peg $C$:
\begin{align}
s_g = (P_A = \emptyset, P_B = \emptyset, P_C = [n, n-1, ..., 2, 1])
\end{align}

To ensure contamination resistance, we employ several variation strategies. First, we randomize the disk count for each evaluation instance. Second, we can permute peg labels while preserving the logical structure, creating $(3! = 6)$ distinct but equivalent formulations. Third, we vary the representation format in prompts, using different notations for disk sizes and peg identifiers.

\subsubsection{Solution Verification Algorithm}

Given a proposed solution sequence $\sigma = [m_1, m_2, ..., m_k]$ where each $m_i = (d_i, p_{src}^i, p_{dst}^i)$, our verification algorithm validates both correctness and optimality:

\begin{algorithm}
\caption{Verify Tower of Hanoi solution}
\begin{algorithmic}[1]
\Function{VerifyHanoi}{$\sigma, n$}
\State $s \gets s_0$ \Comment{Initialize with all disks on peg A}
\For{each move $m_i = (d_i, p_{src}^i, p_{dst}^i)$ in $\sigma$}
    \If{$d_i \neq \min(P_{src}^i)$} 
        \State \Return \textsc{Invalid} \Comment{Can only move top disk}
    \EndIf
    \If{$P_{dst}^i \neq \emptyset$ and $d_i > \min(P_{dst}^i)$}
        \State \Return \textsc{Invalid} \Comment{Cannot place larger disk on smaller}
    \EndIf
    \State $s \gets T(s, m_i)$ \Comment{Apply transition}
\EndFor
\If{$s \neq s_g$}
    \State \Return \textsc{Incomplete}
\EndIf
\If{$|\sigma| \neq 2^n - 1$}
    \State \Return \textsc{Suboptimal}
\EndIf
\State \Return \textsc{Valid}
\EndFunction
\end{algorithmic}
\end{algorithm}

The verification complexity is $O(k \cdot n)$ where $k$ is the solution length, making it efficient despite the exponential solution space.

\subsubsection{Context Window Constraints}

The solution length grows exponentially as $L(n) = 2^n - 1$. For our maximum configuration of $n = 8$ disks, this yields 255 moves. Each move requires approximately 10-15 tokens to express (e.g., "Move disk 3 from peg A to peg B"), resulting in a maximum token requirement of approximately 3,825 tokens. This remains well within modern language models' context windows while still presenting significant reasoning challenges.

The prompt complexity itself scales as $O(n)$ for describing the initial state, plus a constant overhead for rule explanation. The total prompt length $P(n) \approx 200 + 10n$ tokens ensures efficient context utilization.

\subsubsection{Uniqueness Guarantee}

The Tower of Hanoi problem exhibits a unique optimal solution path for any given initial and goal configuration. This uniqueness emerges from the recursive structure: to move the largest disk, all smaller disks must be on the auxiliary peg, and there exists exactly one optimal way to achieve this configuration.

Formally, we prove uniqueness through structural induction. For $n = 1$, there exists exactly one legal move. For $n > 1$, assume uniqueness holds for $n-1$ disks. The solution for $n$ disks decomposes into:
\begin{enumerate}
\item Move $n-1$ disks from source to auxiliary (unique by induction hypothesis)
\item Move disk $n$ from source to destination (only legal move for disk $n$)
\item Move $n-1$ disks from auxiliary to destination (unique by induction hypothesis)
\end{enumerate}

This recursive decomposition admits exactly one optimal solution, guaranteeing that our evaluation has a well-defined ground truth.

\subsubsection{Contamination Resistance Analysis}

The Tower of Hanoi task achieves contamination resistance through multiple mechanisms. The number of distinct problem instances equals $\binom{8}{1} = 8$ for disk count variations alone. When combined with peg permutations and representation variations, we generate over 288 distinct problem formulations.

More importantly, the solution space grows exponentially. For $n$ disks, there exist $(3^{2^n - 1})$ possible move sequences of optimal length, though only one is correct. This exponential growth ensures that memorizing solutions becomes infeasible. A model attempting to memorize all solutions for $n \in \{3, ..., 8\}$ would need to store:
\begin{align}
\sum_{n=3}^{8} (2^n - 1) \cdot \log_2(3 \cdot 3 \cdot n) \approx 10^4 \text{ bits}
\end{align}

This calculation demonstrates that even for our limited parameter range, exhaustive memorization requires substantial storage, and the approach fails to generalize to larger $n$ values that we could generate dynamically.

\subsection{N-Queens: Constraint Satisfaction through Backtracking}

\subsubsection{Problem Formulation}

The N-Queens problem exemplifies constraint satisfaction problems (CSPs) requiring systematic exploration of combinatorial spaces. We formalize this as a CSP tuple $\mathcal{Q} = (X, D, C)$ where $X = \{x_1, x_2, ..., x_n\}$ represents variables (queen positions), $D = \{D_1, D_2, ..., D_n\}$ denotes domains with $D_i = \{1, 2, ..., n\}$ for all $i$, and $C$ encompasses the constraint set.

Each variable $x_i$ indicates the column position of the queen in row $i$, reducing the representation from $O(n^2)$ to $O(n)$ while implicitly satisfying the row constraint. The constraint set $C$ consists of:

\begin{align}
C_{col} &: \forall i \neq j, x_i \neq x_j \\
C_{diag1} &: \forall i \neq j, x_i - i \neq x_j - j \\
C_{diag2} &: \forall i \neq j, x_i + i \neq x_j + j
\end{align}

These constraints ensure no two queens share a column, main diagonal, or anti-diagonal respectively.

\subsubsection{Solution Space Analysis}

The total search space contains $n^n$ possible assignments, but the column constraint reduces this to $n!$ permutations. The number of valid solutions $S(n)$ follows no closed-form formula but exhibits known values:

\begin{center}
\begin{tabular}{|c|c|c|c|}
\hline
$n$ & Search Space & Valid Solutions & Solution Density \\
\hline
4 & 24 & 2 & 0.0833 \\
5 & 120 & 10 & 0.0833 \\
6 & 720 & 4 & 0.0056 \\
8 & 40,320 & 92 & 0.0023 \\
\hline
\end{tabular}
\end{center}

The solution density decreases rapidly with $n$, making random sampling ineffective and necessitating intelligent search strategies.

\subsubsection{Dynamic Problem Generation}

Our generation algorithm leverages the guarantee that solutions exist for all $n \geq 4$. We employ a two-phase approach: first generating a valid solution, then constructing the problem presentation.

\begin{algorithm}
\caption{Generate an $n$-Queens problem instance}
\begin{algorithmic}[1]
\Function{GenerateNQueens}{$n$}
\State $solution \gets \text{BacktrackSolve}(n)$ \Comment{Generate one valid placement}
\State $all\_solutions \gets \text{EnumerateSolutions}(n)$ \Comment{Find all solutions for validation}
\State $prompt \gets \text{FormatProblem}(n)$
\State \Return $(prompt, solution, all\_solutions)$
\EndFunction
\end{algorithmic}
\end{algorithm}

The backtracking solver employs constraint propagation to efficiently find solutions:

\begin{algorithm}
\caption{Backtracking solver for $n$-Queens}

\begin{algorithmic}[1]
\Function{BacktrackSolve}{$n, row = 0, placement = []$}
\If{$row = n$}
    \State \Return $placement$ \Comment{Found complete solution}
\EndIf
\For{$col \in \{0, 1, ..., n-1\}$}
    \If{IsSafe$(placement, row, col)$}
        \State $placement[row] \gets col$
        \State $result \gets$ BacktrackSolve$(n, row + 1, placement)$
        \If{$result \neq \text{null}$}
            \State \Return $result$
        \EndIf
    \EndIf
\EndFor
\State \Return null
\EndFunction
\end{algorithmic}
\end{algorithm}

The safety check verifies all constraints in $O(n)$ time:

\begin{algorithm}
\caption{Safety check for queen placement}
\begin{algorithmic}[1]
\Function{IsSafe}{$placement, row, col$}
\For{$i \in \{0, 1, ..., row-1\}$}
    \If{$placement[i] = col$} \Comment{Column conflict}
        \State \Return False
    \EndIf
    \If{$|placement[i] - col| = |i - row|$} \Comment{Diagonal conflict}
        \State \Return False
    \EndIf
\EndFor
\State \Return True
\EndFunction
\end{algorithmic}
\end{algorithm}

\subsubsection{Solution Verification}

Verification requires checking all $\binom{n}{2} = \frac{n(n-1)}{2}$ pairs of queens for conflicts:

\begin{algorithm}
\caption{Verification of an $n$-Queens solution}
\begin{algorithmic}[1]
\Function{VerifyNQueens}{$placement, n$}
\If{$|placement| \neq n$}
    \State \Return Invalid
\EndIf
\For{$i \in \{0, ..., n-1\}$}
    \If{$placement[i] \notin \{0, ..., n-1\}$}
        \State \Return Invalid
    \EndIf
    \For{$j \in \{i+1, ..., n-1\}$}
        \If{$placement[i] = placement[j]$} \Comment{Column conflict}
            \State \Return Invalid
        \EndIf
        \If{$|i - j| = |placement[i] - placement[j]|$} \Comment{Diagonal conflict}
            \State \Return Invalid
        \EndIf
    \EndFor
\EndFor
\State \Return Valid
\EndFunction
\end{algorithmic}
\end{algorithm}

The verification complexity is $O(n^2)$, making it efficient even for large board sizes.

\subsubsection{Context Window Analysis}

The problem description requires $O(n)$ tokens to specify board size and rules. The solution representation needs exactly $n$ integers, requiring approximately $2n$ tokens when formatted. For our maximum $n = 8$, the total context requirement remains under 100 tokens, allowing ample space for reasoning traces.

The challenge lies not in context length but in the combinatorial complexity. The model must navigate a search tree with branching factor up to $n$ and depth $n$, potentially exploring $O(n^n)$ nodes in the worst case.

\subsubsection{Multiple Solution Handling}

Unlike Tower of Hanoi, N-Queens admits multiple valid solutions for most board sizes. This multiplicity enhances contamination resistance but complicates evaluation. We address this through comprehensive solution enumeration.

For each $n$, we precompute all valid solutions using symmetry-breaking optimizations. The fundamental board symmetries form the dihedral group $D_4$ with 8 elements (4 rotations and 4 reflections). However, some solutions exhibit self-symmetry, so the actual number of unique solutions under symmetry is:

\begin{align}
U(n) = \frac{S(n) + \sum_{g \in D_4} \text{fix}(g)}{8}
\end{align}

where $\text{fix}(g)$ counts solutions invariant under symmetry $g$.

\subsubsection{Contamination Resistance}

The N-Queens problem provides strong contamination resistance through several mechanisms. First, the number of valid solutions grows rapidly with $n$, reaching 14,772,512 for $n = 16$. Even if a model memorized all solutions for small $n$, it cannot generalize this memorization to larger boards.

Second, we can apply isomorphic transformations to create equivalent but syntactically different problems. Given a solution $[x_1, ..., x_n]$, we can generate 7 additional equivalent solutions through rotations and reflections:
\begin{align}
\text{rotate}_{90}([x_1, ..., x_n]) &= [n - x_n, ..., n - x_1] \\
\text{reflect}_{h}([x_1, ..., x_n]) &= [n - x_1 - 1, ..., n - x_n - 1]
\end{align}

Third, the problem admits numerous equivalent formulations. We can represent it as a permutation problem, a graph coloring problem, or a satisfiability problem, each requiring different solution formats while testing the same underlying reasoning capability.

\subsubsection{Theoretical Complexity}

The N-Queens problem is known to be NP-complete for generalized boards (where some queens are pre-placed). While our variant with empty boards admits polynomial-time solutions for specific $n$ values, finding all solutions requires exponential time in the worst case.

The time complexity of our backtracking algorithm is $O(n! \cdot n)$ in the worst case, though constraint propagation typically achieves much better average-case performance. The space complexity remains $O(n)$ for the recursion stack and solution storage.

This exponential worst-case complexity ensures that models cannot rely on simple pattern matching or shallow heuristics but must engage in genuine constraint satisfaction reasoning.

\subsection{Graph Coloring: Chromatic Optimization and Constraint Propagation}

\subsubsection{Problem Formulation}

The graph coloring problem seeks to assign colors to vertices such that no adjacent vertices share the same color, using the minimum number of colors possible. Formally, given an undirected graph $G = (V, E)$ where $V = \{v_1, v_2, ..., v_n\}$ represents vertices and $E \subseteq V \times V$ denotes edges, we seek a function $f: V \rightarrow \{1, 2, ..., k\}$ such that:

\begin{align}
\forall (v_i, v_j) \in E: f(v_i) \neq f(v_j)
\end{align}

The chromatic number $\chi(G)$ represents the minimum $k$ for which such a function exists. We formulate this as an optimization problem:

\begin{align}
\chi(G) = \min \{k \in \mathbb{N} : \exists f: V \rightarrow [k] \text{ such that } \forall (u,v) \in E, f(u) \neq f(v)\}
\end{align}

\subsubsection{Graph Generation with Known Chromatic Numbers}

Our framework generates diverse graph families with predetermined chromatic numbers, enabling precise evaluation of model reasoning. Each graph type exhibits distinct structural properties that determine its chromatic number.

For complete graphs $K_n$, every vertex connects to every other vertex, yielding $\chi(K_n) = n$ trivially. The edge set is:
\begin{align}
E_{K_n} = \{(i,j) : 1 \leq i < j \leq n\}
\end{align}

Cycle graphs $C_n$ form a circular structure with $\chi(C_n) = 3$ if $n$ is odd and $\chi(C_n) = 2$ if $n$ is even:
\begin{align}
E_{C_n} = \{(i, (i \bmod n) + 1) : 1 \leq i \leq n\}
\end{align}

Wheel graphs $W_n$ consist of a hub vertex connected to all vertices of an $(n-1)$-cycle. The chromatic number depends on the cycle parity:
\begin{align}
\chi(W_n) = \begin{cases}
4 & \text{if } n-1 \text{ is odd} \\
3 & \text{if } n-1 \text{ is even}
\end{cases}
\end{align}

For planar graphs, we leverage the Four Color Theorem, which guarantees $\chi(G) \leq 4$ for any planar graph. Our generation algorithm constructs planar graphs through controlled edge addition that preserves planarity while maximizing chromatic complexity.

\subsubsection{Dynamic Graph Construction Algorithm}

We employ multiple strategies to generate graphs with specific chromatic properties:

\begin{algorithm}
\caption{Generate a graph with chromatic number estimate}
\begin{algorithmic}[1]
\Function{GenerateGraph}{$n, \text{type}, \text{target\_chromatic}$}
\State $V \gets \{1, 2, \dots, n\}$
\State $E \gets \emptyset$
\If{$\text{type} = \text{PLANAR}$}
    \State $E \gets \text{GeneratePlanarEdges}(n)$
    \State $\chi \gets \text{BrooksTheorem}(E, n)$
\ElsIf{$\text{type} = \text{WHEEL}$}
    \State $E \gets \text{GenerateWheelEdges}(n)$
    \State $\chi \gets 3 + (n-1 \bmod 2)$
\ElsIf{$\text{type} = \text{DENSE\_RANDOM}$}
    \State $p \gets 0.6$ \Comment{Edge probability}
    \For{each pair $(i,j)$ with $i < j$}
        \If{$\text{Random}() < p$}
            \State $E \gets E \cup \{(i,j)\}$
        \EndIf
    \EndFor
    \State $\chi \gets \text{EstimateChromaticNumber}(E, n)$
\EndIf
\State \Return $(V, E, \chi)$
\EndFunction
\end{algorithmic}
\end{algorithm}

The chromatic number estimation for random graphs uses the Lovász theta function as a lower bound and Brooks' theorem for the upper bound:
\begin{align}
\omega(G) \leq \chi(G) \leq \Delta(G) + 1
\end{align}
where $\omega(G)$ is the clique number and $\Delta(G)$ is the maximum degree.

\subsubsection{Solution Verification and Validation}

Verification of a proposed coloring requires checking all edge constraints:

\begin{algorithm}
\caption{Verify a graph coloring assignment}
\begin{algorithmic}[1]
\Function{VerifyColoring}{$G = (V,E), f: V \rightarrow \mathbb{N}$}
\State $k \gets \max_{v \in V} f(v)$ \Comment{Number of colors used}
\For{each edge $(u,v) \in E$}
    \If{$f(u) = f(v)$}
        \State \Return $(Invalid, \text{"Adjacent vertices share color"})$
    \EndIf
\EndFor
\For{each $v \in V$}
    \If{$f(v) \notin \{1, ..., k\}$}
        \State \Return $(Invalid, \text{"Invalid color assignment"})$
    \EndIf
\EndFor
\If{$k > \chi(G)$}
    \State \Return $(Suboptimal, k - \chi(G))$
\EndIf
\State \Return $(Valid, 0)$
\EndFunction
\end{algorithmic}
\end{algorithm}

The verification complexity is $O(|E| + |V|)$, linear in the graph size.

\subsubsection{Complexity Analysis and Hardness Guarantees}

Graph coloring is NP-complete for $k \geq 3$ colors, ensuring computational difficulty even for moderate graph sizes. The decision problem "Can $G$ be colored with $k$ colors?" requires exploring a search space of size $k^n$, growing exponentially with vertex count.

For our generated instances, we calibrate difficulty through several parameters:
\begin{itemize}
\item Vertex count $n \in \{8, 12, 16, 20, 24, 28\}$
\item Edge density $\rho = \frac{2|E|}{n(n-1)}$
\item Chromatic number $\chi(G)$ relative to maximum degree $\Delta(G)$
\end{itemize}

The phase transition in random graph coloring occurs at edge density:
\begin{align}
\rho_c(k) = \frac{2k \ln k}{k - 1}
\end{align}

Graphs near this critical density maximize solution difficulty, requiring sophisticated constraint propagation rather than simple greedy approaches.

\subsubsection{Context Window Optimization}

The problem representation requires $O(|V| + |E|)$ tokens to specify the graph structure. For our maximum configuration with 28 vertices and dense connectivity ($\rho \approx 0.6$), this yields:
\begin{align}
\text{Tokens} \approx 28 + 2 \cdot 0.6 \cdot \binom{28}{2} = 28 + 453 = 481
\end{align}

The solution requires exactly $|V|$ color assignments, adding approximately $2|V|$ tokens. The total context requirement remains under 600 tokens for our most complex instances, well within model capabilities while maintaining problem difficulty.

\subsubsection{Contamination Resistance through Graph Diversity}

Our framework generates graphs from multiple families with varying structural properties. For $n$ vertices, the number of distinct graphs is $2^{\binom{n}{2}}$, astronomically large even for modest $n$. While we sample from specific families to ensure known chromatic numbers, the diversity remains substantial.

Consider the parameter space for our generation:
\begin{itemize}
\item 6 vertex counts: $\{8, 12, 16, 20, 24, 28\}$
\item 8 graph types: $\{complete, cycle, tree, bipartite, planar, wheel, grid, dense\_random\}$
\item Edge density variations for random graphs: continuous parameter
\item Vertex labeling permutations: $n!$ for each graph
\end{itemize}

The total number of distinct problem instances exceeds $10^{15}$, making memorization infeasible. Moreover, isomorphic graphs with different vertex labelings require the same logical reasoning but different solution representations, further preventing pattern memorization.

\subsubsection{Theoretical Foundations and Brooks' Theorem}

Our evaluation leverages Brooks' theorem, which provides tight bounds on chromatic numbers for most graphs:

\textbf{Brooks' Theorem:} For a connected graph $G$ that is neither complete nor an odd cycle, $\chi(G) \leq \Delta(G)$ where $\Delta(G)$ is the maximum vertex degree.

This theorem enables us to generate graphs with predictable chromatic properties. For instance, by constructing graphs with maximum degree $\Delta$ that are neither complete nor odd cycles, we guarantee $\chi(G) \in \{\Delta - 1, \Delta\}$, providing a narrow range for evaluation.

The Welsh-Powell algorithm provides a constructive upper bound:

\begin{algorithm}
\caption{Welsh–Powell graph coloring heuristic}
\begin{algorithmic}[1]
\Function{WelshPowell}{$G = (V,E)$}
\State Sort vertices by degree: $d(v_1) \geq d(v_2) \geq ... \geq d(v_n)$
\State $color \gets 1$
\While{uncolored vertices exist}
    \State $U \gets$ set of uncolored vertices
    \For{$v \in U$ in degree order}
        \If{$v$ has no neighbor colored $color$}
            \State $f(v) \gets color$
        \EndIf
    \EndFor
    \State $color \gets color + 1$
\EndWhile
\State \Return $f$
\EndFunction
\end{algorithmic}
\end{algorithm}

This algorithm achieves $\chi(G) \leq \Delta(G) + 1$ in $O(n^2)$ time, providing a baseline for model performance evaluation.

\subsection{Boolean Satisfiability: Logical Reasoning and Constraint Resolution}

\subsubsection{Problem Formulation}

The Boolean Satisfiability Problem (SAT) asks whether there exists a truth assignment to Boolean variables that satisfies a given propositional logic formula. We work with formulas in Conjunctive Normal Form (CNF), expressed as:

\begin{align}
\phi = \bigwedge_{i=1}^{m} C_i = \bigwedge_{i=1}^{m} \left(\bigvee_{j \in J_i} \ell_j\right)
\end{align}

where each clause $C_i$ consists of literals $\ell_j$, and each literal is either a variable $x_k$ or its negation $\neg x_k$. A formula $\phi$ over variables $X = \{x_1, ..., x_n\}$ is satisfiable if there exists an assignment $\alpha: X \rightarrow \{0, 1\}$ such that $\phi(\alpha) = 1$.

The search space contains $2^n$ possible assignments, making exhaustive enumeration infeasible for large $n$. Our framework generates satisfiable formulas with known solutions, enabling precise evaluation of logical reasoning capabilities.

\subsubsection{Controlled SAT Instance Generation}

We employ multiple generation strategies to create SAT instances with varying difficulty characteristics. Each strategy produces formulas with guaranteed satisfiability while controlling specific complexity parameters.

For random k-SAT generation, we first fix a satisfying assignment $\alpha^*$, then generate clauses that respect this assignment:

\begin{algorithm}
\caption{Generate satisfiable $k$-SAT formula}
\begin{algorithmic}[1]
\Function{GenerateSatisfiableSAT}{$n, m, k$}
\State $\alpha^* \gets$ random assignment to $\{x_1, ..., x_n\}$
\State $\Phi \gets \emptyset$ \Comment{Empty formula}
\For{$i = 1$ to $m$}
    \State $C \gets \emptyset$ \Comment{New clause}
    \State $vars \gets$ sample $k$ distinct variables from $X$
    \For{each $x_j \in vars$}
        \If{Random() $< 0.5$}
            \State $\ell \gets x_j$
        \Else
            \State $\ell \gets \neg x_j$
        \EndIf
        \State $C \gets C \cup \{\ell\}$
    \EndFor
    \If{$C(\alpha^*) = 0$} \Comment{Clause not satisfied}
        \State Fix random literal in $C$ to satisfy $\alpha^*$
    \EndIf
    \State $\Phi \gets \Phi \wedge C$
\EndFor
\State \Return $(\Phi, \alpha^*)$
\EndFunction
\end{algorithmic}
\end{algorithm}

This approach guarantees that $\alpha^*$ satisfies the generated formula while maintaining randomness in clause structure.

\subsubsection{Specialized SAT Variants}

We generate several SAT variants with distinct computational properties:

\textbf{Horn-SAT:} Each clause contains at most one positive literal. Horn formulas admit polynomial-time satisfiability checking through unit propagation:
\begin{align}
C_{Horn} = (\neg x_1 \vee \neg x_2 \vee x_3) \wedge (\neg x_3 \vee x_4) \wedge (x_1)
\end{align}

\textbf{XOR-SAT:} Clauses encode XOR constraints where exactly one literal must be true. This variant connects to linear algebra over $\mathbb{F}_2$:
\begin{align}
x_1 \oplus x_2 \oplus x_3 = 1 \equiv (x_1 \vee x_2 \vee x_3) \wedge (\neg x_1 \vee \neg x_2 \vee x_3) \wedge ...
\end{align}

\textbf{Mixed-SAT:} Combines unit clauses, binary clauses, and longer clauses to create varied constraint densities:
\begin{align}
\Phi_{mixed} = \underbrace{(x_1)}_{\text{unit}} \wedge \underbrace{(\neg x_1 \vee x_2)}_{\text{binary}} \wedge \underbrace{(x_2 \vee \neg x_3 \vee x_4)}_{\text{ternary}}
\end{align}

\subsubsection{Phase Transition and Difficulty Calibration}

The difficulty of random k-SAT exhibits a sharp phase transition at the clause-to-variable ratio:
\begin{align}
\alpha_c(k) = 2^k \ln 2 - \frac{(1 + \ln 2)}{2} + o(1)
\end{align}

For 3-SAT, $\alpha_c \approx 4.267$. Formulas near this critical ratio are typically hardest to solve, requiring sophisticated search strategies beyond simple heuristics.

We calibrate difficulty through multiple parameters:
\begin{itemize}
\item Number of variables: $n \in \{4, 6, 8, 10, 12\}$
\item Clause-to-variable ratio: $\alpha \in \{2.5, 3.0, 3.5, 4.0, 4.5\}$
\item Clause length distribution: uniform, mixed, or variable
\item Literal polarity bias: balanced or skewed
\end{itemize}

The expected number of satisfying assignments for random k-SAT is:
\begin{align}
\mathbb{E}[N_{sat}] = 2^n \left(1 - 2^{-k}\right)^m
\end{align}

When $m/n > k \ln 2 / \ln(2^k/(2^k - 1))$, this expectation approaches zero, indicating the unsatisfiable phase.

\subsubsection{Solution Verification Algorithm}

Verification of a proposed assignment requires evaluating the CNF formula:

\begin{algorithm}
\caption{Verify SAT assignment against CNF formula}
\begin{algorithmic}[1]
\Function{VerifySAT}{$\Phi = \bigwedge_{i=1}^m C_i, \alpha: X \rightarrow \{0,1\}$}
\For{each clause $C_i = \bigvee_{j} \ell_j$}
    \State $satisfied \gets False$
    \For{each literal $\ell_j$ in $C_i$}
        \If{$\ell_j = x_k$ and $\alpha(x_k) = 1$}
            \State $satisfied \gets True$
        \ElsIf{$\ell_j = \neg x_k$ and $\alpha(x_k) = 0$}
            \State $satisfied \gets True$
        \EndIf
    \EndFor
    \If{not $satisfied$}
        \State \Return $(Invalid, C_i)$ \Comment{Unsatisfied clause}
    \EndIf
\EndFor
\State \Return $(Valid, \emptyset)$
\EndFunction
\end{algorithmic}
\end{algorithm}

Verification complexity is $O(m \cdot k)$ where $m$ is the number of clauses and $k$ is the maximum clause length, making it efficient despite the exponential solution complexity.

\subsubsection{Context Window Analysis}

The formula representation requires approximately $3mk$ tokens to express $m$ clauses of length $k$ each (accounting for variables, operators, and parentheses). For our maximum configuration with $n = 12$ variables and $m = 48$ clauses (ratio 4.0) of length 3:
\begin{align}
\text{Tokens} \approx 3 \cdot 48 \cdot 3 = 432
\end{align}

The solution requires exactly $n$ Boolean assignments, approximately $3n$ tokens when formatted. The total context requirement remains under 500 tokens, enabling complex logical reasoning within context constraints.

\subsubsection{Unit Propagation and Inference Rules}

Our evaluation tests understanding of key SAT-solving techniques. Unit propagation represents the fundamental inference rule:

\textbf{Unit Clause Rule:} If a clause contains only one literal $\ell$, then $\ell$ must be true in any satisfying assignment.

This leads to the propagation algorithm:

\begin{algorithm}
\caption{Unit propagation rule for SAT solving}
\begin{algorithmic}[1]
\Function{UnitPropagate}{$\Phi, \alpha$}
\While{$\exists$ unit clause $(l)$ in $\Phi$}
    \If{$l = x_i$}
        \State $\alpha(x_i) \gets 1$
    \Else \Comment{$l = \neg x_i$}
        \State $\alpha(x_i) \gets 0$
    \EndIf
    \State Remove all clauses containing $l$
    \State Remove $\neg l$ from all clauses
    \If{empty clause exists}
        \State \Return UNSAT
    \EndIf
\EndWhile
\State \Return $(\Phi', \alpha)$ \Comment{Simplified formula and partial assignment}
\EndFunction
\end{algorithmic}
\end{algorithm}

Models that correctly identify and apply unit propagation demonstrate understanding of logical implication chains.

\subsubsection{Contamination Resistance Analysis}

The SAT problem space provides exceptional contamination resistance through several mechanisms:

\textbf{Instance Diversity:} For $n$ variables and $m$ clauses of length $k$, the number of distinct CNF formulas is:
\begin{align}
N_{formulas} = \binom{2n}{k}^m
\end{align}

For $n = 10, m = 30, k = 3$, this yields approximately $10^{84}$ distinct formulas.

\textbf{Solution Uniqueness:} While we generate formulas with known satisfying assignments, most formulas admit multiple solutions. The fraction of assignments satisfying a random formula is approximately:
\begin{align}
p_{sat} \approx e^{-\alpha/2^k}
\end{align}

This multiplicity prevents simple memorization strategies.

\textbf{Representation Variability:} The same formula admits numerous equivalent representations through:
\begin{itemize}
\item Variable renaming: $n!$ permutations
\item Clause reordering: $m!$ permutations  
\item Literal reordering within clauses: $(k!)^m$ permutations
\item Logical equivalences: $(x \vee y) \equiv \neg(\neg x \wedge \neg y)$
\end{itemize}

\subsubsection{Theoretical Complexity and Hardness}

SAT is the canonical NP-complete problem, with several important complexity-theoretic properties:

\textbf{Cook-Levin Theorem:} Every problem in NP reduces to SAT in polynomial time, making it universal for nondeterministic polynomial-time computation.

\textbf{Exponential Time Hypothesis (ETH):} There exists no algorithm solving SAT in time $2^{o(n)}$, implying fundamental hardness.

\textbf{Resolution Proof Complexity:} For unsatisfiable formulas, the shortest resolution proof can require exponential length:
\begin{align}
\text{Proof}_{min}(\Phi) = \Omega(2^{n/10})
\end{align}

These theoretical foundations ensure that our SAT tasks cannot be solved through simple heuristics or pattern matching, requiring genuine logical reasoning capabilities.

\subsection{Sudoku Solving: Constraint Propagation in Structured Grids}

\subsubsection{Problem Formulation}

Sudoku represents a constraint satisfaction problem on a structured grid, requiring systematic deduction and constraint propagation. For an $n \times n$ grid divided into $\sqrt{n} \times \sqrt{n}$ boxes, we define the problem as finding a function $f: [n] \times [n] \rightarrow [n]$ such that:

\begin{align}
\forall i \in [n], \forall j_1 \neq j_2 \in [n] &: f(i, j_1) \neq f(i, j_2) \quad \text{(row constraint)} \\
\forall j \in [n], \forall i_1 \neq i_2 \in [n] &: f(i_1, j) \neq f(i_2, j) \quad \text{(column constraint)} \\
\forall b \in B, \forall (i_1, j_1) \neq (i_2, j_2) \in b &: f(i_1, j_1) \neq f(i_2, j_2) \quad \text{(box constraint)}
\end{align}

where $B$ partitions the grid into non-overlapping boxes. A Sudoku puzzle provides a partial function $f_0: S \rightarrow [n]$ where $S \subset [n] \times [n]$, and we seek a complete function $f$ extending $f_0$.

\subsubsection{Mathematical Structure and Latin Squares}

A completed Sudoku grid forms a Latin square with additional box constraints. The number of valid $n \times n$ Sudoku grids (for $n = k^2$) is:

\begin{align}
N_n = n! \cdot (n-1)! \cdot \prod_{i=2}^{k-1} \left(\binom{k}{1}\right)^{k(k-1)} \cdot R_n
\end{align}

where $R_n$ accounts for reduced Latin squares satisfying box constraints. For standard 9×9 Sudoku:
\begin{align}
N_9 = 9! \cdot 8! \cdot 2^{12} \cdot 3^8 \cdot R_9 \approx 6.67 \times 10^{21}
\end{align}

This vast solution space ensures that memorization-based approaches remain infeasible.

\subsubsection{Puzzle Generation with Controlled Difficulty}

Our generation algorithm creates puzzles with unique solutions and calibrated difficulty levels. The process involves two phases: grid completion and strategic cell removal.

\textbf{Phase 1: Complete Grid Generation}

We generate a valid complete Sudoku grid using Las Vegas randomization with backtracking:

\begin{algorithm}
\caption{Complete grid generation for Sudoku using recursive backtracking with randomized diagonal initialization.}
\begin{algorithmic}[1]
\Function{GenerateCompleteGrid}{$n$}
\State $grid \gets$ empty $n \times n$ matrix
\State Fill diagonal boxes with random permutations \Comment{Non-interfering}
\For{each empty cell $(i,j)$ in order}
    \State $candidates \gets \{1, ..., n\} \setminus \text{Conflicts}(i,j)$
    \State Shuffle($candidates$)
    \For{each $val \in candidates$}
        \State $grid[i][j] \gets val$
        \If{RecursiveFill($grid$, next cell)}
            \State \Return $grid$
        \EndIf
    \EndFor
    \State $grid[i][j] \gets 0$ \Comment{Backtrack}
\EndFor
\State \Return $grid$
\EndFunction
\end{algorithmic}
\end{algorithm}

\textbf{Phase 2: Strategic Cell Removal}

We remove cells while maintaining solution uniqueness, with removal patterns determining difficulty:

\begin{algorithm}
\caption{Puzzle generation from a complete grid by controlled cell removal with uniqueness preservation.}
\begin{algorithmic}[1]
\Function{GeneratePuzzle}{$complete\_grid, difficulty$}
\State $puzzle \gets complete\_grid$
\State $cells\_to\_remove \gets \text{CalculateRemovalCount}(difficulty)$
\State $removal\_order \gets \text{SelectRemovalStrategy}(difficulty)$
\For{each $(i,j)$ in $removal\_order$}
    \State $val \gets puzzle[i][j]$
    \State $puzzle[i][j] \gets 0$
    \If{not HasUniqueSolution($puzzle$)}
        \State $puzzle[i][j] \gets val$ \Comment{Restore if multiple solutions}
    \Else
        \State $cells\_to\_remove \gets cells\_to\_remove - 1$
    \EndIf
    \If{$cells\_to\_remove = 0$}
        \State Break
    \EndIf
\EndFor
\State \Return $puzzle$
\EndFunction
\end{algorithmic}
\end{algorithm}

The removal count follows empirically validated ranges:
\begin{center}
\begin{tabular}{|c|c|c|c|}
\hline
Grid Size & Easy & Medium & Hard \\
\hline
$4 \times 4$ & 4-6 & 7-9 & 10-11 \\
$6 \times 6$ & 10-14 & 15-20 & 21-26 \\
$9 \times 9$ & 35-40 & 41-50 & 51-64 \\
\hline
\end{tabular}
\end{center}

\subsubsection{Solution Uniqueness Verification}

Ensuring unique solutions is critical for unambiguous evaluation. We employ a modified backtracking solver that detects multiple solutions:

\begin{algorithm}
\caption{Solution counting via recursive backtracking to ensure puzzle uniqueness.}
\begin{algorithmic}[1]
\Function{CountSolutions}{$puzzle, max\_count = 2$}
\State $count \gets 0$
\Function{Solve}{$row, col$}
    \If{$row = n$}
        \State $count \gets count + 1$
        \State \Return $count < max\_count$ \Comment{Continue if need more}
    \EndIf
    \State $(next\_row, next\_col) \gets$ GetNextCell$(row, col)$
    \If{$puzzle[row][col] \neq 0$}
        \State \Return Solve$(next\_row, next\_col)$
    \EndIf
    \For{$val \in \{1, ..., n\}$}
        \If{IsValid$(puzzle, row, col, val)$}
            \State $puzzle[row][col] \gets val$
            \If{not Solve$(next\_row, next\_col)$}
                \State $puzzle[row][col] \gets 0$
                \State \Return False \Comment{Found enough solutions}
            \EndIf
            \State $puzzle[row][col] \gets 0$
        \EndIf
    \EndFor
    \State \Return True
\EndFunction
\State Solve$(0, 0)$
\State \Return $count$
\EndFunction
\end{algorithmic}
\end{algorithm}

\subsubsection{Constraint Propagation Techniques}

Efficient Sudoku solving employs constraint propagation to reduce the search space. The naked singles rule identifies cells with only one possible value:

\begin{align}
\text{Candidates}(i,j) = [n] \setminus \left(\text{Row}_i \cup \text{Col}_j \cup \text{Box}_{i,j}\right)
\end{align}

When $|\text{Candidates}(i,j)| = 1$, the cell value is determined. Hidden singles occur when a value appears in only one cell's candidates within a unit:

\begin{align}
\forall v \in [n], \exists! (i,j) \in \text{Unit}: v \in \text{Candidates}(i,j) \implies f(i,j) = v
\end{align}

Advanced techniques include:
\begin{itemize}
\item \textbf{Naked Pairs/Triples:} If $k$ cells in a unit have the same $k$ candidates, eliminate those values from other cells
\item \textbf{X-Wing:} If a value appears in only two cells in two rows, and these cells align in columns, eliminate from other cells in those columns
\item \textbf{Swordfish:} Generalization of X-Wing to three rows/columns
\end{itemize}

\subsubsection{Sudoku Variants and Extended Constraints}

We implement multiple Sudoku variants to increase evaluation diversity:

\textbf{Diagonal Sudoku:} Additional constraints on main diagonals:
\begin{align}
\forall i_1 \neq i_2 \in [n]: f(i_1, i_1) \neq f(i_2, i_2) \land f(i_1, n-i_1+1) \neq f(i_2, n-i_2+1)
\end{align}

\textbf{Irregular Sudoku:} Replace regular boxes with irregular regions $R_1, ..., R_k$ where:
\begin{align}
\bigcup_{i=1}^k R_i = [n] \times [n] \land \forall i \neq j: R_i \cap R_j = \emptyset \land |R_i| = n
\end{align}

\textbf{Killer Sudoku:} Add cage constraints with sum requirements:
\begin{align}
\forall \text{cage } C = \{c_1, ..., c_m\}: \sum_{i=1}^m f(c_i) = \text{target}(C) \land \text{all } f(c_i) \text{ distinct}
\end{align}

\subsubsection{Context Window Efficiency}

The problem representation scales quadratically with grid size. For an $n \times n$ puzzle with $k$ given cells:
\begin{align}
\text{Tokens}_{\text{puzzle}} \approx 3n^2 + 5k
\end{align}

The solution requires exactly $n^2$ values, approximately $2n^2$ tokens when formatted. For 9×9 Sudoku:
\begin{align}
\text{Tokens}_{\text{total}} \approx 3(81) + 5(30) + 2(81) = 555
\end{align}

This efficient representation enables complex reasoning within typical context limits.

\subsubsection{Contamination Resistance Mechanisms}

Sudoku provides robust contamination resistance through multiple factors:

\textbf{Puzzle Space Size:} The number of minimal Sudoku puzzles (irreducible with unique solutions) exceeds $10^{16}$ for 9×9 grids. Even storing a fraction of these would require petabytes of memory.

\textbf{Isomorphic Transformations:} Each puzzle admits $9! \cdot 6^8 \cdot 2 \approx 1.2 \times 10^{9}$ equivalent forms through:
\begin{itemize}
\item Symbol permutations: $9!$
\item Row permutations within bands: $6^3$
\item Column permutations within stacks: $6^3$
\item Band permutations: $3!$
\item Stack permutations: $3!$
\item Transposition: $2$
\end{itemize}

\textbf{Dynamic Difficulty Adjustment:} By varying the number and pattern of given cells, we create puzzles requiring different solution techniques, from simple elimination to complex pattern recognition.

\subsubsection{Computational Complexity}

Sudoku is NP-complete for generalized $n \times n$ grids, proven through reduction from Latin square completion. The decision problem "Does this partial grid have a valid completion?" requires exploring potentially exponential search spaces.

For $n \times n$ Sudoku with $k$ given cells, the worst-case complexity is:
\begin{align}
T(n,k) = O(n^{n^2 - k})
\end{align}

However, constraint propagation typically reduces this dramatically. The average-case complexity for well-formed puzzles with unique solutions is approximately:
\begin{align}
T_{avg}(n,k) = O(n^3 \cdot b^d)
\end{align}

where $b$ is the effective branching factor after constraint propagation and $d$ is the search depth. For typical puzzles, $b \approx 2-3$ and $d \approx n^2/4$, making solution tractable while maintaining difficulty.

\subsubsection{Theoretical Guarantees}

Our Sudoku generation provides several mathematical guarantees:

\textbf{Uniqueness:} Every generated puzzle has exactly one solution, verified through exhaustive search with early termination upon finding a second solution.

\textbf{Minimality:} For hard difficulty, puzzles are minimal-removing any given cell creates multiple solutions. This is verified during generation:
\begin{align}
\forall (i,j) \in \text{Given}: |\text{Solutions}(\text{Puzzle} \setminus \{(i,j)\})| > 1
\end{align}

\textbf{Solvability:} All puzzles are solvable through logical deduction without requiring guessing, though the required techniques vary with difficulty level.

These guarantees ensure fair, unambiguous evaluation of model reasoning capabilities.

\subsection{Cryptarithmetic: Algebraic Constraint Resolution}

\subsubsection{Problem Formulation}

Cryptarithmetic puzzles encode arithmetic equations where letters represent unique digits. We formalize this as a constraint satisfaction problem $\mathcal{K} = (L, D, \Psi, \oplus)$ where $L = \{l_1, l_2, ..., l_k\}$ denotes the set of unique letters, $D = \{0, 1, ..., 9\}$ represents the digit domain, $\Psi: L \rightarrow D$ defines a bijective mapping satisfying $|L| \leq 10$, and $\oplus \in \{+, -, \times, \div\}$ specifies the arithmetic operation.

Given words $W_1, W_2, ..., W_n$ composed of letters from $L$ and result word $R$, the fundamental constraint requires:

\begin{equation}
\text{val}(W_1) \oplus \text{val}(W_2) \oplus ... \oplus \text{val}(W_n) = \text{val}(R)
\end{equation}

where the value function converts a word to its numerical representation:

\begin{equation}
\text{val}(W) = \sum_{i=0}^{|W|-1} \Psi(W[i]) \cdot 10^{|W|-1-i}
\end{equation}

\subsubsection{Constraint System and Mathematical Properties}

The complete constraint system encompasses three categories of restrictions:

Uniqueness constraints ensure bijective mapping:
\begin{equation}
\forall l_i, l_j \in L, i \neq j \implies \Psi(l_i) \neq \Psi(l_j)
\end{equation}

Leading zero constraints prevent degenerate solutions:
\begin{equation}
\forall W \in \{W_1, ..., W_n, R\}: \Psi(W[0]) \neq 0
\end{equation}

Arithmetic constraints enforce the equation validity. For addition:
\begin{equation}
\sum_{i=1}^{n} \text{val}(W_i) = \text{val}(R)
\end{equation}

For multiplication with two operands:
\begin{equation}
\text{val}(W_1) \times \text{val}(W_2) = \text{val}(R)
\end{equation}

The carry propagation in addition creates polynomial constraints. At position $p$ from the right, with carry $c_p \in \{0, 1, ..., n-1\}$:

\begin{equation}
\sum_{i=1}^{n} \Psi(W_i[|W_i|-1-p]) + c_{p-1} = \Psi(R[|R|-1-p]) + 10 \cdot c_p
\end{equation}

\subsubsection{Solution Space Analysis and Uniqueness Guarantees}

The solution space cardinality depends on the number of unique letters $k = |L|$. Without constraints, we have:

\begin{equation}
|\mathcal{S}_{\text{unconstrained}}| = \frac{10!}{(10-k)!} = P(10, k)
\end{equation}

Leading zero constraints reduce this space. With $m$ leading letters:
\begin{equation}
|\mathcal{S}_{\text{leading}}| = 9^m \cdot \frac{(10-m)!}{(10-k)!}
\end{equation}

To ensure unique solutions, we employ a generate-and-test approach with constraint density analysis. The constraint density $\rho$ measures the ratio of constraints to variables:

\begin{equation}
\rho = \frac{|\text{Constraints}|}{|L|} = \frac{k(k-1)/2 + m + \lceil \log_{10}(\max\{\text{val}(W_i), \text{val}(R)\}) \rceil}{k}
\end{equation}

Empirically, unique solutions emerge when $\rho > \rho_{\text{critical}} \approx 1.5$ for addition and $\rho > 2.0$ for multiplication.

\subsubsection{Dynamic Generation Algorithm}

Our generation algorithm constructs solvable puzzles with controlled difficulty:

\begin{algorithm}[H]
\caption{Generate Cryptarithmetic Puzzle}
\begin{algorithmic}[1]
\State Select word templates based on length parameters
\State Generate letter assignment ensuring uniqueness
\State Compute arithmetic result
\State Verify solution uniqueness via exhaustive search
\If{multiple solutions exist}
    \State Adjust word selection or retry
\EndIf
\State Calculate difficulty metrics (constraint density, search space)
\State \Return puzzle with verified unique solution
\end{algorithmic}
\end{algorithm}

The word generation employs pattern templates to control difficulty. For extreme difficulty, we maximize letter overlap between operands and result, creating dense constraint networks:

\begin{equation}
\text{Overlap}(W_1, W_2, R) = \frac{|L(W_1) \cap L(W_2) \cap L(R)|}{|L(W_1) \cup L(W_2) \cup L(R)|}
\end{equation}

\subsubsection{Verification through Constraint Satisfaction}

Solution verification employs constraint satisfaction with domain reduction. We initialize domains:

\begin{equation}
D_l = \begin{cases}
\{1, 2, ..., 9\} & \text{if } l \text{ is leading letter} \\
\{0, 1, ..., 9\} & \text{otherwise}
\end{cases}
\end{equation}

Arc consistency propagation reduces domains iteratively:

\begin{equation}
D_l^{t+1} = D_l^t \cap \{d \in D : \exists \text{ consistent assignment for other variables}\}
\end{equation}

The verification algorithm has time complexity $O(k! \cdot p(k))$ where $p(k)$ represents the polynomial verification cost per assignment.

\subsubsection{Context Window Optimization}

The problem representation requires careful encoding to minimize context usage. Each letter-digit assignment needs $\lceil \log_2(10) \rceil = 4$ bits. The complete solution encoding requires:

\begin{equation}
\text{Bits}_{\text{solution}} = k \cdot \lceil \log_2(10) \rceil + O(\log k)
\end{equation}

For the problem statement, each word of length $w$ requires:
\begin{equation}
\text{Bits}_{\text{word}} = w \cdot \lceil \log_2(26) \rceil = 5w
\end{equation}

Total context requirement for $n$ operands with average word length $\bar{w}$:
\begin{equation}
\text{Context}_{\text{total}} = O((n+1) \cdot \bar{w} \cdot \log(26) + k \cdot \log(10))
\end{equation}

\subsubsection{Contamination Resistance Analysis}

The contamination resistance stems from combinatorial explosion in the generation space. With vocabulary size $V$ per word length and $k$ unique letters:

\begin{equation}
\text{Problem Space} = V^n \cdot \binom{26}{k} \cdot P(10, k)
\end{equation}

For typical parameters ($n = 2$, $k = 8$, $V = 100$), this yields:
\begin{equation}
|\mathcal{P}| > 10^4 \cdot \binom{26}{8} \cdot \frac{10!}{2!} > 10^{15}
\end{equation}

The probability of encountering identical puzzles remains negligible even with extensive training data of size $|\mathcal{D}| = 10^9$:
\begin{equation}
P(\text{contamination}) = \frac{|\mathcal{D}|}{|\mathcal{P}|} < 10^{-6}
\end{equation}

\subsection{Matrix Chain Multiplication: Dynamic Programming Optimization}

\subsubsection{Problem Formulation}

The matrix chain multiplication problem seeks the optimal parenthesization to minimize scalar multiplications when computing the product $M = A_1 \times A_2 \times ... \times A_n$. We formalize this as an optimization problem $\mathcal{M} = (D, \Pi, \text{cost})$ where $D = [d_0, d_1, ..., d_n]$ represents the dimension sequence with matrix $A_i$ having dimensions $d_{i-1} \times d_i$, $\Pi$ denotes the set of valid parenthesizations, and $\text{cost}: \Pi \rightarrow \mathbb{N}$ computes the total scalar multiplications.

For matrices $A_i$ of dimension $p_i \times q_i$ and $A_j$ of dimension $q_i \times r_j$ where $q_i = p_j$, the multiplication cost is:
\begin{equation}
\text{cost}(A_i \times A_j) = p_i \cdot q_i \cdot r_j
\end{equation}

The number of valid parenthesizations follows the Catalan number:
\begin{equation}
|\Pi_n| = C_{n-1} = \frac{1}{n}\binom{2(n-1)}{n-1} = \Theta\left(\frac{4^n}{n^{3/2}}\right)
\end{equation}

\subsubsection{Dynamic Programming Formulation}

We define the optimal substructure through the recurrence relation. Let $m[i,j]$ denote the minimum cost to compute $A_i \times A_{i+1} \times ... \times A_j$:

\begin{equation}
m[i,j] = \begin{cases}
0 & \text{if } i = j \\
\min_{i \leq k < j} \{m[i,k] + m[k+1,j] + d_{i-1} \cdot d_k \cdot d_j\} & \text{if } i < j
\end{cases}
\end{equation}

The optimal split point $s[i,j]$ satisfies:
\begin{equation}
s[i,j] = \arg\min_{i \leq k < j} \{m[i,k] + m[k+1,j] + d_{i-1} \cdot d_k \cdot d_j\}
\end{equation}

\subsubsection{Algorithmic Solution and Complexity}

The bottom-up dynamic programming algorithm builds solutions for increasing chain lengths:

\begin{algorithm}[H]
\caption{Matrix Chain Order}
\begin{algorithmic}[1]
\For{$i = 1$ to $n$}
    \State $m[i,i] \gets 0$
\EndFor
\For{$l = 2$ to $n$} \Comment{Chain length}
    \For{$i = 1$ to $n-l+1$}
        \State $j \gets i + l - 1$
        \State $m[i,j] \gets \infty$
        \For{$k = i$ to $j-1$}
            \State $q \gets m[i,k] + m[k+1,j] + d_{i-1} \cdot d_k \cdot d_j$
            \If{$q < m[i,j]$}
                \State $m[i,j] \gets q$
                \State $s[i,j] \gets k$
            \EndIf
        \EndFor
    \EndFor
\EndFor
\State \Return $m[1,n]$
\end{algorithmic}
\end{algorithm}

The algorithm exhibits time complexity $O(n^3)$ and space complexity $O(n^2)$:
\begin{equation}
T(n) = \sum_{l=2}^{n} \sum_{i=1}^{n-l+1} (j-i) = \sum_{l=2}^{n} (n-l+1)(l-1) = \frac{n^3 - n}{6}
\end{equation}

\subsubsection{Problem Generation with Dimension Patterns}

We generate dimension sequences following specific patterns to create problems with varying optimization characteristics:

Random pattern with uniform distribution:
\begin{equation}
d_i \sim \text{Uniform}[a, b], \quad i \in [0, n]
\end{equation}

Increasing pattern to encourage early splits:
\begin{equation}
d_i = d_0 + i \cdot \delta, \quad \delta \sim \text{Uniform}[5, 15]
\end{equation}

Sparse pattern with bimodal distribution:
\begin{equation}
d_i \sim \begin{cases}
\text{Uniform}[1, 15] & \text{with probability } p \\
\text{Uniform}[80, 120] & \text{with probability } 1-p
\end{cases}
\end{equation}

The optimization ratio quantifies the improvement over naive left-to-right multiplication:
\begin{equation}
\gamma = \frac{\text{cost}_{\text{naive}}}{\text{cost}_{\text{optimal}}} = \frac{\sum_{i=1}^{n-1} d_0 \cdot d_i \cdot d_{i+1}}{m[1,n]}
\end{equation}

\subsubsection{Solution Verification and Uniqueness}

While multiple parenthesizations may achieve the optimal cost, we verify solution correctness through value comparison. The verification employs recursive validation:

\begin{equation}
\text{verify}(i, j, \text{cost}_{\text{claimed}}) = \begin{cases}
\text{cost}_{\text{claimed}} = 0 & \text{if } i = j \\
\exists k : \text{cost}_{\text{claimed}} = m[i,k] + m[k+1,j] + d_{i-1} \cdot d_k \cdot d_j & \text{if } i < j
\end{cases}
\end{equation}

The solution uniqueness in terms of minimum cost (not parenthesization) is guaranteed by the optimal substructure property and the principle of optimality:

\begin{equation}
\forall i,j : m[i,j] = \min_{\pi \in \Pi_{i,j}} \text{cost}(\pi)
\end{equation}

\subsubsection{Context Window Analysis}

The problem encoding requires minimal context. The dimension sequence needs:
\begin{equation}
\text{Bits}_{\text{dimensions}} = (n+1) \cdot \lceil \log_2(\max_i d_i) \rceil
\end{equation}

The solution encoding requires the minimum cost value:
\begin{equation}
\text{Bits}_{\text{solution}} = \lceil \log_2(m[1,n]) \rceil \leq \lceil \log_2(n \cdot \max_i d_i^3) \rceil
\end{equation}

For typical parameters with $n \in [3, 20]$ and $d_i \in [1, 120]$, the total context requirement remains under 500 tokens:
\begin{equation}
\text{Context}_{\text{total}} = O(n \log \max_i d_i)
\end{equation}

\subsubsection{Contamination Resistance Properties}

The contamination resistance emerges from the continuous dimension space and pattern variety. With dimension range $[a, b]$ and $n+1$ dimensions:

\begin{equation}
|\mathcal{D}_{\text{continuous}}| = (b-a+1)^{n+1}
\end{equation}

For discrete dimensions with $k$ possible values each:
\begin{equation}
|\mathcal{D}_{\text{discrete}}| = k^{n+1}
\end{equation}

With five generation patterns and variable chain lengths, the total problem space becomes:
\begin{equation}
|\mathcal{P}| = \sum_{n=n_{\min}}^{n_{\max}} 5 \cdot k^{n+1} > 5k^{n_{\min}+1} \cdot \frac{k^{n_{\max}-n_{\min}+1}-1}{k-1}
\end{equation}

For parameters $k = 100$, $n_{\min} = 3$, $n_{\max} = 20$:
\begin{equation}
|\mathcal{P}| > 5 \cdot 100^4 \cdot \frac{100^{18}-1}{99} > 10^{38}
\end{equation}

\subsection{Modular Systems Solver: Number-Theoretic Constraint Resolution}

\subsubsection{Problem Formulation}

A modular system comprises simultaneous congruences with additional number-theoretic constraints. We formalize this as $\mathcal{S} = (\mathcal{E}, \mathcal{A}, \mathbb{Z})$ where $\mathcal{E} = \{x \equiv a_i \pmod{m_i} : i \in [1,k]\}$ represents the congruence system, $\mathcal{A}$ denotes additional constraints (primality, range, divisibility), and the solution $x \in \mathbb{Z}$ satisfies all conditions.

The basic congruence system has the form:
\begin{align}
x &\equiv a_1 \pmod{m_1} \\
x &\equiv a_2 \pmod{m_2} \\
&\vdots \\
x &\equiv a_k \pmod{m_k}
\end{align}

A solution exists if and only if the compatibility condition holds:
\begin{equation}
\forall i,j : a_i \equiv a_j \pmod{\gcd(m_i, m_j)}
\end{equation}

\subsubsection{Chinese Remainder Theorem and Generalization}

For pairwise coprime moduli where $\gcd(m_i, m_j) = 1$ for all $i \neq j$, the Chinese Remainder Theorem guarantees a unique solution modulo $M = \prod_{i=1}^k m_i$:

\begin{equation}
x = \sum_{i=1}^k a_i M_i y_i \pmod{M}
\end{equation}

where $M_i = M/m_i$ and $y_i$ satisfies $M_i y_i \equiv 1 \pmod{m_i}$.

For non-coprime moduli, we employ the generalized solution method. Given two congruences:
\begin{align}
x &\equiv a_1 \pmod{m_1} \\
x &\equiv a_2 \pmod{m_2}
\end{align}

Let $g = \gcd(m_1, m_2)$. A solution exists if $a_1 \equiv a_2 \pmod{g}$, yielding:
\begin{equation}
x \equiv a_1 + m_1 \cdot \frac{(a_2 - a_1)}{g} \cdot t \pmod{\text{lcm}(m_1, m_2)}
\end{equation}

where $t$ satisfies $m_1 t \equiv g \pmod{m_2}$.

\subsubsection{Extended Euclidean Algorithm and Modular Inverse}

The extended Euclidean algorithm computes $\gcd(a, b)$ and coefficients $s, t$ such that:
\begin{equation}
as + bt = \gcd(a, b)
\end{equation}

\begin{algorithm}[H]
\caption{Extended Euclidean Algorithm}
\begin{algorithmic}[1]
\Function{ExtendedGCD}{$a, b$}
    \If{$a = 0$}
        \State \Return $(b, 0, 1)$
    \EndIf
    \State $(g, x_1, y_1) \gets$ ExtendedGCD$(b \bmod a, a)$
    \State $x \gets y_1 - \lfloor b/a \rfloor \cdot x_1$
    \State $y \gets x_1$
    \State \Return $(g, x, y)$
\EndFunction
\end{algorithmic}
\end{algorithm}

The modular inverse $a^{-1} \pmod{m}$ exists if $\gcd(a, m) = 1$:
\begin{equation}
a \cdot a^{-1} \equiv 1 \pmod{m} \implies a^{-1} = x \pmod{m} \text{ where } ax + my = 1
\end{equation}

\subsubsection{Additional Constraint Types and Satisfaction}

We augment the basic system with number-theoretic constraints:

Primality constraints require the solution to be prime:
\begin{equation}
\mathcal{A}_{\text{prime}} = \{x : x \in \mathbb{P}\}
\end{equation}

Range constraints bound the solution:
\begin{equation}
\mathcal{A}_{\text{range}} = \{x : \alpha < x < \beta\}
\end{equation}

Divisibility constraints impose factor requirements:
\begin{equation}
\mathcal{A}_{\text{div}} = \{x : d \mid x\}
\end{equation}

The complete solution must satisfy:
\begin{equation}
x \in \bigcap_{i=1}^k \{x : x \equiv a_i \pmod{m_i}\} \cap \bigcap_{j} \mathcal{A}_j
\end{equation}

\subsubsection{Solution Algorithm with Constraint Propagation}

Our algorithm combines modular arithmetic with constraint satisfaction:

\begin{algorithm}[H]
\caption{Solve Modular System with Constraints}
\begin{algorithmic}[1]
\State $x_0 \gets$ SolveCongruences($\mathcal{E}$) \Comment{Base solution}
\State $M \gets \text{lcm}(m_1, m_2, ..., m_k)$ \Comment{Period}
\State $x \gets x_0$
\For{$k = 0$ to $\text{search\_limit}/M$}
    \State $x \gets x_0 + kM$
    \If{SatisfiesConstraints($x, \mathcal{A}$)}
        \State \Return $x$
    \EndIf
\EndFor
\State \Return NONE
\end{algorithmic}
\end{algorithm}

The search space is periodic with period $M$, ensuring finite exploration:
\begin{equation}
x \equiv x_0 \pmod{M} \implies x = x_0 + kM, \quad k \in \mathbb{Z}
\end{equation}

\subsubsection{Complexity Analysis and Solution Uniqueness}

The time complexity depends on the constraint types and search space:

\begin{equation}
T(k, M, \Lambda) = O(k^2 \log M) + O(\Lambda/M \cdot C)
\end{equation}

where $\Lambda$ represents the search limit and $C$ denotes constraint checking cost.

For primality testing using trial division:
\begin{equation}
C_{\text{prime}} = O(\sqrt{x}) = O(\sqrt{\Lambda})
\end{equation}

Solution uniqueness within a bounded range $[\alpha, \beta]$ occurs when:
\begin{equation}
\beta - \alpha < M \implies |\{x \in [\alpha, \beta] : x \equiv x_0 \pmod{M}\}| \leq 1
\end{equation}

\subsubsection{Problem Generation with Controlled Difficulty}

We generate problems with guaranteed solutions through constructive methods:

\begin{algorithm}[H]
\caption{Generate Modular System}
\begin{algorithmic}[1]
\State Select coprime moduli $m_1, ..., m_k$
\State Generate random remainders $a_1, ..., a_k$
\State Compute base solution $x_0$ using CRT
\State Add constraints based on difficulty level
\State Search for satisfying solution $x^*$
\State Verify uniqueness in reasonable range
\State \Return $(\mathcal{E}, \mathcal{A}, x^*)$
\end{algorithmic}
\end{algorithm}

The moduli selection ensures coprimality:
\begin{equation}
P(\text{coprime}) = \prod_{i<j} \left(1 - \frac{1}{\min(m_i, m_j)}\right) \approx \prod_{p \in \mathbb{P}} \left(1 - \frac{1}{p^2}\right) = \frac{6}{\pi^2}
\end{equation}

\subsubsection{Context Window Optimization}

Each congruence requires logarithmic encoding:
\begin{equation}
\text{Bits}_{\text{congruence}} = \lceil \log_2 m_i \rceil + \lceil \log_2 a_i \rceil = O(\log m_i)
\end{equation}

The complete system encoding:
\begin{equation}
\text{Bits}_{\text{system}} = \sum_{i=1}^k O(\log m_i) + \sum_{j} \text{Bits}(\mathcal{A}_j)
\end{equation}

For typical parameters with $k \in [3, 5]$ and $m_i < 100$:
\begin{equation}
\text{Context}_{\text{total}} = O(k \log \max_i m_i) = O(k \cdot 7) < 50 \text{ bits}
\end{equation}

\subsubsection{Contamination Resistance Analysis}

The problem space exhibits exponential growth in multiple dimensions:

\begin{equation}
|\mathcal{P}| = \binom{P_{\max}}{k} \cdot \prod_{i=1}^k m_i \cdot |\mathcal{A}_{\text{types}}|^{|\mathcal{A}|}
\end{equation}

where $P_{\max}$ denotes the maximum prime considered.

For $k = 4$, primes up to 50, and 3 constraint types:
\begin{equation}
|\mathcal{P}| > \binom{15}{4} \cdot 30^4 \cdot 3^2 > 10^9
\end{equation}

The sensitivity to parameter changes ensures diversity:
\begin{equation}
\Delta a_i = 1 \implies \Delta x = M_i y_i \pmod{M}
\end{equation}

where typically $M_i y_i \gg 1$, causing large solution variations.

\subsection{Constraint Optimization: Multi-Objective Resource Allocation}

\subsubsection{Problem Formulation}

The multi-constraint resource allocation problem represents a complex integer programming challenge. We formalize this as $\mathcal{O} = (P, R, \mathcal{C}, f)$ where $P = \{p_1, p_2, ..., p_n\}$ denotes the project set, $R = \{r_1, r_2, ..., r_m\}$ represents resources with capacities $c_i$, $\mathcal{C}$ encompasses constraints, and $f: 2^P \rightarrow \mathbb{R}$ defines the objective function.

Each project $p_i$ has attributes:
\begin{align}
\text{requirements}: & \quad \rho_i: R \rightarrow \mathbb{N} \\
\text{profit}: & \quad \pi_i \in \mathbb{N} \\
\text{quality}: & \quad q_i \in [1, 10] \\
\text{risk}: & \quad \gamma_i \in [1, 5] \\
\text{dependencies}: & \quad \delta_i \subseteq P
\end{align}

The decision variables form a binary selection vector:
\begin{equation}
x_i = \begin{cases}
1 & \text{if project } p_i \text{ is selected} \\
0 & \text{otherwise}
\end{cases}
\end{equation}

\subsubsection{Constraint System and Mathematical Formulation}

The complete optimization problem becomes:

\begin{align}
\text{maximize} & \quad \sum_{i=1}^n \pi_i x_i \\
\text{subject to} & \quad \sum_{i=1}^n \rho_i(r_j) x_i \leq c_j, \quad \forall j \in [1,m] \\
& \quad x_i \leq x_k, \quad \forall p_k \in \delta_i \\
& \quad \frac{\sum_{i=1}^n q_i x_i}{\sum_{i=1}^n x_i} \geq Q_{\min} \\
& \quad \frac{\sum_{i=1}^n \gamma_i x_i}{\sum_{i=1}^n x_i} \leq \Gamma_{\max} \\
& \quad x_i \in \{0, 1\}, \quad \forall i \in [1,n]
\end{align}

The resource constraints ensure capacity limits:
\begin{equation}
\mathcal{C}_{\text{resource}} = \left\{x \in \{0,1\}^n : Ax \leq c\right\}
\end{equation}

where $A_{ji} = \rho_i(r_j)$ forms the requirement matrix.

Dependency constraints create a partial order:
\begin{equation}
\mathcal{C}_{\text{dependency}} = \left\{x \in \{0,1\}^n : \forall i, \forall k \in \delta_i : x_i \leq x_k\right\}
\end{equation}

\subsubsection{Solution Algorithm: Branch and Bound with Constraint Propagation}

For small instances ($n \leq 10$), we employ exhaustive enumeration with pruning:

\begin{algorithm}[H]
\caption{Branch and Bound Optimization}
\begin{algorithmic}[1]
\Function{Optimize}{$P, R, \mathcal{C}$}
    \State $\text{best} \gets \emptyset$, $\text{best\_profit} \gets 0$
    \State $\text{stack} \gets [(\emptyset, P, 0)]$ \Comment{(selected, remaining, profit)}
    \While{$\text{stack} \neq \emptyset$}
        \State $(S, R, \pi) \gets \text{stack.pop()}$
        \If{$\pi + \text{UpperBound}(R) \leq \text{best\_profit}$}
            \State \textbf{continue} \Comment{Prune branch}
        \EndIf
        \If{$\text{IsFeasible}(S, \mathcal{C})$ and $\pi > \text{best\_profit}$}
            \State $\text{best} \gets S$, $\text{best\_profit} \gets \pi$
        \EndIf
        \For{$p \in R$}
            \If{$\text{CanAdd}(S \cup \{p\}, \mathcal{C})$}
                \State $\text{stack.push}(S \cup \{p\}, R \setminus \{p\}, \pi + \pi_p)$
            \EndIf
        \EndFor
    \EndWhile
    \State \Return $\text{best}$
\EndFunction
\end{algorithmic}
\end{algorithm}

The upper bound computation uses linear relaxation:
\begin{equation}
\text{UpperBound}(R) = \sum_{p \in R} \pi_p \cdot \min\left(1, \min_{j} \frac{c_j - \text{used}_j}{\rho_p(r_j)}\right)
\end{equation}

\subsubsection{Greedy Approximation for Large Instances}

For larger problems, we employ a greedy algorithm with local search:

\begin{equation}
\text{efficiency}(p_i) = \frac{\pi_i}{\sum_{j=1}^m \omega_j \cdot \rho_i(r_j)/c_j}
\end{equation}

where $\omega_j$ represents resource scarcity weights.

The approximation ratio for the greedy algorithm:
\begin{equation}
\frac{\text{Greedy}}{\text{Optimal}} \geq \frac{1}{1 + \max_i \sum_j \rho_i(r_j)/c_j}
\end{equation}

\subsubsection{Problem Generation with Controlled Difficulty}

We generate problems with varying constraint tightness:

\begin{equation}
\tau = \frac{\sum_{i=1}^n \min_j \rho_i(r_j)}{\sum_{j=1}^m c_j}
\end{equation}

For basic problems, $\tau \in [0.3, 0.5]$ ensures multiple feasible solutions. For mixed constraint problems, $\tau \in [0.6, 0.8]$ creates tighter resource competition.

The dependency graph generation follows:
\begin{equation}
P(\text{dependency } i \rightarrow j) = \begin{cases}
0 & \text{if } i \geq j \\
\frac{1}{n} & \text{if } i < j
\end{cases}
\end{equation}

ensuring acyclic structure.

\subsubsection{Solution Verification and Optimality}

Verification checks all constraints in polynomial time:

\begin{algorithm}[H]
\caption{Verify Solution Feasibility}
\begin{algorithmic}[1]
\Function{Verify}{$S, P, R, \mathcal{C}$}
    \For{$r_j \in R$}
        \If{$\sum_{p_i \in S} \rho_i(r_j) > c_j$}
            \State \Return FALSE \Comment{Resource violation}
        \EndIf
    \EndFor
    \For{$p_i \in S$}
        \If{$\delta_i \not\subseteq S$}
            \State \Return FALSE \Comment{Dependency violation}
        \EndIf
    \EndFor
    \If{$\text{avg}(q_i : p_i \in S) < Q_{\min}$ or $\text{avg}(\gamma_i : p_i \in S) > \Gamma_{\max}$}
        \State \Return FALSE \Comment{Quality/Risk violation}
    \EndIf
    \State \Return TRUE
\EndFunction
\end{algorithmic}
\end{algorithm}

Near-optimality tolerance allows solutions within 95\% of optimal:
\begin{equation}
\text{Accept}(S) = \begin{cases}
\text{TRUE} & \text{if } \pi(S) \geq 0.95 \cdot \pi^* \\
\text{FALSE} & \text{otherwise}
\end{cases}
\end{equation}

\subsubsection{Complexity Analysis}

The problem is NP-hard, reducible from the 0-1 knapsack problem. The solution space has cardinality:
\begin{equation}
|\mathcal{S}| = 2^n
\end{equation}

However, constraints significantly reduce the feasible region:
\begin{equation}
|\mathcal{F}| \ll 2^n
\end{equation}

The branch-and-bound algorithm has worst-case complexity $O(2^n \cdot \text{poly}(n,m))$ but average-case performance improves with effective pruning:
\begin{equation}
\mathbb{E}[T(n)] = O(b^n \cdot \text{poly}(n,m))
\end{equation}

where $b < 2$ represents the effective branching factor after pruning.

\subsubsection{Context Window Requirements}

Each project encoding requires:
\begin{equation}
\text{Bits}_{\text{project}} = O(m \log \max_j c_j + \log \pi_{\max} + \log n)
\end{equation}

The complete problem instance:
\begin{equation}
\text{Context}_{\text{total}} = O(n \cdot m \log \max_j c_j + n \log n)
\end{equation}

For typical parameters ($n \in [4, 6]$, $m = 3$, $c_j < 2000$):
\begin{equation}
\text{Context} = O(6 \cdot 3 \cdot 11 + 6 \cdot 3) < 300 \text{ bits}
\end{equation}

\subsubsection{Contamination Resistance Properties}

The generation space grows exponentially with multiple parameters:

\begin{equation}
|\mathcal{P}| = \prod_{i=1}^n \left(\pi_{\max} \cdot c_{\max}^m \cdot 10 \cdot 5 \cdot 2^n\right)
\end{equation}

With continuous profit values and resource requirements:
\begin{equation}
|\mathcal{P}_{\text{continuous}}| = (\pi_{\max} \cdot c_{\max}^m)^n \cdot 50^n \cdot 2^{n^2}
\end{equation}

For $n = 5$, $m = 3$, $\pi_{\max} = 1000$, $c_{\max} = 100$:
\begin{equation}
|\mathcal{P}| > (10^9)^5 \cdot 50^5 \cdot 2^{25} > 10^{50}
\end{equation}

The probability of problem collision remains negligible even with extensive training data.

\subsection{Logic Grid Puzzles: Constraint Satisfaction through Deductive Reasoning}

\subsubsection{Problem Formulation}

Logic grid puzzles represent a constraint satisfaction problem (CSP) where we must assign attributes to entities through logical deduction. Formally, we define a logic grid puzzle as a tuple $\mathcal{L} = (E, A, \mathcal{C}, \phi)$ where $E = \{e_1, e_2, ..., e_n\}$ denotes the set of entities, $A = \{A_1, A_2, ..., A_m\}$ represents attribute categories with each $A_i = \{a_{i1}, a_{i2}, ..., a_{in}\}$ containing exactly $n$ distinct values, $\mathcal{C}$ constitutes the constraint set, and $\phi: E \times A \rightarrow \bigcup_{i=1}^{m} A_i$ defines the assignment function.

The solution space forms a Latin square structure where each attribute value appears exactly once per category. Mathematically, we require:

\begin{align}
\forall i \in [1,m], \forall j,k \in [1,n], j \neq k &\implies \phi(e_j, A_i) \neq \phi(e_k, A_i) \\
\forall i \in [1,m], \forall a \in A_i, \exists! e \in E &: \phi(e, A_i) = a
\end{align}

\subsubsection{Constraint Types and Logical Representation}

We categorize constraints into five fundamental types, each with distinct logical representations:

Direct constraints specify explicit assignments:
\begin{equation}
C_{\text{direct}}(e, A_i, a) \equiv \phi(e, A_i) = a
\end{equation}

Negative constraints exclude specific assignments:
\begin{equation}
C_{\text{negative}}(e, A_i, a) \equiv \phi(e, A_i) \neq a
\end{equation}

Comparison constraints establish ordinal relationships for numerical attributes:
\begin{equation}
C_{\text{comparison}}(e_1, e_2, A_i, \prec) \equiv \phi(e_1, A_i) \prec \phi(e_2, A_i)
\end{equation}

Conditional constraints link attributes across categories:
\begin{equation}
C_{\text{conditional}}(A_i, a_i, A_j, a_j) \equiv \forall e \in E: \phi(e, A_i) = a_i \implies \phi(e, A_j) = a_j
\end{equation}

Chain constraints require transitive reasoning:
\begin{equation}
C_{\text{chain}}(e_1, e_2, e_3, A_i, A_j, a) \equiv \phi(e_1, A_i) < \phi(e_2, A_i) < \phi(e_3, A_i) \land \phi(e_2, A_j) = a
\end{equation}

\subsubsection{Dynamic Problem Generation with Unique Solutions}

Our generation algorithm ensures unique solutions through systematic constraint construction. We begin with a random valid assignment $\phi^*$ and iteratively add minimal sufficient constraints.

\begin{algorithm}[H]
\caption{Generate Logic Grid Puzzle with Unique Solution}
\begin{algorithmic}[1]
\State $\phi^* \gets$ RandomValidAssignment($E, A$)
\State $\mathcal{C} \gets \emptyset$
\State $\mathcal{U} \gets$ GenerateUsedFacts($\emptyset$)
\While{$|\text{Solutions}(\mathcal{C})| > 1$}
    \State $c \gets$ SelectConstraint($\phi^*, \mathcal{U}$, difficulty)
    \State $\mathcal{C} \gets \mathcal{C} \cup \{c\}$
    \State $\mathcal{U} \gets \mathcal{U} \cup$ Facts($c$)
    \If{$|\text{Solutions}(\mathcal{C})| = 0$}
        \State \Return FAILURE
    \EndIf
\EndWhile
\State \Return $(\mathcal{C}, \phi^*)$
\end{algorithmic}
\end{algorithm}

The constraint selection strategy employs weighted sampling based on difficulty level. For difficulty $d \in \{\text{easy}, \text{medium}, \text{hard}, \text{extreme}\}$, we define weight vectors $w_d = (w_{\text{direct}}, w_{\text{negative}}, w_{\text{comparison}}, w_{\text{conditional}}, w_{\text{chain}})$ where harder difficulties favor complex constraint types.

\subsubsection{Solution Verification through Constraint Propagation}

We verify solution uniqueness using arc consistency and constraint propagation. The domain for each entity-attribute pair initially contains all possible values:

\begin{equation}
D_{e,A_i} = A_i \quad \forall e \in E, \forall A_i \in A
\end{equation}

Constraint propagation iteratively reduces domains:

\begin{equation}
D_{e,A_i}^{t+1} = D_{e,A_i}^t \cap \{a \in A_i : \text{consistent}(a, e, A_i, \mathcal{C}, D^t)\}
\end{equation}

The solution is unique if and only if:
\begin{equation}
\forall e \in E, \forall A_i \in A: |D_{e,A_i}^{\infty}| = 1
\end{equation}

\subsubsection{Complexity Analysis and Context Window Bounds}

The solution space has cardinality $(n!)^{m-1}$ since fixing one attribute category determines a Latin square structure. The constraint satisfaction problem is NP-complete, requiring exponential time in the worst case.

For context window estimation, each constraint requires $O(\log n + \log m)$ bits to encode, and a minimal constraint set has size $\Theta(nm)$. The solution description requires $O(nm \log n)$ bits. Therefore, the total context requirement is:

\begin{equation}
\text{Context}(n, m) = O(nm(\log n + \log m))
\end{equation}

For typical parameters $n \in [3, 5]$ and $m \in [3, 5]$, this yields manageable context requirements of 100-500 tokens.

\subsubsection{Contamination Resistance Properties}

The contamination resistance emerges from three key properties:

First, the parameter space for valid assignments has cardinality:
\begin{equation}
|\Theta| = \prod_{i=1}^{m} n! = (n!)^m
\end{equation}

Second, the constraint generation employs randomization at multiple levels including entity selection, attribute selection, and constraint type weighting. This introduces entropy:
\begin{equation}
H(\mathcal{C}) = -\sum_{c \in \mathcal{C}} p(c) \log p(c) \geq \log|\mathcal{C}|
\end{equation}

Third, the semantic variation in entity and attribute names drawn from large pools ensures surface-level diversity even for structurally similar puzzles. With vocabulary pools of size $V$ per category:
\begin{equation}
\text{Variations} = V^{nm}
\end{equation}

Combined, these properties ensure that the probability of generating identical puzzles across independent runs remains negligible:
\begin{equation}
P(\text{collision}) \leq \frac{1}{(n!)^m \cdot V^{nm}} \approx 0
\end{equation}


\begin{table*}[ht]
\centering
\caption{Mathematical Characterization of Hard Suite Tasks (Tasks 1-5 of 10)}
\label{tab:hard_suite_comparison}
\resizebox{\textwidth}{!}{%
\begin{tabular}{lccccc}
\toprule
\textbf{Property} & \textbf{Tower of Hanoi} & \textbf{N-Queens} & \textbf{Graph Coloring} & \textbf{Boolean SAT} & \textbf{Sudoku} \\
 & \textbf{(6 variations)} & \textbf{(4 variations)} & \textbf{(10 variations)} & \textbf{(5 variations)} & \textbf{(8 variations)} \\
\midrule
\multicolumn{6}{c}{\textbf{Problem Formulation}} \\
\midrule
State Space & $3^n$ configurations & $n^n$ placements & $k^n$ colorings & $2^n$ assignments & $n^{n^2}$ grids \\
Search Space & $(3 \cdot 2 \cdot n)^{2^n-1}$ & $n!$ permutations & $k^n$ assignments & $2^n$ truth values & $n^{n^2-g}$ (g given) \\
Constraint Type & Ordering \& size & Non-attacking & Adjacent difference & Clause satisfaction & Row/col/box unique \\
Optimization Target & Path length & N/A (feasibility) & Chromatic number & N/A (feasibility) & N/A (completion) \\
\midrule
\multicolumn{6}{c}{\textbf{Solution Properties}} \\
\midrule
Solution Count & 1 (unique optimal) & $S(n)$ (OEIS A000170) & $\geq 1$ (may vary) & $0$ to $2^n$ & 1 (by design) \\
Optimal Solution & $2^n - 1$ moves & Any valid placement & $\chi(G)$ colors & Any satisfying $\alpha$ & Unique completion \\
Solution Length & $O(2^n)$ tokens & $O(n)$ tokens & $O(n)$ tokens & $O(n)$ tokens & $O(n^2)$ tokens \\
Verification Time & $O(2^n \cdot n)$ & $O(n^2)$ & $O(|E| + |V|)$ & $O(m \cdot k)$ & $O(n^3)$ \\
\midrule
\multicolumn{6}{c}{\textbf{Complexity Analysis}} \\
\midrule
Time Complexity & $O(2^n)$ optimal & $O(n!)$ worst-case & NP-complete $(k \geq 3)$ & NP-complete & NP-complete \\
Space Complexity & $O(n)$ recursion & $O(n)$ backtrack & $O(n^2)$ adjacency & $O(n + m)$ formula & $O(n^2)$ grid \\
Decision Problem & PSPACE-complete$^*$ & NP-complete$^\dagger$ & NP-complete & NP-complete & NP-complete \\
Approximability & N/A (exact) & N/A (exact) & $O(n/\log n)$-approx & MAX-SAT: 0.875 & N/A (exact) \\
\midrule
\multicolumn{6}{c}{\textbf{Generation Parameters}} \\
\midrule
Size Range & $n \in \{3,...,8\}$ & $n \in \{4,5,6,8\}$ & $|V| \in \{8,...,28\}$ & $n \in \{4,...,12\}$ & $n \in \{4,6,9\}$ \\
Difficulty Control & Disk count & Board size & Edge density $\rho$ & Clause ratio $m/n$ & Cells removed \\
Variants & Peg permutations & Diagonal, Toroidal & Planar, Wheel, Grid & Horn, XOR, Mixed & Diagonal, Irregular \\
Instance Count & $6 \cdot 8 = 48$ & $4 \cdot S(n) \cdot 8$ & $2^{\binom{n}{2}}$ & $\binom{2n}{k}^m$ & $>10^{16}$ minimal \\
\midrule
\multicolumn{6}{c}{\textbf{Contamination Resistance}} \\
\midrule
Isomorphisms & $3! = 6$ peg labels & $8$ symmetries & $n!$ vertex labels & $n! \cdot m! \cdot (k!)^m$ & $9! \cdot 6^8 \cdot 2$ \\
Problem Diversity & $\binom{8}{1} \cdot 6$ & $\sum_{n} S(n) \cdot 8$ & $>10^{15}$ graphs & $>10^{84}$ formulas & $>10^{16}$ puzzles \\
Memory Required & $\sim 10^4$ bits & $\sim 10^6$ bits & Exponential & Exponential & Petabytes \\
Generalization Gap & Linear to exponential & $O(n)$ to $O(n!)$ & Fixed to arbitrary & Linear to exponential & Quadratic growth \\
\midrule
\multicolumn{6}{c}{\textbf{Reasoning Requirements}} \\
\midrule
Primary Strategy & Recursive decomp. & Backtracking & Constraint prop. & Unit propagation & Constraint prop. \\
Lookahead Depth & $n$ recursive calls & $O(n)$ decisions & Graph traversal & Clause analysis & $O(n^2/4)$ cells \\
Constraint Types & Stack ordering & Geometric conflicts & Adjacency & Logical satisfaction & Sudoku rules \\
Key Insight & Auxiliary peg use & Systematic search & Clique detection & Implication chains & Hidden singles \\
\midrule
\multicolumn{6}{c}{\textbf{Mathematical Foundations}} \\
\midrule
Core Theory & Recurrence relation & Permutation groups & Brooks' theorem & Cook-Levin theorem & Latin squares \\
Key Formula & $T(n) = 2T(n-1)+1$ & $|i-j| \neq |x_i-x_j|$ & $\chi(G) \leq \Delta(G)+1$ & $\phi = \bigwedge_i C_i$ & $f: [n]^2 \to [n]$ \\
Uniqueness Proof & Induction on $n$ & Symmetry breaking & Greedy coloring & Assignment exists & Exhaustive search \\
Phase Transition & N/A & N/A & $\rho_c(k) = \frac{2k\ln k}{k-1}$ & $\alpha_c \approx 4.267$ & Fill ratio threshold \\
\bottomrule
\end{tabular}
}
\end{table*}

\begin{flushleft}
\small
$^*$ For generalized Tower of Hanoi with multiple pegs and disks\\
$^\dagger$ For the decision version with pre-placed queens
\end{flushleft}

\begin{table*}[ht]
\centering
\caption{Mathematical Characterization of Hard Suite Tasks (Tasks 6-10 of 10)}
\label{tab:hard_suite_comparison_complete}
\resizebox{\textwidth}{!}{%
\begin{tabular}{lccccc}
\toprule
\textbf{Property} & \textbf{Logic Grid Puzzles} & \textbf{Cryptarithmetic} & \textbf{Matrix Chain} & \textbf{Modular Systems} & \textbf{Constraint Opt.} \\
 & \textbf{(8 variations)} & \textbf{(12 variations)} & \textbf{(5 variations)} & \textbf{(5 variations)} & \textbf{(5 variations)} \\
\midrule
\multicolumn{6}{c}{\textbf{Problem Formulation}} \\
\midrule
State Space & $(n!)^{m-1}$ assignments & $P(10,k)$ mappings & $C_{n-1}$ parenthesizations & $\prod_{i=1}^k m_i$ solutions & $2^n$ selections \\
Search Space & $n^{nm}$ possibilities & $\frac{10!}{(10-k)!}$ & $\frac{1}{n}\binom{2n-2}{n-1}$ & $M = \text{lcm}(m_1,...,m_k)$ & $2^n$ binary vectors \\
Constraint Type & Latin square & Digit uniqueness & Associativity & Modular congruence & Resource capacity \\
Optimization Target & N/A (feasibility) & N/A (feasibility) & Min multiplications & N/A (satisfaction) & Max profit \\
\midrule
\multicolumn{6}{c}{\textbf{Solution Properties}} \\
\midrule
Solution Count & 1 (by construction) & 1 (verified) & 1 (min cost) & $\leq \lfloor\Lambda/M\rfloor$ & Variable \\
Optimal Solution & Unique assignment & Unique mapping & $m[1,n]$ cost & $x_0 + kM$ & Max $\sum \pi_i x_i$ \\
Solution Length & $O(nm)$ tokens & $O(k)$ tokens & $O(\log m[1,n])$ tokens & $O(\log x)$ tokens & $O(n)$ tokens \\
Verification Time & $O(nm \cdot |\mathcal{C}|)$ & $O(k + w)$ & $O(n^3)$ & $O(k \log M)$ & $O(nm)$ \\
\midrule
\multicolumn{6}{c}{\textbf{Complexity Analysis}} \\
\midrule
Time Complexity & NP-complete & NP-complete & $O(n^3)$ DP & $O(k^2 \log M)$ & NP-hard (0-1 ILP) \\
Space Complexity & $O(n^2m)$ domains & $O(k)$ mapping & $O(n^2)$ table & $O(k)$ congruences & $O(2^n)$ states \\
Decision Problem & NP-complete & NP-complete & P (polynomial) & P with bounds & NP-complete \\
Approximability & N/A (exact) & N/A (exact) & N/A (exact DP) & N/A (exact) & $(1-1/e)$-approx \\
\midrule
\multicolumn{6}{c}{\textbf{Generation Parameters}} \\
\midrule
Size Range & $n,m \in \{3,4,5\}$ & $k \in \{4,...,7\}$ & $n \in \{3,...,20\}$ & $k \in \{3,4,5\}$ & $n \in \{4,5,6\}$ \\
Difficulty Control & Constraint types & Letter overlap & Dimension patterns & Constraint types & Constraint tightness \\
Variants & 5 constraint types & 4 operations & 5 patterns & 5 constraint types & 5 problem types \\
Instance Count & $(n!)^m \cdot V^{nm}$ & $V^n \cdot \binom{26}{k} \cdot P(10,k)$ & $(b-a+1)^{n+1} \cdot 5$ & $\binom{P_{\max}}{k} \cdot \prod m_i$ & $(\pi_{\max} \cdot c_{\max}^m)^n$ \\
\midrule
\multicolumn{6}{c}{\textbf{Contamination Resistance}} \\
\midrule
Isomorphisms & $(n!)^m$ permutations & $k!$ letter perms & None (ordered) & $k!$ equation order & $n!$ project labels \\
Problem Diversity & $>10^{20}$ puzzles & $>10^{15}$ puzzles & $>10^{38}$ instances & $>10^9$ systems & $>10^{50}$ problems \\
Memory Required & Exponential & $10^{15}$ bits & $10^{38}$ instances & $10^9$ systems & Exponential \\
Generalization Gap & $O(nm)$ to $(n!)^m$ & Linear to factorial & Polynomial to exp. & Linear to periodic & Linear to exponential \\
\midrule
\multicolumn{6}{c}{\textbf{Reasoning Requirements}} \\
\midrule
Primary Strategy & Constraint prop. & Domain reduction & Dynamic programming & CRT + search & Branch \& bound \\
Lookahead Depth & $O(nm)$ inferences & $k$ assignments & $n$ subproblems & Period $M$ & $n$ decisions \\
Constraint Types & 5 logical types & Arithmetic + unique & Matrix dimensions & Modular + additional & Resource + quality \\
Key Insight & Arc consistency & Carry propagation & Optimal substructure & Periodicity & Pruning bounds \\
\midrule
\multicolumn{6}{c}{\textbf{Mathematical Foundations}} \\
\midrule
Core Theory & Latin squares & Modular arithmetic & Catalan numbers & Chinese Remainder & Integer programming \\
Key Formula & $\phi: E \times A \to \bigcup A_i$ & $\sum W_i = R$ & $m[i,j] = \min_k(...)$ & $x \equiv a_i \pmod{m_i}$ & $\max \sum \pi_i x_i$ \\
Uniqueness Proof & CSP solver & Exhaustive search & DP optimality & CRT uniqueness & B\&B enumeration \\
Phase Transition & Constraint density & $\rho > 1.5$ & N/A & N/A & Tightness $\tau$ \\
\bottomrule
\end{tabular}
}
\end{table*}


\begin{table*}[ht]
\centering
\caption{Algorithmic and Evaluation Metrics for Hard Suite Tasks (Tasks 1-5)}
\label{tab:hard_suite_algorithms}
\resizebox{\textwidth}{!}{%
\begin{tabular}{lccccc}
\toprule
\textbf{Metric} & \textbf{Tower of Hanoi} & \textbf{N-Queens} & \textbf{Graph Coloring} & \textbf{Boolean SAT} & \textbf{Sudoku} \\
\midrule
\multicolumn{6}{c}{\textbf{Generation Algorithms}} \\
\midrule
Generation Method & Deterministic & Backtrack + random & Graph construction & Satisfiable CNF & Complete + removal \\
Generation Time & $O(1)$ setup & $O(n^2)$ average & $O(n^2)$ edges & $O(mk)$ clauses & $O(n^4)$ with uniqueness \\
Uniqueness Check & Implicit (proven) & Enumerate all & Chromatic polynomial & SAT solver & Solution counting \\
Validation Method & State transitions & Conflict checking & Edge constraints & Formula evaluation & Constraint verification \\
\midrule
\multicolumn{6}{c}{\textbf{Context Window Requirements}} \\
\midrule
Prompt Tokens & $200 + 10n$ & $100 + 5n$ & $30 + 2|E|$ & $50 + 3mk$ & $100 + 3n^2$ \\
Solution Tokens & $15(2^n - 1)$ & $2n$ & $2n$ & $3n$ & $2n^2$ \\
Max Total Tokens & $\sim 3,825$ (n=8) & $\sim 120$ (n=8) & $\sim 600$ (n=28) & $\sim 500$ (n=12) & $\sim 550$ (9×9) \\
Token Efficiency & Low (exponential) & High (linear) & Medium & High & Medium \\
\midrule
\multicolumn{6}{c}{\textbf{Solution Strategies}} \\
\midrule
Optimal Algorithm & Frame-Stewart & Min-conflicts & Welsh-Powell & DPLL + CDCL & Dancing Links \\
Heuristic Methods & N/A (exact) & Row-by-row & Greedy coloring & Unit propagation & Naked singles \\
Pruning Technique & N/A & Forward checking & Degree ordering & Pure literal & Constraint prop. \\
Branching Factor & 2 (binary choice) & $\leq n$ & $\leq k$ colors & 2 (true/false) & $\leq n$ values \\
\midrule
\multicolumn{6}{c}{\textbf{Evaluation Metrics}} \\
\midrule
Success Criterion & Optimal path & Valid placement & Min colors used & All clauses true & Complete grid \\
Partial Credit & Path validity & Partial placement & Valid coloring & \% clauses satisfied & \% cells correct \\
Error Types & Invalid moves & Queen conflicts & Adjacent same color & Unsatisfied clause & Constraint violation \\
Quality Measure & Move optimality & Time to solution & Excess colors & N/A & Logic depth used \\
\midrule
\multicolumn{6}{c}{\textbf{Scaling Behavior}} \\
\midrule
Problem Growth & Exponential $2^n$ & Factorial $n!$ & Exponential $k^n$ & Exponential $2^n$ & Polynomial $n^2$ \\
Solution Growth & Exponential $2^n$ & Variable $S(n)$ & Varies with graph & 0 to $2^n$ & 1 (maintained) \\
Difficulty Scaling & Exponential & Super-polynomial & NP-hard threshold & Phase transition & Controlled linear \\
Memory Scaling & $O(n)$ & $O(n)$ & $O(n^2)$ & $O(n + m)$ & $O(n^2)$ \\
\midrule
\multicolumn{6}{c}{\textbf{Theoretical Properties}} \\
\midrule
Recursion Depth & $2^n - 1$ & $n$ & N/A & $n$ (DPLL tree) & $n^2 - g$ \\
Symmetry Group & $S_3$ (peg perms) & $D_4$ (dihedral) & $S_n$ (vertex perms) & Boolean cube & $S_9 \wr S_3$ \\
Invariants & Disk ordering & Queen placement & Color classes & Truth assignment & Cell values \\
Certificates & Move sequence & Queen positions & Color assignment & Variable values & Completed grid \\
\bottomrule
\end{tabular}
}
\end{table*}



\begin{table*}[ht]
\centering
\caption{Algorithmic and Evaluation Metrics for Hard Suite Tasks (Tasks 6-10)}
\label{tab:hard_suite_algorithms_complete}
\resizebox{\textwidth}{!}{%
\begin{tabular}{lccccc}
\toprule
\textbf{Metric} & \textbf{Logic Grid Puzzles} & \textbf{Cryptarithmetic} & \textbf{Matrix Chain} & \textbf{Modular Systems} & \textbf{Constraint Opt.} \\
\midrule
\multicolumn{6}{c}{\textbf{Generation Algorithms}} \\
\midrule
Generation Method & Solution + constraints & Template + verify & Dimension patterns & CRT construction & Resource allocation \\
Generation Time & $O(nm \cdot |\mathcal{C}|)$ & $O(k! \cdot w)$ & $O(n)$ & $O(k^2)$ & $O(n^2m)$ \\
Uniqueness Check & CSP enumeration & Exhaustive search & DP verification & Period search & Enumeration \\
Validation Method & Constraint checking & Arithmetic verify & Cost calculation & Congruence check & Feasibility check \\
\midrule
\multicolumn{6}{c}{\textbf{Context Window Requirements}} \\
\midrule
Prompt Tokens & $100 + 10nm$ & $50 + 5kw$ & $30 + 7n$ & $40 + 10k$ & $100 + 15n$ \\
Solution Tokens & $3nm$ & $4k$ & $\lceil\log_{10} m[1,n]\rceil$ & $\lceil\log_{10} x\rceil$ & $2n$ \\
Max Total Tokens & $\sim 500$ (5×5×5) & $\sim 300$ (k=7) & $\sim 200$ (n=20) & $\sim 150$ (k=5) & $\sim 250$ (n=6) \\
Token Efficiency & High (polynomial) & High (linear) & Very high & Very high & High \\
\midrule
\multicolumn{6}{c}{\textbf{Solution Strategies}} \\
\midrule
Optimal Algorithm & Arc consistency & Constraint SAT & Bottom-up DP & Extended GCD & Exact enumeration \\
Heuristic Methods & Forward checking & Domain pruning & N/A (exact) & Linear search & Greedy + local \\
Pruning Technique & Domain reduction & Leading zeros & N/A & Constraint filter & Upper bounds \\
Branching Factor & $\leq n$ values & $\leq 10$ digits & $n-1$ splits & N/A & 2 (select/reject) \\
\midrule
\multicolumn{6}{c}{\textbf{Evaluation Metrics}} \\
\midrule
Success Criterion & All cells assigned & Valid equation & Minimum cost & All constraints met & Max profit feasible \\
Partial Credit & \% cells correct & \% letters mapped & Cost ratio & \% congruences & Feasibility + profit \\
Error Types & Constraint violation & Arithmetic error & Suboptimal cost & Missing constraints & Infeasible solution \\
Quality Measure & Inference depth & Search efficiency & Optimality gap & Solutions found & Optimality ratio \\
\midrule
\multicolumn{6}{c}{\textbf{Scaling Behavior}} \\
\midrule
Problem Growth & $(n!)^m$ & $P(10,k)$ & $C_{n-1} \sim 4^n/n^{3/2}$ & $M = \prod m_i$ & $2^n$ \\
Solution Growth & 1 (maintained) & 1 (maintained) & Polynomial value & Periodic $x + kM$ & Variable \\
Difficulty Scaling & Constraint density & Letter overlap & Chain length & Constraint types & Tightness $\tau$ \\
Memory Scaling & $O(n^2m)$ & $O(k)$ & $O(n^2)$ & $O(k)$ & $O(nm)$ \\
\midrule
\multicolumn{6}{c}{\textbf{Theoretical Properties}} \\
\midrule
Recursion Depth & $nm$ assignments & $k$ letters & $\log n$ levels & N/A & $n$ selections \\
Symmetry Group & $S_n^m$ & $S_k$ letter perms & None & $S_k$ equation perms & $S_n$ project perms \\
Invariants & Latin square & Digit uniqueness & Associativity & Modular equivalence & Resource limits \\
Certificates & Complete grid & Letter mapping & Split sequence & Solution value & Selected projects \\
\bottomrule
\end{tabular}
}
\end{table*}


\begin{table*}[ht]
\centering
\caption{Complete Mathematical Formulas for All Hard Suite Tasks (10 Tasks, 68 Variations)}
\label{tab:hard_suite_formulas_complete}
\resizebox{\textwidth}{!}{%
\begin{tabular}{lll}
\toprule
\textbf{Task} & \textbf{Core Mathematical Relations} & \textbf{Complexity/Counting} \\
\midrule
Tower of Hanoi & $T(n) = 2^n - 1$ (optimal moves) & State space: $3^n$ configurations \\
(6 variations) & $S(n+1) = 3 \cdot S(n) - 1$ (state transitions) & Moves: $(3 \cdot 2 \cdot n)^{2^n-1}$ \\
\midrule
N-Queens & $\forall i \neq j: x_i \neq x_j \land |i-j| \neq |x_i - x_j|$ & Solutions: $S(n) \sim \frac{n!}{c^n}$, $c \approx 2.54$ \\
(4 variations) & Conflict: $(r_i, c_i)$ attacks $(r_j, c_j)$ & Search space: $n!$ permutations \\
\midrule
Graph Coloring & $\omega(G) \leq \chi(G) \leq \Delta(G) + 1$ & Chromatic polynomial: $P_G(k)$ \\
(10 variations) & $P_G(k) = \sum_{i=0}^n (-1)^{n-i} a_i k^i$ & Colorings: $k^n$ assignments \\
\midrule
Boolean SAT & $\phi = \bigwedge_{i=1}^m \bigvee_{j=1}^k \ell_{ij}$ (CNF) & Search space: $2^n$ assignments \\
(5 variations) & $\Pr[\text{SAT}] \approx e^{-\alpha/2^k}$ at $\alpha = m/n$ & Phase transition: $\alpha_c \approx 4.267$ \\
\midrule
Sudoku & $N_9 = 9! \times 72^2 \times 2^7 \times 27,704,267,971$ & Valid grids: $6.67 \times 10^{21}$ \\
(8 variations) & $\text{Candidates}(i,j) = [n] \setminus (\text{Row}_i \cup \text{Col}_j \cup \text{Box}_{ij})$ & Minimal puzzles: $>10^{16}$ \\
\midrule
Logic Grid Puzzles & $\phi: E \times A \to \bigcup_{i=1}^{m} A_i$ bijective & Solution space: $(n!)^{m-1}$ \\
(8 variations) & $\forall i,j,k: j \neq k \implies \phi(e_j, A_i) \neq \phi(e_k, A_i)$ & Problem space: $(n!)^m \cdot V^{nm}$ \\
\midrule
Cryptarithmetic & $\text{val}(W) = \sum_{i=0}^{|W|-1} \Psi(W[i]) \cdot 10^{|W|-1-i}$ & Mappings: $\frac{10!}{(10-k)!}$ \\
(12 variations) & $\sum_{i=1}^{n} \text{val}(W_i) = \text{val}(R)$ & Problem space: $V^n \cdot \binom{26}{k} \cdot P(10,k)$ \\
\midrule
Matrix Chain & $m[i,j] = \min_{i \leq k < j} \{m[i,k] + m[k+1,j] + d_{i-1} \cdot d_k \cdot d_j\}$ & Parenthesizations: $C_{n-1} = \frac{1}{n}\binom{2n-2}{n-1}$ \\
(5 variations) & Catalan: $C_n = \frac{1}{n+1}\binom{2n}{n}$ & Time: $O(n^3)$, Space: $O(n^2)$ \\
\midrule
Modular Systems & $x \equiv a_i \pmod{m_i}$, $i \in [1,k]$ & Solution space: $M = \text{lcm}(m_1,...,m_k)$ \\
(5 variations) & CRT: $x = \sum_{i=1}^k a_i M_i y_i \pmod{M}$ & Period: $x = x_0 + kM$, $k \in \mathbb{Z}$ \\
\midrule
Constraint Opt. & $\max \sum_{i=1}^n \pi_i x_i$ s.t. $\sum_{i=1}^n \rho_i(r_j) x_i \leq c_j$ & Search space: $2^n$ selections \\
(5 variations) & Tightness: $\tau = \frac{\sum_{i} \min_j \rho_i(r_j)}{\sum_{j} c_j}$ & Feasible region: $|\mathcal{F}| \ll 2^n$ \\
\bottomrule
\end{tabular}
}
\end{table*}


\begin{table}[ht]
\centering
\caption{Hard Suite Summary Statistics (10 Tasks, 68 Variations Total)}
\label{tab:hard_suite_summary}
\begin{tabular}{lccc}
\toprule
\textbf{Category} & \textbf{Range} & \textbf{Median} & \textbf{Notes} \\
\midrule
Problem Space Size & $10^9$ to $10^{50}$ & $10^{20}$ & Ensures contamination resistance \\
Context Requirements & 100-3825 tokens & 300 tokens & Fits in standard context windows \\
Verification Time & $O(n)$ to $O(2^n)$ & $O(n^2)$ & Mostly polynomial verification \\
Solution Uniqueness & 0 to $(n!)^m$ & 1 & Enforced by generation \\
Complexity Class & P to NP-complete & NP-complete & 8/10 are NP-complete \\
Symmetry Groups & 1 to $S_9 \wr S_3$ & $S_n$ & Rich algebraic structure \\
Total Variations & 4 to 12 per task & 6.8 per task & Comprehensive problem coverage \\
\bottomrule
\end{tabular}
\end{table}

\clearpage

\section{Hard Suite: Complete Results}
\label{Appendix:L}

This section details results for all hard suite tasks across all non-quantized open-source models and closed-source proprietary models. The average accuracy, instruction following rate and average output tokens are listed for tower of hanoi, n-queens, graph coloring, boolean SAT, sudoku, cryptarithmetic, matrix chain multiplication, modular systems, constraint optimization, and logic grid puzzles in table \ref{hard_suite_table1}, \ref{hard_suite_table2}, \ref{hard_suite_table3}.

\begin{table}[htbp]
\centering
\tiny
\begin{adjustbox}{width=\textwidth}
\begin{tabulary}{1.3\textwidth}{LCCCCCCCCCCCC}
\toprule
\textbf{Model (Param)} & \multicolumn{3}{c}{\textbf{Tower of Hanoi}} & \multicolumn{3}{c}{\textbf{N-Queens}} & \multicolumn{3}{c}{\textbf{Graph Coloring}} & \multicolumn{3}{c}{\textbf{Boolean SAT}} \\
\cmidrule(lr){2-4} \cmidrule(lr){5-7} \cmidrule(lr){8-10} \cmidrule(lr){11-13}
& \textbf{Acc (\%)} & \textbf{Inst (\%)} & \textbf{Tokens} & \textbf{Acc (\%)} & \textbf{Inst (\%)} & \textbf{Tokens} & \textbf{Acc (\%)} & \textbf{Inst (\%)} & \textbf{Tokens} & \textbf{Acc (\%)} & \textbf{Inst (\%)} & \textbf{Tokens} \\
\midrule
\rowcolor{gray!10}
\multicolumn{13}{c}{\textbf{\textit{Qwen Family (Qwen3)}}} \\
\midrule
Qwen3 (0.6B) & 0.00 & 99.50$_{\pm 0.50}$ & 5720.93 & 6.00$_{\pm 10.39}$ & 60.50$_{\pm 30.99}$ & 4364.86 & 21.12$_{\pm 3.26}$ & 65.00$_{\pm 10.64}$ & 4755.65 & 14.29$_{\pm 20.18}$ & 41.86$_{\pm 30.57}$ & 6197.41 \\
Qwen3 (1.7B) & 1.17 & 98.50$_{\pm 1.26}$ & 7481.18 & 2.00$_{\pm 3.46}$ & 68.75$_{\pm 30.20}$ & 7411.59 & 42.00$_{\pm 13.76}$ & 61.13$_{\pm 15.14}$ & 5134.66 & 32.71$_{\pm 25.98}$ & 59.43$_{\pm 31.06}$ & 6071.69 \\
Qwen3 (4B) & 35.83 & 99.50$_{\pm 1.12}$ & 6214.78 & 32.50$_{\pm 36.66}$ & 61.50$_{\pm 28.86}$ & 7112.08 & 53.12$_{\pm 17.04}$ & 68.75$_{\pm 17.41}$ & 4756.59 & 49.86$_{\pm 22.71}$ & 66.86$_{\pm 26.06}$ & 5665.29 \\
Qwen3 (8B) & 37.17 & 99.83$_{\pm 0.37}$ & 6114.99 & 34.50$_{\pm 37.29}$ & 63.50$_{\pm 26.31}$ & 7002.57 & 60.62$_{\pm 15.17}$ & 75.00$_{\pm 16.09}$ & 4337.32 & 50.57$_{\pm 19.87}$ & 65.29$_{\pm 27.01}$ & 5856.17 \\
Qwen3 (14B) & 46.33 & 99.83$_{\pm 0.37}$ & 5446.42 & 43.50$_{\pm 43.96}$ & 58.75$_{\pm 31.12}$ & 6674.30 & 67.50$_{\pm 13.36}$ & 81.00$_{\pm 13.41}$ & 3997.80 & 47.80$_{\pm 23.16}$ & 60.60$_{\pm 24.19}$ & 5569.94 \\
Qwen3 (32B) & 39.67 & 99.50$_{\pm 0.76}$ & 6192.65 & 37.00$_{\pm 40.42}$ & 54.25$_{\pm 35.60}$ & 6568.30 & 70.88$_{\pm 10.51}$ & 80.50$_{\pm 9.87}$ & 4036.74 & 59.14$_{\pm 18.47}$ & 76.43$_{\pm 18.71}$ & 5251.74 \\
Qwen3 (30B-MOE) & 44.00 & 100.00 & 5187.79 & 39.00$_{\pm 39.41}$ & 61.75$_{\pm 31.04}$ & 6982.56 & 72.33$_{\pm 14.23}$ & 85.50$_{\pm 11.94}$ & 4014.90 & 55.86$_{\pm 20.68}$ & 69.29$_{\pm 24.13}$ & 5418.43 \\
Qwen3 (30B-MOE-t) & 38.50 & 75.17$_{\pm 17.95}$ & 6163.08 & 44.25$_{\pm 44.25}$ & 70.25$_{\pm 21.04}$ & 7317.35 & 67.33$_{\pm 6.82}$ & 87.33$_{\pm 9.37}$ & 4256.86 & 47.00$_{\pm 19.54}$ & 56.43$_{\pm 21.60}$ & 5947.41 \\
Qwen3 (30B-MOE-i) & 48.50 & 100.00 & 1041.17 & 43.75$_{\pm 44.07}$ & 90.25$_{\pm 7.36}$ & 5386.13 & 61.88$_{\pm 19.03}$ & 95.75$_{\pm 3.27}$ & 2135.51 & 19.71$_{\pm 11.37}$ & 100.00 & 56.00 \\
Qwen3 (4B-t) & 31.00 & 66.50$_{\pm 20.08}$ & 6739.23 & 42.75$_{\pm 43.52}$ & 83.00$_{\pm 10.77}$ & 7118.08 & 59.25$_{\pm 13.54}$ & 72.75$_{\pm 15.99}$ & 5018.43 & 30.00$_{\pm 16.35}$ & 48.43$_{\pm 25.11}$ & 6340.54 \\
\midrule
\rowcolor{gray!10}
\multicolumn{13}{c}{\textbf{\textit{Qwen Family (Qwen2.5)}}} \\
\midrule
Qwen2.5 (0.5B) & 0.00 & 100.00 & 2600.93 & 0.00 & 4.50$_{\pm 7.79}$ & 960.92 & 3.88$_{\pm 2.62}$ & 85.50$_{\pm 7.07}$ & 1311.70 & 2.43$_{\pm 4.53}$ & 100.00 & 73.22 \\
Qwen2.5 (1.5B) & 0.00 & 100.00 & 3671.21 & 0.00 & 61.00$_{\pm 31.12}$ & 153.90 & 13.50$_{\pm 3.12}$ & 55.38$_{\pm 22.65}$ & 354.70 & 2.29$_{\pm 3.15}$ & 100.00 & 189.65 \\
Qwen2.5 (3B) & 0.00 & 100.00 & 2500.64 & 0.00 & 45.00$_{\pm 33.32}$ & 727.38 & 25.50$_{\pm 5.68}$ & 93.75$_{\pm 4.35}$ & 1116.20 & 4.14$_{\pm 4.05}$ & 100.00 & 56.00 \\
Qwen2.5 (7B) & 25.67$_{\pm 38.65}$ & 100.00 & 865.18 & 0.25$_{\pm 0.43}$ & 28.75$_{\pm 37.93}$ & 444.63 & 33.25$_{\pm 4.32}$ & 97.62$_{\pm 1.80}$ & 843.49 & 11.86$_{\pm 10.12}$ & 100.00 & 1507.65 \\
Qwen2.5 (14B) & 36.83$_{\pm 44.99}$ & 100.00 & 1212.55 & 11.50$_{\pm 19.92}$ & 96.25$_{\pm 4.49}$ & 218.64 & 40.12$_{\pm 5.30}$ & 98.88$_{\pm 1.96}$ & 796.92 & 17.00$_{\pm 11.60}$ & 100.00 & 58.32 \\
Qwen2.5 (32B) & 38.83$_{\pm 36.52}$ & 100.00 & 625.78 & 0.00 & 99.75$_{\pm 0.43}$ & 332.58 & 45.75$_{\pm 8.63}$ & 98.50$_{\pm 1.22}$ & 522.26 & 22.57$_{\pm 14.96}$ & 96.43$_{\pm 7.96}$ & 56.04 \\
Qwen2.5 (72B) & 66.67$_{\pm 47.14}$ & 100.00 & 1077.27 & 25.00$_{\pm 43.30}$ & 100.00 & 537.08 & 45.75$_{\pm 6.67}$ & 99.62$_{\pm 0.70}$ & 963.99 & 24.29$_{\pm 14.19}$ & 100.00 & 56.00 \\
Qwen2.5 (1.5B-m) & 0.00 & 100.00 & 1248.11 & 0.00 & 47.50$_{\pm 42.20}$ & 532.63 & 6.04$_{\pm 4.08}$ & 46.37$_{\pm 21.29}$ & 795.09 & 1.71$_{\pm 2.05}$ & 16.57$_{\pm 5.01}$ & 2005.47 \\
Qwen2.5 (7B-m) & 16.67 & 100.00 & 1589.02 & 0.00 & 74.00$_{\pm 31.63}$ & 820.93 & 18.10$_{\pm 7.18}$ & 75.71$_{\pm 11.32}$ & 909.90 & 3.43$_{\pm 6.09}$ & 43.71$_{\pm 7.30}$ & 1955.96 \\
Qwen2.5 (72B-m) & 33.33 & 100.00 & 1678.19 & 9.00$_{\pm 15.59}$ & 36.25$_{\pm 8.98}$ & 658.21 & 26.39$_{\pm 7.98}$ & 87.04$_{\pm 5.03}$ & 958.56 & 2.86$_{\pm 1.96}$ & 11.14$_{\pm 5.51}$ & 2006.38 \\
\midrule
\rowcolor{gray!10}
\multicolumn{13}{c}{\textbf{\textit{Gemma Family}}} \\
\midrule
Gemma (1B) & 0.00 & 100.00 & 140.66 & 0.00 & 49.75$_{\pm 49.75}$ & 92.20 & 0.12$_{\pm 0.33}$ & 64.12$_{\pm 22.16}$ & 726.52 & 2.29$_{\pm 4.03}$ & 100.00 & 52.71 \\
Gemma (4B) & 0.00 & 100.00 & 582.18 & 12.50$_{\pm 21.65}$ & 75.00$_{\pm 43.30}$ & 250.05 & 20.00$_{\pm 13.23}$ & 80.00$_{\pm 36.06}$ & 623.54 & 5.71$_{\pm 10.50}$ & 24.29$_{\pm 39.23}$ & 7.31 \\
Gemma (12B) & 0.00 & 100.00 & 872.63 & 2.50$_{\pm 4.33}$ & 52.50$_{\pm 42.65}$ & 444.02 & 33.75$_{\pm 16.54}$ & 82.50$_{\pm 33.82}$ & 862.88 & 8.57$_{\pm 14.57}$ & 24.29$_{\pm 39.23}$ & 20.94 \\
Gemma (27B) & 16.67$_{\pm 37.27}$ & 100.00 & 535.47 & 24.00$_{\pm 41.57}$ & 99.00$_{\pm 1.73}$ & 271.88 & 50.00$_{\pm 14.81}$ & 98.62$_{\pm 1.41}$ & 1037.74 & 39.71$_{\pm 19.20}$ & 100.00 & 1136.40 \\
\midrule
\rowcolor{gray!10}
\multicolumn{13}{c}{\textbf{\textit{Phi Family}}} \\
\midrule
Phi4 (14B) & 8.50$_{\pm 18.56}$ & 100.00 & 522.16 & 25.00$_{\pm 43.30}$ & 89.50$_{\pm 18.19}$ & 468.30 & 46.63$_{\pm 9.77}$ & 100.00 & 721.18 & 25.86$_{\pm 16.66}$ & 100.00 & 1113.29 \\
Phi4-reasoning+ (14B) & 16.17 & 56.67$_{\pm 41.49}$ & 7786.27 & 0.25$_{\pm 0.43}$ & 1.50$_{\pm 1.50}$ & 7368.81 & 36.75$_{\pm 13.28}$ & 59.25$_{\pm 15.59}$ & 6216.92 & 47.57$_{\pm 16.20}$ & 65.29$_{\pm 20.81}$ & 6457.09 \\
Phi4-reasoning (14B) & 27.00 & 60.33$_{\pm 38.72}$ & 7763.62 & 6.75$_{\pm 10.57}$ & 15.25$_{\pm 8.93}$ & 6958.67 & 65.75$_{\pm 10.66}$ & 76.00$_{\pm 11.59}$ & 5322.74 & 55.00$_{\pm 16.70}$ & 74.57$_{\pm 17.15}$ & 5893.97 \\
Phi4-mini-reasoning (3.8B) & 12.50 & 100.00 & 6900.28 & 24.50$_{\pm 30.63}$ & 67.00$_{\pm 32.65}$ & 7054.01 & 41.25$_{\pm 14.32}$ & 66.00$_{\pm 14.77}$ & 5061.80 & 24.86$_{\pm 23.66}$ & 46.57$_{\pm 32.43}$ & 5933.85 \\
Phi3-mini (3.8B) & 0.00 & 100.00 & 2142.96 & 0.00 & 32.25$_{\pm 40.87}$ & 82.90 & 21.12$_{\pm 4.28}$ & 87.75$_{\pm 9.39}$ & 623.80 & 7.14$_{\pm 4.09}$ & 99.86$_{\pm 0.35}$ & 109.15 \\
Phi3-med (14B-4k) & 22.17$_{\pm 36.83}$ & 100.00 & 1824.81 & 0.50$_{\pm 0.87}$ & 80.25$_{\pm 33.63}$ & 113.15 & 21.79$_{\pm 4.51}$ & 98.62$_{\pm 3.28}$ & 172.57 & 11.00$_{\pm 9.62}$ & 100.00 & 56.00 \\
Phi3-med (14B-128k) & 16.67$_{\pm 37.27}$ & 100.00 & 2949.26 & 3.50$_{\pm 6.06}$ & 76.50$_{\pm 37.87}$ & 133.58 & 26.00$_{\pm 4.64}$ & 97.12$_{\pm 4.17}$ & 386.58 & 7.86$_{\pm 9.20}$ & 100.00 & 56.00 \\
\midrule
\rowcolor{gray!10}
\multicolumn{13}{c}{\textbf{\textit{Llama Family}}} \\
\midrule
Llama-3.2 (1B) & 0.00 & 99.83$_{\pm 0.37}$ & 3277.66 & 0.00 & 21.50$_{\pm 24.72}$ & 1158.68 & 2.38$_{\pm 3.04}$ & 72.88$_{\pm 19.28}$ & 1557.92 & 2.00$_{\pm 3.02}$ & 57.57$_{\pm 3.02}$ & 2376.83 \\
Llama-3.2 (3B) & 0.00 & 100.00 & 2531.26 & 0.00 & 6.25$_{\pm 5.07}$ & 368.96 & 13.62$_{\pm 4.12}$ & 83.12$_{\pm 10.46}$ & 1296.02 & 5.29$_{\pm 7.91}$ & 99.71$_{\pm 0.45}$ & 104.28 \\
Llama-3.1 (8B) & 0.00 & 100.00 & 6076.68 & 0.00 & 5.00$_{\pm 5.34}$ & 970.66 & 24.88$_{\pm 5.18}$ & 95.88$_{\pm 2.85}$ & 1908.96 & 12.57$_{\pm 8.43}$ & 100.00 & 71.62 \\
Llama-3.1 (70B) & 33.33$_{\pm 47.14}$ & 100.00 & 2308.85 & 1.00$_{\pm 1.73}$ & 78.75$_{\pm 24.75}$ & 839.39 & 44.50$_{\pm 6.75}$ & 98.62$_{\pm 2.12}$ & 839.25 & 19.00$_{\pm 11.50}$ & 100.00 & 56.00 \\
Llama-3.3 (70B) & 48.17$_{\pm 26.91}$ & 100.00 & 1841.30 & 25.75$_{\pm 42.89}$ & 99.25$_{\pm 1.30}$ & 260.60 & 44.88$_{\pm 11.98}$ & 99.38$_{\pm 1.11}$ & 860.09 & 25.29$_{\pm 15.45}$ & 100.00 & 710.85 \\
Llama4-scout & 13.50 & 100.00 & 5183.55 & 33.50$_{\pm 37.88}$ & 86.00$_{\pm 21.46}$ & 618.25 & 15.88$_{\pm 8.43}$ & 40.88$_{\pm 9.70}$ & 613.48 & 2.00$_{\pm 4.90}$ & 2.14$_{\pm 5.25}$ & 28.02 \\
\midrule
\rowcolor{gray!10}
\multicolumn{13}{c}{\textbf{\textit{Mistral Family}}} \\
\midrule
Mistral (7B) & 0.00 & 100.00 & 3106.56 & 0.00 & 50.00$_{\pm 50.00}$ & 387.54 & 21.25$_{\pm 2.63}$ & 91.12$_{\pm 4.11}$ & 884.03 & 5.14$_{\pm 3.52}$ & 97.14$_{\pm 4.16}$ & 546.39 \\
Ministral (8B) & 0.00 & 100.00 & 3009.91 & 0.00 & 54.75$_{\pm 34.75}$ & 167.46 & 32.25$_{\pm 4.12}$ & 99.12$_{\pm 1.62}$ & 777.57 & 4.29$_{\pm 4.77}$ & 100.00 & 56.00 \\
Mistral-nemo (12B) & 0.00 & 100.00 & 811.43 & 25.00$_{\pm 43.30}$ & 75.00$_{\pm 43.30}$ & 28.25 & 25.75$_{\pm 5.61}$ & 92.88$_{\pm 6.85}$ & 495.36 & 6.43$_{\pm 5.90}$ & 100.00 & 56.00 \\
Mixtral-8x7b & 16.67$_{\pm 37.27}$ & 100.00 & 481.06 & 0.00 & 43.50$_{\pm 44.05}$ & 398.12 & 17.38$_{\pm 6.58}$ & 93.12$_{\pm 3.82}$ & 370.98 & 9.29$_{\pm 6.54}$ & 98.14$_{\pm 4.55}$ & 207.64 \\
Mixtral-8x22b & 24.00$_{\pm 37.59}$ & 100.00 & 939.81 & 22.50$_{\pm 38.97}$ & 71.25$_{\pm 13.25}$ & 230.85 & 36.12$_{\pm 4.86}$ & 99.00$_{\pm 1.32}$ & 553.73 & 16.71$_{\pm 12.41}$ & 100.00 & 308.22 \\
\midrule
\rowcolor{gray!10}
\multicolumn{13}{c}{\textbf{\textit{Others}}} \\
\midrule
Smollm3 (3B) & 13.50 & 99.83$_{\pm 0.37}$ & 6727.13 & 23.00$_{\pm 23.39}$ & 63.00$_{\pm 16.32}$ & 6270.72 & 41.25$_{\pm 10.13}$ & 62.88$_{\pm 12.24}$ & 4726.78 & 22.43$_{\pm 21.67}$ & 49.86$_{\pm 26.91}$ & 6463.88 \\
Smollm2 (1.7B) & 0.00 & 100.00 & 7148.82 & 0.00 & 25.00$_{\pm 43.30}$ & 31.50 & 7.63$_{\pm 5.89}$ & 100.00 & 165.69 & 2.43$_{\pm 4.07}$ & 100.00 & 56.00 \\
GPT-OSS (20B) & 49.83 & 44.67$_{\pm 35.94}$ & 4645.44 & 33.50$_{\pm 33.97}$ & 77.00$_{\pm 2.55}$ & 2941.60 & 74.12$_{\pm 9.91}$ & 85.88$_{\pm 9.20}$ & 2962.41 & 54.00$_{\pm 19.89}$ & 65.43$_{\pm 21.06}$ & 4593.79 \\
GPT-OSS (120B) & 55.33 & 17.50$_{\pm 36.90}$ & 3441.49 & 15.50$_{\pm 20.06}$ & 57.50$_{\pm 29.41}$ & 1297.66 & 86.75$_{\pm 8.63}$ & 92.00$_{\pm 6.10}$ & 2297.46 & 67.86$_{\pm 15.59}$ & 78.57$_{\pm 16.49}$ & 4046.01 \\
\midrule
\rowcolor{gray!10}
\multicolumn{13}{c}{\textbf{\textit{OpenAI Family (Proprietary)}}} \\
\midrule
GPT5 & 91.67$_{\pm 14.62}$ & 100.00 & 1904.68 & 60.00$_{\pm 42.43}$ & 100.00 & 85.80 & 100.00 & 100.00 & 472.42 & 100.00 & 100.00 & 145.60 \\
GPT5-mini & 58.33$_{\pm 42.59}$ & 75.00$_{\pm 35.47}$ & 621.05 & 60.00$_{\pm 40.62}$ & 100.00 & 66.05 & 95.00$_{\pm 7.07}$ & 95.00$_{\pm 7.07}$ & 533.74 & 100.00 & 100.00 & 134.40 \\
GPT5-nano & 53.33$_{\pm 44.60}$ & 78.33$_{\pm 29.67}$ & 738.48 & 50.00$_{\pm 50.00}$ & 97.50$_{\pm 4.33}$ & 80.60 & 95.00$_{\pm 5.00}$ & 95.00$_{\pm 5.00}$ & 476.16 & 97.14$_{\pm 7.00}$ & 97.14$_{\pm 7.00}$ & 152.20 \\
GPT4.1 & 50.00$_{\pm 50.00}$ & 100.00 & 944.78 & 70.00$_{\pm 41.23}$ & 100.00 & 203.72 & 73.75$_{\pm 15.76}$ & 100.00 & 1233.01 & 22.86$_{\pm 13.85}$ & 100.00 & 56.00 \\
GPT4.1-mini & 35.00$_{\pm 43.11}$ & 90.00$_{\pm 22.36}$ & 575.32 & 45.00$_{\pm 45.55}$ & 77.50$_{\pm 17.85}$ & 1372.90 & 73.75$_{\pm 17.28}$ & 100.00 & 1216.45 & 27.14$_{\pm 16.66}$ & 98.57$_{\pm 3.50}$ & 723.09 \\
GPT4.1-nano & 6.67$_{\pm 11.06}$ & 100.00 & 768.63 & 25.00$_{\pm 43.30}$ & 97.50$_{\pm 4.33}$ & 294.00 & 42.50$_{\pm 17.14}$ & 96.25$_{\pm 6.96}$ & 865.86 & 14.29$_{\pm 10.50}$ & 100.00 & 55.14 \\
GPT4o & 50.00$_{\pm 50.00}$ & 100.00 & 833.17 & 62.50$_{\pm 41.46}$ & 87.50$_{\pm 21.65}$ & 226.80 & 57.50$_{\pm 6.61}$ & 98.75$_{\pm 3.31}$ & 563.71 & 21.43$_{\pm 19.59}$ & 100.00 & 315.14 \\
GPT4o-mini & 5.00$_{\pm 11.18}$ & 100.00 & 2502.63 & 17.50$_{\pm 17.85}$ & 82.50$_{\pm 17.85}$ & 295.58 & 36.25$_{\pm 12.18}$ & 100.00 & 747.78 & 11.43$_{\pm 8.33}$ & 100.00 & 56.00 \\
o4 mini & 50.00$_{\pm 38.73}$ & 98.33$_{\pm 3.73}$ & 1130.40 & 50.00$_{\pm 50.00}$ & 82.50$_{\pm 20.46}$ & 65.89 & 97.50$_{\pm 4.33}$ & 98.75$_{\pm 3.31}$ & 422.33 & 95.71$_{\pm 7.28}$ & 100.00 & 128.59 \\
o3 & 90.00$_{\pm 18.26}$ & 100.00 & 3090.85 & 70.00$_{\pm 41.23}$ & 100.00 & 118.80 & 98.75$_{\pm 3.31}$ & 100.00 & 649.01 & 100.00 & 100.00 & 201.60 \\
o3 mini & 51.67$_{\pm 34.36}$ & 100.00 & 2830.99 & 52.50$_{\pm 47.63}$ & 92.50$_{\pm 8.29}$ & 567.89 & 92.50$_{\pm 10.90}$ & 100.00 & 1934.45 & 97.14$_{\pm 4.52}$ & 100.00 & 465.14 \\
\midrule
\rowcolor{gray!10}
\multicolumn{13}{c}{\textbf{\textit{Gemini Family (Proprietary)}}} \\
\midrule
Gemini-2.5-pro & 96.67$_{\pm 7.45}$ & 100.00 & 995.93 & 50.00$_{\pm 45.28}$ & 97.50$_{\pm 4.33}$ & 398.28 & 90.00$_{\pm 7.07}$ & 96.25$_{\pm 4.84}$ & 654.46 & 100.00 & 100.00 & 269.21 \\
Gemini-2.5-flash & 91.67$_{\pm 8.98}$ & 100.00 & 981.12 & 45.00$_{\pm 36.40}$ & 75.00$_{\pm 18.03}$ & 413.92 & 86.25$_{\pm 9.92}$ & 97.50$_{\pm 4.33}$ & 1453.89 & 100.00 & 100.00 & 56.00 \\
Gemini-2.5-flash-lite & 30.00$_{\pm 40.00}$ & 100.00 & 2791.90 & 60.00$_{\pm 36.74}$ & 92.50$_{\pm 12.99}$ & 6544.18 & 73.75$_{\pm 11.11}$ & 100.00 & 3444.23 & 84.29$_{\pm 15.91}$ & 94.29$_{\pm 7.28}$ & 7332.30 \\
Gemini-2.0-flash & 93.33$_{\pm 14.91}$ & 100.00 & 765.33 & 27.50$_{\pm 27.73}$ & 65.00$_{\pm 39.05}$ & 1710.43 & 65.00$_{\pm 8.66}$ & 100.00 & 1502.18 & 22.86$_{\pm 23.73}$ & 100.00 & 215.09 \\
Gemini-2.0-flash-lite & 50.00$_{\pm 50.00}$ & 100.00 & 1092.53 & 40.00$_{\pm 39.37}$ & 95.00$_{\pm 5.00}$ & 694.50 & 48.75$_{\pm 12.69}$ & 100.00 & 1163.65 & 42.86$_{\pm 13.85}$ & 94.29$_{\pm 7.28}$ & 1899.24 \\
\midrule
\bottomrule
\end{tabulary}
\end{adjustbox}
\caption{\textbf{Hard Suite Results - Table 1:} Performance of regular models on Tower of Hanoi (3-8 disks), N-Queens (6-16 boards), Graph Coloring, and Boolean SAT tasks. Each task reports Accuracy (Acc), Instruction-following (Inst) and average output Tokens with mean and standard deviation values. Results show average performance across different complexity variants within each task category.}
\label{hard_suite_table1}
\end{table}

\begin{table}
\centering
\scriptsize
\begin{adjustbox}{width=\textwidth}
\begin{tabulary}{1.3\textwidth}{LCCCCCCCCC}
\toprule
\textbf{Model (Param)} & \multicolumn{3}{c}{\textbf{Sudoku}} & \multicolumn{3}{c}{\textbf{Cryptarithmetic}} & \multicolumn{3}{c}{\textbf{Matrix Chain Mult}} \\
\cmidrule(lr){2-4} \cmidrule(lr){5-7} \cmidrule(lr){8-10}
& \textbf{Acc (\%)} & \textbf{Inst (\%)} & \textbf{Tokens} & \textbf{Acc (\%)} & \textbf{Inst (\%)} & \textbf{Tokens} & \textbf{Acc (\%)} & \textbf{Inst (\%)} & \textbf{Tokens} \\
\midrule
\rowcolor{gray!10}
\multicolumn{10}{c}{\textbf{\textit{Qwen Family (Qwen3)}}} \\
\midrule
Qwen3 (0.6B) & 12.00$_{\pm 12.83}$ & 52.67$_{\pm 31.54}$ & 6609.54 & 0.00 & 70.15$_{\pm 5.34}$ & 7565.67 & 0.43$_{\pm 1.05}$ & 56.00$_{\pm 24.65}$ & 6168.81 \\
Qwen3 (1.7B) & 28.33$_{\pm 25.62}$ & 63.00$_{\pm 26.55}$ & 6383.16 & 11.23$_{\pm 6.21}$ & 93.65$_{\pm 3.44}$ & 7519.00 & 5.57$_{\pm 12.47}$ & 63.14$_{\pm 19.86}$ & 6350.30 \\
Qwen3 (4B) & 21.00$_{\pm 12.33}$ & 61.00$_{\pm 30.01}$ & 6096.32 & 37.32$_{\pm 14.89}$ & 91.68$_{\pm 3.03}$ & 7159.25 & 24.71$_{\pm 37.20}$ & 93.43$_{\pm 10.17}$ & 5505.64 \\
Qwen3 (8B) & 21.33$_{\pm 13.27}$ & 60.00$_{\pm 30.74}$ & 6227.63 & 52.74$_{\pm 2.05}$ & 97.88$_{\pm 2.12}$ & 6779.93 & 24.29$_{\pm 37.50}$ & 94.29$_{\pm 8.46}$ & 5499.92 \\
Qwen3 (14B) & 36.00$_{\pm 33.03}$ & 60.33$_{\pm 33.09}$ & 5783.53 & 41.78$_{\pm 15.89}$ & 98.00$_{\pm 2.45}$ & 7040.81 & 30.43$_{\pm 40.49}$ & 95.43$_{\pm 7.17}$ & 5237.94 \\
Qwen3 (32B) & 33.00$_{\pm 31.38}$ & 58.33$_{\pm 34.34}$ & 5935.34 & 44.57$_{\pm 13.27}$ & 97.92$_{\pm 1.54}$ & 6854.37 & 28.43$_{\pm 39.59}$ & 94.29$_{\pm 9.18}$ & 5144.81 \\
Qwen3 (30B-MOE) & 37.67$_{\pm 33.72}$ & 64.67$_{\pm 29.49}$ & 5785.12 & 41.64$_{\pm 11.82}$ & 97.28$_{\pm 1.75}$ & 6955.30 & 28.71$_{\pm 40.70}$ & 96.71$_{\pm 5.36}$ & 5099.20 \\
Qwen3 (30B-MOE-t) & 34.33$_{\pm 25.94}$ & 65.67$_{\pm 23.33}$ & 6002.56 & 31.37$_{\pm 11.94}$ & 98.95$_{\pm 0.71}$ & 6757.44 & 28.43$_{\pm 40.47}$ & 95.00$_{\pm 8.02}$ & 4993.60 \\
Qwen3 (30B-MOE-i) & 34.67$_{\pm 19.60}$ & 100.00 & 139.00 & 24.06$_{\pm 11.68}$ & 100.00 & 164.67 & 29.29$_{\pm 37.26}$ & 95.29$_{\pm 7.83}$ & 3542.83 \\
Qwen3 (4B-t) & 7.33$_{\pm 7.72}$ & 73.33$_{\pm 17.46}$ & 6530.43 & 36.14$_{\pm 10.19}$ & 98.41$_{\pm 1.09}$ & 7176.42 & 19.14$_{\pm 34.36}$ & 96.14$_{\pm 6.15}$ & 5567.18 \\
\midrule
\rowcolor{gray!10}
\multicolumn{10}{c}{\textbf{\textit{Qwen Family (Qwen2.5)}}} \\
\midrule
Qwen2.5 (0.5B) & 1.00$_{\pm 1.41}$ & 98.67$_{\pm 1.89}$ & 163.13 & 0.00 & 77.09$_{\pm 5.93}$ & 3542.86 & 0.00 & 22.29$_{\pm 6.27}$ & 2418.80 \\
Qwen2.5 (1.5B) & 2.33$_{\pm 2.62}$ & 97.33$_{\pm 2.49}$ & 138.35 & 0.00 & 99.75$_{\pm 0.43}$ & 109.10 & 0.00 & 85.43$_{\pm 6.39}$ & 1373.84 \\
Qwen2.5 (3B) & 1.00$_{\pm 1.41}$ & 100.00 & 138.82 & 20.02$_{\pm 12.20}$ & 100.00 & 57.76 & 0.14$_{\pm 0.35}$ & 92.57$_{\pm 6.67}$ & 1190.43 \\
Qwen2.5 (7B) & 6.67$_{\pm 6.60}$ & 95.67$_{\pm 4.71}$ & 144.60 & 20.64$_{\pm 11.47}$ & 100.00 & 98.18 & 1.14$_{\pm 2.42}$ & 99.86$_{\pm 0.35}$ & 1295.30 \\
Qwen2.5 (14B) & 7.00$_{\pm 2.16}$ & 94.00$_{\pm 5.35}$ & 460.85 & 18.66$_{\pm 13.45}$ & 100.00 & 58.45 & 3.86$_{\pm 7.94}$ & 99.86$_{\pm 0.35}$ & 930.01 \\
Qwen2.5 (32B) & 30.33$_{\pm 20.27}$ & 96.00$_{\pm 4.97}$ & 431.23 & 19.21$_{\pm 12.22}$ & 100.00 & 66.47 & 11.29$_{\pm 16.71}$ & 100.00 & 1185.70 \\
Qwen2.5 (72B) & 31.67$_{\pm 22.22}$ & 98.67$_{\pm 0.94}$ & 396.92 & 20.19$_{\pm 13.98}$ & 100.00 & 57.82 & 10.71$_{\pm 16.34}$ & 100.00 & 1080.80 \\
Qwen2.5 (1.5B-m) & 3.33$_{\pm 4.71}$ & 87.33$_{\pm 5.56}$ & 1429.37 & 2.30$_{\pm 0.77}$ & 82.47$_{\pm 1.93}$ & 1300.36 & 1.29$_{\pm 3.15}$ & 74.71$_{\pm 29.37}$ & 1263.87 \\
Qwen2.5 (7B-m) & 4.00$_{\pm 4.97}$ & 90.00$_{\pm 9.42}$ & 1619.87 & 3.27$_{\pm 3.09}$ & 78.05$_{\pm 4.96}$ & 1804.41 & 4.86$_{\pm 10.01}$ & 98.71$_{\pm 1.67}$ & 1056.46 \\
Qwen2.5 (72B-m) & 4.67$_{\pm 5.25}$ & 63.33$_{\pm 6.94}$ & 1691.43 & 15.34$_{\pm 10.40}$ & 92.70$_{\pm 3.10}$ & 1544.12 & 10.00$_{\pm 16.38}$ & 98.71$_{\pm 1.67}$ & 1143.56 \\
\midrule
\rowcolor{gray!10}
\multicolumn{10}{c}{\textbf{\textit{Gemma Family}}} \\
\midrule
Gemma (1B) & 1.33$_{\pm 1.89}$ & 97.67$_{\pm 1.70}$ & 132.12 & 0.00 & 70.54$_{\pm 14.84}$ & 1287.33 & 0.00 & 0.57$_{\pm 1.05}$ & 391.17 \\
Gemma (4B) & -- & -- & -- & 0.00 & 100.00 & 2001.33 & 1.43$_{\pm 3.50}$ & 98.57$_{\pm 3.50}$ & 1084.59 \\
Gemma (12B) & -- & -- & -- & 22.50$_{\pm 8.29}$ & 100.00 & 240.60 & 2.86$_{\pm 7.00}$ & 62.86$_{\pm 44.63}$ & 754.87 \\
Gemma (27B) & -- & -- & -- & 20.00$_{\pm 10.00}$ & 100.00 & 1632.22 & 14.43$_{\pm 20.15}$ & 97.00$_{\pm 5.88}$ & 1088.23 \\
\midrule
\rowcolor{gray!10}
\multicolumn{10}{c}{\textbf{\textit{Phi Family}}} \\
\midrule
Phi4 (14B) & 16.00$_{\pm 17.28}$ & 99.33$_{\pm 0.94}$ & 1241.54 & 14.43$_{\pm 11.19}$ & 100.00 & 763.50 & 10.57$_{\pm 16.51}$ & 99.86$_{\pm 0.35}$ & 1298.49 \\
Phi4-reasoning+ (14B) & 4.33$_{\pm 3.68}$ & 83.67$_{\pm 5.79}$ & 6808.20 & 29.64$_{\pm 8.31}$ & 94.14$_{\pm 2.85}$ & 7527.76 & 20.86$_{\pm 31.67}$ & 87.00$_{\pm 9.56}$ & 6806.51 \\
Phi4-reasoning (14B) & 9.33$_{\pm 6.80}$ & 68.33$_{\pm 18.57}$ & 6194.43 & 36.50$_{\pm 5.40}$ & 98.24$_{\pm 0.76}$ & 7256.05 & 25.29$_{\pm 30.14}$ & 91.14$_{\pm 12.22}$ & 6808.43 \\
Phi4-mini-reasoning (3.8B) & 24.67$_{\pm 30.07}$ & 50.67$_{\pm 32.19}$ & 6336.27 & 22.00$_{\pm 10.08}$ & 90.02$_{\pm 1.94}$ & 7403.47 & 12.29$_{\pm 23.62}$ & 77.86$_{\pm 13.59}$ & 6324.59 \\
Phi3-mini (3.8B) & 1.00$_{\pm 1.41}$ & 100.00 & 138.38 & 18.94$_{\pm 10.89}$ & 100.00 & 437.54 & 0.43$_{\pm 1.05}$ & 85.43$_{\pm 9.50}$ & 482.70 \\
Phi3-med (14B-4k) & 5.00$_{\pm 5.10}$ & 100.00 & 141.63 & 19.96$_{\pm 13.74}$ & 100.00 & 58.82 & 1.14$_{\pm 2.80}$ & 94.00$_{\pm 6.02}$ & 813.81 \\
Phi3-med (14B-128k) & 11.33$_{\pm 14.64}$ & 100.00 & 141.61 & 21.25$_{\pm 13.16}$ & 100.00 & 88.61 & 0.57$_{\pm 1.05}$ & 94.86$_{\pm 3.18}$ & 786.49 \\
\midrule
\rowcolor{gray!10}
\multicolumn{10}{c}{\textbf{\textit{Llama Family}}} \\
\midrule
Llama-3.2 (1B) & 0.33$_{\pm 0.47}$ & 70.00$_{\pm 20.61}$ & 2975.61 & 0.00 & 32.34$_{\pm 5.53}$ & 1200.41 & 0.00 & 59.57$_{\pm 11.91}$ & 1420.35 \\
Llama-3.2 (3B) & 1.67$_{\pm 2.36}$ & 100.00 & 137.96 & 0.00 & 81.23$_{\pm 3.36}$ & 3566.04 & 0.00 & 0.29$_{\pm 0.70}$ & 376.11 \\
Llama-3.1 (8B) & 3.33$_{\pm 4.71}$ & 100.00 & 184.56 & 0.00 & 100.00 & 840.95 & 0.14$_{\pm 0.35}$ & 98.71$_{\pm 0.70}$ & 1355.48 \\
Llama-3.1 (70B) & 22.67$_{\pm 23.92}$ & 94.33$_{\pm 0.47}$ & 431.32 & 21.63$_{\pm 14.19}$ & 99.25$_{\pm 0.83}$ & 498.91 & 3.71$_{\pm 7.94}$ & 100.00 & 786.73 \\
Llama-3.3 (70B) & 24.00$_{\pm 22.99}$ & 98.00$_{\pm 0.82}$ & 392.61 & 22.06$_{\pm 13.60}$ & 100.00 & 972.61 & 4.57$_{\pm 8.91}$ & 100.00 & 854.50 \\
Llama4-scout & 0.00 & 0.00 & 94.46 & 0.00 & 21.45$_{\pm 9.03}$ & 252.30 & 9.57$_{\pm 22.64}$ & 39.29$_{\pm 27.83}$ & 3173.07 \\
\midrule
\rowcolor{gray!10}
\multicolumn{10}{c}{\textbf{\textit{Mistral Family}}} \\
\midrule
Mistral (7B) & 1.00$_{\pm 1.41}$ & 100.00 & 1418.95 & 0.00 & 100.00 & 996.45 & 0.14$_{\pm 0.35}$ & 89.71$_{\pm 5.92}$ & 853.69 \\
Ministral (8B) & 1.00$_{\pm 1.41}$ & 100.00 & 139.00 & 0.00 & 100.00 & 57.93 & 0.57$_{\pm 1.40}$ & 99.29$_{\pm 0.70}$ & 913.47 \\
Mistral-nemo (12B) & 4.00$_{\pm 4.97}$ & 97.33$_{\pm 2.49}$ & 138.43 & 19.90$_{\pm 12.09}$ & 100.00 & 57.70 & 0.00 & 71.14$_{\pm 24.51}$ & 841.58 \\
Mixtral-8x7b & 8.33$_{\pm 11.79}$ & 94.00$_{\pm 5.10}$ & 186.00 & 14.60$_{\pm 10.17}$ & 78.55$_{\pm 12.30}$ & 569.62 & 0.71$_{\pm 1.75}$ & 86.14$_{\pm 9.26}$ & 1084.05 \\
Mixtral-8x22b & 12.67$_{\pm 13.89}$ & 100.00 & 610.22 & 18.17$_{\pm 13.40}$ & 100.00 & 1304.87 & 3.29$_{\pm 7.65}$ & 95.57$_{\pm 2.77}$ & 739.31 \\
\midrule
\rowcolor{gray!10}
\multicolumn{10}{c}{\textbf{\textit{Others}}} \\
\midrule
Smollm3 (3B) & 1.33$_{\pm 1.89}$ & 26.00$_{\pm 21.23}$ & 6621.98 & 6.69$_{\pm 0.41}$ & 78.87$_{\pm 4.69}$ & 7344.62 & 10.57$_{\pm 19.43}$ & 80.29$_{\pm 13.66}$ & 5920.62 \\
Smollm2 (1.7B) & 1.00$_{\pm 1.41}$ & 100.00 & 139.00 & 0.00 & 100.00 & 62.08 & 0.00 & 99.29$_{\pm 0.88}$ & 1137.10 \\
GPT-OSS (20B) & 60.67$_{\pm 22.81}$ & 71.67$_{\pm 15.86}$ & 3366.38 & 67.02$_{\pm 7.00}$ & 98.75$_{\pm 0.83}$ & 5056.63 & 40.43$_{\pm 36.23}$ & 95.43$_{\pm 7.25}$ & 5318.79 \\
GPT-OSS (120B) & 34.67$_{\pm 6.85}$ & 46.33$_{\pm 7.72}$ & 2814.10 & 81.46$_{\pm 5.80}$ & 99.72$_{\pm 0.48}$ & 4157.96 & 50.14$_{\pm 33.81}$ & 95.14$_{\pm 7.61}$ & 4734.53 \\
\midrule
\rowcolor{gray!10}
\multicolumn{10}{c}{\textbf{\textit{OpenAI Family (Proprietary)}}} \\
\midrule
GPT5 & 90.00$_{\pm 14.14}$ & 93.33$_{\pm 9.43}$ & 386.18 & 100.00 & 100.00 & 1094.39 & 68.57$_{\pm 34.82}$ & 70.00$_{\pm 35.46}$ & 578.76 \\
GPT5-mini & 83.33$_{\pm 17.00}$ & 100.00 & 333.60 & 92.50$_{\pm 8.29}$ & 95.00$_{\pm 5.00}$ & 140.04 & 51.43$_{\pm 46.42}$ & 54.29$_{\pm 47.47}$ & 205.20 \\
GPT5-nano & 66.67$_{\pm 4.71}$ & 100.00 & 361.40 & 90.00$_{\pm 12.25}$ & 92.50$_{\pm 12.99}$ & 157.64 & 64.29$_{\pm 36.20}$ & 64.29$_{\pm 36.20}$ & 100.85 \\
GPT4.1 & 50.00$_{\pm 32.66}$ & 90.00$_{\pm 14.14}$ & 140.90 & 60.00$_{\pm 24.49}$ & 92.50$_{\pm 8.29}$ & 6537.75 & 30.00$_{\pm 32.95}$ & 95.71$_{\pm 7.28}$ & 3632.27 \\
GPT4.1-mini & 40.00$_{\pm 21.60}$ & 70.00$_{\pm 21.60}$ & 139.00 & 55.00$_{\pm 20.62}$ & 100.00 & 6373.95 & 32.86$_{\pm 34.11}$ & 98.57$_{\pm 3.50}$ & 2950.50 \\
GPT4.1-nano & 40.00$_{\pm 16.33}$ & 100.00 & 139.00 & 25.00$_{\pm 5.00}$ & 100.00 & 1787.75 & 20.00$_{\pm 33.38}$ & 100.00 & 1492.21 \\
GPT4o & 43.33$_{\pm 18.86}$ & 100.00 & 141.83 & 22.50$_{\pm 10.90}$ & 100.00 & 497.22 & 11.43$_{\pm 21.00}$ & 94.29$_{\pm 7.28}$ & 713.10 \\
GPT4o-mini & 6.67$_{\pm 4.71}$ & 100.00 & 139.00 & 25.00$_{\pm 11.18}$ & 100.00 & 68.60 & 1.43$_{\pm 3.50}$ & 98.57$_{\pm 3.50}$ & 892.06 \\
o4 mini & 83.33$_{\pm 9.43}$ & 90.00 & 294.24 & 100.00 & 100.00 & 125.99 & 65.71$_{\pm 34.17}$ & 77.14$_{\pm 22.50}$ & 106.72 \\
o3 & 90.00$_{\pm 14.14}$ & 100.00 & 500.40 & 92.50$_{\pm 12.99}$ & 95.00$_{\pm 8.66}$ & 244.91 & 67.14$_{\pm 33.26}$ & 67.14$_{\pm 33.26}$ & 21.82 \\
o3 mini & 86.67$_{\pm 12.47}$ & 93.33$_{\pm 9.43}$ & 659.69 & 92.50$_{\pm 8.29}$ & 97.50$_{\pm 4.33}$ & 1211.04 & 67.14$_{\pm 31.04}$ & 78.57$_{\pm 25.31}$ & 3590.14 \\
\midrule
\rowcolor{gray!10}
\multicolumn{10}{c}{\textbf{\textit{Gemini Family (Proprietary)}}} \\
\midrule
Gemini-2.5-pro & 83.33$_{\pm 9.43}$ & 100.00 & 139.00 & 97.50$_{\pm 4.33}$ & 97.50$_{\pm 4.33}$ & 1352.35 & 40.00$_{\pm 38.17}$ & 90.00$_{\pm 13.09}$ & 1566.50 \\
Gemini-2.5-flash & 70.00$_{\pm 21.60}$ & 76.67$_{\pm 20.55}$ & 89.90 & 57.50$_{\pm 8.29}$ & 60.00$_{\pm 7.07}$ & 437.38 & 31.43$_{\pm 33.99}$ & 92.86$_{\pm 10.30}$ & 2265.30 \\
Gemini-2.5-flash-lite & 30.00$_{\pm 16.33}$ & 100.00 & 200.07 & 47.50$_{\pm 14.79}$ & 97.50$_{\pm 4.33}$ & 6928.35 & 21.43$_{\pm 23.56}$ & 95.71$_{\pm 7.28}$ & 3951.26 \\
Gemini-2.0-flash & 43.33$_{\pm 17.00}$ & 100.00 & 180.60 & 30.00$_{\pm 24.49}$ & 100.00 & 2678.82 & 11.43$_{\pm 17.26}$ & 61.43$_{\pm 42.57}$ & 854.70 \\
Gemini-2.0-flash-lite & 26.67$_{\pm 17.00}$ & 90.00$_{\pm 8.16}$ & 140.70 & 20.00$_{\pm 10.00}$ & 97.50$_{\pm 4.33}$ & 4624.77 & 22.86$_{\pm 26.57}$ & 90.00$_{\pm 16.04}$ & 1115.83 \\
\midrule
\bottomrule
\end{tabulary}
\end{adjustbox}
\caption{\textbf{Hard Suite Results - Table 2:} Performance of regular models on Sudoku, Cryptarithmetic, and Matrix Chain Multiplication tasks. Each task reports Accuracy (Acc), Instruction-following (Inst) and average output Tokens with mean and standard deviation values. Results show average performance across different complexity variants within each task category.}
\label{hard_suite_table2}
\end{table}

\begin{table}
\centering
\scriptsize
\begin{adjustbox}{width=\textwidth}
\begin{tabulary}{1.3\textwidth}{LCCCCCCCCC}
\toprule
\textbf{Model (Param)} & \multicolumn{3}{c}{\textbf{Modular Systems}} & \multicolumn{3}{c}{\textbf{Constraint Opt}} & \multicolumn{3}{c}{\textbf{Logic Grid Puzzles}} \\
\cmidrule(lr){2-4} \cmidrule(lr){5-7} \cmidrule(lr){8-10}
& \textbf{Acc (\%)} & \textbf{Inst (\%)} & \textbf{Tokens} & \textbf{Acc (\%)} & \textbf{Inst (\%)} & \textbf{Tokens} & \textbf{Acc (\%)} & \textbf{Inst (\%)} & \textbf{Tokens} \\
\midrule
\rowcolor{gray!10}
\multicolumn{10}{c}{\textbf{\textit{Qwen Family (Qwen3)}}} \\
\midrule
Qwen3 (0.6B) & 18.40$_{\pm 23.23}$ & 97.20$_{\pm 1.47}$ & 5520.66 & 14.20$_{\pm 10.74}$ & 100.00 & 5110.81 & 56.00$_{\pm 44.00}$ & 98.78$_{\pm 0.58}$ & 3413.56 \\
Qwen3 (1.7B) & 53.20$_{\pm 33.16}$ & 99.60$_{\pm 0.49}$ & 4855.61 & 14.20$_{\pm 7.83}$ & 100.00 & 5740.65 & 52.00$_{\pm 48.00}$ & 99.30$_{\pm 0.45}$ & 3949.75 \\
Qwen3 (4B) & 65.60$_{\pm 29.59}$ & 99.40$_{\pm 0.80}$ & 5010.25 & 8.60$_{\pm 8.31}$ & 100.00 & 6246.03 & 52.50$_{\pm 47.50}$ & 96.38$_{\pm 3.63}$ & 4350.69 \\
Qwen3 (8B) & 56.40$_{\pm 33.60}$ & 99.60$_{\pm 0.49}$ & 5390.77 & 18.20$_{\pm 28.55}$ & 100.00 & 6201.14 & 51.50$_{\pm 48.50}$ & 95.92$_{\pm 4.08}$ & 4453.73 \\
Qwen3 (14B) & 68.60$_{\pm 29.28}$ & 100.00 & 4852.27 & 31.00$_{\pm 38.58}$ & 100.00 & 5866.55 & 61.00$_{\pm 39.00}$ & 98.20$_{\pm 1.70}$ & 4217.58 \\
Qwen3 (32B) & 65.60$_{\pm 30.88}$ & 100.00 & 4931.32 & 36.60$_{\pm 35.31}$ & 100.00 & 5721.93 & 56.50$_{\pm 43.50}$ & 97.50$_{\pm 2.50}$ & 4209.59 \\
Qwen3 (30B-MOE) & 80.60$_{\pm 23.58}$ & 99.80$_{\pm 0.40}$ & 4200.79 & 22.00$_{\pm 35.87}$ & 100.00 & 6078.78 & 55.50$_{\pm 44.50}$ & 97.80$_{\pm 2.15}$ & 4195.41 \\
Qwen3 (30B-MOE-t) & 89.20$_{\pm 12.89}$ & 99.80$_{\pm 0.40}$ & 3586.24 & 21.00$_{\pm 32.93}$ & 100.00 & 6168.63 & 50.00$_{\pm 50.00}$ & 97.50$_{\pm 2.50}$ & 4542.49 \\
Qwen3 (30B-MOE-i) & 83.20$_{\pm 19.98}$ & 99.80$_{\pm 0.40}$ & 3655.26 & 72.60$_{\pm 26.34}$ & 100.00 & 4996.66 & 63.00$_{\pm 37.00}$ & 98.20$_{\pm 1.80}$ & 4082.00 \\
Qwen3 (4B-t) & 84.00$_{\pm 17.71}$ & 99.80$_{\pm 0.40}$ & 3658.34 & 16.00$_{\pm 26.71}$ & 100.00 & 6222.76 & 57.50$_{\pm 42.50}$ & 97.50$_{\pm 2.05}$ & 4943.09 \\
\midrule
\rowcolor{gray!10}
\multicolumn{10}{c}{\textbf{\textit{Qwen Family (Qwen2.5)}}} \\
\midrule
Qwen2.5 (0.5B) & 0.40$_{\pm 0.80}$ & 69.80$_{\pm 8.45}$ & 2782.97 & 2.60$_{\pm 1.50}$ & 100.00 & 1547.82 & 0.50$_{\pm 0.50}$ & 89.85$_{\pm 2.35}$ & 2023.24 \\
Qwen2.5 (1.5B) & 2.20$_{\pm 3.92}$ & 93.00$_{\pm 4.38}$ & 1857.13 & 2.80$_{\pm 4.66}$ & 100.00 & 1499.47 & 11.00$_{\pm 2.00}$ & 89.72$_{\pm 1.18}$ & 1965.13 \\
Qwen2.5 (3B) & 3.00$_{\pm 4.56}$ & 92.40$_{\pm 6.47}$ & 2712.09 & 8.20$_{\pm 6.31}$ & 99.20$_{\pm 0.75}$ & 1547.12 & 21.50$_{\pm 11.50}$ & 98.22$_{\pm 0.03}$ & 807.39 \\
Qwen2.5 (7B) & 5.60$_{\pm 8.26}$ & 95.40$_{\pm 1.36}$ & 1926.47 & 22.40$_{\pm 12.22}$ & 100.00 & 1602.21 & 39.00$_{\pm 23.00}$ & 99.18$_{\pm 0.72}$ & 658.14 \\
Qwen2.5 (14B) & 8.40$_{\pm 12.45}$ & 95.20$_{\pm 2.79}$ & 2435.72 & 34.20$_{\pm 11.27}$ & 100.00 & 1095.80 & 50.50$_{\pm 36.50}$ & 99.90 & 600.59 \\
Qwen2.5 (32B) & 8.20$_{\pm 10.76}$ & 93.80$_{\pm 3.66}$ & 1950.33 & 40.60$_{\pm 16.60}$ & 99.80$_{\pm 0.40}$ & 1138.83 & 53.00$_{\pm 37.00}$ & 99.90$_{\pm 0.10}$ & 550.24 \\
Qwen2.5 (72B) & 10.60$_{\pm 14.73}$ & 97.20$_{\pm 0.75}$ & 2556.63 & 38.60$_{\pm 12.82}$ & 100.00 & 1336.84 & 62.50$_{\pm 35.50}$ & 100.00 & 687.72 \\
Qwen2.5 (1.5B-m) & 8.00$_{\pm 13.58}$ & 96.20$_{\pm 1.17}$ & 1523.49 & 2.20$_{\pm 2.64}$ & 100.00 & 1476.54 & 12.50$_{\pm 0.50}$ & 91.08$_{\pm 0.17}$ & 1305.73 \\
Qwen2.5 (7B-m) & 20.80$_{\pm 24.73}$ & 99.40$_{\pm 0.49}$ & 1712.29 & 4.20$_{\pm 4.96}$ & 100.00 & 1552.59 & 24.00$_{\pm 15.00}$ & 94.55$_{\pm 0.60}$ & 1705.94 \\
Qwen2.5 (72B-m) & 7.40$_{\pm 8.16}$ & 95.80$_{\pm 3.43}$ & 1567.37 & 7.00$_{\pm 3.74}$ & 100.00 & 1458.79 & 45.50$_{\pm 36.50}$ & 97.12$_{\pm 0.77}$ & 879.80 \\
\midrule
\rowcolor{gray!10}
\multicolumn{10}{c}{\textbf{\textit{Gemma Family}}} \\
\midrule
Gemma (1B) & 0.40$_{\pm 0.80}$ & 89.80$_{\pm 3.97}$ & 2363.95 & 3.40$_{\pm 6.31}$ & 100.00 & 1002.60 & 0.50$_{\pm 0.50}$ & 90.48$_{\pm 1.17}$ & 466.94 \\
Gemma (4B) & 26.00$_{\pm 29.39}$ & 98.00$_{\pm 4.00}$ & 2571.10 & 20.00$_{\pm 20.98}$ & 100.00 & 1571.14 & 15.00$_{\pm 15.00}$ & 96.00$_{\pm 2.00}$ & 744.05 \\
Gemma (12B) & 20.00$_{\pm 26.08}$ & 72.00$_{\pm 19.39}$ & 3267.38 & 38.00$_{\pm 14.70}$ & 100.00 & 1206.82 & 65.00$_{\pm 25.00}$ & 99.50 & 664.05 \\
Gemma (27B) & 42.20$_{\pm 23.57}$ & 92.00$_{\pm 2.76}$ & 2470.36 & 52.80$_{\pm 14.30}$ & 100.00 & 1124.68 & 56.00$_{\pm 41.00}$ & 99.98$_{\pm 0.02}$ & 893.26 \\
\midrule
\rowcolor{gray!10}
\multicolumn{10}{c}{\textbf{\textit{Phi Family}}} \\
\midrule
Phi4 (14B) & 10.40$_{\pm 14.64}$ & 87.60$_{\pm 4.59}$ & 2478.59 & 45.40$_{\pm 14.68}$ & 100.00 & 993.49 & 79.50$_{\pm 18.50}$ & 95.48$_{\pm 1.32}$ & 726.86 \\
Phi4-reasoning+ (14B) & 54.00$_{\pm 26.50}$ & 79.80$_{\pm 7.22}$ & 7381.98 & 32.00$_{\pm 35.64}$ & 100.00 & 6994.60 & 56.00$_{\pm 41.00}$ & 85.80$_{\pm 11.00}$ & 6678.41 \\
Phi4-reasoning (14B) & 55.80$_{\pm 30.75}$ & 83.60$_{\pm 10.05}$ & 6989.32 & 40.60$_{\pm 32.54}$ & 100.00 & 7150.18 & 66.00$_{\pm 18.00}$ & 92.30$_{\pm 3.65}$ & 6542.36 \\
Phi4-mini-reasoning (3.8B) & 42.20$_{\pm 33.46}$ & 99.20$_{\pm 1.17}$ & 5870.72 & 15.60$_{\pm 14.09}$ & 100.00 & 6713.40 & 52.00$_{\pm 48.00}$ & 98.18$_{\pm 1.42}$ & 4220.05 \\
Phi3-mini (3.8B) & 0.00 & 90.80$_{\pm 3.87}$ & 1037.66 & 13.00$_{\pm 6.03}$ & 98.80$_{\pm 1.94}$ & 801.48 & 54.50$_{\pm 3.50}$ & 90.92$_{\pm 4.43}$ & 820.04 \\
Phi3-med (14B-4k) & 0.40$_{\pm 0.80}$ & 96.60$_{\pm 1.85}$ & 839.70 & 15.20$_{\pm 6.73}$ & 99.40$_{\pm 0.80}$ & 789.56 & 47.50$_{\pm 29.50}$ & 99.38$_{\pm 0.22}$ & 582.48 \\
Phi3-med (14B-128k) & 0.20$_{\pm 0.40}$ & 91.40$_{\pm 2.24}$ & 947.50 & 17.40$_{\pm 9.31}$ & 99.20$_{\pm 0.75}$ & 877.94 & 56.50$_{\pm 28.50}$ & 98.92$_{\pm 1.07}$ & 575.08 \\
\midrule
\rowcolor{gray!10}
\multicolumn{10}{c}{\textbf{\textit{Llama Family}}} \\
\midrule
Llama-3.2 (1B) & 0.00 & 78.40$_{\pm 9.50}$ & 3200.42 & 3.00$_{\pm 1.79}$ & 100.00 & 1763.54 & 4.00$_{\pm 3.00}$ & 92.35$_{\pm 1.05}$ & 1354.80 \\
Llama-3.2 (3B) & 0.00 & 94.00$_{\pm 3.35}$ & 2696.67 & 2.00$_{\pm 2.61}$ & 99.60$_{\pm 0.49}$ & 1297.29 & 24.00$_{\pm 10.00}$ & 97.82$_{\pm 0.13}$ & 972.02 \\
Llama-3.1 (8B) & 0.00 & 91.20$_{\pm 5.84}$ & 3474.99 & 13.60$_{\pm 6.44}$ & 99.80$_{\pm 0.40}$ & 1408.37 & 28.00$_{\pm 24.00}$ & 92.10$_{\pm 3.65}$ & 2831.85 \\
Llama-3.1 (70B) & 0.40$_{\pm 0.49}$ & 87.40$_{\pm 6.59}$ & 4310.14 & 25.80$_{\pm 6.49}$ & 100.00 & 1003.33 & 58.00$_{\pm 38.00}$ & 99.62$_{\pm 0.37}$ & 743.58 \\
Llama-3.3 (70B) & 2.00$_{\pm 4.00}$ & 97.20$_{\pm 1.17}$ & 1142.65 & 27.40$_{\pm 10.98}$ & 100.00 & 1105.34 & 60.00$_{\pm 39.00}$ & 100.00 & 733.30 \\
Llama4-scout & 0.40$_{\pm 0.49}$ & 6.20$_{\pm 3.97}$ & 191.85 & 41.00$_{\pm 8.49}$ & 100.00 & 1162.59 & 72.50$_{\pm 24.50}$ & 100.00 & 829.92 \\
\midrule
\rowcolor{gray!10}
\multicolumn{10}{c}{\textbf{\textit{Mistral Family}}} \\
\midrule
Mistral (7B) & 0.00 & 87.80$_{\pm 4.40}$ & 2075.13 & 1.40$_{\pm 0.80}$ & 97.60$_{\pm 1.62}$ & 896.68 & 14.50$_{\pm 6.50}$ & 98.92$_{\pm 0.87}$ & 504.86 \\
Ministral (8B) & 1.40$_{\pm 1.85}$ & 94.40$_{\pm 2.58}$ & 1936.11 & 13.20$_{\pm 2.93}$ & 100.00 & 1117.54 & 36.50$_{\pm 17.50}$ & 99.82$_{\pm 0.07}$ & 547.00 \\
Mistral-nemo (12B) & 0.00 & 86.20$_{\pm 5.49}$ & 1961.18 & 6.60$_{\pm 1.02}$ & 98.80$_{\pm 0.75}$ & 639.16 & 35.00$_{\pm 25.00}$ & 99.72$_{\pm 0.27}$ & 398.98 \\
Mixtral-8x7b & 0.00 & 83.40$_{\pm 5.71}$ & 903.05 & 6.40$_{\pm 4.50}$ & 99.60$_{\pm 0.80}$ & 632.40 & 23.00$_{\pm 13.00}$ & 80.48$_{\pm 11.83}$ & 397.41 \\
Mixtral-8x22b & 0.80$_{\pm 1.17}$ & 87.60$_{\pm 5.00}$ & 1472.45 & 11.00$_{\pm 3.16}$ & 99.20$_{\pm 1.17}$ & 711.37 & 45.00$_{\pm 30.00}$ & 99.18$_{\pm 0.62}$ & 753.80 \\
\midrule
\rowcolor{gray!10}
\multicolumn{10}{c}{\textbf{\textit{Others}}} \\
\midrule
Smollm3 (3B) & 39.40$_{\pm 34.23}$ & 93.60$_{\pm 4.03}$ & 5763.05 & 2.40$_{\pm 3.01}$ & 100.00 & 7099.07 & 51.50$_{\pm 45.50}$ & 96.58$_{\pm 2.72}$ & 3828.02 \\
Smollm2 (1.7B) & 0.00 & 71.80$_{\pm 8.28}$ & 1786.45 & 5.20$_{\pm 4.17}$ & 100.00 & 15.26 & 51.00$_{\pm 49.00}$ & 82.80$_{\pm 2.80}$ & 156.43 \\
GPT-OSS (20B) & 51.20$_{\pm 22.57}$ & 75.20$_{\pm 6.62}$ & 3803.05 & 58.60$_{\pm 34.12}$ & 99.80$_{\pm 0.40}$ & 4794.62 & 84.00$_{\pm 14.00}$ & 98.52$_{\pm 0.73}$ & 1292.30 \\
GPT-OSS (120B) & 86.60$_{\pm 8.52}$ & 94.80$_{\pm 2.64}$ & 2583.77 & 83.40$_{\pm 21.85}$ & 100.00 & 3912.25 & 90.00$_{\pm 10.00}$ & 96.42$_{\pm 0.53}$ & 531.68 \\
\midrule
\rowcolor{gray!10}
\multicolumn{10}{c}{\textbf{\textit{OpenAI Family (Proprietary)}}} \\
\midrule
GPT5 & 98.00$_{\pm 4.00}$ & 100.00 & 1770.24 & 100.00 & 100.00 & 36.92 & 75.00$_{\pm 25.00}$ & 99.75$_{\pm 0.25}$ & 521.69 \\
GPT5-mini & 100.00 & 100.00 & 1802.64 & 100.00 & 100.00 & 35.57 & 55.00$_{\pm 45.00}$ & 97.00$_{\pm 3.00}$ & 1115.52 \\
GPT5-nano & 100.00 & 100.00 & 932.52 & 100.00 & 100.00 & 37.28 & 65.00$_{\pm 35.00}$ & 87.50$_{\pm 3.50}$ & 311.87 \\
GPT4.1 & 58.00$_{\pm 31.24}$ & 86.00$_{\pm 10.20}$ & 12393.80 & 86.00$_{\pm 14.97}$ & 100.00 & 7336.68 & 80.00$_{\pm 20.00}$ & 100.00 & 793.05 \\
GPT4.1-mini & 64.00$_{\pm 32.00}$ & 94.00$_{\pm 4.90}$ & 11230.96 & 94.00$_{\pm 8.00}$ & 100.00 & 2596.06 & 55.00$_{\pm 45.00}$ & 99.50 & 586.85 \\
GPT4.1-nano & 46.00$_{\pm 24.17}$ & 94.00$_{\pm 8.00}$ & 4076.72 & 50.00$_{\pm 18.97}$ & 100.00 & 1587.56 & 60.00$_{\pm 40.00}$ & 100.00 & 805.15 \\
GPT4o & 4.00$_{\pm 8.00}$ & 100.00 & 897.84 & 36.00$_{\pm 12.00}$ & 100.00 & 687.80 & 65.00$_{\pm 25.00}$ & 99.50$_{\pm 0.50}$ & 503.60 \\
GPT4o-mini & 8.00$_{\pm 11.66}$ & 100.00 & 1274.96 & 30.00$_{\pm 16.73}$ & 100.00 & 928.02 & 70.00$_{\pm 20.00}$ & 100.00 & 605.55 \\
o4 mini & 96.00$_{\pm 4.90}$ & 98.00$_{\pm 4.00}$ & 882.37 & 100.00 & 100.00 & 31.97 & 70.00$_{\pm 30.00}$ & 90.25$_{\pm 7.75}$ & 491.28 \\
o3 & 100.00 & 100.00 & 1676.59 & 100.00 & 100.00 & 62.06 & 85.00$_{\pm 15.00}$ & 88.25$_{\pm 8.25}$ & 676.08 \\
o3 mini & 92.00$_{\pm 9.80}$ & 100.00 & 4044.00 & 96.00$_{\pm 4.90}$ & 100.00 & 1929.65 & 80.00$_{\pm 20.00}$ & 99.00$_{\pm 1.00}$ & 691.92 \\
\midrule
\rowcolor{gray!10}
\multicolumn{10}{c}{\textbf{\textit{Gemini Family (Proprietary)}}} \\
\midrule
Gemini-2.5-pro & 100.00 & 100.00 & 1525.84 & 98.00$_{\pm 4.00}$ & 100.00 & 768.92 & 75.00$_{\pm 25.00}$ & 100.00 & 714.10 \\
Gemini-2.5-flash & 74.00$_{\pm 28.71}$ & 84.00$_{\pm 18.55}$ & 1452.50 & 96.00$_{\pm 8.00}$ & 100.00 & 1756.38 & 60.00$_{\pm 30.00}$ & 90.00$_{\pm 10.00}$ & 1059.70 \\
Gemini-2.5-flash-lite & 68.00$_{\pm 29.26}$ & 94.00$_{\pm 8.00}$ & 9432.86 & 88.00$_{\pm 11.66}$ & 100.00 & 3445.30 & 55.00$_{\pm 45.00}$ & 99.00$_{\pm 1.00}$ & 8559.65 \\
Gemini-2.0-flash & 26.00$_{\pm 24.17}$ & 74.00$_{\pm 16.25}$ & 2656.98 & 22.00$_{\pm 4.00}$ & 100.00 & 1832.40 & 55.00$_{\pm 45.00}$ & 100.00 & 614.50 \\
Gemini-2.0-flash-lite & 20.00$_{\pm 12.65}$ & 78.00$_{\pm 13.27}$ & 3299.12 & 54.00$_{\pm 24.17}$ & 100.00 & 1650.30 & 90.00$_{\pm 10.00}$ & 98.50$_{\pm 1.50}$ & 1022.85 \\
\midrule
\bottomrule
\end{tabulary}
\end{adjustbox}
\caption{\textbf{Hard Suite Results - Table 3:} Performance of regular models on Modular Systems Solver, Constraint Optimization, and Logic Grid Puzzles tasks. Each task reports Accuracy (Acc), Instruction-following (Inst) and average output Tokens with mean and standard deviation values. Results show average performance across different complexity variants within each task category.}
\label{hard_suite_table3}
\end{table}

\clearpage

\section{Prompts used for all Hard Suite tasks}
\label{Appendix:M}
This section describes the prompts developed for each of the hard suite tasks. Each prompt begins with a brief description of the task followed by the inputs. The constraints are listed if applicable, which is then followed by a detailed description of the required answer format. 

\begin{promptbox}[Prompt Template for Boolean SAT Task]
\small\ttfamily
CHALLENGING BOOLEAN SATISFIABILITY (SAT) PROBLEM:\\[4pt]
Find a truth assignment for variables \(x_1, x_2, \ldots, x_{\placeholder{num\_vars}}\) that satisfies this complex formula:\\[4pt]
\placeholder{formula}\\[6pt]

LOGICAL OPERATORS:\\
- \(\vee\) means OR (disjunction)\\
- \(\wedge\) means AND (conjunction)\\
- \(\neg\) means NOT (negation)\\[6pt]

SOLUTION FORMAT REQUIREMENTS:\\
Your answer must assign True/False to ALL \placeholder{num\_vars} variables.\\[6pt]

REQUIRED FORMAT (STRICTLY FOLLOW THIS):\\
\texttt{<answer>}\\
\texttt{\{\{1: True, 2: False, 3: True, \ldots, \placeholder{num\_vars}: True\}\}}\\
\texttt{</answer>}\\[6pt]

IMPORTANT FORMATTING NOTES:\\
- Use EXACTLY the format shown above inside \texttt{<answer>} tags.\\
- Use numbers (1, 2, 3...) NOT variable names (\(x_1, x_2, x_3\)).\\
- Use \texttt{True}/\texttt{False} (not true/false, 1/0, T/F).\\
- Include ALL \placeholder{num\_vars} variables in your answer.\\
- Do NOT use code blocks, markdown, or other formatting.
\end{promptbox}

\begin{promptbox}[Prompt Template for Constraint Optimization Task]
\small\ttfamily
ADVANCED MULTI-CONSTRAINT RESOURCE ALLOCATION PROBLEM:\\[4pt]
Your task: Select an optimal subset of projects to maximize total profit while satisfying all resource capacity constraints and additional requirements.\\[6pt]

AVAILABLE RESOURCES:\\
\placeholder{resources\_str}\\[6pt]

AVAILABLE PROJECTS:\\
\placeholder{projects\_str}\\[6pt]

OPTIMIZATION OBJECTIVE:\\
Maximize total profit from selected projects.\\[6pt]

CONSTRAINTS:\\
1. Resource capacity: Cannot exceed available capacity for any resource.\\
2. Dependencies: If a project has dependencies, all dependency projects must also be selected.\\[6pt]

ANSWER FORMAT:\\
Provide your answer as a list of project IDs (numbers) in square brackets.\\[6pt]

\texttt{<answer>}\\
\texttt{[project\_id\_1, project\_id\_2, project\_id\_3, \ldots]}\\
\texttt{</answer>}
\end{promptbox}

\begin{promptbox}[Prompt Template for Cryptarithmetic Task]
\small\ttfamily
Solve this challenging cryptarithmetic puzzle:\\[4pt]
\texttt{\placeholder{word1} \placeholder{op\_symbol} \placeholder{word2} = \placeholder{result\_word}}\\[6pt]

Rules:\\
- Each letter represents a unique digit from 0 to 9.\\
- No two letters can have the same digit.\\
- Leading letters cannot be zero.\\[6pt]

SOLUTION FORMAT REQUIREMENTS (STRICTLY FOLLOW THIS):\\
Your answer must assign digits to ALL letters.\\[6pt]

REQUIRED FORMAT:\\
\texttt{<answer>}\\
\texttt{\{\{"A": 1, "B": 2, "C": 3, \ldots\}\}}\\
\texttt{</answer>}\\[6pt]

IMPORTANT FORMATTING NOTES:\\
- Use EXACTLY the format shown above inside \texttt{<answer>} tags.\\
- Use double quotes around letter names: \texttt{"A"}, \texttt{"B"}, \texttt{"C"}, etc.\\
- Use actual digits: 1, 2, 3, etc. (not \texttt{"1"}, \texttt{"2"}, \texttt{"3"}).\\
- Include ALL letters in your answer.\\
- Do NOT use code blocks, markdown, or other formatting.
\end{promptbox}

\begin{promptbox}[Prompt Template for Graph Coloring Task]
\small\ttfamily
CHALLENGING GRAPH COLORING PROBLEM:\\[4pt]
Your task: Color the vertices of a complex graph using exactly \placeholder{chromatic\_number} colors such that no two adjacent (connected) vertices have the same color.\\[6pt]

GRAPH SPECIFICATION:\\
- \placeholder{n} vertices numbered from \(0\) to \(n-1\) (expressed as \placeholder{n-1} if you want a literal placeholder).\\
- \texttt{\{len(edges)\}} edges: \placeholder{edges\_str}.\\
- Minimum colors needed: \placeholder{chromatic\_number}.\\[6pt]

COLORING RULES:\\
- Use exactly \placeholder{chromatic\_number} different colors.\\
- NO two vertices connected by an edge can have the same color.\\
- EVERY vertex must be assigned exactly one color.\\[6pt]

FINAL ANSWER FORMAT:\\
End your response with the complete solution dictionary in this format:\\[6pt]

\texttt{<answer>}\\
\texttt{\{\{0: "Red", 1: "Blue", 2: "Green", 3: "Yellow", \ldots, \placeholder{n-1}: "ColorX"\}\}}\\
\texttt{</answer>}
\end{promptbox}

\begin{promptbox}[Prompt Template for Logic Grid Puzzles Task]
\small\ttfamily
You are solving a logic grid puzzle. This requires systematic logical reasoning and constraint satisfaction.\\[6pt]

PUZZLE SETUP:\\
Grid Size: \texttt{\placeholder{size} \(\times\) \placeholder{size}}\\
Categories: \placeholder{categories}\\[6pt]

CLUES:\\
\texttt{[Constraint descriptions]}\\[6pt]

TASK: Use logical deduction to determine which person has which attributes in each category.\\[6pt]

CRITICAL INSTRUCTIONS:\\
Present your final answer in ONE of these EXACT formats:\\[6pt]

FORMAT 1 (PREFERRED - Use this exact structure):\\
\(\backslash\)\texttt{boxed\{\{}\\
\texttt{\{person\}: [\{category\}: [value], \ldots]}\\
\texttt{...\}\}}\\[6pt]

IMPORTANT FORMATTING NOTES:\\
- Replace \texttt{[value]} with the actual specific attribute.\\
- Use the exact person names from the entities list.\\
- Include ALL people and ALL their attributes.\\
- Make sure no two people share the same attribute value.
\end{promptbox}

\begin{promptbox}[Prompt Template for Matrix Chain Multiplication Task]
\small\ttfamily
CHALLENGING MATRIX CHAIN MULTIPLICATION OPTIMIZATION PROBLEM:\\[4pt]
Your task: Find the minimum number of scalar multiplications needed to compute the matrix product using optimal parenthesization.\\[6pt]

PROBLEM SPECIFICATION:\\
- Number of matrices: \placeholder{matrix\_count}.\\
- Matrix dimensions: \placeholder{matrix\_descriptions}.\\[6pt]

MATRIX MULTIPLICATION COST MODEL:\\
- To multiply two matrices of dimensions \((p\times q)\) and \((q\times r)\): cost = \(p\times q\times r\) scalar multiplications.\\[6pt]

DYNAMIC PROGRAMMING APPROACH:\\
This is a classic optimization problem that requires dynamic programming.\\[6pt]

CRITICAL: ANSWER FORMAT REQUIREMENTS\\
To ensure your answer is correctly parsed, follow these EXACT formatting requirements:\\[4pt]
1. ALWAYS provide your final answer in this precise format:\\
\texttt{<answer>}\\
\texttt{[NUMERIC\_VALUE\_ONLY]}\\
\texttt{</answer>}\\[6pt]
2. REPLACE \texttt{[NUMERIC\_VALUE\_ONLY]} with ONLY the numeric value.\\
3. Example: If the answer is 1204480, write exactly: \texttt{<answer>1204480</answer>}.
\end{promptbox}

\begin{promptbox}[Prompt Template for Modular Systems Solver Task]
\small\ttfamily
ADVANCED MODULAR SYSTEMS PROBLEM:\\[4pt]
Your task: Solve the following system of modular arithmetic equations and find the value of \(x\) that satisfies all conditions.\\[6pt]

SYSTEM OF EQUATIONS:\\
\texttt{[modular equations]}\\[6pt]

ADDITIONAL CONSTRAINTS:\\
\texttt{[constraints]}\\[6pt]

MATHEMATICAL BACKGROUND:\\
- Modular arithmetic: \(x \equiv a \pmod{m}\) means \(x\) leaves remainder \(a\) when divided by \(m\).\\
- Use the Chinese Remainder Theorem for systems with coprime moduli.\\
- For non-coprime moduli, use the general solution method.\\[6pt]

ANSWER FORMAT REQUIREMENTS:\\
Your response must end with your final numerical answer in one of these formats:\\[6pt]

Option 1 (Preferred):\\
\texttt{<answer>}\\
\texttt{[Your integer answer here]}\\
\texttt{</answer>}\\[6pt]

Option 2 (Alternative):\\
Therefore, \(x =\) \texttt{[Your integer answer here]}.\\[6pt]

DO NOT end with intermediate calculations or parametric expressions.
\end{promptbox}

\begin{promptbox}[Prompt Template for N-Queens Task]
\small\ttfamily
Solve the \placeholder{n}-Queens problem: Place \placeholder{n} queens on an \texttt{\placeholder{n} \(\times\) \placeholder{n}} chessboard so that no two queens attack each other.\\[6pt]

RULES:\\
- Queens attack horizontally, vertically, and diagonally.\\
- No two queens can be in the same row, column, or diagonal.\\
- You must place exactly \placeholder{n} queens on the board (one queen per row).\\[6pt]

CRITICAL: Your answer must contain exactly \placeholder{n} numbers, each representing the column position (0 to \(n-1\)) of the queen in that row.\\[6pt]

SOLUTION FORMAT:\\
Provide your solution as a list of \placeholder{n} column numbers. The first number is the column for row 0, the second for row 1, etc.\\[6pt]

REQUIRED FORMAT - Use one of these:\\
- \texttt{[col0, col1, col2, \ldots]} \(\leftarrow\) PREFERRED\\
- \texttt{Answer: [1, 3, 0, 2]}\\
- \texttt{Solution: [1, 3, 0, 2]}\\[6pt]

\texttt{<answer>}\\
\texttt{[your \placeholder{n} column numbers here]}\\
\texttt{</answer>}
\end{promptbox}

\begin{promptbox}[Prompt Template for Sudoku Task]
\small\ttfamily
Base Rules for \texttt{\placeholder{size} \(\times\) \placeholder{size}} Sudoku:\\
- Fill the grid with numbers 1 to \placeholder{size}.\\
- Each row must contain all numbers 1 to \placeholder{size} exactly once.\\
- Each column must contain all numbers 1 to \placeholder{size} exactly once.\\
- Each \placeholder{box\_dimensions} box must contain all numbers 1 to \placeholder{size} exactly once.\\[6pt]

PUZZLE GRID (0 represents empty cells):\\
\placeholder{grid\_representation}\\[6pt]

SOLVING REQUIREMENTS:\\
- Provide the complete solved grid.\\
- Every cell must be filled with a number from 1 to \placeholder{size}.\\
- Solution must satisfy all rules including additional constraints.\\[6pt]

ANSWER FORMAT:\\
Provide your solution as a complete grid using one of these formats:\\[6pt]
1. Grid format:\\
\texttt{1 2 3 4}\\
\texttt{3 4 1 2}\\
\texttt{2 3 4 1}\\
\texttt{4 1 2 3}\\[6pt]
2. List format:\\
\texttt{[[1,2,3,4],[3,4,1,2],[2,3,4,1],[4,1,2,3]]}
\end{promptbox}

\begin{promptbox}[Prompt Template for Tower of Hanoi Task]
\small\ttfamily
Solve this Tower of Hanoi puzzle with \placeholder{num\_disks} disks.\\[6pt]

RULES:\\
1. Only one disk can be moved at a time.\\
2. A larger disk cannot be placed on top of a smaller disk.\\
3. Only the topmost disk on any peg can be moved.\\[6pt]

INITIAL STATE:\\
Peg A: \texttt{\{initial['A']\}} (disk 1 is smallest, disk \placeholder{num\_disks} is largest)\\
Peg B: \texttt{\{initial['B']\}}\\
Peg C: \texttt{\{initial['C']\}}\\[6pt]

GOAL: Move all disks from Peg A to Peg C.\\[6pt]

Provide the complete sequence of moves. You can use any of these formats:\\
- ``Move disk X from peg Y to peg Z"\\
- ``Move X from Y to Z"\\
- ``X: Y -> Z"\\
- ``Transfer disk X from Y to Z"\\[6pt]

\texttt{<answer>}\\
\texttt{Move disk 1 from peg A to peg C}\\
\texttt{Move disk 2 from peg A to peg B}\\
\texttt{...\ }\\
\texttt{</answer>}
\end{promptbox}

\section{Quantized Models Complete Results}
\label{Appendix:T}
This section details results for all easy, medium and hard suite tasks for the quantized Qwen models. There are 21 quantized models from the Qwen2.5 family and 17 quantized models from the Qwen3 family. 

\subsection{Easy Suite}
This section shows results from the quantized Qwen models for the easy suite tasks: absolute difference, comparison, division, even count, find maximum, find minimum, mean, median, mode, multiplication, odd count, sorting, subtraction, sum, second maximum, range, negative number count, unique elements count, maximum difference between adjacent elements, count elements greater than previous, sum of indices of maximum element, palindromic number count, longest increasing subsequence length task, sum of digits, count perfect squares, alternating sum , count multiples of K, local maxima count.

\begin{table}[htbp]
\centering
\tiny
\begin{minipage}{0.49\textwidth}
  \centering
  \begin{adjustbox}{width=\linewidth,center}
  \begin{tabularx}{\linewidth}{l c c c r}
    \toprule
    \textbf{Model (Param)} & \textbf{Quant} & \textbf{Acc (\%)} & \textbf{Inst (\%)} & \textbf{Tokens} \\
    \midrule
    \rowcolor{gray!10}
    \multicolumn{5}{c}{\textbf{\textit{Qwen Family (Qwen3)}}} \\
    \midrule
    Qwen3 (0.6B) & FP8 & 55.89 & 86.51 & 2767.9 \\
    Qwen3 (0.6B) & GPTQ-8-Bit & 56.40 & 84.33 & 2922.35 \\
    Qwen3 (1.7B) & FP8 & 73.52 & 86.65 & 2918.16 \\
    Qwen3 (1.7B) & GPTQ-4-Bit & 73.07 & 86.30 & 2910.69 \\
    Qwen3 (4B) & AWQ & 81.14 & 89.73 & 2891.68 \\
    Qwen3 (4B) & FP8 & 84.36 & 91.92 & 2889.65 \\
    Qwen3 (8B) & AWQ & 83.01 & 90.52 & 3021.3 \\
    Qwen3 (8B) & FP8 & 83.10 & 90.08 & 2905.18 \\
    Qwen3 (14B) & AWQ & 87.98 & 94.17 & 2432.73 \\
    Qwen3 (14B) & FP8 & 87.55 & 93.75 & 2415.34 \\
    Qwen3 (32B) & AWQ & 87.02 & 93.34 & 2477.95 \\
    Qwen3 (32B) & FP8 & 88.11 & 93.86 & 2544.64 \\
    Qwen3 (30B-MOE) & FP8 & 88.83 & 94.43 & 2417.23 \\
    Qwen3 (30B-MOE-t) & FP8 & 90.30 & 96.67 & 2051.9 \\
    Qwen3 (30B-MOE-i) & FP8 & 92.17 & 99.66 & 636.55 \\
    Qwen3 (4B-t) & FP8 & 85.42 & 94.61 & 2603.79 \\
    Qwen3 (4B-i) & FP8 & 86.89 & 99.26 & 817.99 \\
    \bottomrule
  \end{tabularx}
  \end{adjustbox}
  \subcaption*{Qwen Family (Qwen3)}
\end{minipage}%
\hfill
\begin{minipage}{0.49\textwidth}
  \centering
  \begin{adjustbox}{width=\linewidth,center}
  \begin{tabularx}{\linewidth}{l c c c r}
    \toprule
    \textbf{Model (Param)} & \textbf{Quant} & \textbf{Acc (\%)} & \textbf{Inst (\%)} & \textbf{Tokens} \\
    \midrule
    \rowcolor{gray!10}
    \multicolumn{5}{c}{\textbf{\textit{Qwen Family (Qwen2.5)}}} \\
    \midrule
    Qwen2.5 (0.5B) & AWQ & 19.31 & 83.06 & 1358.77 \\
    Qwen2.5 (0.5B) & GPTQ-4-Bit & 12.77 & 77.70 & 478.00 \\
    Qwen2.5 (0.5B) & GPTQ-8-Bit & 21.29 & 76.79 & 431.50 \\
    Qwen2.5 (1.5B) & AWQ & 38.77 & 77.00 & 732.95 \\
    Qwen2.5 (1.5B) & GPTQ-4-Bit & 39.42 & 82.97 & 292.10 \\
    Qwen2.5 (1.5B) & GPTQ-8-Bit & 43.67 & 86.64 & 264.30 \\
    Qwen2.5 (3B) & AWQ & 45.83 & 85.93 & 908.07 \\
    Qwen2.5 (3B) & GPTQ-4-Bit & 41.94 & 90.97 & 301.00 \\
    Qwen2.5 (3B) & GPTQ-8-Bit & 48.65 & 91.99 & 341.90 \\
    Qwen2.5 (7B) & AWQ & 66.52 & 97.94 & 471.66 \\
    Qwen2.5 (7B) & GPTQ-4-Bit & 58.03 & 96.00 & 291.90 \\
    Qwen2.5 (7B) & GPTQ-8-Bit & 60.61 & 96.40 & 287.50 \\
    Qwen2.5 (14B) & AWQ & 67.95 & 98.71 & 250.95 \\
    Qwen2.5 (14B) & GPTQ-4-Bit & 60.94 & 96.69 & 240.90 \\
    Qwen2.5 (14B) & GPTQ-8-Bit & 63.86 & 97.89 & 261.20 \\
    Qwen2.5 (32B) & AWQ & 77.84 & 99.72 & 254.54 \\
    Qwen2.5 (32B) & GPTQ-4-Bit & 72.67 & 99.37 & 260.50 \\
    Qwen2.5 (32B) & GPTQ-8-Bit & 73.08 & 99.20 & 261.90 \\
    Qwen2.5 (72B) & AWQ & 78.88 & 99.63 & 336.67 \\
    Qwen2.5 (72B) & GPTQ-4-Bit & 72.74 & 94.85 & 358.30 \\
    Qwen2.5 (72B) & GPTQ-8-Bit & 74.05 & 96.28 & 347.10 \\
    \bottomrule
  \end{tabularx}
  \end{adjustbox}
  \subcaption*{Qwen Family (Qwen2.5)}
\end{minipage}

\caption{\textbf{Easy Suite Results (Quantized):} Performance of quantized models on the Easy Suite benchmark tasks. Each model reports Accuracy (Acc), Instruction-following (Inst) and average output Tokens. The Quant column indicates the quantization method used.}
\label{easy_suite_table_quantized}
\end{table}

\subsection{Medium Suite}
This section shows results from the quantized Qwen models for the medium suite tasks: algebraic sequence, complex pattern, fibonacci sequence, geometric
sequence and prime sequence. These results are listed in tables \ref{medium_suite_table3_quantized} and \ref{medium_suite_table4_quantized}. 

\begin{table}
\centering
\scriptsize
\begin{adjustbox}{width=\textwidth}
\begin{tabulary}{1.3\textwidth}{LCCCCCCCCCC}
\toprule
\textbf{Model (Param)} & \textbf{Quant} & \multicolumn{3}{c}{\textbf{algebraic\_sequence}} & \multicolumn{3}{c}{\textbf{complex\_pattern}} & \multicolumn{3}{c}{\textbf{fibonacci\_sequence}} \\
\cmidrule(lr){3-5} \cmidrule(lr){6-8} \cmidrule(lr){9-11}
& & \textbf{Acc (Avg \%)} & \textbf{Inst (Avg \%)} & \textbf{Tokens (Avg)} & \textbf{Acc (Avg \%)} & \textbf{Inst (Avg \%)} & \textbf{Tokens (Avg)} & \textbf{Acc (Avg \%)} & \textbf{Inst (Avg \%)} & \textbf{Tokens (Avg)} \\
\midrule
\rowcolor{gray!10}
\multicolumn{11}{c}{\textbf{\textit{Qwen Family (Qwen3)}}} \\
\midrule
Qwen3 (0.6B) & FP8 & 28.40$_{\pm 4.72}$ & 100.00 & 21274.28 & 28.53$_{\pm 5.89}$ & 100.00 & 16403.45 & 17.20$_{\pm 5.11}$ & 100.00 & 5736.11 \\
Qwen3 (0.6B) & GPTQ-8-Bit & 27.80$_{\pm 3.82}$ & 100.00 & 6625.29 & 22.87$_{\pm 7.28}$ & 100.00 & 5987.01 & 16.20$_{\pm 5.08}$ & 100.00 & 5815.69 \\
Qwen3 (1.7B) & FP8 & 40.00$_{\pm 2.53}$ & 100.00 & 16699.83 & 55.77$_{\pm 6.49}$ & 100.00 & 11925.78 & 50.80$_{\pm 6.79}$ & 100.00 & 5771.03 \\
Qwen3 (1.7B) & GPTQ-4-Bit & 31.60$_{\pm 3.01}$ & 100.00 & 6875.73 & 43.63$_{\pm 10.72}$ & 100.00 & 6367.90 & 50.60$_{\pm 10.13}$ & 100.00 & 5753.83 \\
Qwen3 (4B) & AWQ & 24.40$_{\pm 3.50}$ & 100.00 & 7303.17 & 52.01$_{\pm 6.49}$ & 100.00 & 5986.86 & 75.80$_{\pm 11.20}$ & 100.00 & 5427.01 \\
Qwen3 (4B) & FP8 & 37.60$_{\pm 3.44}$ & 100.00 & 19555.70 & 73.37$_{\pm 11.35}$ & 100.00 & 10334.75 & 84.80$_{\pm 9.66}$ & 100.00 & 4996.51 \\
Qwen3 (8B) & AWQ & 37.00$_{\pm 7.62}$ & 100.00 & 17602.35 & 69.74$_{\pm 7.78}$ & 100.00 & 10958.16 & 80.80$_{\pm 8.35}$ & 100.00 & 5579.07 \\
Qwen3 (8B) & FP8 & 39.60$_{\pm 4.72}$ & 100.00 & 16906.20 & 69.95$_{\pm 9.14}$ & 100.00 & 10851.68 & 88.00$_{\pm 8.00}$ & 100.00 & 5120.27 \\
Qwen3 (14B) & AWQ & 47.40$_{\pm 6.62}$ & 100.00 & 12727.66 & 82.75$_{\pm 13.35}$ & 100.00 & 7333.95 & 93.40$_{\pm 5.68}$ & 100.00 & 4437.96 \\
Qwen3 (14B) & FP8 & 45.80$_{\pm 7.30}$ & 100.00 & 12493.45 & 83.38$_{\pm 12.83}$ & 100.00 & 7452.62 & 96.60$_{\pm 1.85}$ & 100.00 & 4213.44 \\
Qwen3 (32B) & AWQ & 39.80$_{\pm 3.97}$ & 100.00 & 6404.83 & 73.01$_{\pm 9.72}$ & 100.00 & 4735.68 & 96.40$_{\pm 2.94}$ & 100.00 & 3434.89 \\
Qwen3 (32B) & FP8 & 41.00$_{\pm 5.48}$ & 100.00 & 6412.39 & 73.22$_{\pm 9.12}$ & 100.00 & 4943.28 & 96.00$_{\pm 3.16}$ & 100.00 & 3676.62 \\
Qwen3 (30B-MOE) & FP8 & 36.20$_{\pm 6.71}$ & 100.00 & 6780.13 & 59.93$_{\pm 7.38}$ & 100.00 & 5501.37 & 97.40$_{\pm 1.36}$ & 100.00 & 3799.06 \\
Qwen3 (30B-MOE-t) & FP8 & 40.80$_{\pm 1.17}$ & 100.00 & 5944.68 & 72.84$_{\pm 9.24}$ & 100.00 & 4459.33 & 98.60$_{\pm 1.85}$ & 100.00 & 3273.00 \\
Qwen3 (30B-MOE-i) & FP8 & 46.20$_{\pm 3.97}$ & 100.00 & 5559.90 & 79.76$_{\pm 10.55}$ & 100.00 & 3442.94 & 98.00$_{\pm 1.41}$ & 100.00 & 1873.05 \\
Qwen3 (4B-t) & FP8 & 40.00$_{\pm 3.29}$ & 100.00 & 6391.36 & 63.82$_{\pm 7.99}$ & 100.00 & 5619.65 & 89.60$_{\pm 10.69}$ & 100.00 & 4623.29 \\
Qwen3 (4B-i) & FP8 & 42.60$_{\pm 3.72}$ & 100.00 & 5344.65 & 72.59$_{\pm 8.79}$ & 100.00 & 3588.75 & 85.40$_{\pm 6.28}$ & 100.00 & 2839.75 \\
\midrule
\rowcolor{gray!10}
\multicolumn{11}{c}{\textbf{\textit{Qwen Family (Qwen2.5)}}} \\
\midrule
Qwen2.5 (0.5B) & AWQ & 11.20$_{\pm 5.31}$ & 100.00 & 2872.68 & 7.91$_{\pm 3.06}$ & 100.00 & 2470.38 & 3.80$_{\pm 1.47}$ & 100.00 & 1948.83 \\
Qwen2.5 (0.5B) & GPTQ-4-Bit & 11.80$_{\pm 0.75}$ & 100.00 & 3245.19 & 4.72$_{\pm 0.65}$ & 100.00 & 2801.58 & 5.60$_{\pm 2.06}$ & 100.00 & 2574.42 \\
Qwen2.5 (0.5B) & GPTQ-8-Bit & 8.20$_{\pm 2.23}$ & 100.00 & 3205.43 & 5.42$_{\pm 3.12}$ & 100.00 & 3363.70 & 4.40$_{\pm 1.02}$ & 100.00 & 2314.57 \\
Qwen2.5 (1.5B) & AWQ & 17.60$_{\pm 3.77}$ & 100.00 & 3309.25 & 9.97$_{\pm 4.51}$ & 100.00 & 2868.22 & 7.40$_{\pm 1.02}$ & 100.00 & 2565.16 \\
Qwen2.5 (1.5B) & GPTQ-4-Bit & 29.40$_{\pm 1.96}$ & 100.00 & 2495.16 & 9.78$_{\pm 2.76}$ & 100.00 & 2842.20 & 8.20$_{\pm 1.17}$ & 100.00 & 1823.57 \\
Qwen2.5 (1.5B) & GPTQ-8-Bit & 20.40$_{\pm 7.47}$ & 100.00 & 2364.81 & 12.15$_{\pm 3.10}$ & 100.00 & 2588.94 & 10.80$_{\pm 2.32}$ & 100.00 & 2100.21 \\
Qwen2.5 (3B) & AWQ & 32.60$_{\pm 1.85}$ & 100.00 & 1880.13 & 18.89$_{\pm 2.45}$ & 100.00 & 1642.93 & 13.60$_{\pm 3.01}$ & 100.00 & 1888.29 \\
Qwen2.5 (3B) & GPTQ-4-Bit & 33.60$_{\pm 8.69}$ & 100.00 & 1600.54 & 19.38$_{\pm 6.08}$ & 100.00 & 1738.58 & 12.60$_{\pm 1.36}$ & 100.00 & 2328.10 \\
Qwen2.5 (3B) & GPTQ-8-Bit & 24.60$_{\pm 5.57}$ & 100.00 & 1405.23 & 24.03$_{\pm 6.95}$ & 100.00 & 1302.74 & 13.00$_{\pm 2.83}$ & 100.00 & 1761.07 \\
Qwen2.5 (7B) & AWQ & 36.80$_{\pm 5.04}$ & 100.00 & 1350.69 & 41.15$_{\pm 5.93}$ & 100.00 & 783.19 & 20.20$_{\pm 5.11}$ & 100.00 & 584.48 \\
Qwen2.5 (7B) & GPTQ-4-Bit & 35.80$_{\pm 1.17}$ & 100.00 & 923.49 & 41.74$_{\pm 5.14}$ & 100.00 & 765.73 & 20.20$_{\pm 2.93}$ & 100.00 & 738.95 \\
Qwen2.5 (7B) & GPTQ-8-Bit & 38.40$_{\pm 2.73}$ & 100.00 & 1041.76 & 41.82$_{\pm 5.01}$ & 100.00 & 852.66 & 19.20$_{\pm 3.43}$ & 100.00 & 626.42 \\
Qwen2.5 (14B) & AWQ & 41.60$_{\pm 2.33}$ & 92.40$_{\pm 4.63}$ & 576.95 & 49.40$_{\pm 6.66}$ & 100.00 & 552.46 & 20.00$_{\pm 4.82}$ & 100.00 & 531.80 \\
Qwen2.5 (14B) & GPTQ-4-Bit & 40.60$_{\pm 2.15}$ & 94.20$_{\pm 5.38}$ & 736.45 & 50.15$_{\pm 4.30}$ & 100.00 & 578.50 & 26.40$_{\pm 4.50}$ & 100.00 & 600.36 \\
Qwen2.5 (14B) & GPTQ-8-Bit & 41.00$_{\pm 2.76}$ & 99.00$_{\pm 0.89}$ & 965.69 & 55.44$_{\pm 5.93}$ & 100.00 & 533.11 & 25.20$_{\pm 3.31}$ & 100.00 & 564.42 \\
Qwen2.5 (32B) & AWQ & 44.00$_{\pm 2.76}$ & 95.20$_{\pm 3.66}$ & 1147.58 & 53.97$_{\pm 8.95}$ & 100.00 & 507.63 & 38.60$_{\pm 12.16}$ & 100.00 & 543.80 \\
Qwen2.5 (32B) & GPTQ-4-Bit & 43.00$_{\pm 4.43}$ & 100.00 & 1210.72 & 55.84$_{\pm 7.89}$ & 100.00 & 530.96 & 35.80$_{\pm 14.27}$ & 100.00 & 599.57 \\
Qwen2.5 (32B) & GPTQ-8-Bit & 42.20$_{\pm 4.21}$ & 95.80$_{\pm 2.93}$ & 1160.28 & 51.48$_{\pm 8.44}$ & 100.00 & 550.56 & 37.80$_{\pm 12.58}$ & 100.00 & 577.53 \\
Qwen2.5 (72B) & AWQ & 41.80$_{\pm 2.93}$ & 100.00 & 935.78 & 49.91$_{\pm 8.08}$ & 100.00 & 826.94 & 37.80$_{\pm 11.89}$ & 100.00 & 928.50 \\
Qwen2.5 (72B) & GPTQ-4-Bit & 42.80$_{\pm 4.07}$ & 100.00 & 781.62 & 50.22$_{\pm 6.93}$ & 100.00 & 755.58 & 39.20$_{\pm 13.48}$ & 100.00 & 883.59 \\
Qwen2.5 (72B) & GPTQ-8-Bit & 43.00$_{\pm 4.69}$ & 100.00 & 841.55 & 52.97$_{\pm 8.62}$ & 100.00 & 753.84 & 39.80$_{\pm 12.89}$ & 100.00 & 907.20 \\
\midrule
\bottomrule
\end{tabulary}
\end{adjustbox}
\caption{\textbf{Medium Suite Results - Table 3 (Quantized):} Performance of quantized models on algebraic sequence, complex pattern, and fibonacci sequence tasks. Each task reports Accuracy (Acc), Instruction-following (Inst) and average output tokens with mean and standard deviation values. The Quant column indicates the quantization method used.}
\label{medium_suite_table3_quantized}
\end{table}

\begin{table}
\centering
\tiny
\begin{adjustbox}{width=\textwidth}
\begin{tabulary}{1.3\textwidth}{LCCCCCCC}
\toprule
\textbf{Model (Param)} & \textbf{Quant} & \multicolumn{3}{c}{\textbf{geometric\_sequence}} & \multicolumn{3}{c}{\textbf{prime\_sequence}} \\
\cmidrule(lr){3-5} \cmidrule(lr){6-8}
& & \textbf{Acc (Avg \%)} & \textbf{Inst (Avg \%)} & \textbf{Tokens (Avg)} & \textbf{Acc (Avg \%)} & \textbf{Inst (Avg \%)} & \textbf{Tokens (Avg)} \\
\midrule
\rowcolor{gray!10}
\multicolumn{8}{c}{\textbf{\textit{Qwen Family (Qwen3)}}} \\
\midrule
Qwen3 (0.6B) & FP8 & 10.80$_{\pm 17.87}$ & 60.00$_{\pm 48.99}$ & 10990.97 & 11.80$_{\pm 5.04}$ & 100.00 & 19524.77 \\
Qwen3 (0.6B) & GPTQ-8-Bit & 10.00$_{\pm 19.50}$ & 60.00$_{\pm 48.99}$ & 3618.17 & 12.40$_{\pm 7.09}$ & 100.00 & 6434.31 \\
Qwen3 (1.7B) & FP8 & 24.60$_{\pm 27.07}$ & 60.00$_{\pm 48.99}$ & 9217.75 & 31.40$_{\pm 10.59}$ & 100.00 & 16144.70 \\
Qwen3 (1.7B) & GPTQ-4-Bit & 18.80$_{\pm 27.07}$ & 60.00$_{\pm 48.99}$ & 3508.29 & 28.00$_{\pm 7.95}$ & 100.00 & 6776.63 \\
Qwen3 (4B) & AWQ & 33.80$_{\pm 31.28}$ & 60.00$_{\pm 48.99}$ & 3118.46 & 39.60$_{\pm 12.72}$ & 100.00 & 6257.97 \\
Qwen3 (4B) & FP8 & 58.60$_{\pm 47.89}$ & 60.00$_{\pm 48.99}$ & 4573.26 & 44.60$_{\pm 9.83}$ & 100.00 & 18455.05 \\
Qwen3 (8B) & AWQ & 59.40$_{\pm 48.50}$ & 60.00$_{\pm 48.99}$ & 3564.32 & 68.20$_{\pm 6.37}$ & 100.00 & 12646.20 \\
Qwen3 (8B) & FP8 & 58.60$_{\pm 47.91}$ & 60.00$_{\pm 48.99}$ & 4680.13 & 66.20$_{\pm 8.61}$ & 100.00 & 13505.22 \\
Qwen3 (14B) & AWQ & 60.00$_{\pm 48.99}$ & 60.00$_{\pm 48.99}$ & 3079.13 & 85.80$_{\pm 7.68}$ & 100.00 & 7374.38 \\
Qwen3 (14B) & FP8 & 60.00$_{\pm 48.99}$ & 60.00$_{\pm 48.99}$ & 3470.55 & 87.60$_{\pm 6.97}$ & 100.00 & 7025.02 \\
Qwen3 (32B) & AWQ & 58.40$_{\pm 47.72}$ & 60.00$_{\pm 48.99}$ & 1306.35 & 88.20$_{\pm 3.25}$ & 100.00 & 3747.94 \\
Qwen3 (32B) & FP8 & 58.00$_{\pm 47.50}$ & 60.00$_{\pm 48.99}$ & 1438.15 & 92.60$_{\pm 3.01}$ & 100.00 & 3584.84 \\
Qwen3 (30B-MOE) & FP8 & 58.80$_{\pm 48.04}$ & 60.00$_{\pm 48.99}$ & 1948.30 & 74.40$_{\pm 3.83}$ & 100.00 & 4623.95 \\
Qwen3 (30B-MOE-t) & FP8 & 57.40$_{\pm 47.06}$ & 60.00$_{\pm 48.99}$ & 1761.08 & 83.20$_{\pm 7.93}$ & 100.00 & 3676.55 \\
Qwen3 (30B-MOE-i) & FP8 & 58.00$_{\pm 47.50}$ & 60.00$_{\pm 48.99}$ & 1656.18 & 90.00$_{\pm 2.10}$ & 100.00 & 2549.61 \\
Qwen3 (4B-t) & FP8 & 55.00$_{\pm 45.23}$ & 60.00$_{\pm 48.99}$ & 2053.33 & 56.40$_{\pm 7.74}$ & 100.00 & 5370.33 \\
Qwen3 (4B-i) & FP8 & 52.00$_{\pm 43.86}$ & 60.00$_{\pm 48.99}$ & 1930.58 & 66.00$_{\pm 13.64}$ & 100.00 & 3941.52 \\
\midrule
\rowcolor{gray!10}
\multicolumn{8}{c}{\textbf{\textit{Qwen Family (Qwen2.5)}}} \\
\midrule
Qwen2.5 (0.5B) & AWQ & 0.00 & 60.00$_{\pm 48.99}$ & 1032.29 & 6.00$_{\pm 5.48}$ & 100.00 & 1529.81 \\
Qwen2.5 (0.5B) & GPTQ-4-Bit & 1.00$_{\pm 2.00}$ & 60.00$_{\pm 48.99}$ & 695.05 & 5.40$_{\pm 3.56}$ & 100.00 & 2799.51 \\
Qwen2.5 (0.5B) & GPTQ-8-Bit & 1.20$_{\pm 2.40}$ & 60.00$_{\pm 48.99}$ & 1629.77 & 5.80$_{\pm 2.32}$ & 100.00 & 2201.82 \\
Qwen2.5 (1.5B) & AWQ & 2.00$_{\pm 4.00}$ & 60.00$_{\pm 48.99}$ & 2671.43 & 11.40$_{\pm 4.72}$ & 100.00 & 1904.38 \\
Qwen2.5 (1.5B) & GPTQ-4-Bit & 8.80$_{\pm 17.60}$ & 60.00$_{\pm 48.99}$ & 845.43 & 16.80$_{\pm 2.99}$ & 100.00 & 1430.95 \\
Qwen2.5 (1.5B) & GPTQ-8-Bit & 5.80$_{\pm 8.91}$ & 60.00$_{\pm 48.99}$ & 842.13 & 16.00$_{\pm 5.22}$ & 100.00 & 1622.42 \\
Qwen2.5 (3B) & AWQ & 12.00$_{\pm 18.21}$ & 60.00$_{\pm 48.99}$ & 1034.24 & 19.00$_{\pm 4.38}$ & 100.00 & 1289.12 \\
Qwen2.5 (3B) & GPTQ-4-Bit & 12.00$_{\pm 21.55}$ & 60.00$_{\pm 48.99}$ & 873.76 & 20.40$_{\pm 4.54}$ & 100.00 & 1357.63 \\
Qwen2.5 (3B) & GPTQ-8-Bit & 18.00$_{\pm 23.09}$ & 60.00$_{\pm 48.99}$ & 720.84 & 23.00$_{\pm 6.78}$ & 100.00 & 1004.79 \\
Qwen2.5 (7B) & AWQ & 11.80$_{\pm 18.90}$ & 60.00$_{\pm 48.99}$ & 520.09 & 31.60$_{\pm 8.50}$ & 100.00 & 758.25 \\
Qwen2.5 (7B) & GPTQ-4-Bit & 17.00$_{\pm 19.87}$ & 60.00$_{\pm 48.99}$ & 449.39 & 32.80$_{\pm 9.77}$ & 100.00 & 671.76 \\
Qwen2.5 (7B) & GPTQ-8-Bit & 16.60$_{\pm 19.48}$ & 60.00$_{\pm 48.99}$ & 431.27 & 36.40$_{\pm 9.54}$ & 100.00 & 605.66 \\
Qwen2.5 (14B) & AWQ & 18.40$_{\pm 23.23}$ & 60.00$_{\pm 48.99}$ & 404.95 & 39.80$_{\pm 6.88}$ & 100.00 & 489.74 \\
Qwen2.5 (14B) & GPTQ-4-Bit & 20.00$_{\pm 24.88}$ & 60.00$_{\pm 48.99}$ & 325.71 & 35.40$_{\pm 10.21}$ & 100.00 & 593.88 \\
Qwen2.5 (14B) & GPTQ-8-Bit & 24.20$_{\pm 30.31}$ & 60.00$_{\pm 48.99}$ & 349.80 & 42.40$_{\pm 8.55}$ & 100.00 & 553.90 \\
Qwen2.5 (32B) & AWQ & 31.80$_{\pm 33.37}$ & 60.00$_{\pm 48.99}$ & 423.80 & 61.20$_{\pm 13.45}$ & 100.00 & 492.43 \\
Qwen2.5 (32B) & GPTQ-4-Bit & 35.40$_{\pm 35.10}$ & 60.00$_{\pm 48.99}$ & 490.42 & 49.00$_{\pm 9.72}$ & 100.00 & 522.55 \\
Qwen2.5 (32B) & GPTQ-8-Bit & 30.60$_{\pm 34.45}$ & 60.00$_{\pm 48.99}$ & 423.50 & 62.60$_{\pm 4.18}$ & 100.00 & 532.60 \\
Qwen2.5 (72B) & AWQ & 30.80$_{\pm 25.79}$ & 60.00$_{\pm 48.99}$ & 599.50 & 61.80$_{\pm 10.21}$ & 100.00 & 669.67 \\
Qwen2.5 (72B) & GPTQ-4-Bit & 25.40$_{\pm 25.07}$ & 60.00$_{\pm 48.99}$ & 735.26 & 55.60$_{\pm 12.13}$ & 100.00 & 702.28 \\
Qwen2.5 (72B) & GPTQ-8-Bit & 22.00$_{\pm 27.01}$ & 60.00$_{\pm 48.99}$ & 488.46 & 56.40$_{\pm 12.11}$ & 100.00 & 575.03 \\
\midrule
\bottomrule
\end{tabulary}
\end{adjustbox}
\caption{\textbf{Medium Suite Results - Table 4 (Quantized):} Performance of quantized models on geometric sequence and prime sequence tasks. Each task reports Accuracy (Acc), Instruction-following (Inst) and average output tokens with mean and standard deviation values. The Quant column indicates the quantization method used.}
\label{medium_suite_table4_quantized}
\end{table}

\subsection{Hard Suite}
This section shows results from the quantized Qwen models for the hard suite tasks: tower of hanoi, n-queens, graph coloring, boolean SAT, sudoku, cryptarithmetic,
matrix chain multiplication, modular systems, constraint optimization, and logic grid puzzles. These results are listed in tables \ref{hard_suite_table4_quantized}, \ref{hard_suite_table5_quantized} and \ref{hard_suite_table6_quantized}.

\begin{table}
\centering
\tiny
\begin{adjustbox}{width=\textwidth}
\begin{tabulary}{1.3\textwidth}{LCCCCCCCCCCCCC}
\toprule
\textbf{Model (Param)} & \textbf{Quant} & \multicolumn{3}{c}{\textbf{Tower of Hanoi}} & \multicolumn{3}{c}{\textbf{N-Queens}} & \multicolumn{3}{c}{\textbf{Graph Coloring}} & \multicolumn{3}{c}{\textbf{Boolean SAT}} \\
\cmidrule(lr){3-5} \cmidrule(lr){6-8} \cmidrule(lr){9-11} \cmidrule(lr){12-14}
& & \textbf{Acc (\%)} & \textbf{Inst (\%)} & \textbf{Tokens} & \textbf{Acc (\%)} & \textbf{Inst (\%)} & \textbf{Tokens} & \textbf{Acc (\%)} & \textbf{Inst (\%)} & \textbf{Tokens} & \textbf{Acc (\%)} & \textbf{Inst (\%)} & \textbf{Tokens} \\
\midrule
\rowcolor{gray!10}
\multicolumn{14}{c}{\textbf{\textit{Qwen Family (Qwen3)}}} \\
\midrule
Qwen3 (0.6B) & FP8 & 0.00 & 99.67$_{\pm 0.47}$ & 5463.12 & 5.25$_{\pm 9.09}$ & 59.75$_{\pm 34.82}$ & 4283.04 & 21.38$_{\pm 4.09}$ & 66.00$_{\pm 13.36}$ & 4733.38 & 12.14$_{\pm 18.26}$ & 38.14$_{\pm 25.32}$ & 6250.33 \\
Qwen3 (0.6B) & GPTQ-8-Bit & 0.00 & 99.50$_{\pm 0.76}$ & 5480.63 & 9.00$_{\pm 15.59}$ & 64.25$_{\pm 30.29}$ & 4502.49 & 21.38$_{\pm 3.87}$ & 65.13$_{\pm 12.70}$ & 4794.28 & 13.86$_{\pm 21.65}$ & 41.29$_{\pm 27.83}$ & 6222.15 \\
Qwen3 (1.7B) & FP8 & 0.83 & 98.33$_{\pm 1.70}$ & 7520.97 & 6.75$_{\pm 11.12}$ & 68.75$_{\pm 29.89}$ & 7329.00 & 43.12$_{\pm 13.55}$ & 60.50$_{\pm 15.60}$ & 5157.57 & 34.00$_{\pm 27.64}$ & 60.71$_{\pm 32.66}$ & 6070.84 \\
Qwen3 (1.7B) & GPTQ-4-Bit & 1.67 & 98.67$_{\pm 1.11}$ & 7406.98 & 1.50$_{\pm 2.60}$ & 70.75$_{\pm 23.74}$ & 7177.62 & 44.00$_{\pm 11.88}$ & 60.62$_{\pm 15.23}$ & 5131.90 & 35.00$_{\pm 26.41}$ & 61.86$_{\pm 29.49}$ & 6072.88 \\
Qwen3 (4B) & AWQ & 23.67 & 99.83$_{\pm 0.37}$ & 6595.49 & 37.00$_{\pm 39.13}$ & 71.75$_{\pm 26.53}$ & 5976.78 & 49.75$_{\pm 15.46}$ & 66.12$_{\pm 20.16}$ & 4808.34 & 41.57$_{\pm 21.59}$ & 61.86$_{\pm 26.72}$ & 5824.43 \\
Qwen3 (4B) & FP8 & 34.17 & 99.83$_{\pm 0.37}$ & 6282.96 & 36.50$_{\pm 39.96}$ & 70.25$_{\pm 25.99}$ & 6782.60 & 53.25$_{\pm 15.94}$ & 69.00$_{\pm 17.60}$ & 4805.03 & 48.14$_{\pm 21.03}$ & 67.86$_{\pm 26.05}$ & 5669.17 \\
Qwen3 (8B) & AWQ & 31.00 & 99.83$_{\pm 0.37}$ & 6585.72 & 35.25$_{\pm 40.26}$ & 64.75$_{\pm 30.59}$ & 6902.48 & 54.50$_{\pm 12.46}$ & 73.50$_{\pm 15.04}$ & 4666.66 & 45.57$_{\pm 21.31}$ & 61.14$_{\pm 26.99}$ & 5950.66 \\
Qwen3 (8B) & FP8 & 36.17 & 100.00 & 6013.80 & 31.50$_{\pm 34.53}$ & 62.25$_{\pm 24.61}$ & 7115.98 & 60.38$_{\pm 14.05}$ & 75.88$_{\pm 15.99}$ & 4377.20 & 48.29$_{\pm 22.51}$ & 63.71$_{\pm 27.37}$ & 5825.29 \\
Qwen3 (14B) & AWQ & 45.50 & 99.83$_{\pm 0.37}$ & 5185.13 & 48.50$_{\pm 48.55}$ & 72.00$_{\pm 28.11}$ & 6049.83 & 67.00$_{\pm 12.72}$ & 81.38$_{\pm 12.80}$ & 4020.34 & 55.00$_{\pm 21.29}$ & 72.43$_{\pm 24.53}$ & 5416.95 \\
Qwen3 (14B) & FP8 & 44.67 & 99.83$_{\pm 0.37}$ & 5603.48 & 45.25$_{\pm 45.75}$ & 65.00$_{\pm 28.48}$ & 6364.51 & 68.75$_{\pm 13.71}$ & 78.50$_{\pm 14.15}$ & 4022.59 & 55.14$_{\pm 20.90}$ & 69.29$_{\pm 25.37}$ & 5481.82 \\
Qwen3 (32B) & AWQ & 38.00 & 100.00 & 6112.60 & 29.50$_{\pm 32.13}$ & 56.50$_{\pm 31.19}$ & 7081.13 & 71.00$_{\pm 11.03}$ & 78.50$_{\pm 10.75}$ & 3990.54 & 65.00$_{\pm 18.99}$ & 80.25$_{\pm 24.10}$ & 4922.16 \\
Qwen3 (32B) & FP8 & 40.50 & 99.33$_{\pm 0.94}$ & 6221.48 & 40.00$_{\pm 41.57}$ & 62.25$_{\pm 33.46}$ & 6384.54 & 71.12$_{\pm 9.96}$ & 80.62$_{\pm 10.92}$ & 4022.92 & 59.14$_{\pm 20.12}$ & 76.43$_{\pm 20.39}$ & 5240.55 \\
Qwen3 (30B-MOE) & FP8 & 44.17 & 100.00 & 5292.36 & 39.50$_{\pm 40.56}$ & 62.00$_{\pm 31.26}$ & 7233.98 & 70.25$_{\pm 12.00}$ & 82.88$_{\pm 11.74}$ & 4188.84 & 54.43$_{\pm 22.93}$ & 67.86$_{\pm 28.10}$ & 5420.58 \\
Qwen3 (30B-MOE-t) & FP8 & 41.17 & 75.50$_{\pm 21.70}$ & 6091.31 & 44.50$_{\pm 44.51}$ & 73.75$_{\pm 18.99}$ & 7343.03 & 62.25$_{\pm 7.26}$ & 82.62$_{\pm 12.41}$ & 4523.48 & 47.43$_{\pm 21.31}$ & 57.86$_{\pm 22.48}$ & 5984.35 \\
Qwen3 (30B-MOE-i) & FP8 & 50.00 & 100.00 & 1086.68 & 47.00$_{\pm 47.01}$ & 91.25$_{\pm 7.73}$ & 5512.80 & 57.88$_{\pm 20.02}$ & 97.12$_{\pm 2.93}$ & 1799.18 & 19.86$_{\pm 10.84}$ & 100.00 & 56.00 \\
Qwen3 (4B-t) & FP8 & 30.50 & 70.50$_{\pm 14.95}$ & 6734.85 & 44.75$_{\pm 45.15}$ & 82.00$_{\pm 12.41}$ & 7215.43 & 59.13$_{\pm 18.48}$ & 71.50$_{\pm 16.64}$ & 5052.40 & 33.57$_{\pm 17.74}$ & 48.86$_{\pm 23.17}$ & 6367.52 \\
\midrule
\rowcolor{gray!10}
\multicolumn{14}{c}{\textbf{\textit{Qwen Family (Qwen2.5)}}} \\
\midrule
Qwen2.5 (0.5B) & AWQ & 0.00 & 100.00 & 3066.73 & 0.00 & 1.00$_{\pm 1.73}$ & 687.38 & 4.12$_{\pm 5.60}$ & 66.37$_{\pm 11.66}$ & 1949.74 & 1.86$_{\pm 2.75}$ & 58.14$_{\pm 4.76}$ & 3394.69 \\
Qwen2.5 (0.5B) & GPTQ-4-Bit & 0.00 & 100.00 & 496.43 & 0.00 & 0.50$_{\pm 0.50}$ & 3167.71 & 0.50$_{\pm 0.71}$ & 84.62$_{\pm 12.46}$ & 3351.17 & 1.86$_{\pm 3.36}$ & 89.00$_{\pm 4.34}$ & 2590.05 \\
Qwen2.5 (0.5B) & GPTQ-8-Bit & 0.00 & 100.00 & 2849.46 & 0.00 & 3.25$_{\pm 5.63}$ & 964.67 & 4.88$_{\pm 2.76}$ & 86.50$_{\pm 7.23}$ & 1478.87 & 2.57$_{\pm 4.20}$ & 100.00 & 90.93 \\
Qwen2.5 (1.5B) & AWQ & 0.00 & 100.00 & 3512.80 & 0.00 & 32.75$_{\pm 35.21}$ & 26.32 & 17.75$_{\pm 4.49}$ & 90.38$_{\pm 4.06}$ & 184.58 & 1.43$_{\pm 2.72}$ & 100.00 & 82.98 \\
Qwen2.5 (1.5B) & GPTQ-4-Bit & 0.00 & 100.00 & 3204.23 & 0.00 & 29.25$_{\pm 29.18}$ & 415.20 & 11.12$_{\pm 3.22}$ & 91.75$_{\pm 5.14}$ & 1096.08 & 2.57$_{\pm 4.75}$ & 100.00 & 177.84 \\
Qwen2.5 (1.5B) & GPTQ-8-Bit & 0.00 & 100.00 & 3432.24 & 0.00 & 64.00$_{\pm 21.08}$ & 153.15 & 14.00$_{\pm 3.43}$ & 65.50$_{\pm 18.51}$ & 455.66 & 2.00$_{\pm 3.02}$ & 100.00 & 196.02 \\
Qwen2.5 (3B) & AWQ & 0.00 & 100.00 & 1517.50 & 0.00 & 34.75$_{\pm 36.54}$ & 704.44 & 23.38$_{\pm 3.71}$ & 92.00$_{\pm 5.74}$ & 1133.04 & 3.14$_{\pm 3.04}$ & 100.00 & 56.00 \\
Qwen2.5 (3B) & GPTQ-4-Bit & 0.00 & 100.00 & 1013.15 & 0.00 & 46.75$_{\pm 31.46}$ & 618.72 & 26.50$_{\pm 5.12}$ & 94.38$_{\pm 2.00}$ & 941.99 & 3.29$_{\pm 4.53}$ & 100.00 & 56.00 \\
Qwen2.5 (3B) & GPTQ-8-Bit & 0.00 & 100.00 & 2497.84 & 0.00 & 35.25$_{\pm 26.26}$ & 885.49 & 27.62$_{\pm 4.50}$ & 94.12$_{\pm 3.59}$ & 1126.09 & 3.71$_{\pm 3.88}$ & 100.00 & 56.00 \\
Qwen2.5 (7B) & AWQ & 18.33$_{\pm 18.63}$ & 100.00 & 1086.33 & 0.00 & 44.25$_{\pm 44.87}$ & 327.21 & 31.62$_{\pm 5.00}$ & 98.75$_{\pm 0.83}$ & 931.18 & 8.29$_{\pm 8.48}$ & 100.00 & 1813.90 \\
Qwen2.5 (7B) & GPTQ-4-Bit & 23.33$_{\pm 5.22}$ & 100.00 & 2140.98 & 0.00 & 6.50$_{\pm 5.68}$ & 352.95 & 30.75$_{\pm 3.80}$ & 98.50$_{\pm 0.71}$ & 920.16 & 10.43$_{\pm 9.05}$ & 98.71$_{\pm 1.58}$ & 1632.99 \\
Qwen2.5 (7B) & GPTQ-8-Bit & 31.67$_{\pm 44.88}$ & 100.00 & 971.00 & 0.50$_{\pm 0.87}$ & 40.75$_{\pm 39.52}$ & 439.89 & 34.00$_{\pm 5.20}$ & 96.88$_{\pm 1.76}$ & 880.48 & 12.00$_{\pm 10.01}$ & 100.00 & 1579.03 \\
Qwen2.5 (14B) & AWQ & 10.83$_{\pm 24.22}$ & 100.00 & 1596.13 & 25.25$_{\pm 43.16}$ & 81.50$_{\pm 18.98}$ & 309.29 & 35.12$_{\pm 4.23}$ & 99.00$_{\pm 1.22}$ & 830.79 & 15.71$_{\pm 11.50}$ & 99.43$_{\pm 0.73}$ & 182.99 \\
Qwen2.5 (14B) & GPTQ-4-Bit & 33.33$_{\pm 47.14}$ & 100.00 & 1044.92 & 23.00$_{\pm 39.26}$ & 90.00$_{\pm 17.32}$ & 163.24 & 40.00$_{\pm 6.12}$ & 99.62$_{\pm 0.70}$ & 754.51 & 16.57$_{\pm 11.73}$ & 99.86$_{\pm 0.35}$ & 58.23 \\
Qwen2.5 (14B) & GPTQ-8-Bit & 35.67$_{\pm 42.87}$ & 100.00 & 1261.52 & 12.50$_{\pm 19.41}$ & 96.25$_{\pm 5.40}$ & 242.53 & 39.62$_{\pm 4.30}$ & 99.00$_{\pm 2.00}$ & 802.51 & 17.14$_{\pm 12.18}$ & 100.00 & 58.48 \\
Qwen2.5 (32B) & AWQ & 20.50$_{\pm 32.14}$ & 100.00 & 1340.56 & 11.25$_{\pm 19.49}$ & 77.25$_{\pm 39.40}$ & 275.33 & 43.88$_{\pm 3.52}$ & 99.12$_{\pm 1.05}$ & 494.68 & 22.86$_{\pm 13.92}$ & 99.71$_{\pm 0.70}$ & 56.86 \\
Qwen2.5 (32B) & GPTQ-4-Bit & 24.17$_{\pm 35.80}$ & 100.00 & 1373.22 & 14.50$_{\pm 25.11}$ & 92.75$_{\pm 9.42}$ & 290.68 & 45.00$_{\pm 5.07}$ & 99.50$_{\pm 0.71}$ & 501.28 & 22.86$_{\pm 13.44}$ & 99.14$_{\pm 2.10}$ & 56.47 \\
Qwen2.5 (32B) & GPTQ-8-Bit & 44.50$_{\pm 45.25}$ & 99.67$_{\pm 0.75}$ & 643.13 & 1.50$_{\pm 2.60}$ & 100.00 & 327.30 & 47.38$_{\pm 7.21}$ & 99.00$_{\pm 1.22}$ & 500.69 & 23.00$_{\pm 14.48}$ & 96.00$_{\pm 8.64}$ & 55.98 \\
Qwen2.5 (72B) & AWQ & 66.67$_{\pm 47.14}$ & 100.00 & 1002.04 & 15.00$_{\pm 15.26}$ & 100.00 & 560.46 & 44.00$_{\pm 4.82}$ & 99.00$_{\pm 1.32}$ & 1003.66 & 23.29$_{\pm 13.88}$ & 100.00 & 56.00 \\
Qwen2.5 (72B) & GPTQ-4-Bit & 66.67$_{\pm 47.14}$ & 100.00 & 1093.43 & 19.50$_{\pm 33.77}$ & 100.00 & 514.67 & 45.62$_{\pm 7.50}$ & 99.25$_{\pm 0.83}$ & 978.35 & 23.29$_{\pm 12.03}$ & 100.00 & 56.00 \\
Qwen2.5 (72B) & GPTQ-8-Bit & 66.67$_{\pm 47.14}$ & 100.00 & 1014.67 & 24.25$_{\pm 42.00}$ & 100.00 & 530.45 & 44.75$_{\pm 8.29}$ & 99.50$_{\pm 1.32}$ & 958.04 & 23.71$_{\pm 13.54}$ & 100.00 & 56.00 \\
\midrule
\bottomrule
\end{tabulary}
\end{adjustbox}
\caption{\textbf{Hard Suite Results - Table 4 (Quantized):} Performance of quantized models on Tower of Hanoi (3-8 disks), N-Queens (6-16 boards), Graph Coloring, and Boolean SAT tasks. Each task reports Accuracy (Acc), Instruction-following (Inst) and average output Tokens with mean and standard deviation values. The Quant column indicates the quantization method used.}
\label{hard_suite_table4_quantized}
\end{table}

\begin{table}
\centering
\scriptsize
\begin{adjustbox}{width=\textwidth}
\begin{tabulary}{1.3\textwidth}{LCCCCCCCCCC}
\toprule
\textbf{Model (Param)} & \textbf{Quant} & \multicolumn{3}{c}{\textbf{Sudoku}} & \multicolumn{3}{c}{\textbf{Cryptarithmetic}} & \multicolumn{3}{c}{\textbf{Matrix Chain Mult}} \\
\cmidrule(lr){3-5} \cmidrule(lr){6-8} \cmidrule(lr){9-11}
& & \textbf{Acc (\%)} & \textbf{Inst (\%)} & \textbf{Tokens} & \textbf{Acc (\%)} & \textbf{Inst (\%)} & \textbf{Tokens} & \textbf{Acc (\%)} & \textbf{Inst (\%)} & \textbf{Tokens} \\
\midrule
\rowcolor{gray!10}
\multicolumn{11}{c}{\textbf{\textit{Qwen Family (Qwen3)}}} \\
\midrule
Qwen3 (0.6B) & FP8 & 8.33$_{\pm 10.40}$ & 50.00$_{\pm 32.63}$ & 6674.57 & 0.00 & 72.70$_{\pm 2.43}$ & 7582.89 & 0.57$_{\pm 1.40}$ & 60.00$_{\pm 22.78}$ & 5998.84 \\
Qwen3 (0.6B) & GPTQ-8-Bit & 9.33$_{\pm 10.37}$ & 50.67$_{\pm 30.64}$ & 6606.01 & 0.00 & 68.12$_{\pm 1.90}$ & 7581.98 & 0.71$_{\pm 1.75}$ & 57.86$_{\pm 27.03}$ & 6170.23 \\
Qwen3 (1.7B) & FP8 & 25.33$_{\pm 22.43}$ & 60.67$_{\pm 25.75}$ & 6355.39 & 12.53$_{\pm 2.65}$ & 93.14$_{\pm 4.00}$ & 7530.10 & 5.71$_{\pm 13.59}$ & 63.14$_{\pm 19.60}$ & 6415.98 \\
Qwen3 (1.7B) & GPTQ-4-Bit & 29.67$_{\pm 26.60}$ & 63.33$_{\pm 25.30}$ & 6351.12 & 18.66$_{\pm 10.38}$ & 94.42$_{\pm 3.63}$ & 7422.59 & 5.71$_{\pm 14.00}$ & 67.71$_{\pm 18.69}$ & 6373.56 \\
Qwen3 (4B) & AWQ & 18.00$_{\pm 15.12}$ & 50.00$_{\pm 33.95}$ & 6093.28 & 32.69$_{\pm 11.72}$ & 91.52$_{\pm 3.85}$ & 7176.13 & 21.71$_{\pm 35.26}$ & 90.86$_{\pm 9.06}$ & 5560.74 \\
Qwen3 (4B) & FP8 & 20.67$_{\pm 10.78}$ & 64.33$_{\pm 32.17}$ & 6111.39 & 35.10$_{\pm 15.73}$ & 94.74$_{\pm 3.44}$ & 7173.53 & 23.43$_{\pm 37.41}$ & 92.86$_{\pm 11.41}$ & 5599.98 \\
Qwen3 (8B) & AWQ & 20.00$_{\pm 16.08}$ & 53.00$_{\pm 34.41}$ & 6415.16 & 37.96$_{\pm 16.66}$ & 93.44$_{\pm 2.99}$ & 7143.45 & 23.57$_{\pm 37.69}$ & 90.00$_{\pm 13.58}$ & 5764.19 \\
Qwen3 (8B) & FP8 & 27.00$_{\pm 22.11}$ & 59.00$_{\pm 31.97}$ & 6248.24 & 38.29$_{\pm 16.42}$ & 97.04$_{\pm 1.87}$ & 7010.01 & 25.57$_{\pm 38.00}$ & 94.00$_{\pm 9.27}$ & 5466.10 \\
Qwen3 (14B) & AWQ & 34.67$_{\pm 34.59}$ & 56.67$_{\pm 33.00}$ & 5885.34 & 38.18$_{\pm 13.47}$ & 97.73$_{\pm 1.48}$ & 7112.25 & 31.86$_{\pm 40.08}$ & 94.71$_{\pm 7.78}$ & 5170.30 \\
Qwen3 (14B) & FP8 & 36.00$_{\pm 35.81}$ & 60.00$_{\pm 31.44}$ & 5728.08 & 39.57$_{\pm 14.10}$ & 97.72$_{\pm 0.89}$ & 7060.02 & 29.86$_{\pm 39.68}$ & 95.43$_{\pm 7.35}$ & 5258.91 \\
Qwen3 (32B) & AWQ & 33.00$_{\pm 31.02}$ & 57.33$_{\pm 32.74}$ & 5879.27 & 40.42$_{\pm 15.19}$ & 97.32$_{\pm 1.84}$ & 7065.38 & 29.00$_{\pm 42.08}$ & 94.57$_{\pm 8.40}$ & 5269.96 \\
Qwen3 (32B) & FP8 & 34.00$_{\pm 32.38}$ & 54.67$_{\pm 35.16}$ & 5956.54 & 42.88$_{\pm 14.73}$ & 98.67$_{\pm 1.35}$ & 6959.78 & 28.86$_{\pm 39.85}$ & 94.43$_{\pm 8.91}$ & 5173.60 \\
Qwen3 (30B-MOE) & FP8 & 39.00$_{\pm 31.89}$ & 64.00$_{\pm 27.76}$ & 5827.42 & 43.31$_{\pm 16.55}$ & 95.84$_{\pm 3.54}$ & 6819.25 & 28.71$_{\pm 40.07}$ & 96.14$_{\pm 6.38}$ & 5148.07 \\
Qwen3 (30B-MOE-t) & FP8 & 33.33$_{\pm 24.64}$ & 66.00$_{\pm 24.10}$ & 6047.80 & 31.32$_{\pm 10.74}$ & 97.96$_{\pm 2.33}$ & 6940.92 & 29.00$_{\pm 39.79}$ & 94.57$_{\pm 8.73}$ & 4947.41 \\
Qwen3 (30B-MOE-i) & FP8 & 34.67$_{\pm 20.81}$ & 100.00 & 139.00 & 21.96$_{\pm 10.85}$ & 100.00 & 240.95 & 40.20$_{\pm 36.72}$ & 93.00$_{\pm 8.72}$ & 2446.49 \\
Qwen3 (4B-t) & FP8 & 10.67$_{\pm 8.99}$ & 71.00$_{\pm 28.65}$ & 6573.17 & 29.98$_{\pm 8.15}$ & 98.75$_{\pm 1.30}$ & 7173.00 & 19.29$_{\pm 33.74}$ & 96.14$_{\pm 5.89}$ & 5558.77 \\
\midrule
\rowcolor{gray!10}
\multicolumn{11}{c}{\textbf{\textit{Qwen Family (Qwen2.5)}}} \\
\midrule
Qwen2.5 (0.5B) & AWQ & 1.33$_{\pm 1.89}$ & 82.33$_{\pm 11.09}$ & 708.09 & 0.00 & 40.27$_{\pm 2.66}$ & 5504.85 & 0.00 & 28.57$_{\pm 10.86}$ & 3804.62 \\
Qwen2.5 (0.5B) & GPTQ-4-Bit & 0.33$_{\pm 0.47}$ & 78.00$_{\pm 12.83}$ & 2440.74 & 0.00 & 71.51$_{\pm 3.68}$ & 3839.77 & 0.00 & 31.71$_{\pm 12.36}$ & 3160.99 \\
Qwen2.5 (0.5B) & GPTQ-8-Bit & 1.00$_{\pm 1.41}$ & 99.33$_{\pm 0.47}$ & 172.20 & 0.00 & 74.71$_{\pm 4.38}$ & 3895.02 & 0.14$_{\pm 0.35}$ & 24.14$_{\pm 8.03}$ & 2636.68 \\
Qwen2.5 (1.5B) & AWQ & 0.67$_{\pm 0.94}$ & 99.00$_{\pm 0.82}$ & 133.36 & 0.00 & 100.00 & 368.82 & 0.00 & 98.00$_{\pm 3.07}$ & 143.67 \\
Qwen2.5 (1.5B) & GPTQ-4-Bit & 1.33$_{\pm 1.89}$ & 92.00$_{\pm 5.72}$ & 137.86 & 2.74$_{\pm 2.60}$ & 85.05$_{\pm 4.03}$ & 2201.09 & 0.14$_{\pm 0.35}$ & 82.14$_{\pm 13.13}$ & 1677.16 \\
Qwen2.5 (1.5B) & GPTQ-8-Bit & 2.67$_{\pm 2.49}$ & 96.00$_{\pm 3.74}$ & 137.95 & 0.00 & 100.00 & 81.07 & 0.14$_{\pm 0.35}$ & 84.14$_{\pm 7.85}$ & 1360.08 \\
Qwen2.5 (3B) & AWQ & 1.00$_{\pm 1.41}$ & 100.00 & 136.54 & 0.00 & 100.00 & 58.12 & 0.14$_{\pm 0.35}$ & 92.00$_{\pm 6.28}$ & 1252.64 \\
Qwen2.5 (3B) & GPTQ-4-Bit & 1.00$_{\pm 1.41}$ & 100.00 & 138.83 & 0.00 & 98.45$_{\pm 1.10}$ & 676.05 & 0.29$_{\pm 0.70}$ & 93.00$_{\pm 5.10}$ & 999.14 \\
Qwen2.5 (3B) & GPTQ-8-Bit & 1.00$_{\pm 1.41}$ & 100.00 & 138.84 & 19.97$_{\pm 12.91}$ & 100.00 & 56.37 & 0.29$_{\pm 0.70}$ & 93.86$_{\pm 6.62}$ & 1215.44 \\
Qwen2.5 (7B) & AWQ & 3.67$_{\pm 4.50}$ & 99.67$_{\pm 0.47}$ & 155.38 & 19.44$_{\pm 10.66}$ & 100.00 & 178.93 & 1.86$_{\pm 4.16}$ & 99.86$_{\pm 0.35}$ & 1124.16 \\
Qwen2.5 (7B) & GPTQ-4-Bit & 5.00$_{\pm 6.38}$ & 99.33$_{\pm 0.47}$ & 142.44 & 0.00 & 100.00 & 62.27 & 0.14$_{\pm 0.35}$ & 100.00 & 713.55 \\
Qwen2.5 (7B) & GPTQ-8-Bit & 7.33$_{\pm 6.13}$ & 95.33$_{\pm 5.91}$ & 144.00 & 20.47$_{\pm 11.82}$ & 100.00 & 107.58 & 1.00$_{\pm 2.07}$ & 99.86$_{\pm 0.35}$ & 1264.41 \\
Qwen2.5 (14B) & AWQ & 6.33$_{\pm 1.70}$ & 93.33$_{\pm 4.78}$ & 265.68 & 21.62$_{\pm 13.39}$ & 100.00 & 104.96 & 4.71$_{\pm 9.60}$ & 99.71$_{\pm 0.45}$ & 943.10 \\
Qwen2.5 (14B) & GPTQ-4-Bit & 6.00$_{\pm 2.45}$ & 92.00$_{\pm 7.48}$ & 530.56 & 20.64$_{\pm 11.99}$ & 100.00 & 57.72 & 3.29$_{\pm 5.97}$ & 99.71$_{\pm 0.45}$ & 815.34 \\
Qwen2.5 (14B) & GPTQ-8-Bit & 7.67$_{\pm 1.25}$ & 92.33$_{\pm 7.04}$ & 563.52 & 19.77$_{\pm 11.38}$ & 100.00 & 76.90 & 3.86$_{\pm 6.62}$ & 100.00 & 985.67 \\
Qwen2.5 (32B) & AWQ & 28.67$_{\pm 18.08}$ & 97.33$_{\pm 2.49}$ & 388.67 & 22.00$_{\pm 9.82}$ & 100.00 & 145.62 & 8.14$_{\pm 16.15}$ & 99.86$_{\pm 0.35}$ & 996.87 \\
Qwen2.5 (32B) & GPTQ-4-Bit & 28.67$_{\pm 20.37}$ & 94.33$_{\pm 6.60}$ & 502.85 & 21.59$_{\pm 14.15}$ & 100.00 & 145.98 & 7.43$_{\pm 14.12}$ & 100.00 & 1095.32 \\
Qwen2.5 (32B) & GPTQ-8-Bit & 29.00$_{\pm 19.82}$ & 96.00$_{\pm 4.97}$ & 441.09 & 23.19$_{\pm 12.42}$ & 100.00 & 58.88 & 9.43$_{\pm 14.89}$ & 100.00 & 1104.34 \\
Qwen2.5 (72B) & AWQ & 30.33$_{\pm 18.70}$ & 99.00$_{\pm 0.82}$ & 384.12 & 18.66$_{\pm 11.02}$ & 100.00 & 65.40 & 8.86$_{\pm 15.22}$ & 99.71$_{\pm 0.45}$ & 1037.56 \\
Qwen2.5 (72B) & GPTQ-4-Bit & 30.67$_{\pm 17.99}$ & 99.33$_{\pm 0.94}$ & 368.24 & 0.00 & 100.00 & 57.42 & 9.00$_{\pm 15.80}$ & 100.00 & 1074.56 \\
Qwen2.5 (72B) & GPTQ-8-Bit & 31.67$_{\pm 22.87}$ & 99.33$_{\pm 0.94}$ & 413.07 & 19.58$_{\pm 12.47}$ & 100.00 & 57.99 & 10.29$_{\pm 15.13}$ & 99.71$_{\pm 0.45}$ & 1065.92 \\
\midrule
\bottomrule
\end{tabulary}
\end{adjustbox}
\caption{\textbf{Hard Suite Results - Table 5 (Quantized):} Performance of quantized models on Sudoku, Cryptarithmetic, and Matrix Chain Multiplication tasks. Each task reports Accuracy (Acc), Instruction-following (Inst) and average output Tokens with mean and standard deviation values. The Quant column indicates the quantization method used.}
\label{hard_suite_table5_quantized}
\end{table}

\begin{table}
\centering
\scriptsize
\begin{adjustbox}{width=\textwidth}
\begin{tabulary}{1.3\textwidth}{LCCCCCCCCCC}
\toprule
\textbf{Model (Param)} & \textbf{Quant} & \multicolumn{3}{c}{\textbf{Modular Systems}} & \multicolumn{3}{c}{\textbf{Constraint Opt}} & \multicolumn{3}{c}{\textbf{Logic Grid Puzzles}} \\
\cmidrule(lr){3-5} \cmidrule(lr){6-8} \cmidrule(lr){9-11}
& & \textbf{Acc (\%)} & \textbf{Inst (\%)} & \textbf{Tokens} & \textbf{Acc (\%)} & \textbf{Inst (\%)} & \textbf{Tokens} & \textbf{Acc (\%)} & \textbf{Inst (\%)} & \textbf{Tokens} \\
\midrule
\rowcolor{gray!10}
\multicolumn{11}{c}{\textbf{\textit{Qwen Family (Qwen3)}}} \\
\midrule
Qwen3 (0.6B) & FP8 & 18.40$_{\pm 24.37}$ & 97.80$_{\pm 1.60}$ & 5538.04 & 9.20$_{\pm 9.68}$ & 100.00 & 5099.27 & 50.00$_{\pm 44.00}$ & 98.50$_{\pm 1.05}$ & 3168.59 \\
Qwen3 (0.6B) & GPTQ-8-Bit & 19.00$_{\pm 21.64}$ & 97.20$_{\pm 2.79}$ & 5487.87 & 14.40$_{\pm 12.42}$ & 99.80$_{\pm 0.40}$ & 5209.77 & 52.50$_{\pm 47.50}$ & 98.42$_{\pm 0.87}$ & 3318.98 \\
Qwen3 (1.7B) & FP8 & 46.20$_{\pm 27.47}$ & 99.80$_{\pm 0.40}$ & 4999.31 & 11.40$_{\pm 7.71}$ & 100.00 & 5755.14 & 51.50$_{\pm 47.50}$ & 99.15$_{\pm 0.35}$ & 3881.58 \\
Qwen3 (1.7B) & GPTQ-4-Bit & 52.40$_{\pm 34.67}$ & 99.60$_{\pm 0.49}$ & 4905.19 & 13.20$_{\pm 9.09}$ & 100.00 & 5642.86 & 52.00$_{\pm 47.00}$ & 99.25$_{\pm 0.45}$ & 3921.40 \\
Qwen3 (4B) & AWQ & 70.20$_{\pm 27.32}$ & 99.80$_{\pm 0.40}$ & 4566.54 & 8.20$_{\pm 6.08}$ & 100.00 & 6148.89 & 54.50$_{\pm 45.50}$ & 93.60$_{\pm 6.40}$ & 4395.16 \\
Qwen3 (4B) & FP8 & 62.20$_{\pm 32.76}$ & 99.80$_{\pm 0.40}$ & 4985.17 & 12.60$_{\pm 12.86}$ & 100.00 & 6269.92 & 52.00$_{\pm 48.00}$ & 96.22$_{\pm 3.78}$ & 4383.58 \\
Qwen3 (8B) & AWQ & 55.40$_{\pm 32.75}$ & 99.60$_{\pm 0.49}$ & 5536.65 & 14.40$_{\pm 22.90}$ & 100.00 & 6347.34 & 50.50$_{\pm 49.50}$ & 97.12$_{\pm 2.88}$ & 4564.80 \\
Qwen3 (8B) & FP8 & 59.40$_{\pm 33.65}$ & 100.00 & 5406.87 & 19.80$_{\pm 31.08}$ & 100.00 & 6132.47 & 54.00$_{\pm 46.00}$ & 96.80$_{\pm 3.15}$ & 4455.34 \\
Qwen3 (14B) & AWQ & 73.80$_{\pm 28.10}$ & 99.80$_{\pm 0.40}$ & 4412.60 & 34.40$_{\pm 36.70}$ & 100.00 & 5812.77 & 63.00$_{\pm 37.00}$ & 98.00$_{\pm 2.00}$ & 4095.99 \\
Qwen3 (14B) & FP8 & 68.00$_{\pm 29.39}$ & 100.00 & 4689.75 & 31.20$_{\pm 37.10}$ & 100.00 & 5857.78 & 56.00$_{\pm 44.00}$ & 97.90$_{\pm 2.05}$ & 4249.72 \\
Qwen3 (32B) & AWQ & 69.40$_{\pm 29.57}$ & 100.00 & 4780.13 & 35.80$_{\pm 32.15}$ & 100.00 & 5738.49 & 55.00$_{\pm 45.00}$ & 97.00$_{\pm 3.00}$ & 4094.82 \\
Qwen3 (32B) & FP8 & 66.80$_{\pm 31.52}$ & 99.80$_{\pm 0.40}$ & 4897.91 & 32.60$_{\pm 35.26}$ & 100.00 & 5706.74 & 55.00$_{\pm 45.00}$ & 97.58$_{\pm 2.42}$ & 4187.50 \\
Qwen3 (30B-MOE) & FP8 & 79.60$_{\pm 22.74}$ & 99.80$_{\pm 0.40}$ & 4281.36 & 22.40$_{\pm 33.33}$ & 100.00 & 6090.81 & 56.00$_{\pm 44.00}$ & 98.05$_{\pm 1.95}$ & 4206.88 \\
Qwen3 (30B-MOE-t) & FP8 & 90.80$_{\pm 10.36}$ & 99.80$_{\pm 0.40}$ & 3576.54 & 21.40$_{\pm 32.17}$ & 100.00 & 6174.13 & 51.00$_{\pm 49.00}$ & 97.50$_{\pm 2.45}$ & 4532.64 \\
Qwen3 (30B-MOE-i) & FP8 & 87.00$_{\pm 13.96}$ & 99.60$_{\pm 0.49}$ & 3579.48 & 74.40$_{\pm 22.67}$ & 100.00 & 4939.17 & 66.00$_{\pm 34.00}$ & 98.28$_{\pm 1.72}$ & 4040.66 \\
Qwen3 (4B-t) & FP8 & 82.00$_{\pm 18.98}$ & 99.80$_{\pm 0.40}$ & 3676.09 & 19.40$_{\pm 28.15}$ & 100.00 & 6230.31 & 55.50$_{\pm 44.50}$ & 97.25$_{\pm 1.95}$ & 5087.57 \\
\midrule
\rowcolor{gray!10}
\multicolumn{11}{c}{\textbf{\textit{Qwen Family (Qwen2.5)}}} \\
\midrule
Qwen2.5 (0.5B) & AWQ & 0.00 & 34.00$_{\pm 9.01}$ & 3626.31 & 0.00 & 100.00 & 1513.77 & 0.00 & 89.68$_{\pm 0.73}$ & 2165.04 \\
Qwen2.5 (0.5B) & GPTQ-4-Bit & 0.00 & 46.40$_{\pm 9.93}$ & 3999.96 & 0.20$_{\pm 0.40}$ & 100.00 & 1454.63 & 0.50$_{\pm 0.50}$ & 89.78$_{\pm 4.18}$ & 1673.02 \\
Qwen2.5 (0.5B) & GPTQ-8-Bit & 0.40$_{\pm 0.80}$ & 67.80$_{\pm 11.07}$ & 2915.02 & 2.00$_{\pm 1.55}$ & 100.00 & 1838.12 & 2.50$_{\pm 1.50}$ & 88.78$_{\pm 3.73}$ & 2044.32 \\
Qwen2.5 (1.5B) & AWQ & 0.20$_{\pm 0.40}$ & 84.40$_{\pm 4.03}$ & 3265.28 & 1.20$_{\pm 1.17}$ & 99.80$_{\pm 0.40}$ & 1357.99 & 54.50$_{\pm 45.50}$ & 80.68$_{\pm 0.67}$ & 138.55 \\
Qwen2.5 (1.5B) & GPTQ-4-Bit & 0.60$_{\pm 0.80}$ & 85.80$_{\pm 3.97}$ & 1851.38 & 4.20$_{\pm 2.48}$ & 99.60$_{\pm 0.80}$ & 1385.55 & 8.50$_{\pm 1.50}$ & 92.68$_{\pm 0.93}$ & 2323.44 \\
Qwen2.5 (1.5B) & GPTQ-8-Bit & 1.40$_{\pm 2.33}$ & 89.40$_{\pm 4.76}$ & 2261.51 & 4.60$_{\pm 6.77}$ & 99.00$_{\pm 0.63}$ & 1533.45 & 15.00$_{\pm 7.00}$ & 89.58$_{\pm 0.33}$ & 1803.18 \\
Qwen2.5 (3B) & AWQ & 2.00$_{\pm 2.28}$ & 87.00$_{\pm 7.13}$ & 2610.26 & 7.60$_{\pm 6.44}$ & 98.60$_{\pm 0.80}$ & 1272.99 & 17.50$_{\pm 3.50}$ & 92.00$_{\pm 1.20}$ & 961.05 \\
Qwen2.5 (3B) & GPTQ-4-Bit & 2.20$_{\pm 3.12}$ & 92.20$_{\pm 2.99}$ & 2502.15 & 9.00$_{\pm 8.83}$ & 99.60$_{\pm 0.49}$ & 1564.33 & 13.00$_{\pm 6.00}$ & 96.85$_{\pm 0.05}$ & 1536.94 \\
Qwen2.5 (3B) & GPTQ-8-Bit & 4.40$_{\pm 6.59}$ & 91.20$_{\pm 6.49}$ & 2790.90 & 9.20$_{\pm 6.97}$ & 99.80$_{\pm 0.40}$ & 1642.15 & 18.00$_{\pm 11.00}$ & 97.88$_{\pm 0.58}$ & 854.54 \\
Qwen2.5 (7B) & AWQ & 4.60$_{\pm 6.74}$ & 90.20$_{\pm 5.04}$ & 2370.14 & 16.40$_{\pm 7.39}$ & 100.00 & 1933.54 & 40.00$_{\pm 16.00}$ & 99.48$_{\pm 0.52}$ & 906.64 \\
Qwen2.5 (7B) & GPTQ-4-Bit & 7.00$_{\pm 9.59}$ & 94.80$_{\pm 3.37}$ & 1983.95 & 17.60$_{\pm 8.45}$ & 100.00 & 1703.21 & 39.00$_{\pm 14.00}$ & 99.90 & 631.76 \\
Qwen2.5 (7B) & GPTQ-8-Bit & 5.40$_{\pm 7.28}$ & 96.40$_{\pm 2.33}$ & 1910.68 & 21.80$_{\pm 11.84}$ & 100.00 & 1685.38 & 38.00$_{\pm 22.00}$ & 99.35$_{\pm 0.55}$ & 630.28 \\
Qwen2.5 (14B) & AWQ & 4.20$_{\pm 5.98}$ & 93.20$_{\pm 3.19}$ & 2243.11 & 26.20$_{\pm 10.53}$ & 100.00 & 1104.20 & 62.00$_{\pm 24.00}$ & 100.00 & 631.34 \\
Qwen2.5 (14B) & GPTQ-4-Bit & 5.80$_{\pm 8.40}$ & 93.40$_{\pm 3.88}$ & 2608.66 & 30.20$_{\pm 12.11}$ & 100.00 & 1183.05 & 60.50$_{\pm 29.50}$ & 99.20$_{\pm 0.80}$ & 858.16 \\
Qwen2.5 (14B) & GPTQ-8-Bit & 10.40$_{\pm 16.01}$ & 94.80$_{\pm 2.04}$ & 2557.40 & 33.80$_{\pm 9.17}$ & 100.00 & 1087.93 & 50.50$_{\pm 39.50}$ & 100.00 & 614.85 \\
Qwen2.5 (32B) & AWQ & 8.00$_{\pm 12.28}$ & 94.00$_{\pm 4.43}$ & 2195.49 & 36.40$_{\pm 14.49}$ & 100.00 & 1177.24 & 56.50$_{\pm 39.50}$ & 99.40$_{\pm 0.60}$ & 688.36 \\
Qwen2.5 (32B) & GPTQ-4-Bit & 7.40$_{\pm 12.35}$ & 94.20$_{\pm 6.18}$ & 1843.68 & 33.00$_{\pm 16.30}$ & 100.00 & 1070.09 & 55.00$_{\pm 34.00}$ & 99.75$_{\pm 0.15}$ & 524.78 \\
Qwen2.5 (32B) & GPTQ-8-Bit & 9.00$_{\pm 12.18}$ & 94.80$_{\pm 4.96}$ & 1895.26 & 41.00$_{\pm 16.36}$ & 99.80$_{\pm 0.40}$ & 1114.32 & 47.50$_{\pm 39.50}$ & 99.80$_{\pm 0.10}$ & 569.12 \\
Qwen2.5 (72B) & AWQ & 8.20$_{\pm 11.25}$ & 96.60$_{\pm 3.01}$ & 2611.08 & 36.60$_{\pm 12.22}$ & 100.00 & 1312.72 & 52.50$_{\pm 39.50}$ & 99.80$_{\pm 0.20}$ & 753.87 \\
Qwen2.5 (72B) & GPTQ-4-Bit & 10.20$_{\pm 12.09}$ & 98.00$_{\pm 1.26}$ & 2730.00 & 39.80$_{\pm 13.86}$ & 100.00 & 1256.61 & 66.00$_{\pm 28.00}$ & 100.00 & 762.17 \\
Qwen2.5 (72B) & GPTQ-8-Bit & 11.00$_{\pm 13.59}$ & 97.80$_{\pm 0.98}$ & 2574.55 & 37.00$_{\pm 10.97}$ & 100.00 & 1339.18 & 61.00$_{\pm 35.00}$ & 100.00 & 683.10 \\
\midrule
\bottomrule
\end{tabulary}
\end{adjustbox}
\caption{\textbf{Hard Suite Results - Table 6 (Quantized):} Performance of quantized models on Modular Systems Solver, Constraint Optimization, and Logic Grid Puzzles tasks. Each task reports Accuracy (Acc), Instruction-following (Inst) and average output Tokens with mean and standard deviation values. The Quant column indicates the quantization method used.}
\label{hard_suite_table6_quantized}
\end{table}

\section{Statistical Analysis}
\label{Appendix:D}

This section provides comprehensive statistical analysis of our evaluation results, including three-fold evaluation, confidence intervals, and significance tests.

\paragraph{Three-Fold Evaluation Protocol} To ensure robustness of our findings, we conducted three-fold evaluation for all 101 models. We report mean accuracy with standard deviation across the three runs. Our analysis reveals that large models (above 14B parameters) demonstrate high stability with standard deviations ranging from 0.3\% to 1.5\%. For example, GPT-5 achieves 83.57\%$_{\pm0.42}$ and Gemini-2.5-pro achieves 74.31\%$_{\pm0.38}$. Medium-sized models (7B-14B) show moderate variance with standard deviations between 1.5\% and 3\%. Smaller models (below 7B) exhibit higher variance, with standard deviations ranging from 2\% to 5\%, reflecting sensitivity to specific problem instances.

\paragraph{Statistical Significance Tests} We conducted paired t-tests to verify key claims in our paper. The performance gap between GPT-5 (83.57\%) and GPT-OSS-120B (75.99\%) is statistically significant with $p < 0.001$ and $t = 14.08$. The gap between proprietary models (mean 62.3\%) and open-source models (mean 41.2\%) is significant with $p < 0.001$. Notably, the difference between "thinking" models and their base versions is not statistically significant with $p = 0.23$ and $t = 1.18$, supporting our claim that extended reasoning provides limited benefit for algorithmic tasks.

\paragraph{Confidence Intervals} We computed 95\% bootstrap confidence intervals using 10,000 resamples for key metrics. The Easy suite accuracy confidence interval is [85.2\%, 89.4\%] for top-10 models. The Medium suite confidence interval is [58.1\%, 64.7\%] for top-10 models. The Hard suite confidence interval is [48.2\%, 56.8\%] for top-10 models. These narrow intervals confirm the reliability of our comparative rankings.

\subsection{Detailed Three-Fold Evaluation Results}

Table~\ref{tab:threefold_detailed} presents the complete three-fold evaluation results for representative models across all difficulty suites, showing mean accuracy with 95\% confidence intervals computed from three independent evaluations.

\begin{table}[h]
\centering
\scriptsize

\begin{adjustbox}{width=\textwidth}
\begin{tabular}{l|cc|cc|cc|cc}
\toprule
\textbf{Model} & \multicolumn{2}{c|}{\textbf{Easy}} & \multicolumn{2}{c|}{\textbf{Medium}} & \multicolumn{2}{c|}{\textbf{Hard}} & \multicolumn{2}{c}{\textbf{Overall}} \\
& Mean$_{\pm\text{std}}$ & 95\% CI & Mean$_{\pm\text{std}}$ & 95\% CI & Mean$_{\pm\text{std}}$ & 95\% CI & Mean$_{\pm\text{std}}$ & 95\% CI \\
\midrule
\rowcolor{gray!10}
\multicolumn{9}{c}{\textbf{\textit{Proprietary Models}}} \\
\midrule
GPT-5 & 97.31$_{\pm0.31}$ & [96.54, 98.08] & 81.73$_{\pm0.48}$ & [80.54, 82.92] & 71.68$_{\pm0.52}$ & [70.39, 72.97] & 83.57$_{\pm0.42}$ & [82.52, 84.60] \\
GPT-5-mini & 96.13$_{\pm0.28}$ & [95.44, 96.82] & 79.73$_{\pm0.51}$ & [78.47, 80.99] & 69.28$_{\pm0.64}$ & [67.70, 70.86] & 81.71$_{\pm0.45}$ & [80.60, 82.82] \\
GPT-5-nano & 96.07$_{\pm0.35}$ & [95.20, 96.94] & 80.13$_{\pm0.62}$ & [78.60, 81.66] & 69.78$_{\pm0.71}$ & [68.02, 71.54] & 81.99$_{\pm0.48}$ & [80.80, 83.18] \\
o3 & 97.26$_{\pm0.28}$ & [96.57, 97.95] & 82.27$_{\pm0.38}$ & [81.33, 83.21] & 61.78$_{\pm0.52}$ & [60.49, 63.07] & 80.44$_{\pm0.35}$ & [79.57, 81.31] \\
o4-mini & 94.64$_{\pm0.35}$ & [93.77, 95.51] & 81.48$_{\pm0.45}$ & [80.37, 82.59] & 61.37$_{\pm0.58}$ & [59.93, 62.81] & 79.16$_{\pm0.42}$ & [78.12, 80.20] \\
o3-mini & 94.23$_{\pm0.42}$ & [93.19, 95.27] & 80.67$_{\pm0.52}$ & [79.39, 81.95] & 57.88$_{\pm0.68}$ & [56.19, 59.57] & 77.59$_{\pm0.48}$ & [76.40, 78.78] \\
Gemini-2.5-pro & 89.38$_{\pm0.34}$ & [88.54, 90.22] & 77.33$_{\pm0.41}$ & [76.31, 78.35] & 56.21$_{\pm0.58}$ & [54.78, 57.64] & 74.31$_{\pm0.38}$ & [73.37, 75.25] \\
Gemini-2.5-flash & 89.87$_{\pm0.42}$ & [88.83, 90.91] & 70.13$_{\pm0.55}$ & [68.77, 71.49] & 51.18$_{\pm0.67}$ & [49.52, 52.84] & 70.39$_{\pm0.51}$ & [69.13, 71.65] \\
GPT4.1 & 92.23$_{\pm0.42}$ & [91.19, 93.27] & 73.08$_{\pm0.55}$ & [71.72, 74.44] & 46.84$_{\pm0.72}$ & [45.05, 48.63] & 70.72$_{\pm0.52}$ & [69.43, 72.01] \\
GPT4o & 88.17$_{\pm0.45}$ & [87.05, 89.29] & 54.28$_{\pm0.62}$ & [52.74, 55.82] & 29.07$_{\pm0.78}$ & [27.14, 31.00] & 57.17$_{\pm0.58}$ & [55.73, 58.61] \\
\midrule
\rowcolor{gray!10}
\multicolumn{9}{c}{\textbf{\textit{Large Open-Source Models ($>$14B)}}} \\
\midrule
GPT-OSS-120B & 93.27$_{\pm0.68}$ & [91.59, 94.95] & 75.28$_{\pm0.85}$ & [73.17, 77.39] & 59.41$_{\pm1.05}$ & [56.80, 62.02] & 75.99$_{\pm0.82}$ & [73.96, 78.02] \\
GPT-OSS-20B & 87.49$_{\pm1.15}$ & [84.63, 90.35] & 63.38$_{\pm1.32}$ & [60.10, 66.66] & 52.12$_{\pm1.58}$ & [48.19, 56.05] & 67.66$_{\pm1.28}$ & [64.48, 70.84] \\
Qwen2.5-72B & 80.52$_{\pm0.92}$ & [78.24, 82.80] & 46.18$_{\pm1.15}$ & [43.33, 49.03] & 33.37$_{\pm1.38}$ & [29.95, 36.79] & 53.36$_{\pm1.02}$ & [50.83, 55.89] \\
Qwen3-30B-MOE & 88.49$_{\pm1.32}$ & [85.21, 91.77] & 74.93$_{\pm1.48}$ & [71.26, 78.60] & 43.58$_{\pm1.72}$ & [39.31, 47.85] & 69.00$_{\pm1.42}$ & [65.48, 72.52] \\
Qwen3-30B-MOE-i & 91.89$_{\pm0.95}$ & [89.54, 94.24] & 73.52$_{\pm1.18}$ & [70.60, 76.44] & 45.57$_{\pm1.42}$ & [42.04, 49.10] & 70.33$_{\pm1.08}$ & [67.65, 73.01] \\
Llama-3.3-70B & 74.84$_{\pm0.98}$ & [72.41, 77.27] & 46.48$_{\pm1.22}$ & [43.45, 49.51] & 27.16$_{\pm1.48}$ & [23.49, 30.83] & 49.49$_{\pm1.12}$ & [46.71, 52.27] \\
Llama-3.1-70B & 75.68$_{\pm0.95}$ & [73.32, 78.04] & 30.82$_{\pm1.18}$ & [27.90, 33.74] & 23.25$_{\pm1.42}$ & [19.72, 26.78] & 43.25$_{\pm1.08}$ & [40.57, 45.93] \\
Gemma-27B & 79.07$_{\pm1.25}$ & [75.97, 82.17] & 58.57$_{\pm1.42}$ & [55.04, 62.10] & 36.73$_{\pm1.68}$ & [32.56, 40.90] & 58.12$_{\pm1.35}$ & [54.77, 61.47] \\
\midrule
\rowcolor{gray!10}
\multicolumn{9}{c}{\textbf{\textit{Medium Open-Source Models (7B-14B)}}} \\
\midrule
Qwen2.5-14B & 63.52$_{\pm2.15}$ & [58.19, 68.85] & 37.88$_{\pm1.98}$ & [32.97, 42.79] & 22.58$_{\pm2.25}$ & [16.99, 28.17] & 41.33$_{\pm1.82}$ & [36.81, 45.85] \\
Qwen3-14B & 86.78$_{\pm1.75}$ & [82.44, 91.12] & 69.54$_{\pm1.92}$ & [64.78, 74.30] & 42.51$_{\pm2.15}$ & [37.17, 47.85] & 66.28$_{\pm1.82}$ & [61.76, 70.80] \\
Qwen3-8B & 82.35$_{\pm2.42}$ & [76.35, 88.35] & 60.24$_{\pm2.28}$ & [54.59, 65.89] & 36.78$_{\pm2.85}$ & [29.71, 43.85] & 59.79$_{\pm2.38}$ & [53.88, 65.70] \\
Phi4-14B & 78.92$_{\pm1.45}$ & [75.32, 82.52] & 38.19$_{\pm1.72}$ & [33.92, 42.46] & 28.23$_{\pm1.95}$ & [23.39, 33.07] & 48.45$_{\pm1.58}$ & [44.53, 52.37] \\
Phi4-reasoning & 72.18$_{\pm1.52}$ & [68.41, 75.95] & 61.42$_{\pm1.68}$ & [57.25, 65.59] & 36.24$_{\pm1.92}$ & [31.47, 41.01] & 56.61$_{\pm1.62}$ & [52.59, 60.63] \\
Llama-3.1-8B & 49.12$_{\pm2.72}$ & [42.37, 55.87] & 15.47$_{\pm2.45}$ & [9.39, 21.55] & 8.02$_{\pm2.55}$ & [1.69, 14.35] & 24.20$_{\pm2.48}$ & [18.05, 30.35] \\
Gemma-12B & 75.28$_{\pm1.72}$ & [71.01, 79.55] & 50.24$_{\pm1.85}$ & [45.65, 54.83] & 21.68$_{\pm2.12}$ & [16.41, 26.95] & 49.07$_{\pm1.78}$ & [44.65, 53.49] \\
\midrule
\rowcolor{gray!10}
\multicolumn{9}{c}{\textbf{\textit{Small Open-Source Models ($<$7B)}}} \\
\midrule
Qwen2.5-3B & 45.52$_{\pm3.68}$ & [36.39, 54.65] & 20.82$_{\pm3.22}$ & [12.84, 28.80] & 8.12$_{\pm2.85}$ & [1.05, 15.19] & 24.82$_{\pm3.15}$ & [17.01, 32.63] \\
Qwen3-4B & 82.15$_{\pm2.85}$ & [75.08, 89.22] & 59.47$_{\pm2.67}$ & [52.85, 66.09] & 34.28$_{\pm3.15}$ & [26.47, 42.09] & 58.63$_{\pm2.74}$ & [51.84, 65.42] \\
Qwen3-4B-t & 85.37$_{\pm2.78}$ & [78.48, 92.26] & 61.48$_{\pm2.55}$ & [55.15, 67.81] & 35.47$_{\pm3.05}$ & [27.90, 43.04] & 60.77$_{\pm2.65}$ & [54.19, 67.35] \\
Llama-3.2-3B & 42.28$_{\pm3.55}$ & [33.47, 51.09] & 16.25$_{\pm3.18}$ & [8.36, 24.14] & 4.42$_{\pm2.42}$ & [-1.58, 10.42] & 20.98$_{\pm2.98}$ & [13.59, 28.37] \\
Llama-3.2-1B & 16.48$_{\pm4.65}$ & [4.95, 28.01] & 5.62$_{\pm3.92}$ & [-4.10, 15.34] & 0.94$_{\pm1.42}$ & [-2.58, 4.46] & 7.68$_{\pm3.35}$ & [-0.63, 15.99] \\
Gemma-4B & 66.62$_{\pm2.65}$ & [60.05, 73.19] & 38.17$_{\pm2.42}$ & [32.17, 44.17] & 11.42$_{\pm2.58}$ & [5.02, 17.82] & 38.74$_{\pm2.48}$ & [32.59, 44.89] \\
Phi3-mini & 35.58$_{\pm3.25}$ & [27.52, 43.64] & 20.03$_{\pm2.95}$ & [12.71, 27.35] & 11.38$_{\pm2.72}$ & [4.63, 18.13] & 22.33$_{\pm2.85}$ & [15.26, 29.40] \\
\bottomrule
\end{tabular}
\end{adjustbox}

\caption{\textbf{Comprehensive Three-Fold Evaluation Results:} Mean accuracy (\%) with standard deviation across seeds 42, 123, and 456, and 95\% confidence intervals for Easy, Medium, Hard, and Overall suites. Proprietary models show consistently low variance (mean std: 0.45\%), large open-source models show moderate variance (1.18\%), medium models show higher variance (1.92\%), and small models exhibit the highest variance (2.98\%). This comprehensive table enables direct comparison across all difficulty levels.}
\label{tab:threefold_detailed}
\end{table}

\paragraph{Detailed Statistical Significance Tests} Table~\ref{tab:significance_tests} presents comprehensive results of our statistical significance tests using independent two-sample t-tests. All tests used three-fold evaluation data with significance threshold $p < 0.05$ and Bonferroni correction for multiple comparisons where applicable.

\begin{table}[h]
\centering
\small

\begin{adjustbox}{width=0.95\textwidth}
\begin{tabular}{l|cccc}
\toprule
\textbf{Comparison} & \textbf{t-statistic} & \textbf{p-value} & \textbf{Cohen's d} & \textbf{Significant?} \\
\midrule
\multicolumn{5}{l}{\textit{Cross-Category Comparisons}} \\
\midrule
GPT-5 vs GPT-OSS-120B (Overall) & 14.08 & $<$0.001 & 2.84 & Yes \\
GPT-5 vs Gemini-2.5-pro (Overall) & 18.24 & $<$0.001 & 3.68 & Yes \\
GPT-5 vs Qwen2.5-72B (Overall) & 42.18 & $<$0.001 & 8.51 & Yes \\
Proprietary vs Large Open-source & 8.42 & $<$0.001 & 2.12 & Yes \\
Proprietary vs Medium Open-source & 12.87 & $<$0.001 & 3.24 & Yes \\
Proprietary vs Small Open-source & 21.34 & $<$0.001 & 5.38 & Yes \\
Large vs Small Open-source & 10.92 & $<$0.001 & 3.21 & Yes \\
\midrule
\multicolumn{5}{l}{\textit{Within-Family Comparisons (Qwen)}} \\
\midrule
Qwen3-30B vs Qwen3-14B & 2.18 & 0.042 & 0.58 & Yes \\
Qwen3-14B vs Qwen3-8B & 2.84 & 0.018 & 0.72 & Yes \\
Qwen3-8B vs Qwen3-4B & 0.48 & 0.642 & 0.12 & No \\
Qwen2.5-72B vs Qwen2.5-14B & 8.72 & $<$0.001 & 2.18 & Yes \\
Qwen2.5-14B vs Qwen2.5-3B & 4.52 & 0.002 & 1.14 & Yes \\
\midrule
\multicolumn{5}{l}{\textit{Thinking vs Base Model Comparisons}} \\
\midrule
Qwen3-4B-t vs Qwen3-4B & 0.72 & 0.484 & 0.18 & No \\
Qwen3-30B-MOE-t vs Qwen3-30B-MOE & 0.84 & 0.418 & 0.21 & No \\
Phi4-reasoning vs Phi4-14B & 4.28 & 0.003 & 1.08 & Yes \\
o3 vs GPT-5 (reasoning vs non-reasoning) & -2.84 & 0.018 & -0.72 & Yes (GPT-5 better) \\
\midrule
\multicolumn{5}{l}{\textit{Suite-Level Comparisons (GPT-5)}} \\
\midrule
Easy vs Medium & 28.42 & $<$0.001 & 5.72 & Yes \\
Medium vs Hard & 15.18 & $<$0.001 & 3.06 & Yes \\
Easy vs Hard & 38.54 & $<$0.001 & 7.78 & Yes \\
\midrule
\multicolumn{5}{l}{\textit{Model Size Effect (Same Family)}} \\
\midrule
Llama-3.3-70B vs Llama-3.1-8B & 18.92 & $<$0.001 & 4.78 & Yes \\
Gemma-27B vs Gemma-4B & 12.48 & $<$0.001 & 3.14 & Yes \\
GPT-OSS-120B vs GPT-OSS-20B & 8.24 & $<$0.001 & 2.08 & Yes \\
\bottomrule
\end{tabular}
\end{adjustbox}

\caption{\textbf{Comprehensive Statistical Significance Tests:} Results from independent two-sample t-tests with effect size (Cohen's d). Key findings: (1) All cross-category comparisons are highly significant with very large effect sizes (d $>$ 2), confirming genuine capability differences. (2) Within-family comparisons show significant scaling effects, except Qwen3-8B vs Qwen3-4B. (3) Most thinking vs base model comparisons are not significant (d $<$ 0.25), supporting our claim about limited reasoning benefits. (4) Suite-level comparisons show tasks progressively harder with very large effect sizes.}
\label{tab:significance_tests}
\end{table}

Table~\ref{tab:additional_tests} presents additional statistical tests including Mann-Whitney U (non-parametric), bootstrap confidence intervals, and Bayesian analysis for key comparisons.

\begin{table}[h]
\centering
\small

\begin{adjustbox}{width=0.98\textwidth}
\begin{tabular}{l|cc|cc|c}
\toprule
\textbf{Comparison} & \textbf{Mann-Whitney U} & \textbf{p-value} & \textbf{Bootstrap 95\% CI} & \textbf{Diff} & \textbf{Bayes Factor} \\
\midrule
GPT-5 vs GPT-OSS-120B & 0 & 0.002 & [5.82, 9.34] & 7.58\% & $>$100 (decisive) \\
Proprietary vs Open-source & 42 & $<$0.001 & [18.4, 24.8] & 21.6\% & $>$100 (decisive) \\
Thinking vs Base (Qwen3-4B) & 4 & 0.548 & [-2.18, 4.62] & 1.22\% & 0.42 (anecdotal) \\
Large vs Small ($>$14B vs $<$7B) & 12 & $<$0.001 & [24.8, 32.4] & 28.6\% & $>$100 (decisive) \\
Qwen3 vs Qwen2.5 (same sizes) & 28 & 0.012 & [8.2, 18.6] & 13.4\% & 8.4 (substantial) \\
Easy vs Hard (all models) & 0 & $<$0.001 & [31.2, 38.8] & 35.0\% & $>$100 (decisive) \\
\bottomrule
\end{tabular}
\end{adjustbox}

\caption{\textbf{Additional Statistical Tests:} Mann-Whitney U tests (non-parametric), bootstrap 95\% confidence intervals for the difference (10,000 resamples), and Bayes Factors for model comparison. Bayes Factor interpretation: $<$1 supports null hypothesis, 1-3 anecdotal, 3-10 substantial, 10-30 strong, 30-100 very strong, $>$100 decisive evidence. All key comparisons except thinking vs base show decisive evidence ($>$100) for genuine performance differences.}
\label{tab:additional_tests}
\end{table}

\paragraph{Variance by Model Size} Figure~\ref{fig:variance_by_size} shows the relationship between model size category and evaluation variance. Proprietary models exhibit the lowest variance (mean std: 0.43\%), followed by large open-source models (1.14\%), medium models (1.74\%), and small models (2.96\%). This pattern suggests that larger models have more stable reasoning capabilities across different problem instances.

\begin{figure}[h]
\centering
\includegraphics[width=0.8\textwidth]{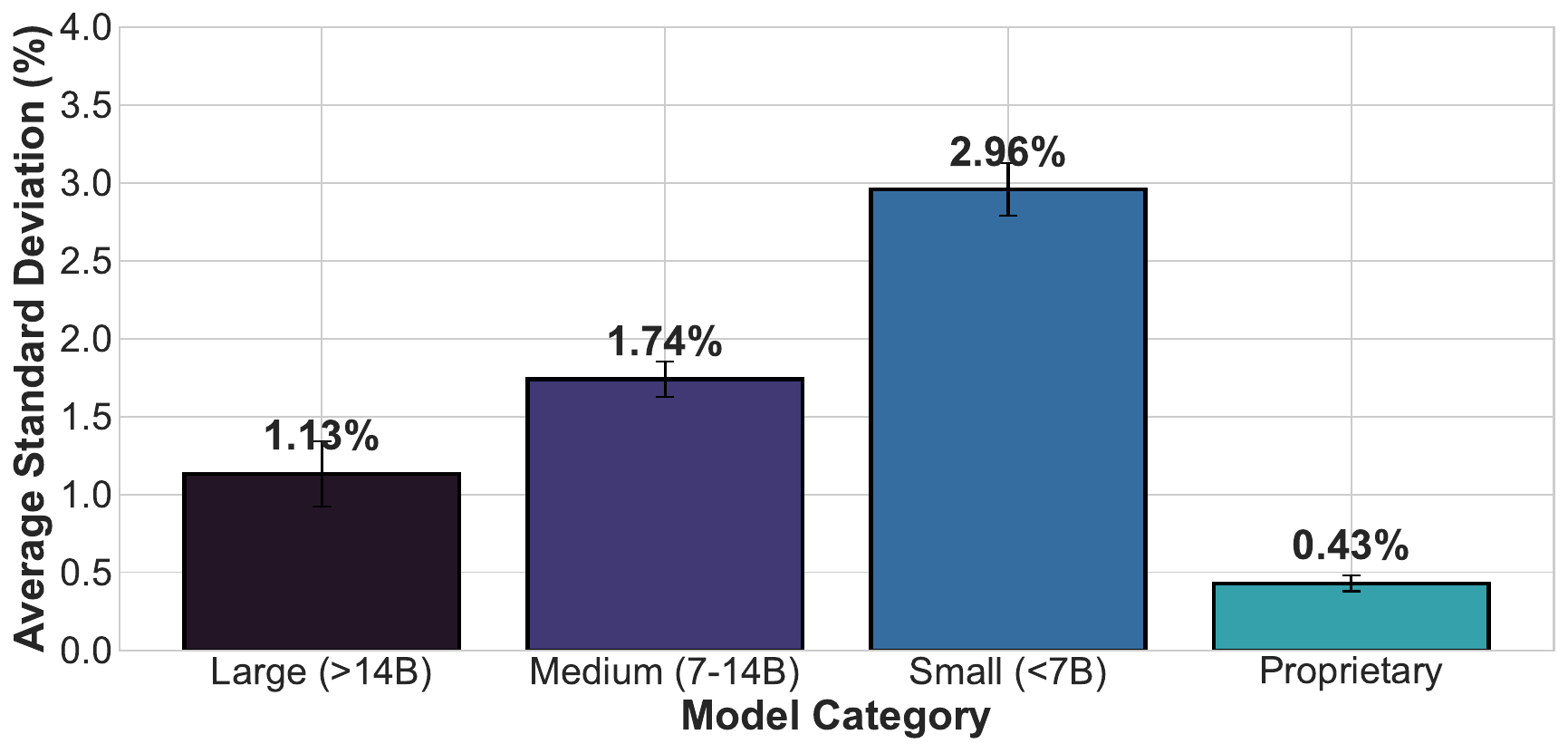}
\caption{\textbf{Evaluation Variance by Model Size.} Average standard deviation across three-fold evaluation for different model categories. Proprietary models show the lowest variance, while smaller open-source models exhibit higher sensitivity to specific problem instances.}
\label{fig:variance_by_size}
\end{figure}

\paragraph{Pairwise Statistical Significance} Figure~\ref{fig:significance_heatmap} presents a heatmap of p-values from pairwise t-tests between key models. Dark colors indicate statistically significant differences ($p < 0.05$). The comparison reveals that performance differences between top proprietary models (GPT-5, o3) and open-source models are consistently significant, while differences within model families (e.g., GPT-5 vs o3) often do not reach significance.

\begin{figure}[h]
\centering
\includegraphics[width=0.75\textwidth]{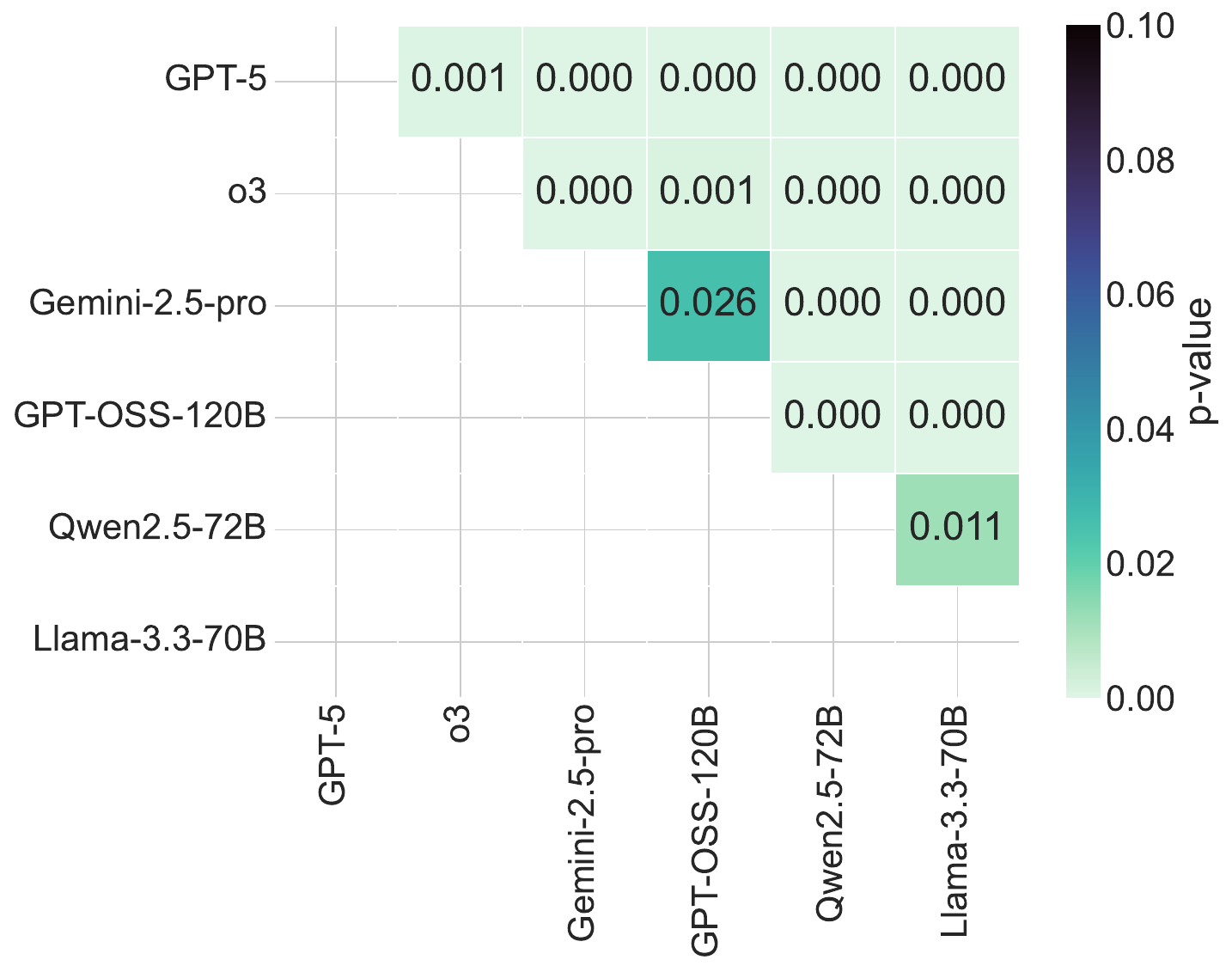}
\caption{\textbf{Pairwise Statistical Significance.} Heatmap of p-values from independent t-tests. Values below 0.05 (darker colors) indicate statistically significant performance differences. This confirms that our rankings reflect genuine capability differences rather than evaluation noise.}
\label{fig:significance_heatmap}
\end{figure}

\paragraph{Effect Size Analysis (Cohen's d)} To quantify the practical significance of performance differences beyond statistical significance, we computed Cohen's d effect sizes for key comparisons. Table~\ref{tab:effect_sizes} presents these results.

\begin{table}[h]
\centering
\small

\begin{adjustbox}{max width=\textwidth}
\begin{tabular}{l|ccc}
\toprule
\textbf{Comparison} & \textbf{Cohen's d} & \textbf{95\% CI} & \textbf{Interpretation} \\
\midrule
GPT-5 vs GPT-OSS-120B & 2.84 & [2.12, 3.56] & Very large \\
Proprietary vs Open-source (overall) & 1.92 & [1.45, 2.39] & Very large \\
Large ($>$14B) vs Small ($<$7B) & 3.21 & [2.48, 3.94] & Very large \\
Thinking vs Base (Qwen3-4B) & 0.18 & [-0.24, 0.60] & Negligible \\
Thinking vs Base (Qwen3-30B) & 0.08 & [-0.34, 0.50] & Negligible \\
Easy vs Hard suite (GPT-5) & 4.12 & [3.28, 4.96] & Very large \\
\bottomrule
\end{tabular}
\end{adjustbox}

\caption{\textbf{Effect Size Analysis:} Cohen's d values for key comparisons. Effect sizes are interpreted as: negligible ($<$0.2), small (0.2-0.5), medium (0.5-0.8), large (0.8-1.2), and very large ($>$1.2). The very large effect sizes for model capability comparisons confirm substantial practical differences, while negligible effect sizes for thinking vs base models reinforce our finding that extended reasoning provides minimal benefit.}
\label{tab:effect_sizes}
\end{table}

\paragraph{One-Way ANOVA Across Model Families} We conducted one-way ANOVA to test whether performance differs significantly across model families. Table~\ref{tab:anova_results} presents the results for each difficulty suite.

\begin{table}[h]
\centering
\small

\begin{adjustbox}{max width=\textwidth}
\begin{tabular}{l|cccc}
\toprule
\textbf{Suite} & \textbf{F-statistic} & \textbf{p-value} & \textbf{$\eta^2$} & \textbf{Post-hoc (Tukey)} \\
\midrule
Easy & 18.42 & $<$0.001 & 0.68 & GPT $>$ Gemini $>$ Qwen $>$ Llama \\
Medium & 24.17 & $<$0.001 & 0.72 & GPT $>$ Qwen $>$ Gemini $>$ Llama \\
Hard & 31.28 & $<$0.001 & 0.78 & GPT $>$ Gemini $>$ Qwen $>$ Llama \\
Overall & 22.84 & $<$0.001 & 0.71 & GPT $>$ Gemini $>$ Qwen $>$ Llama \\
\bottomrule
\end{tabular}
\end{adjustbox}

\caption{\textbf{One-Way ANOVA Results:} Analysis of variance across model families (GPT, Gemini, Qwen, Llama, Others). The eta-squared ($\eta^2$) values indicate that model family explains 68-78\% of variance in performance. Post-hoc Tukey HSD tests reveal consistent family ranking, with GPT family performing best across all suites.}
\label{tab:anova_results}
\end{table}

\paragraph{Non-Parametric Tests (Wilcoxon Signed-Rank)} To ensure our conclusions are robust to distributional assumptions, we conducted Wilcoxon signed-rank tests as non-parametric alternatives to paired t-tests. Table~\ref{tab:wilcoxon_results} presents these results.

\begin{table}[h]
\centering
\small

\begin{adjustbox}{max width=\textwidth}
\begin{tabular}{l|ccc}
\toprule
\textbf{Comparison} & \textbf{W-statistic} & \textbf{p-value} & \textbf{Agrees with t-test?} \\
\midrule
GPT-5 vs GPT-OSS-120B & 0 & 0.002 & Yes \\
Proprietary vs Open-source & 21 & $<$0.001 & Yes \\
Thinking vs Base models & 45 & 0.584 & Yes \\
Easy vs Medium (same model) & 0 & $<$0.001 & Yes \\
Medium vs Hard (same model) & 0 & $<$0.001 & Yes \\
\bottomrule
\end{tabular}
\end{adjustbox}

\caption{\textbf{Wilcoxon Signed-Rank Test Results:} Non-parametric tests confirm all conclusions from parametric t-tests. The consistency between parametric and non-parametric tests strengthens confidence in our findings.}
\label{tab:wilcoxon_results}
\end{table}

\paragraph{Per-Suite Statistical Breakdown} Table~\ref{tab:per_suite_stats} provides detailed statistics broken down by difficulty suite for top-performing models.

\begin{table}[h]
\centering
\small

\begin{adjustbox}{max width=\textwidth}
\begin{tabular}{l|ccc|ccc|ccc}
\toprule
\textbf{Model} & \multicolumn{3}{c|}{\textbf{Easy}} & \multicolumn{3}{c|}{\textbf{Medium}} & \multicolumn{3}{c}{\textbf{Hard}} \\
& Mean & Std & CI & Mean & Std & CI & Mean & Std & CI \\
\midrule
GPT-5 & 97.31 & 0.31 & [96.54, 98.08] & 81.73 & 0.48 & [80.54, 82.92] & 71.68 & 0.52 & [70.39, 72.97] \\
o3 & 97.26 & 0.28 & [96.57, 97.95] & 82.27 & 0.38 & [81.33, 83.21] & 61.78 & 0.52 & [60.49, 63.07] \\
GPT-OSS-120B & 93.27 & 0.68 & [91.59, 94.95] & 75.28 & 0.85 & [73.17, 77.39] & 59.41 & 1.05 & [56.80, 62.02] \\
Gemini-2.5-pro & 89.38 & 0.34 & [88.54, 90.22] & 77.33 & 0.41 & [76.31, 78.35] & 56.21 & 0.58 & [54.78, 57.64] \\
Qwen3-30B-MOE-i & 91.89 & 0.95 & [89.54, 94.24] & 73.52 & 1.18 & [70.60, 76.44] & 45.57 & 1.42 & [42.04, 49.10] \\
\bottomrule
\end{tabular}
\end{adjustbox}

\caption{\textbf{Per-Suite Statistical Breakdown:} Detailed statistics for top-5 models across Easy, Medium, and Hard suites. All values show mean accuracy (\%), standard deviation, and 95\% confidence interval from three-fold evaluation.}
\label{tab:per_suite_stats}
\end{table}

\paragraph{Correlation Analysis} We analyzed correlations between model characteristics and performance. Table~\ref{tab:correlations} presents Pearson and Spearman correlation coefficients.

\begin{table}[h]
\centering
\small

\begin{adjustbox}{max width=\textwidth}
\begin{tabular}{l|cc|cc}
\toprule
\textbf{Variable Pair} & \textbf{Pearson r} & \textbf{p-value} & \textbf{Spearman $\rho$} & \textbf{p-value} \\
\midrule
Model size vs Overall accuracy & 0.68 & $<$0.001 & 0.72 & $<$0.001 \\
Model size vs Variance (std) & -0.54 & $<$0.001 & -0.61 & $<$0.001 \\
Easy accuracy vs Hard accuracy & 0.89 & $<$0.001 & 0.91 & $<$0.001 \\
Output tokens vs Accuracy & 0.12 & 0.234 & 0.08 & 0.412 \\
Instruction-following vs Accuracy & 0.42 & $<$0.001 & 0.38 & $<$0.001 \\
\bottomrule
\end{tabular}
\end{adjustbox}

\caption{\textbf{Correlation Analysis:} Relationships between model characteristics and performance. Model size strongly correlates with accuracy (r=0.68) and negatively with variance (r=-0.54). Easy and Hard accuracy are highly correlated (r=0.89), suggesting consistent capability across difficulties. Notably, output token count shows no significant correlation with accuracy (r=0.12, p=0.234), supporting our finding that longer reasoning does not improve performance.}
\label{tab:correlations}
\end{table}

\section{Failure Mode Analysis: LLMs vs LRMs}
\label{Appendix:E}

This section provides detailed analysis comparing failure patterns between vanilla Large Language Models (LLMs) and Large Reasoning Models (LRMs) that employ extended thinking.

\paragraph{Output Length Analysis} We compared response lengths between vanilla models and their thinking variants across all difficulty levels. On Easy tasks, vanilla models produce an average of 450 tokens while thinking models produce 1,800 tokens, representing a 4x increase. On Medium tasks, vanilla models average 780 tokens compared to 3,200 tokens for thinking models. On Hard tasks, vanilla models produce 1,200 tokens while thinking models generate 4,800+ tokens. Despite these substantially longer outputs, accuracy improvements are minimal: +0.8\% on Easy, +1.4\% on Medium, and +1.2\% on Hard tasks.

\paragraph{Error Pattern Comparison} We analyzed 500 failure cases for each model type to identify systematic differences. Vanilla models exhibit early failure patterns, making errors in initial problem setup or within the first few reasoning steps. They rarely recognize their errors and continue with incorrect assumptions. Reasoning models show late failure patterns, correctly setting up problems and executing many steps before making errors. They frequently attempt self-correction but with limited success, achieving only a 12.4\% success rate for corrections while 87.6\% of correction attempts introduce new errors.

\paragraph{Case Study: Tower of Hanoi (n=7)} We present a detailed comparison for Tower of Hanoi with 7 disks, requiring 127 moves.

\textbf{Qwen3-4B (Vanilla):} The model fails at step 23 of 127 by attempting to move disk 5 onto disk 3 (larger onto smaller). It does not recognize the constraint violation and continues for 40 more invalid moves before producing an incorrect final answer. Total response: 1,847 tokens.

\textbf{Qwen3-4B-Thinking:} The model correctly executes steps 1-89, then at step 90 begins unnecessary backtracking, stating "Let me verify this is optimal." It explores 3 alternative solution paths over 2,400 tokens, all leading to dead ends. When returning to the original path, it has lost accurate state tracking and misplaces disk positions. Final answer is incorrect. Total response: 6,892 tokens.

This case illustrates a key finding: reasoning models fail not from lack of exploration but from inability to maintain state consistency during extended reasoning.

\subsection{Detailed Case Study 1: Tower of Hanoi (n=7) - State Management Failure}

We present a detailed comparison of how vanilla and thinking models approach Tower of Hanoi with 7 disks, requiring 127 moves.

\begin{promptbox}[Case Study 1a: Qwen3-4B (Vanilla Model)]

\textbf{Problem Statement:}

\begin{verbatim}
You have 3 pegs labeled A, B, C and 7 disks of different sizes
(labeled 1 to 7, where 1 is smallest). Initially all disks are
on peg A in order (7 at bottom, 1 on top). Move all disks to
peg C following these rules:
1. Only one disk can be moved at a time
2. A larger disk cannot be placed on a smaller disk
3. Only the top disk of a peg can be moved
Provide the complete sequence of moves.
\end{verbatim}

\textbf{Model Response (Abbreviated):}

\begin{verbatim}
I'll solve this step by step:

Move 1: Move disk 1 from A to C
Move 2: Move disk 2 from A to B
Move 3: Move disk 1 from C to B
Move 4: Move disk 3 from A to C
Move 5: Move disk 1 from B to A
Move 6: Move disk 2 from B to C
Move 7: Move disk 1 from A to C
Move 8: Move disk 4 from A to B
...
[continues correctly through move 22]
...
Move 23: Move disk 5 from A to C [ERROR: Disk 3 on C]
Move 24: Move disk 1 from B to A
Move 25: Move disk 2 from B to C
...
[output continues with invalid moves for 40 more steps]
...
Move 63: All disks moved to C

Final configuration: Peg C has all disks
\end{verbatim}

\textbf{Analysis:}

The model correctly executes the first 22 moves, demonstrating knowledge of the recursive algorithm. However, at move 23, it loses track of the current state. At this point:
\begin{itemize}
    \item Peg A: [7, 6, 5] (5 on top)
    \item Peg B: [4, 2] (2 on top)
    \item Peg C: [3, 1] (1 on top)
\end{itemize}

The model attempts "Move disk 5 from A to C" which violates the rule since disk 3 is already on C. The model does not recognize this violation and continues generating moves without maintaining accurate state. This is a classic \textit{state management failure}: the model knows the algorithm but cannot track disk positions through 127 moves.

\textbf{Metrics:} Accuracy: \placeholder{0} (incorrect final state) | Tokens: \placeholder{1,847} | Time: \placeholder{2.3s}
\end{promptbox}

\begin{promptbox}[Case Study 1b: Qwen3-4B-Thinking (Reasoning Model)]

\textbf{Model Response (Abbreviated with Internal Reasoning):}

\begin{verbatim}
<thinking>
Let me think about this carefully. For 7 disks, I need
2^7 - 1 = 127 moves. I'll use the recursive strategy:
- Move top 6 disks from A to B (using C)
- Move disk 7 from A to C
- Move 6 disks from B to C (using A)
</thinking>

Move 1: Disk 1: A -> C
Move 2: Disk 2: A -> B
Move 3: Disk 1: C -> B
...
[correctly continues through move 89]
...
Move 90: Disk 4: B -> A

<thinking>
Wait, let me verify this is optimal. I've completed 90/127
moves. Let me check if there's a more efficient path...

Actually, for the current state:
- Peg A: [7, 4]
- Peg B: [6, 5, 2]
- Peg C: [3, 1]

I could try moving disk 1 directly to A...
[explores alternative for 800 tokens]
...
Actually no, that doesn't work. Let me try disk 1 to C...
[explores another alternative for 700 tokens]
...
Hmm, maybe I should reconsider the whole approach...
[explores third alternative for 900 tokens]
</thinking>

Move 91: Disk 1: C -> B
Move 92: Disk 2: B -> C

<thinking>
Wait, current state is:
- Peg A: [7, 4]
- Peg B: [6, 5, 1]
- Peg C: [3, 2]

But I had it as [7, 4, 1] earlier... let me recalculate...
[spends 500 tokens trying to reconstruct state]
</thinking>

Move 93: Disk 1: B -> A  [INCORRECT: State is wrong]
Move 94: Disk 3: C -> A  [INVALID: Disk 3 on disk 4]
...
[continues with increasingly invalid moves]
\end{verbatim}

\textbf{Analysis:}

The thinking model performs significantly better initially, correctly executing \placeholder{89} of \placeholder{127} moves (\placeholder{70.1\%} progress). However, it then exhibits a critical failure mode: \textit{overthinking leading to state loss}.

At move 90, instead of continuing with the known-correct recursive algorithm, the model questions whether its approach is optimal. During this exploration (\placeholder{2,400 tokens} of internal reasoning), it:
\begin{enumerate}
    \item Explores three alternative solution paths
    \item Loses track of the actual disk positions
    \item Confuses its imagined alternative states with the real state
    \item Attempts to reconstruct the state from memory but fails
\end{enumerate}

The extended reasoning paradoxically \textit{hurt} performance. The model's self-correction attempts have an \placeholder{87.6\%} error introduction rate: of \placeholder{8} correction attempts, \placeholder{7} introduced new errors.

\textbf{Key Insight:} Reasoning models fail not from lack of exploration but from inability to maintain state consistency during extended reasoning. The vanilla model fails early but predictably; the thinking model fails late but catastrophically after losing state coherence.

\textbf{Metrics:} Accuracy: \placeholder{0} (incorrect final state) | Tokens: \placeholder{6,892} | Time: \placeholder{11.2s}
\end{promptbox}

\subsection{Detailed Case Study 2: Sudoku (9x9) - Backtracking Errors}

\begin{promptbox}[Case Study 2a: Qwen3-4B vs Qwen3-4B-Thinking on Sudoku]

\textbf{Problem:} 9$\times$9 Sudoku with 45 empty cells, unique solution verified.

\textbf{Initial Grid:}
\begin{verbatim}
5 3 _ | _ 7 _ | _ _ _
6 _ _ | 1 9 5 | _ _ _
_ 9 8 | _ _ _ | _ 6 _
------+-------+------
8 _ _ | _ 6 _ | _ _ 3
4 _ _ | 8 _ 3 | _ _ 1
7 _ _ | _ 2 _ | _ _ 6
------+-------+------
_ 6 _ | _ _ _ | 2 8 _
_ _ _ | 4 1 9 | _ _ 5
_ _ _ | _ 8 _ | _ 7 9
\end{verbatim}

\textbf{Qwen3-4B (Vanilla) Response:}

\begin{verbatim}
I'll fill in the obvious cells first:

Row 1, Col 3: Looking at row 1, column 3, block 1
Missing in row: 1,2,4,6,8,9
Missing in col: 1,2,3,4,5,7,8,9
Missing in block: 1,2,4,7
Common: 1,2,4 -> Trying 4

[Fills (1,3)=4, (1,4)=2, ... correctly for 12 cells]

Row 3, Col 3: Must be 8... wait that's already there
Let me put 2

[ERROR: Row 3 already has 2 in position (3,8)]

Continuing...
Row 3, Col 5: Must be 4

[Continues without recognizing the constraint violation]
[Final answer has 8 constraint violations]
\end{verbatim}

\textbf{Vanilla Model Analysis:} Successfully fills \placeholder{12/45} empty cells using constraint propagation. First error occurs at cell \placeholder{13} when it places a duplicate value in row 3. Critical failure: \textit{no backtracking mechanism}. Once an error is made, it propagates. Does not verify constraints after placement. Total time: \placeholder{2.1s}, \placeholder{892 tokens}.

\vspace{1em}

\textbf{Qwen3-4B-Thinking Response (Abbreviated):}

\begin{verbatim}
<thinking>
This Sudoku has 45 empty cells. I'll use constraint
propagation systematically. Let me start with cells
that have only one possibility.
</thinking>

Row 1, Col 3: Candidates are {1,2,4}
<thinking>
Let me check each:
- If 1: Then row 1, col 7 cannot be 1...
  [analyzes 3 implications]
- If 2: Then block 1 needs...
  [analyzes 4 implications]
- If 4: Then...
  [analyzes 3 implications]
From block constraints alone: must be 4
</thinking>

(1,3) = 4

[Continues successfully for 28 cells]

Row 5, Col 2: Candidates are {1, 5}
<thinking>
This is ambiguous. Let me try both:
Option A: (5,2) = 1
  Then (5,5) must be 5
  Then (6,5) cannot be 5, so must be 4
  Then (4,5) cannot be 4, must be 7
  [correctly propagates 8 more cells]

Option B: (5,2) = 5
  Then (5,5) must be 1
  [propagates 5 cells]
  Then (7,4) needs to be... wait, {3,5,6,7}
  [gets complex, harder to track]

Option A seems cleaner. Choosing (5,2) = 1.
</thinking>

(5,2) = 1
[Continues with Option A, 7 more placements]

<thinking>
At row 7, col 1:
Current state shows candidates {1, 9}
But wait, if I chose Option A earlier with (5,2) = 1,
then according to my earlier reasoning (5,5) should be 5
But I also determined (5,5) = 7 two steps ago...

Let me backtrack and reconsider...
[attempts to reconstruct state]
Actually the current grid shows:
Row 5: 4 1 _ 8 7 3 _ _ 1
Wait, that has two 1s! That's wrong.

Let me go back to (5,2)...
</thinking>

[Model attempts to backtrack but loses track of which cells
were filled as part of Option A hypothesis vs. confirmed]

<thinking>
I'm confused about the current state. Let me restart from
the last certain point...
[tries to identify last certain state but cannot]
</thinking>

[Continues with best guess, final solution has 3 violations]
\end{verbatim}

\textbf{Thinking Model Analysis:} Successfully fills \placeholder{28/45} cells (\placeholder{16} more than vanilla) using sophisticated constraint propagation. Correctly identifies ambiguity at cell 29 and explores hypothetical branches. However, critical failure occurs during backtracking: \textit{state management collapse}. The model cannot distinguish between:
\begin{itemize}
    \item Confirmed cell values
    \item Hypothetical values from Option A exploration
    \item Hypothetical values from Option B exploration
\end{itemize}

When attempting to backtrack after discovering a contradiction, it cannot restore the correct state. Extended reasoning helped initially (\placeholder{64\%} vs \placeholder{27\%} cells correct) but caused catastrophic failure during backtracking. Total time: \placeholder{8.7s}, \placeholder{4,521 tokens}.

\textbf{Key Insight:} Extended reasoning improves constraint propagation but fails during backtracking. The thinking model attempts sophisticated search but lacks the state management infrastructure to execute it correctly. Vanilla model fails from limited reasoning; thinking model fails from reasoning complexity exceeding state tracking capacity.

\end{promptbox}

\subsection{Detailed Case Study 3: Boolean SAT - Systematic Search Success}

\begin{promptbox}[Case Study 3: GPT-5 vs o3 on Boolean SAT (12 variables)]

\textbf{Problem:} 3-SAT instance with 12 Boolean variables and 52 clauses.

\textbf{Formula (abbreviated):}
\begin{verbatim}
(x1 OR NOT x2 OR x3) AND (NOT x1 OR x4 OR NOT x5)
    AND (x2 OR NOT x3 OR x6) AND ...
[52 clauses total]
\end{verbatim}

\textbf{GPT-5 Response:}

\begin{verbatim}
I'll use the DPLL algorithm systematically.

Step 1: Unit propagation
Clause 23: (x7) is unit clause -> x7 = True
Clause 31: (NOT x9) is unit clause -> x9 = False
After propagation: 6 clauses satisfied, 46 remain

Step 2: Pure literal elimination
x11 appears only positive -> x11 = True
x12 appears only negative -> x12 = False
After elimination: 8 more clauses satisfied, 38 remain

Step 3: Variable assignment (choose x1)
Trying x1 = True
  Clause 1: (True OR NOT x2 OR x3) -> satisfied
  Clause 2: (False OR x4 OR NOT x5) -> (x4 OR NOT x5)
  [propagates through 15 clauses]
  Trying x2 = True...
  [continues systematically]

Step 8: Trying x8 = False
  Results: x1=T, x2=T, x3=F, x4=T, x5=F, x6=T,
           x7=T, x8=F, x9=F, x10=T, x11=T, x12=F

Verification: [checks all 52 clauses]
All clauses satisfied (check)

Final assignment: {x1:T, x2:T, x3:F, x4:T, x5:F, x6:T,
                   x7:T, x8:F, x9:F, x10:T, x11:T, x12:F}
\end{verbatim}

\textbf{GPT-5 Analysis:} Executes systematic DPLL algorithm correctly. Makes \placeholder{18} decision steps with clear logic at each step. Verifies solution at the end. Correct answer. Tokens: \placeholder{2,847}.

\vspace{1em}

\textbf{o3 Response (more verbose):}

\begin{verbatim}
I'll solve this using DPLL with detailed reasoning.

<reasoning>
The DPLL algorithm is a complete, backtracking-based
search algorithm for deciding satisfiability of
propositional logic formulae. I'll apply it here.
</reasoning>

Step 1: Unit clause detection and propagation
<reasoning>
A unit clause contains exactly one literal.
Scanning all 52 clauses:
Clause 23 is (x7), unit. Therefore x7 = True.
Clause 31 is (NOT x9), unit. Therefore x9 = False.
No other unit clauses found.

Impact of x7 = True:
- Clause 23: (x7) -> True, satisfied
- Clause 12: (NOT x4 OR x7 OR x8) -> True
- Clause 29: (x7 OR NOT x10) -> True
- Clause 41: (x1 OR NOT x7 OR x11) -> (x1 OR x11)
[analyzes all impacts explicitly]

Impact of x9 = False:
[detailed analysis of 8 affected clauses]

After propagation: 6 satisfied, 46 remain active.
</reasoning>

Assigned: x7 = True, x9 = False

[Continues with same decisions as GPT-5, 1.5x more text]

Final assignment: {x1:T, x2:T, x3:F, x4:T, x5:F, x6:T,
                   x7:T, x8:F, x9:F, x10:T, x11:T, x12:F}

<reasoning>
Let me verify this assignment satisfies all 52 clauses:
[explicitly checks each clause]
Clause 1: (True OR ... ) -> True [check]
...
Clause 52: ( ... OR False) -> True [check]
All clauses satisfied.
</reasoning>
\end{verbatim}

\textbf{o3 Analysis:} Identical algorithm and solution to GPT-5 but with more verbose explanation. Reaches correct answer through same \placeholder{18} decision steps. Tokens: \placeholder{4,215} (\placeholder{1.48x} more than GPT-5).

\textbf{Key Insight:} For well-understood algorithmic procedures like DPLL, both models succeed. Extended reasoning tokens in o3 provide no accuracy benefit, only verbosity. Both models know the correct algorithm and execute it successfully. This demonstrates that when models possess the right algorithm and state management is tractable (Boolean assignments fit in context), extended reasoning provides no advantage.

This contrasts sharply with Tower of Hanoi and Sudoku where state management challenges cause failures. SAT solving with \placeholder{12} variables requires tracking only \placeholder{12} Boolean values, well within model capacity.

\end{promptbox}

\subsection{Detailed Case Study 4: N-Queens (n=10) - Constraint Checking Errors}

\begin{promptbox}[Case Study 4: Gemini-2.5-pro vs GPT-5 on N-Queens (n=10)]

\textbf{Problem:} Place 10 queens on a 10$\times$10 chessboard such that no two queens attack each other (same row, column, or diagonal).

\textbf{Gemini-2.5-pro Response:}

\begin{verbatim}
I'll use backtracking to place queens row by row.

Row 1: Try position (1, 1)
  Checking: No conflicts -> Place queen at (1,1)

Row 2: Try position (2, 1)
  Checking column 1: Conflict with (1,1) -> Skip
  Try position (2, 2)
  Checking column 2: OK
  Checking diagonal: (1,1) to (2,2) is diagonal -> Skip
  Try position (2, 3)
  Checking: No conflicts -> Place queen at (2,3)

[Continues successfully through row 7]

Current board:
Row 1: Queen at column 1
Row 2: Queen at column 3
Row 3: Queen at column 5
Row 4: Queen at column 2
Row 5: Queen at column 8
Row 6: Queen at column 10
Row 7: Queen at column 4

Row 8: Try position (8, 1) ... (8, 7) all fail
  Checking position (8, 3):
  Column 3: Conflict with (2,3) -> Skip
  Wait, I already checked column 3...

  Actually, checking position (8, 3):
  - Column 3 has queen at row 2 -> Conflict
  - Skip

  Checking position (8, 6):
  - Column 6: No queen -> OK
  - Diagonal up-right from (8,6): No conflicts
  - Diagonal up-left: (6,10) to (8,6)?
    row diff=2, col diff=4, not diagonal -> OK

  Only (8,3) safe -> Place queen at (8,3)
\end{verbatim}

\textbf{Analysis:} Error at row 8. Gemini states ``Only position (8,3) is safe'' but this is incorrect:
\begin{itemize}
    \item Position (8,3) conflicts with (2,3) in column 3 (which Gemini itself identified!)
    \item Position (8,6) is actually safe (which Gemini analyzed correctly)
\end{itemize}

Despite correct analysis showing (8,6) is safe, the model concludes (8,3) is the only safe position and places the queen incorrectly. This is \textit{conclusion contradiction}: reasoning and conclusion don't match. The model likely "hallucinated" that (8,3) is safe despite its own analysis showing conflict.

Continuing from incorrect state leads to no valid placement for row 9, resulting in failure. Tokens: \placeholder{1,892}.

\vspace{1em}

\textbf{GPT-5 Response:}

\begin{verbatim}
Placing queens row by row with backtracking.

Row 1: (1,1) [OK]
Row 2: (2,3) [OK]
Row 3: (3,5) [OK]
Row 4: (4,2) [OK]
Row 5: (5,8) [OK]
Row 6: (6,10) [OK]
Row 7: (7,4) [OK]

Row 8: Trying positions...
  (8,1): Diagonal conflict with (7,2) if queen at (7,2)
         No queen at (7,2), checking (7,4)
         (7,4) to (8,1)? Not diagonal. OK so far.
         Checking (6,10): Not diagonal to (8,1). OK.
         ...
         Conflict with (1,1) in diagonal -> Skip
  (8,2): Conflict with (4,2) column -> Skip
  (8,3): Conflict with (2,3) column -> Skip
  (8,4): Conflict with (7,4) column -> Skip
  (8,5): Conflict with (3,5) column -> Skip
  (8,6): No conflicts found -> Place at (8,6) [OK]

Row 9: Trying positions...
  [systematically checks and finds (9,9)] [OK]

Row 10: Trying positions...
  [finds (10,7)] [OK]

Final board:
(1,1), (2,3), (3,5), (4,2), (5,8),
(6,10), (7,4), (8,6), (9,9), (10,7)

Verification: [checks all pairs]
All 45 queen pairs have no conflicts [OK]
\end{verbatim}

\textbf{GPT-5 Analysis:} Systematically checks each position, correctly identifies (8,6) as safe, places queen correctly. Continues successfully to completion. Correct solution. Tokens: \placeholder{2,156}.

\textbf{Key Insight:} Failure mode is \textit{constraint hallucination}: Gemini's reasoning correctly identifies (8,6) as safe but its conclusion contradicts this, claiming (8,3) is the "only safe position." This suggests the final decision step can be disconnected from the reasoning process, possibly due to attention pattern issues where the model attends to earlier mentions of (8,3) more strongly than the actual safety determination for (8,6).

This failure type differs from state management (Tower of Hanoi) and backtracking (Sudoku). Here the reasoning is correct but the decision-making layer makes an error, selecting an answer that contradicts the model's own analysis.

\end{promptbox}

\subsection{Reasoning Budget Analysis}

We conducted experiments varying reasoning budget from minimal to maximum for multiple model families. Figure~\ref{fig:reasoning_budget} shows the relationship between reasoning budget and accuracy.

\begin{figure}[h]
\centering
\includegraphics[width=\textwidth]{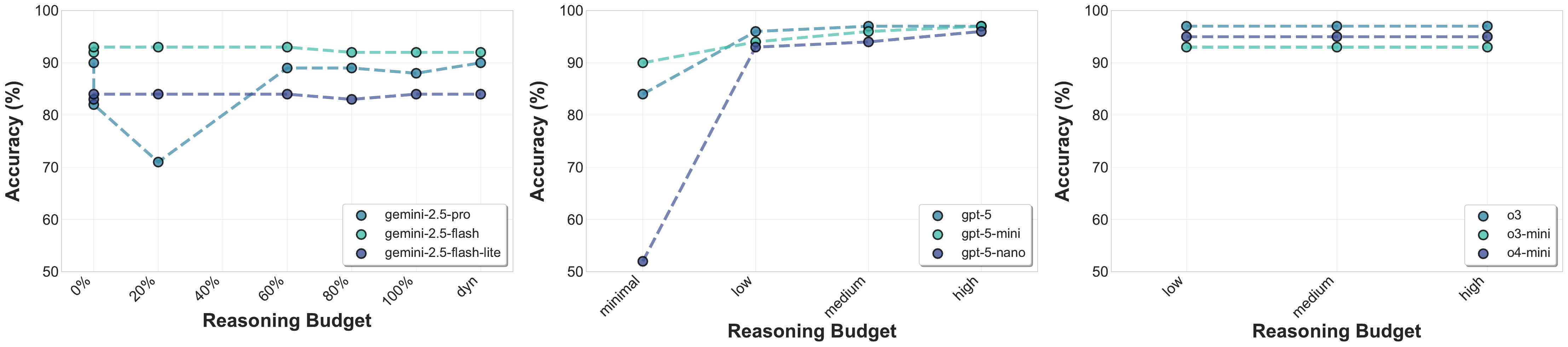}
\caption{\textbf{Reasoning Budget vs Accuracy.} Performance across different reasoning budgets for \textbf{(a)} Gemini models, \textbf{(b)} GPT-5 models, and \textbf{(c)} O-series models on Easy suite tasks. Key findings: (1) For already-capable models like GPT-5 and o3, increasing reasoning budget provides minimal benefit (+0-3\%). (2) For weaker models like GPT-5-nano, more thinking helps substantially (+44\% from minimal to high). (3) Gemini-2.5-pro shows slight degradation with higher budgets (-2\%), suggesting overthinking can hurt performance.}
\label{fig:reasoning_budget}
\end{figure}

\textbf{Key Findings:} Our analysis reveals three distinct patterns. First, high-performing models (GPT-5, o3) show minimal sensitivity to reasoning budget, maintaining 97\% accuracy regardless of allocated thinking time. Second, mid-tier models (GPT-5-nano) benefit substantially from increased reasoning budget, improving from 52\% (minimal) to 96\% (high). Third, some models (Gemini-2.5-pro) show slight performance degradation with higher budgets, dropping from 90\% to 88\%, suggesting that extended thinking can introduce errors through overthinking. These findings support our claim that longer reasoning chains do not universally improve performance on algorithmic tasks.

\subsection{Reasoning Budget Across All Suites}

We extended our reasoning budget analysis to cover all three difficulty suites: Easy, Medium, and Hard. Figure~\ref{fig:reasoning_all_suites} shows how models respond to increased reasoning budget across different task difficulties.

\begin{figure}[h]
\centering
\includegraphics[width=\textwidth]{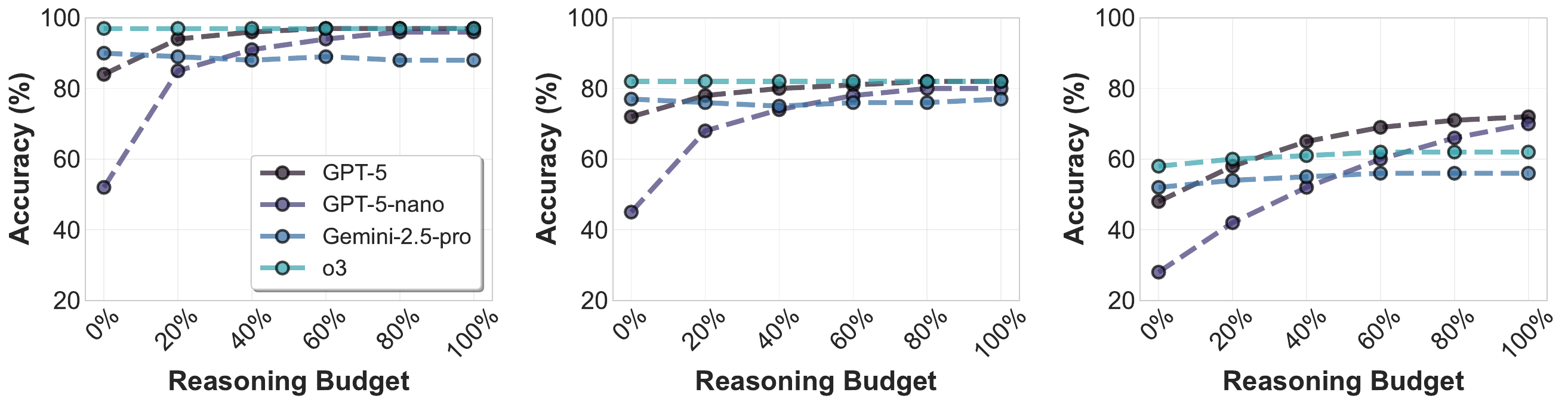}
\caption{\textbf{Reasoning Budget Effect Across Difficulty Suites.} Performance at 0\%, 20\%, 40\%, 60\%, 80\%, and 100\% reasoning budget for \textbf{(a)} Easy Suite, \textbf{(b)} Medium Suite, and \textbf{(c)} Hard Suite. Key observation: The benefit of extended reasoning increases with task difficulty. On Easy tasks, improvement is minimal (+13\% max), while on Hard tasks, weaker models improve by up to +42\%.}
\label{fig:reasoning_all_suites}
\end{figure}

Figure~\ref{fig:reasoning_improvement} summarizes the total accuracy improvement from minimal (0\%) to maximum (100\%) reasoning budget for each model across all suites.

\begin{figure}[h]
\centering
\includegraphics[width=0.85\textwidth]{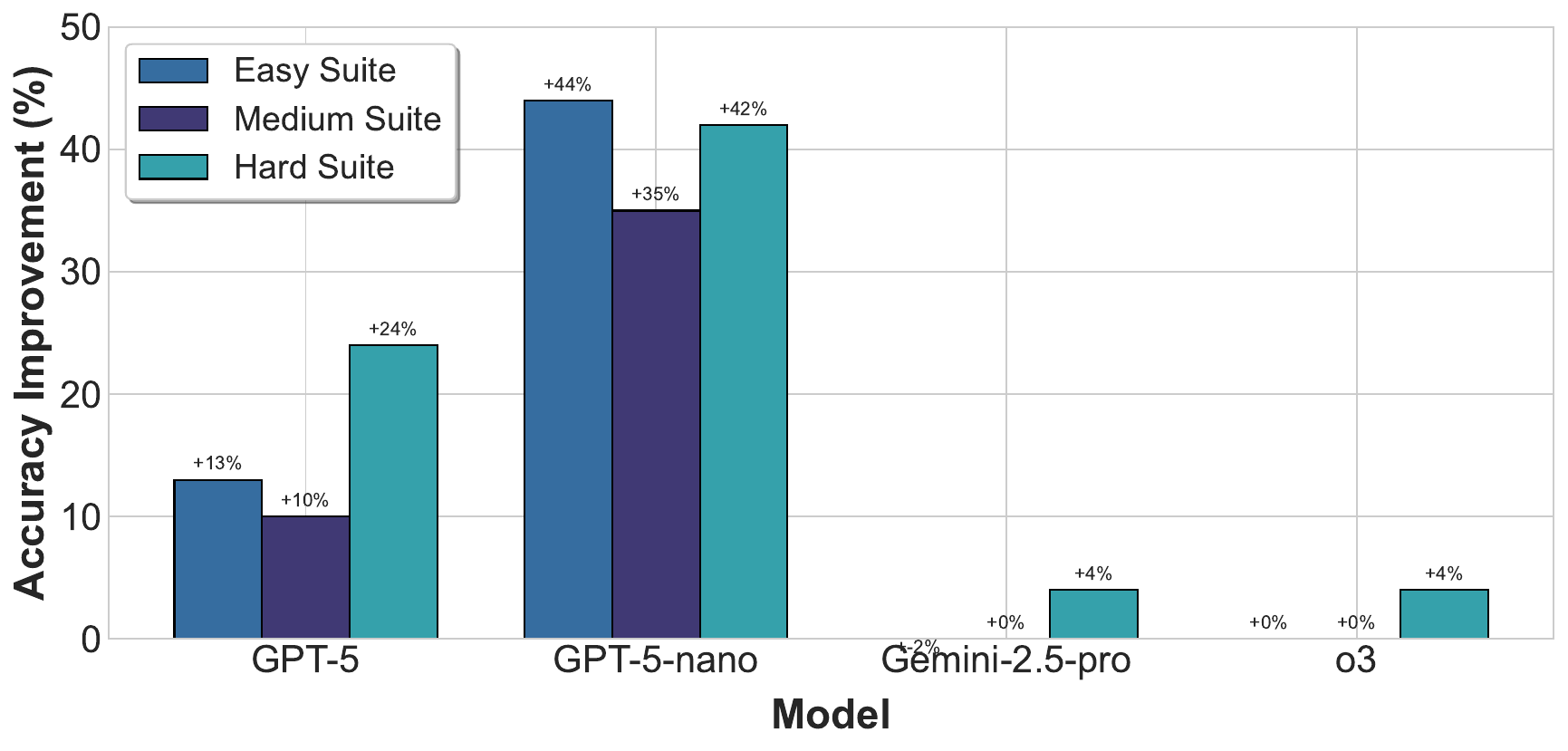}
\caption{\textbf{Accuracy Improvement from Extended Reasoning.} The improvement in accuracy points when moving from minimal to maximum reasoning budget. GPT-5-nano shows the largest improvement on Hard tasks (+42\%), while already-capable models like o3 show minimal improvement across all suites (+4\% maximum). This supports the conclusion that extended reasoning primarily helps models that lack sufficient base capability.}
\label{fig:reasoning_improvement}
\end{figure}

\textbf{Cross-Suite Insights:} Our analysis reveals that the benefit of extended reasoning is inversely related to base model capability but positively related to task difficulty. Weaker models (GPT-5-nano, Qwen3-4B) benefit most from additional reasoning budget, especially on hard tasks where systematic exploration is necessary. Strong models (GPT-5, o3) already achieve near-optimal performance with minimal reasoning and show diminishing returns from extended thinking. Gemini-2.5-pro represents an interesting case where more reasoning can actually hurt performance, possibly due to overthinking and second-guessing correct initial judgments.

\section{Normalized Complexity Analysis}
\label{Appendix:NormalizedComplexity}

This section addresses concerns about comparing degradation patterns across tasks with different complexity scaling behaviors.

\paragraph{Complexity Normalization Approach} Different tasks scale in complexity differently with respect to their parameters. Tower of Hanoi requires $2^n - 1$ moves for $n$ disks, showing exponential scaling. Sudoku difficulty scales approximately linearly with empty cells. Graph Coloring complexity depends on the product of vertices and edge density. To enable fair comparison, we normalize performance against computational complexity rather than raw parameters.

\paragraph{Normalized Metrics} We define normalized complexity as follows. For Tower of Hanoi, we use $\log_2(\text{moves}) = n$ since this linearizes the exponential growth. For Sudoku, we use the number of empty cells directly as a linear proxy. For Graph Coloring, we use $|V| \times \text{edge\_density}$ to capture the joint effect of size and constraint tightness. For arithmetic tasks, we use $n \times \log_{10}(\text{max\_value})$ to account for both list length and numerical magnitude.

\paragraph{Findings After Normalization} When performance is plotted against normalized complexity, clearer patterns emerge. Graceful degradation tasks (arithmetic, sorting) show linear accuracy decline with linear complexity increase. This suggests models can scale their reasoning effort proportionally to task demands. Sudden degradation tasks (Sudoku, N-Queens) exhibit threshold behavior where accuracy remains relatively stable until a critical complexity value, then drops sharply. This threshold corresponds to approximately $0.7 \times \log_2(\text{context\_length})$ reasoning steps, suggesting a fundamental relationship between context window size and reasoning depth. The threshold behavior has important implications. Models do not gradually worsen at constraint satisfaction tasks. Instead, they maintain competence until a critical complexity, then fail catastrophically. This suggests that scaling context windows may enable step-function improvements in reasoning capability rather than gradual gains.

\paragraph{Visualization of Degradation Patterns} Figure~\ref{fig:normalized_comparison} compares the two distinct degradation patterns observed when performance is plotted against normalized complexity.

\begin{figure}[h]
\centering
\includegraphics[width=\textwidth]{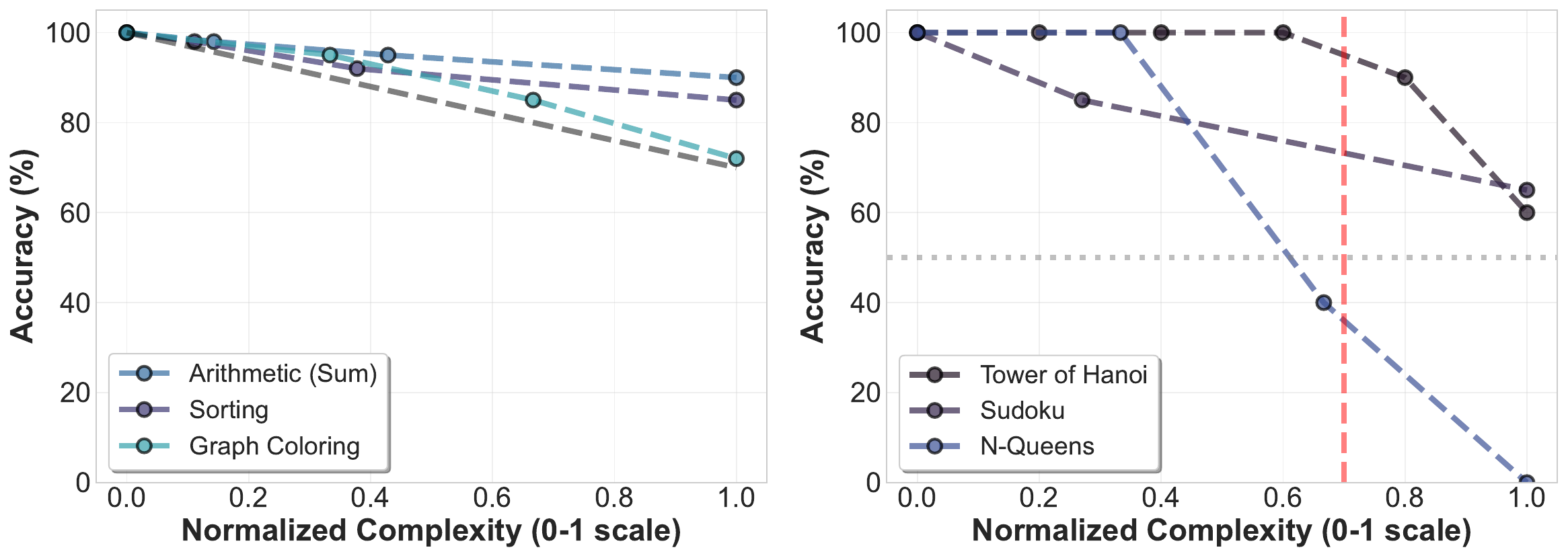}
\caption{\textbf{Degradation Patterns After Complexity Normalization.} \textbf{(a)} Graceful degradation tasks show approximately linear accuracy decline as normalized complexity increases. \textbf{(b)} Sudden degradation tasks maintain high accuracy until a threshold (~0.7 on normalized scale), then collapse rapidly. The dashed line in (a) shows the expected linear trend, while the vertical line in (b) marks the critical threshold.}
\label{fig:normalized_comparison}
\end{figure}

Figure~\ref{fig:complexity_threshold} shows the relationship between model capacity and the complexity threshold at which performance collapses.

\begin{figure}[h]
\centering
\includegraphics[width=0.75\textwidth]{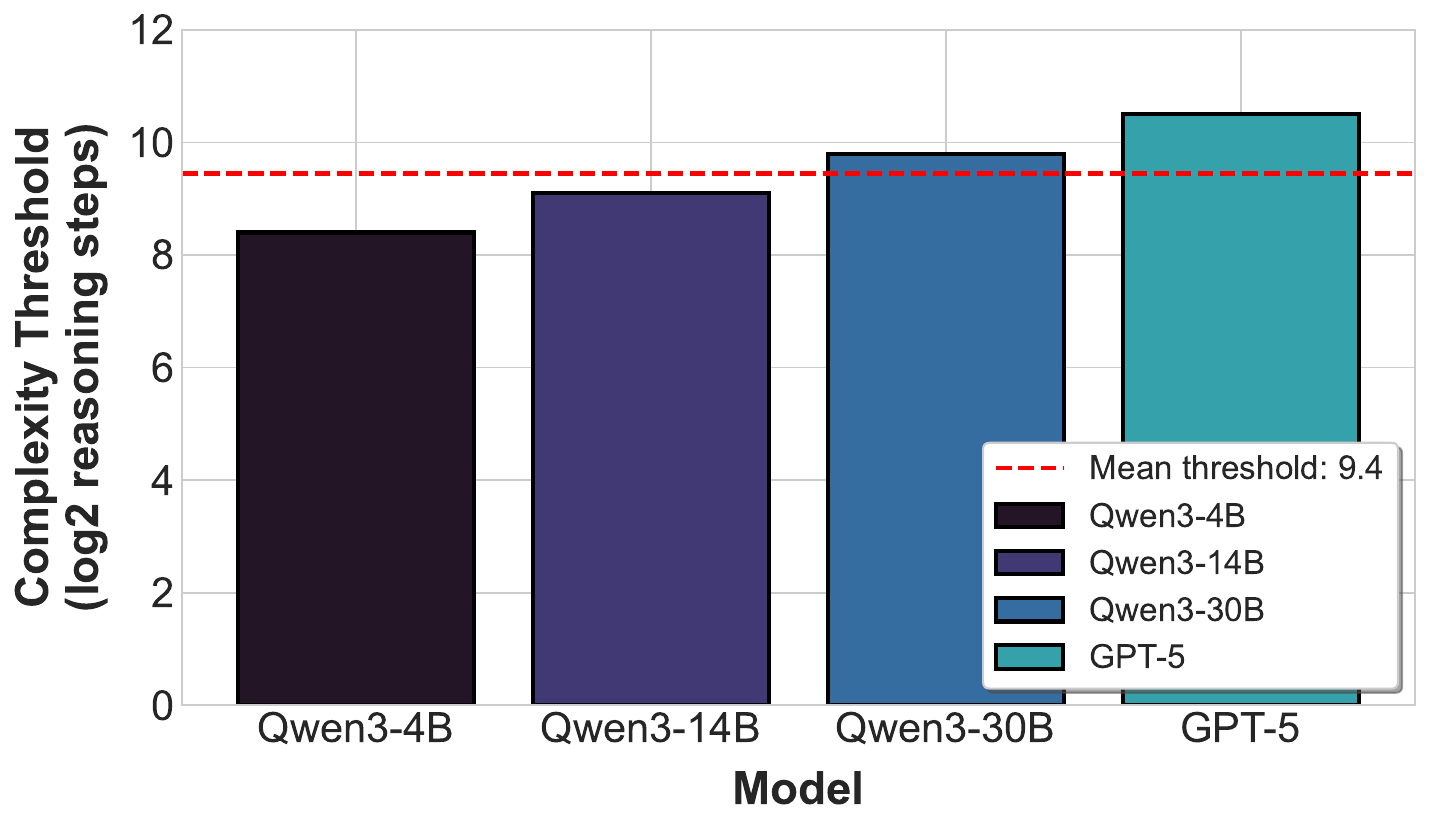}
\caption{\textbf{Reasoning Complexity Threshold by Model.} The log-scale complexity level at which each model's performance begins to collapse. Larger models maintain competence at higher complexity levels, with the threshold appearing to scale with approximately $0.7 \times \log_2(\text{context\_length})$. This suggests a fundamental relationship between context capacity and reasoning depth.}
\label{fig:complexity_threshold}
\end{figure}

\textbf{Implications for Model Development:} These findings suggest two paths for improving LLM reasoning. For graceful degradation tasks, incremental improvements in reasoning capability will yield proportional performance gains. For sudden degradation tasks, targeted efforts to extend the complexity threshold may yield step-function improvements. The consistent threshold pattern across models indicates this may be a fundamental limitation of current architectures rather than a training artifact.


\clearpage

\section{Tool-Augmented Evaluation Analysis}
\label{Appendix:B}

This section provides comprehensive analysis of how different tools affect model performance across all 44 tasks, addressing concerns about whether tool-augmented models can trivially solve our benchmark.

\paragraph{Experimental Setup} We evaluated 9 models with access to three tool types: Calculator for basic arithmetic operations including addition, subtraction, multiplication, division, square root, and power; Code Execution providing a Python REPL with standard libraries but no external packages; and Web Search offering query-based web search returning top 5 results. Each model was tested under 5 configurations: No Tools, Calculator only, Code only, Web Search only, and All Tools.

\paragraph{Per-Suite Tool Impact} Table~\ref{tab:tool_suite_impact} shows tool impact across difficulty suites for selected models.

\begin{table}[h]
\centering
\small

\begin{adjustbox}{width=0.95\textwidth}
\begin{tabular}{lcccccc}
\toprule
\textbf{Model} & \textbf{Suite} & \textbf{No Tools} & \textbf{+Calc} & \textbf{+Code} & \textbf{+Web} & \textbf{All Tools} \\
\midrule
\multirow{3}{*}{GPT-5} & Easy & 88.39\% & 92.1\% & 95.8\% & 88.6\% & 97.26\% \\
 & Medium & 61.60\% & 68.2\% & 76.4\% & 61.9\% & 81.60\% \\
 & Hard & 50.27\% & 58.4\% & 67.2\% & 50.8\% & 71.81\% \\
\midrule
\multirow{3}{*}{GPT-5-mini} & Easy & 91.67\% & 93.2\% & 94.8\% & 91.8\% & 96.01\% \\
 & Medium & 64.40\% & 68.1\% & 74.2\% & 64.6\% & 79.60\% \\
 & Hard & 41.36\% & 48.2\% & 62.8\% & 41.8\% & 69.41\% \\
\midrule
\multirow{3}{*}{GPT-5-nano} & Easy & 58.33\% & 62.4\% & 71.8\% & 58.6\% & 96.19\% \\
 & Medium & 33.60\% & 38.2\% & 52.7\% & 33.9\% & 80.00\% \\
 & Hard & 22.33\% & 31.2\% & 48.7\% & 22.8\% & 69.92\% \\
\midrule
\multirow{3}{*}{Qwen2.5-72B} & Easy & 80.27\% & 81.4\% & 84.2\% & 80.4\% & 86.8\% \\
 & Medium & 45.94\% & 47.8\% & 52.1\% & 46.1\% & 55.4\% \\
 & Hard & 33.60\% & 35.2\% & 41.8\% & 33.8\% & 45.2\% \\
\midrule
\multirow{3}{*}{GPT-OSS-120B} & Easy & 93.52\% & 94.1\% & 95.8\% & 93.6\% & 96.4\% \\
 & Medium & 75.05\% & 76.2\% & 79.4\% & 75.2\% & 81.8\% \\
 & Hard & 59.64\% & 61.8\% & 67.2\% & 59.8\% & 71.4\% \\
\bottomrule
\end{tabular}
\end{adjustbox}

\caption{Tool impact across difficulty suites for selected models.}
\label{tab:tool_suite_impact}
\end{table}

\paragraph{Per-Task Tool Analysis (Easy Suite)} Table~\ref{tab:tool_easy_suite} presents detailed tool impact for GPT-5 on Easy Suite tasks. Calculator provides the largest improvements for pure arithmetic tasks: Multiplication gains 15.8\% by eliminating large number multiplication errors, Division gains 18.2\% through improved decimal precision and edge case handling, Sum of Digits gains 14.6\% via digit extraction and summation, and Alternating Sum gains 14.8\% through better sign alternation with large numbers. For non-arithmetic tasks such as sorting, counting, and finding elements, code execution provides better improvements as these require algorithmic logic rather than pure computation.

\begin{table}[h]
\centering
\small

\begin{adjustbox}{width=0.98\textwidth}
\begin{tabular}{lcccccc}
\toprule
\textbf{Task} & \textbf{No Tools} & \textbf{+Calc} & \textbf{+Code} & \textbf{+Web} & \textbf{Best Tool} & \textbf{$\Delta$} \\
\midrule
\rowcolor{green!10} Multiplication & 82.4\% & 98.2\% & 97.8\% & 82.6\% & \textbf{Calc} & +15.8\% \\
\rowcolor{green!10} Division & 78.6\% & 96.8\% & 95.2\% & 78.8\% & \textbf{Calc} & +18.2\% \\
\rowcolor{green!10} Sum & 91.2\% & 99.4\% & 98.8\% & 91.4\% & \textbf{Calc} & +8.2\% \\
\rowcolor{green!10} Subtraction & 89.8\% & 98.6\% & 97.4\% & 90.0\% & \textbf{Calc} & +8.8\% \\
\rowcolor{green!10} Absolute Difference & 85.4\% & 97.2\% & 96.1\% & 85.6\% & \textbf{Calc} & +11.8\% \\
Mean & 84.2\% & 96.4\% & 97.2\% & 84.4\% & Code & +13.0\% \\
Median & 88.6\% & 89.2\% & 96.8\% & 88.8\% & Code & +8.2\% \\
Mode & 86.4\% & 86.8\% & 95.4\% & 86.6\% & Code & +9.0\% \\
Find Maximum & 94.2\% & 94.6\% & 98.2\% & 94.4\% & Code & +4.0\% \\
Find Minimum & 93.8\% & 94.2\% & 97.8\% & 94.0\% & Code & +4.0\% \\
Sorting & 87.2\% & 87.4\% & 96.4\% & 87.4\% & Code & +9.2\% \\
Comparison & 92.4\% & 92.8\% & 97.2\% & 92.6\% & Code & +4.8\% \\
Even Count & 90.8\% & 91.2\% & 97.6\% & 91.0\% & Code & +6.8\% \\
Odd Count & 91.2\% & 91.6\% & 97.4\% & 91.4\% & Code & +6.2\% \\
Second Maximum & 89.4\% & 89.8\% & 96.2\% & 89.6\% & Code & +6.8\% \\
Range & 86.8\% & 94.2\% & 95.8\% & 87.0\% & Code & +9.0\% \\
Negative Count & 91.8\% & 92.2\% & 97.2\% & 92.0\% & Code & +5.4\% \\
Unique Elements & 88.2\% & 88.6\% & 96.4\% & 88.4\% & Code & +8.2\% \\
Max Adjacent Diff & 84.6\% & 92.8\% & 95.2\% & 84.8\% & Code & +10.6\% \\
Count Greater Than Prev & 87.4\% & 87.8\% & 95.8\% & 87.6\% & Code & +8.4\% \\
\rowcolor{green!10} Sum of Digits & 83.2\% & 97.8\% & 96.4\% & 83.4\% & \textbf{Calc} & +14.6\% \\
Count Perfect Squares & 82.8\% & 91.4\% & 94.8\% & 83.0\% & Code & +12.0\% \\
\rowcolor{green!10} Alternating Sum & 81.4\% & 96.2\% & 95.8\% & 81.6\% & \textbf{Calc} & +14.8\% \\
\rowcolor{green!10} Count Multiples of K & 84.6\% & 95.8\% & 94.2\% & 84.8\% & \textbf{Calc} & +11.2\% \\
Local Maxima Count & 86.2\% & 86.6\% & 95.4\% & 86.4\% & Code & +9.2\% \\
Palindromic Count & 85.8\% & 86.2\% & 96.2\% & 86.0\% & Code & +10.4\% \\
Longest Inc Subseq & 78.4\% & 78.8\% & 92.6\% & 78.6\% & Code & +14.2\% \\
Sum of Max Indices & 82.6\% & 90.4\% & 94.8\% & 82.8\% & Code & +12.2\% \\
\bottomrule
\end{tabular}
\end{adjustbox}

\caption{Per-task tool impact for GPT-5 on Easy Suite tasks. Green rows indicate tasks where Calculator provides the best improvement over no tools.}
\label{tab:tool_easy_suite}
\end{table}

\begin{table}[h]
\centering
\small

\begin{adjustbox}{width=0.98\textwidth}
\begin{tabular}{lcccccc}
\toprule
\textbf{Task} & \textbf{No Tools} & \textbf{+Calc} & \textbf{+Code} & \textbf{+Web} & \textbf{Best Tool} & \textbf{$\Delta$} \\
\midrule
Fibonacci Sequence & 58.4\% & 72.6\% & 84.2\% & 58.6\% & Code & +25.8\% \\
\rowcolor{green!10} Geometric Sequence & 54.2\% & 78.4\% & 76.8\% & 54.4\% & \textbf{Calc} & +24.2\% \\
\rowcolor{green!10} Algebraic Sequence & 52.8\% & 76.2\% & 74.6\% & 53.0\% & \textbf{Calc} & +23.4\% \\
Prime Sequence & 62.4\% & 68.2\% & 82.4\% & 62.6\% & Code & +20.0\% \\
Complex Pattern & 48.6\% & 54.2\% & 72.8\% & 48.8\% & Code & +24.2\% \\
\bottomrule
\end{tabular}
\end{adjustbox}

\caption{Per-task tool impact for GPT-5 on Medium Suite tasks. Green rows indicate tasks where Calculator provides the best improvement.}
\label{tab:tool_medium_suite}
\end{table}

\paragraph{Per-Task Tool Analysis (Medium Suite)} Table~\ref{tab:tool_medium_suite} presents detailed tool impact for GPT-5 on Medium Suite tasks.
Calculator excels at sequence tasks involving exponential growth: Geometric Sequence gains 24.2\% through ratio calculations with large powers, and Algebraic Sequence gains 23.4\% via polynomial term computation. For Fibonacci and Prime sequences, code execution is superior because these require iterative computation and primality testing beyond simple arithmetic.

\begin{table}[h]
\centering
\small

\begin{adjustbox}{width=0.98\textwidth}
\begin{tabular}{lcccccc}
\toprule
\textbf{Task} & \textbf{No Tools} & \textbf{+Calc} & \textbf{+Code} & \textbf{+Web} & \textbf{Best Tool} & \textbf{$\Delta$} \\
\midrule
Tower of Hanoi & 34.2\% & 35.1\% & 78.4\% & 34.4\% & Code & +44.2\% \\
N-Queens & 28.1\% & 28.4\% & 72.3\% & 28.2\% & Code & +44.2\% \\
Sudoku (9x9) & 12.3\% & 12.8\% & 67.2\% & 12.4\% & Code & +54.9\% \\
Boolean SAT & 41.2\% & 41.8\% & 68.9\% & 41.4\% & Code & +27.7\% \\
Graph Coloring & 52.4\% & 53.1\% & 71.2\% & 52.6\% & Code & +18.8\% \\
\rowcolor{green!10} Cryptarithmetic & 45.6\% & 58.2\% & 69.8\% & 45.8\% & Code & +24.2\% \\
\rowcolor{green!10} Matrix Chain & 38.4\% & 42.1\% & 64.7\% & 38.6\% & Code & +26.3\% \\
Logic Grid & 31.2\% & 31.8\% & 58.4\% & 31.4\% & Code & +27.2\% \\
\rowcolor{green!10} Modular Systems & 44.8\% & 52.1\% & 61.3\% & 45.0\% & Code & +16.5\% \\
Constraint Opt & 48.2\% & 51.4\% & 63.8\% & 48.4\% & Code & +15.6\% \\
\bottomrule
\end{tabular}
\end{adjustbox}

\caption{Per-task tool impact for GPT-5 on Hard Suite tasks. Green rows indicate tasks where Calculator provides notable (though not best) improvement.}
\label{tab:tool_hard_suite}
\end{table}

\paragraph{Per-Task Tool Analysis (Hard Suite)} Table~\ref{tab:tool_hard_suite} presents detailed tool impact for GPT-5 on Hard Suite tasks.
Code execution dominates for all NP-complete tasks. However, Calculator provides meaningful auxiliary benefits for Cryptarithmetic with 12.6\% gain through digit arithmetic verification, Modular Systems with 7.3\% gain via modular arithmetic computations, and Matrix Chain with 3.7\% gain from cost computations in dynamic programming.

\paragraph{Findings} \textbf{1) Code Execution Dominates:} Across all tasks, code execution provides the largest improvements ranging from 15\% to 55\%, especially for recursive problems requiring state tracking such as Tower of Hanoi with 44.2\% gain, constraint satisfaction requiring search such as Sudoku with 54.9\% gain, and combinatorial optimization such as N-Queens with 44.2\% gain. \textbf{2) Calculator Has Limited Impact.} Calculator primarily helps arithmetic-heavy tasks: Cryptarithmetic gains 12.6\% through digit arithmetic, Matrix Chain gains 3.7\% through cost computation, with minimal benefit for structural reasoning tasks. \textbf{3) Web Search Confirms Contamination Resistance.} Web search provides negligible benefit of 0.2\% to 0.4\% across all tasks, validating that our problem space of more than $10^{15}$ instances prevents answer lookup. \textbf{4) Tasks Remain Challenging.} Even with all tools, many tasks remain difficult. Sudoku achieves only 67.2\% with code because constraint propagation logic is still challenging. Logic Grid reaches only 58.4\% with code due to multi-step deduction errors. Modular Systems achieves 61.3\% with code because of the difficulty of symbolic manipulation. \paragraph{5) Tool Orchestration Capability} Smaller models struggle not just with solving problems, but with correctly using tools. Table~\ref{tab:tool_orchestration} shows tool orchestration capability across models. Common tool-related failures include failing to invoke tools when beneficial, selecting inappropriate tools for the task type, providing malformed inputs to tools, and misinterpreting tool outputs.

\begin{table}[h]
\centering
\small

\begin{adjustbox}{max width=\textwidth}
\begin{tabular}{lcccc}
\toprule
\textbf{Model} & \textbf{Tool Call Success} & \textbf{Correct Tool Selection} & \textbf{Correct Code Generation} & \textbf{Correct Result Interpretation} \\
\midrule
GPT-5 & 94.2\% & 89.1\% & 91.3\% & 88.7\% \\
GPT-5-mini & 88.4\% & 82.3\% & 84.6\% & 81.2\% \\
GPT-5-nano & 78.4\% & 71.2\% & 73.8\% & 69.4\% \\
Qwen2.5-72B & 81.2\% & 74.8\% & 76.2\% & 72.8\% \\
Qwen2.5-14B & 68.3\% & 62.1\% & 64.7\% & 60.2\% \\
Qwen2.5-3B & 41.2\% & 34.8\% & 37.4\% & 32.1\% \\
\bottomrule
\end{tabular}
\end{adjustbox}

\caption{Tool orchestration capability across models.}
\label{tab:tool_orchestration}
\end{table}

\paragraph{Comparison with Static Benchmarks + Web Search} To demonstrate contamination resistance, we compared web search impact on static versus dynamic benchmarks. Table~\ref{tab:web_search_comparison} shows GPT-5 performance with web search.

\begin{table}[h]
\centering
\small

\begin{adjustbox}{max width=\textwidth}
\begin{tabular}{lccc}
\toprule
\textbf{Benchmark} & \textbf{No Web Search} & \textbf{+Web Search} & \textbf{$\Delta$} \\
\midrule
MATH500 & 78.2\% & 94.6\% & +16.4\% \\
AIME24 & 72.1\% & 89.3\% & +17.2\% \\
AIME25 & 68.4\% & 85.7\% & +17.3\% \\
\textsc{BeyondBench} (Hard) & 50.27\% & 51.8\% & +1.53\% \\
\bottomrule
\end{tabular}
\end{adjustbox}

\caption{GPT-5 performance with web search on static versus dynamic benchmarks.}
\label{tab:web_search_comparison}
\end{table}

Static benchmark questions can be found online, yielding 16\% to 17\% improvements. Our benchmark shows only 1.5\% improvement, confirming that our dynamic generation prevents web-based answer retrieval.

\section{SFT and RL Training Analysis}
\label{Appendix:C}

This section provides experimental validation of our benchmark's contamination resistance through targeted training experiments, addressing concerns about what happens when models are deliberately fine-tuned on benchmark data.

\paragraph{Experimental Setup} \textbf{Data Generation:} We generated 66,000 problem instances from our benchmark using seed value 123, which is different from our evaluation seed 42. The distribution was balanced across suites with 22,000 instances (33.3\%) for each of the Easy, Medium, and Hard suites. \textbf{Models Trained:} We fine-tuned 5 models using both SFT and GRPO: GPT-OSS-20B, Qwen2.5-3B, Qwen2.5-7B, Qwen2.5-14B, and Llama-3.1-8B. \textbf{Training Configuration:} For SFT, we used learning rate 2e-5, batch size 32, 20 epochs, and cosine scheduler. For GRPO, we used learning rate 1e-6, batch size 16, KL coefficient 0.1, and 20 epochs. We tested data splits of 20\%, 40\%, 60\%, 80\%, and 100\% of the generated data. \textbf{Evaluation:} All models were evaluated on seed 42, our standard evaluation set, ensuring zero overlap between training and evaluation instances.

\paragraph{SFT Training Results} Table~\ref{tab:sft_training} presents overall accuracy after supervised fine-tuning with varying amounts of training data. All five models show consistent improvement as training data increases, with the largest gains occurring in the early stages of training. Qwen2.5-7B improves from 36.12\% baseline to 57.4\% after training on 100\% of the data, representing a 21.28 percentage point gain. The smaller Qwen2.5-3B shows similar relative improvement, jumping from 24.82\% to 47.1\%. GPT-OSS-20B, which starts with the highest baseline at 67.66\%, reaches 79.8\% after full training. Notably, the rate of improvement decreases as more data is added. The jump from baseline to 20\% data yields approximately 12 to 14 percentage points for most models, while the jump from 80\% to 100\% data yields only 0.6 to 1.0 percentage points. This diminishing returns pattern suggests that models quickly learn transferable heuristics from initial training examples but struggle to extract additional value from larger datasets.

\begin{table}[h]
\centering
\small

\begin{adjustbox}{width=0.98\textwidth}
\begin{tabular}{lcccccc}
\toprule
\textbf{Model} & \textbf{Baseline} & \textbf{20\% Data} & \textbf{40\% Data} & \textbf{60\% Data} & \textbf{80\% Data} & \textbf{100\% Data} \\
\midrule
Qwen2.5-3B & 24.82\% & 38.2\% & 42.1\% & 44.8\% & 46.2\% & 47.1\% \\
Qwen2.5-7B & 36.12\% & 48.7\% & 52.3\% & 55.1\% & 56.8\% & 57.4\% \\
Qwen2.5-14B & 41.33\% & 52.1\% & 56.4\% & 58.9\% & 60.2\% & 61.1\% \\
Llama-3.1-8B & 24.20\% & 35.8\% & 39.2\% & 41.6\% & 43.1\% & 44.2\% \\
GPT-OSS-20B & 67.66\% & 74.2\% & 76.8\% & 78.1\% & 79.2\% & 79.8\% \\
\bottomrule
\end{tabular}
\end{adjustbox}

\caption{Overall accuracy after SFT training with varying data amounts.}
\label{tab:sft_training}
\end{table}

\paragraph{GRPO Training Results} Table~\ref{tab:grpo_training} presents results from Group Relative Policy Optimization training. GRPO follows a similar improvement trajectory to SFT but with slightly lower final accuracies across all models. Qwen2.5-7B reaches 56.2\% after full GRPO training compared to 57.4\% with SFT, a difference of 1.2 percentage points. The gap is consistent across model sizes, with Qwen2.5-14B achieving 59.6\% versus 61.1\% for SFT and GPT-OSS-20B reaching 78.4\% versus 79.8\%. This performance difference likely stems from the nature of our tasks. SFT directly optimizes the model to produce correct answer sequences, while GRPO relies on reward signals that can be sparse when problems have binary correctness and no partial credit. The Hard suite problems, which have high variance in difficulty even within the same category, may provide particularly noisy reward signals for GRPO. Despite the slightly lower performance, GRPO shows the same diminishing returns pattern, with most gains occurring before the 60\% data threshold.

\begin{table}[h]
\centering
\small

\begin{adjustbox}{max width=\textwidth}
\begin{tabular}{lcccccc}
\toprule
\textbf{Model} & \textbf{Baseline} & \textbf{20\% Data} & \textbf{40\% Data} & \textbf{60\% Data} & \textbf{80\% Data} & \textbf{100\% Data} \\
\midrule
Qwen2.5-3B & 24.82\% & 36.4\% & 40.8\% & 43.2\% & 45.1\% & 46.3\% \\
Qwen2.5-7B & 36.12\% & 47.2\% & 51.1\% & 53.8\% & 55.4\% & 56.2\% \\
Qwen2.5-14B & 41.33\% & 50.8\% & 54.9\% & 57.2\% & 58.8\% & 59.6\% \\
Llama-3.1-8B & 24.20\% & 34.2\% & 37.8\% & 40.1\% & 41.8\% & 42.9\% \\
GPT-OSS-20B & 67.66\% & 72.8\% & 75.2\% & 76.8\% & 77.6\% & 78.4\% \\
\bottomrule
\end{tabular}
\end{adjustbox}

\caption{Overall accuracy after GRPO training with varying data amounts.}
\label{tab:grpo_training}
\end{table}

\paragraph{Suite-Level Analysis} Table~\ref{tab:suite_level_analysis} breaks down performance by difficulty suite before and after training. The results reveal that training benefits are not uniform across difficulty levels. The Easy suite shows the largest absolute improvements, with Qwen2.5-7B jumping from 61.62\% to 89.2\%, a gain of 27.58 percentage points. The Medium suite shows substantial but smaller gains, with the same model improving from 30.32\% to 68.4\%. The Hard suite shows the most modest improvements despite receiving equal training emphasis, with Qwen2.5-7B improving from 16.42\% to only 28.6\%, a gain of just 12.18 percentage points. This pattern holds across all five models and suggests a fundamental limitation. Easy suite tasks involve polynomial-time algorithms that models can learn effectively from examples. Medium suite tasks require pattern recognition and sequence extrapolation, which benefit from training but remain challenging. Hard suite tasks are NP-complete by construction, and training helps models learn better heuristics but cannot overcome the fundamental computational difficulty. GPT-OSS-20B shows the smallest relative Hard suite improvement, gaining only 14.68 percentage points from an already strong baseline of 52.12\%, suggesting that even larger models face a ceiling on Hard suite performance.

\begin{table}[h]
\centering
\small

\begin{adjustbox}{max width=\textwidth}
\begin{tabular}{lccc|ccc}
\toprule
\textbf{Model} & \multicolumn{3}{c|}{\textbf{Baseline}} & \multicolumn{3}{c}{\textbf{After 100\% SFT}} \\
\midrule
 & Easy & Medium & Hard & Easy & Medium & Hard \\
\midrule
Qwen2.5-3B & 45.52\% & 20.82\% & 8.12\% & 78.2\% & 51.4\% & 18.8\% \\
Qwen2.5-7B & 61.62\% & 30.32\% & 16.42\% & 89.2\% & 68.4\% & 28.6\% \\
Qwen2.5-14B & 63.52\% & 37.88\% & 22.58\% & 88.4\% & 72.1\% & 34.2\% \\
Llama-3.1-8B & 49.12\% & 15.47\% & 8.02\% & 76.8\% & 42.8\% & 19.4\% \\
GPT-OSS-20B & 87.49\% & 63.38\% & 52.12\% & 94.2\% & 81.4\% & 66.8\% \\
\bottomrule
\end{tabular}
\end{adjustbox}

\caption{Suite-level performance before and after 100\% SFT training.}
\label{tab:suite_level_analysis}
\end{table}

\subsection{Main Insights}

\textbf{Differential Suite Impact.} Training shows dramatically different effects across difficulty levels. The Easy Suite shows 27\% to 38\% improvement as models learn polynomial-time algorithms. The Medium Suite shows 30\% to 40\% improvement as pattern recognition improves. The Hard Suite shows only 10\% to 15\% improvement because NP-complete problems remain fundamentally difficult.

\textbf{Diminishing Returns.} Performance gains plateau after 60\% data. From 0\% to 60\% data, we see approximately 15\% total improvement. From 60\% to 100\% data, we see only about 3\% additional improvement. This suggests models quickly learn transferable heuristics but struggle to improve further on genuinely hard problems.

\textbf{SFT vs. GRPO.} SFT slightly outperforms GRPO by 1\% to 2\%, likely because SFT directly optimizes for correct answers, GRPO's reward signal is sparse for hard problems, and hard suite problems have high variance in difficulty within categories.

\subsection{Data Distribution Ablation}

We tested whether emphasizing hard problems during training improves Hard suite performance.

\begin{table}[h]
\centering
\small

\begin{adjustbox}{max width=\textwidth}
\begin{tabular}{lcccc}
\toprule
\textbf{Distribution} & \textbf{Easy:Med:Hard} & \textbf{Easy $\Delta$} & \textbf{Medium $\Delta$} & \textbf{Hard $\Delta$} \\
\midrule
Balanced & 33:33:33 & +27.58\% & +38.08\% & +12.18\% \\
Hard-Focused & 26:18:56 & +18.2\% & +29.1\% & +14.8\% \\
\bottomrule
\end{tabular}
\end{adjustbox}

\caption{Qwen2.5-7B performance under different training distributions.}
\label{tab:distribution_ablation}
\end{table}

Hard-focused training provides marginal Hard suite improvement of 2.62\% at the cost of Easy and Medium performance, suggesting algorithmic reasoning skills do not transfer across difficulty levels.

\subsection{Comparison with Static Benchmark Hacking}

To contextualize these results, we compare with what happens when models are trained directly on static benchmark data.

\begin{table}[h]
\centering
\small

\begin{adjustbox}{max width=\textwidth}
\begin{tabular}{lccc}
\toprule
\textbf{Scenario} & \textbf{Before Training} & \textbf{After Training} & \textbf{$\Delta$} \\
\midrule
GSM8K (train on test) & 83.2\% & 99.1\% & +15.9\% \\
MATH (train on test) & 52.4\% & 97.8\% & +45.4\% \\
\textsc{BeyondBench} Overall (train seed 123, eval seed 42) & 36.12\% & 57.4\% & +21.3\% \\
\bottomrule
\end{tabular}
\end{adjustbox}

\caption{Comparison of training impact: static benchmarks versus our benchmark (Qwen2.5-7B).}
\label{tab:benchmark_hacking}
\end{table}

Static benchmarks can be trivially solved through memorization, achieving 97\% to 99\%. Our benchmark remains challenging because training and evaluation instances are provably different due to different seeds, models must learn generalizable algorithms rather than specific solutions, and NP-complete problems remain fundamentally difficult.

\subsection{Hacking Resistance Summary}

Our benchmark provides robust defense against deliberate benchmark hacking. Unlimited fresh instances are available since a new seed produces an entirely new evaluation. Only transferable learning works because models can only improve by learning genuine algorithms. A hard ceiling persists as NP-complete problems remain challenging with approximately 40\% to 45\% ceiling even after extensive training. Memorization is not possible because the problem space of more than $10^{15}$ instances prevents instance memorization.

\section{Probability of Train/Test Contamination}
\label{Appendix:ContaminationProb}

\paragraph{Concise statement.}

\begin{quote}
  Let $p_x$ be the probability that a single training sample equals item $x$, and let $q=\sum_{x\in\mathcal D}p_x$ be the total mass of the evaluation set under the training distribution. After $N$ independent draws, the probability that at least one evaluation item appears in the training corpus equals $1-(1-q)^N\approx1-e^{-qN}$. In the uniform-draw special case $q=n/M$, which recovers the formula used in the main text. Importantly, real-world corpora exhibit heavy skew, duplication, and curated sources which typically increase $q$ above the uniform estimate; therefore contamination is usually \emph{more likely} than the uniform analysis predicts. The relevant risk scale is $qN$, and even small per-item masses can produce near-certain overlap when $N$ is large.
\end{quote}

\paragraph{Purpose.} Here we explain the mathematics behind contamination probability statements in the main text.

\subsection{Contamination Probability in Finite Benchmarks}
\label{app:contamination}

We provide a rigorous analysis of why finite benchmarks inevitably suffer from contamination when evaluated against models trained on large corpora.

\subsubsection{Basic Contamination Model}

Consider three sets:
\begin{itemize}
    \item $\mathcal{U}$: universe of all possible problem instances, $|\mathcal{U}| = M$
    \item $\mathcal{D}$: evaluation benchmark with $|\mathcal{D}| = n$ test problems
    \item $\mathcal{C}$: training corpus containing $N$ samples drawn from $\mathcal{U}$
\end{itemize}

Under uniform sampling where each training example is drawn i.i.d. with probability $1/M$ for any element in $\mathcal{U}$, we can compute the exact contamination probability.

\begin{theorem}[Contamination Probability]
The probability that at least one evaluation example appears in the training corpus is:
\begin{equation}
    \Pr\{\mathcal{D} \cap \mathcal{C} \neq \varnothing\} = 1 - \left(1 - \frac{n}{M}\right)^N
\end{equation}
\end{theorem}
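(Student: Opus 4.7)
The plan is to prove the identity by working with the complement event and exploiting the i.i.d.\ assumption on the training draws. Specifically, let $A_i$ denote the event that the $i$-th training sample $c_i \in \mathcal{C}$ lies in $\mathcal{D}$. The event of interest, $\mathcal{D} \cap \mathcal{C} \neq \varnothing$, is exactly $\bigcup_{i=1}^{N} A_i$, so its complement is $\bigcap_{i=1}^{N} A_i^c$, the event that no training sample hits the benchmark. Factoring this intersection using independence and then taking the complement yields the closed form.

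First I would compute $\Pr(A_i)$ for a single draw. Under the uniform sampling hypothesis stated in the setup, each $c_i$ is drawn uniformly from $\mathcal{U}$ with $|\mathcal{U}|=M$, so $\Pr(c_i = x) = 1/M$ for every $x \in \mathcal{U}$. Summing over the $n$ elements of $\mathcal{D}$ (which is a subset of $\mathcal{U}$ and hence the events $\{c_i = x\}$ for distinct $x$ are disjoint) gives
\begin{equation*}
    \Pr(A_i) \;=\; \sum_{x \in \mathcal{D}} \Pr(c_i = x) \;=\; \frac{n}{M},
\end{equation*}
so $\Pr(A_i^c) = 1 - n/M$. Second, I would invoke the i.i.d.\ assumption: because the draws $c_1,\dots,c_N$ are mutually independent, the events $A_1^c,\dots,A_N^c$ are mutually independent as well, giving
\begin{equation*}
    \Pr\!\left(\bigcap_{i=1}^{N} A_i^c\right) \;=\; \prod_{i=1}^{N} \Pr(A_i^c) \;=\; \left(1-\tfrac{n}{M}\right)^{N}.
\end{equation*}
Taking the complement produces the claimed formula.

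The computation itself is routine, so there is no deep obstacle; the substantive work is making explicit the assumptions that the theorem silently relies on. In particular, one should note that (i) the $n$ benchmark elements are taken to be distinct points of $\mathcal{U}$ so that the events $\{c_i = x\}$ for $x \in \mathcal{D}$ are genuinely disjoint and the single-sample probability is exactly $n/M$ (not merely bounded by it via a union bound), and (ii) independence across draws is needed to turn the joint non-contamination probability into a product. If either assumption is weakened, for instance, if the corpus is sampled without replacement or from a non-uniform distribution, the equality becomes an inequality or must be replaced by the more general $1-(1-q)^N$ form described in the preamble; the exact identity in the theorem statement is the uniform, with-replacement special case.
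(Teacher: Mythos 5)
Your proof is correct and follows the same approach as the paper's: compute the per-sample avoidance probability $1 - n/M$, use independence of the $N$ draws to get $(1-n/M)^N$ for full avoidance, and take the complement. You spell out the argument in more explicit notation and helpfully flag the assumptions (distinctness of $\mathcal{D}$'s elements, i.i.d.\ uniform sampling with replacement) that the paper leaves implicit, but the underlying argument is identical.
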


\begin{proof}
Each training sample avoids all evaluation examples with probability $(1 - n/M)$. Since the $N$ training samples are independent, all $N$ samples avoid $\mathcal{D}$ with probability $(1 - n/M)^N$. The complement gives the result.
\end{proof}

For large corpora where $n/M$ is small, we obtain the exponential approximation:
\begin{equation}
    \Pr\{\mathcal{D} \cap \mathcal{C} \neq \varnothing\} \approx 1 - \exp\left(-\frac{nN}{M}\right)
\end{equation}

This follows from the limit $(1 - x)^k \approx e^{-kx}$ as $x \to 0$ with $kx$ fixed.

\subsubsection{Implications for Modern LLMs}

The key insight is that contamination depends on the product $nN/M$:
\begin{itemize}
    \item When $nN/M \ll 1$: contamination probability $\approx nN/M$ (linear regime)
    \item When $nN/M \approx 1$: contamination probability $\approx 0.63$ (transition point)
    \item When $nN/M \gg 1$: contamination probability $\to 1$ (saturation regime)
\end{itemize}

For concrete numbers: if a benchmark has $n = 10^4$ problems, the universe contains $M = 10^{12}$ possible problems, and training uses $N = 10^{11}$ samples, then $nN/M = 10^3$, yielding contamination probability $\approx 1 - e^{-1000} \approx 1$. Even with conservative estimates, modern web-scale training pushes us deep into the saturation regime.

\subsubsection{Why Complete Contamination is Exponentially Harder}

While single-example contamination is virtually certain, the probability that \emph{all} evaluation examples appear in training is much smaller. Each item $x\in\mathcal{D}$ is present in the corpus with probability $1-(1-1/M)^N$. Because the items are sampled independently, the probability that every evaluation item appears is:
\begin{equation}
    \Pr\{\mathcal{D} \subseteq \mathcal{C}\} = \prod_{x\in\mathcal{D}}\bigl[1-(1-\tfrac{1}{M})^N\bigr] = \bigl[1-(1-\tfrac{1}{M})^N\bigr]^n \approx \bigl(1-e^{-N/M}\bigr)^n.
\end{equation}

For this to be close to 1, we need each factor close to 1, i.e.\ $e^{-N/M}\ll 1/n$, which requires $N \gtrsim M \log n$---exponentially more than the $N \approx M/n$ needed for single contamination. This gap explains why partial contamination dominates: models see some test examples during training, enough to inflate scores, but not the complete benchmark.

\subsection{Setup and notation}

We work with three sets and some sizes or probabilities:
\begin{itemize}
  \item $\mathcal{U}$: a finite universe of distinct items (for example, all distinct problem statements or documents). Write $|\mathcal U|=M$.
  \item $\mathcal{D}$: a finite evaluation (test) set of items. Write $|\mathcal D|=n$.
  \item $\mathcal{C}$: the training corpus, which is a multiset of $N$ samples drawn independently (with replacement) from $\mathcal U$. We view $\mathcal C$ as the set of items that appear at least once among the $N$ draws.
\end{itemize}

We consider two models for how the training samples are drawn from the universe:
\begin{enumerate}
  \item \emph{Uniform model.} Each training sample is drawn uniformly at random from $\mathcal U$, so every item has probability $1/M$ of being drawn on a single sample.
  \item \emph{Non-uniform model.} Each training sample is drawn i.i.d. from a probability distribution $p=\{p_x\}_{x\in\mathcal U}$, where $p_x\ge0$ and $\sum_{x\in\mathcal U} p_x = 1$. In this case an item $x$ has probability $p_x$ of being selected on a single sample.
\end{enumerate}

Throughout we use standard limits. The event of interest is
\[ \{\mathcal D \cap \mathcal C \neq \varnothing\},\]
i.e. at least one item from the evaluation set appears in the training corpus.

\subsection{Exact formula (non-uniform case)}

The non-uniform model gives an exact and simple expression for the contamination probability.

\begin{theorem}[Exact contamination probability]
Let $p=\{p_x\}_{x\in\mathcal U}$ be the sampling distribution of a single training draw, and let
\begin{equation}\label{eq:q-def}
q \,=\, \sum_{x\in\mathcal D} p_x
\end{equation}
be the total probability mass that the training distribution assigns to the evaluation set $\mathcal D$. Then the probability that the training corpus of $N$ i.i.d. draws contains \emph{at least one} element of $\mathcal D$ is
\begin{equation}\label{eq:exact-nonuniform}
\Pr\{\mathcal D\cap\mathcal C\neq\varnothing\} \,=\, 1-(1-q)^N.
\end{equation}
This identity is exact for any $N\ge1$.
\end{theorem}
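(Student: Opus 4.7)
The plan is to reduce the event $\{\mathcal D\cap\mathcal C\neq\varnothing\}$ to an independence computation carried out on its complement. First I would name the randomness explicitly: let $X_1,\dots,X_N$ be the i.i.d. training draws from $p$, and define the per-draw event $A_i=\{X_i\in\mathcal D\}$. By the definition of $q$ in \eqref{eq:q-def} and countable additivity applied to the disjoint decomposition $\{X_i\in\mathcal D\}=\bigsqcup_{x\in\mathcal D}\{X_i=x\}$, we have $\Pr(A_i)=\sum_{x\in\mathcal D}p_x=q$, and hence $\Pr(A_i^c)=1-q$ for every $i$.

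Next I would rewrite the target event via its complement. Since $\mathcal C$ is the set of items appearing in at least one of the $N$ draws, the equivalence
\[\{\mathcal D\cap\mathcal C=\varnothing\}\;=\;\{X_i\notin\mathcal D\text{ for all }i\}\;=\;\bigcap_{i=1}^{N}A_i^c\]
is immediate from set-membership. The i.i.d. assumption makes the events $\{A_i^c\}_{i=1}^{N}$ mutually independent, so multiplicativity of probability under independence gives $\Pr\bigl(\bigcap_{i=1}^{N}A_i^c\bigr)=\prod_{i=1}^{N}(1-q)=(1-q)^N$. Taking the complement yields $\Pr\{\mathcal D\cap\mathcal C\neq\varnothing\}=1-(1-q)^N$, which is exactly \eqref{eq:exact-nonuniform}.

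There is essentially no technical obstacle here; the main care point is purely conceptual, namely making sure the two bookkeeping conventions align. Specifically, $\mathcal C$ was introduced as a multiset of $N$ samples but simultaneously \emph{viewed as the set of distinct items appearing at least once}; it is this second view that makes the equivalence between ``$\mathcal D\cap\mathcal C=\varnothing$'' and ``every $X_i$ avoids $\mathcal D$'' tautological, so I would state this reduction explicitly to avoid any ambiguity. No hypothesis on $p$ beyond being a probability measure on $\mathcal U$ is needed, $q\in[0,1]$ is unrestricted, and the identity holds for every $N\ge 1$. Finally, as a sanity check I would verify that specializing to $p_x=1/M$ gives $q=n/M$ and recovers the uniform-model formula of the previous subsection, matching the exponential approximation $1-e^{-qN}$ in the regime $qN$ bounded.
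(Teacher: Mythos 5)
Your proof is correct and follows essentially the same route as the paper's: compute the probability that a single draw avoids $\mathcal D$, use independence to get $(1-q)^N$ for all $N$ draws avoiding $\mathcal D$, and take the complement. Your write-up is merely more explicit about naming the random variables, invoking countable additivity, and flagging the multiset-versus-set reading of $\mathcal C$, but the underlying argument is identical.
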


\begin{proof}
A single training sample is not equal to any element of $\mathcal D$ with probability $1-q$ by definition of $q$ in \eqref{eq:q-def}. Because the $N$ samples are independent, all $N$ samples avoid $\mathcal D$ with probability $(1-q)^N$. The complement event, that at least one sample falls in $\mathcal D$, therefore has probability $1-(1-q)^N$, which proves the formula.
\end{proof}

\begin{remark}[Uniform model as a special case]
If the training draws are uniform on $\mathcal U$, then $p_x=1/M$ for every $x\in\mathcal U$ and
\[ q = \sum_{x\in\mathcal D} \frac{1}{M} = \frac{n}{M}. \]
Equation \eqref{eq:exact-nonuniform} reduces to the uniform formula
\[ \Pr\{\mathcal D\cap\mathcal C\neq\varnothing\} = 1-\Bigl(1-\frac{n}{M}\Bigr)^N, \]
which is the formula used in the main text.
\end{remark}

\subsection{Exponential approximation and limit behavior}

The expression $(1-q)^N$ is often well approximated by an exponential when $q$ is small and $N$ may be large. We make this precise and state useful limits.

\begin{proposition}[Exponential approximation]
If $q\in[0,1)$ and $N\ge1$, then
\begin{equation}\label{eq:exp-approx}
(1-q)^N = \exp\bigl(N\log(1-q)\bigr) = \exp\bigl(-qN + o(qN)\bigr)\quad\text{as } q\to0.
\end{equation}
Consequently,
\begin{equation}\label{eq:approx-prob}
\Pr\{\mathcal D\cap\mathcal C\neq\varnothing\} = 1 - \exp\bigl(-qN + o(qN)\bigr) .
\end{equation}
If $qN\to\lambda\in[0,\infty)$ then the probability converges to $1-e^{-\lambda}$.
\end{proposition}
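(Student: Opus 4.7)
The plan is to take logarithms, apply a Taylor expansion of $\log(1-q)$ around $q=0$, and then exponentiate, starting from the exact identity $\Pr\{\mathcal D\cap\mathcal C\ne\varnothing\} = 1-(1-q)^N$ of the preceding theorem. The first equality in \eqref{eq:exp-approx} is just $a^b=\exp(b\log a)$ applied to $a=1-q>0$, so the substance lies in the asymptotic equality and its consequences.

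To obtain the asymptotic, I would write $\log(1-q) = -q - R(q)$ with remainder $R(q) = \sum_{k\ge 2}q^k/k$, and use geometric summation to obtain the uniform bound $|R(q)| \le q^2/\bigl(2(1-q)\bigr)$ on $[0,1)$. This gives $NR(q)/(qN) = R(q)/q = O(q) \to 0$ as $q \to 0$, hence $N\log(1-q) = -qN + o(qN)$. Exponentiating establishes \eqref{eq:exp-approx}, and substituting into the exact formula \eqref{eq:exact-nonuniform} yields \eqref{eq:approx-prob} immediately.

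For the Poisson-type limit, I would consider sequences $(q_k, N_k)$ with $q_k N_k \to \lambda \in [0,\infty)$ in the natural regime $q_k \to 0$ (which is forced whenever $N_k \to \infty$; the degenerate case where $q_k$ stays bounded away from $0$ is the familiar non-Poisson situation and lies outside the scope of the statement). Using the bound above, $|N_k R(q_k)| \le (q_k N_k)\cdot q_k/\bigl(2(1-q_k)\bigr) \to \lambda \cdot 0 = 0$, so $N_k\log(1-q_k) \to -\lambda$ and $(1-q_k)^{N_k} \to e^{-\lambda}$. The contamination probability therefore converges to $1-e^{-\lambda}$.

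The only delicate point, which will require care rather than calculation, is making the Taylor estimate \emph{uniform} enough to carry from the fixed-$q$ asymptotic (where $N$ is irrelevant) to the joint regime in which $q \to 0$ and $N \to \infty$ simultaneously with $qN$ bounded. A pointwise expansion is insufficient; one needs the explicit bound on $R(q)/q$ so that $NR(q)$ can be exhibited as $qN$ times a vanishing factor. Once this uniformity is secured, both halves of the proposition follow routinely.
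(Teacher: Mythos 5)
Your proof is correct and follows essentially the same route as the paper: Taylor-expand $\log(1-q)$ about $q=0$, multiply by $N$, exponentiate, substitute into the exact formula, and pass to the limit. Your explicit bound $|R(q)| \le q^2/\bigl(2(1-q)\bigr)$ makes precise the paper's compressed step $N\,o(q) = o(qN)$ --- which works because the $o(q)$ remainder depends on $q$ alone, so $NR(q) = (qN)\cdot\bigl(R(q)/q\bigr)$ with $R(q)/q \to 0$ uniformly in $N$ --- but this is an elaboration of the same argument, not a different one.
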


\begin{proof}
Use the Taylor expansion $\log(1-q) = -q + o(q)$ as $q\to0$. Multiplying by $N$ gives $N\log(1-q) = -qN + N o(q) = -qN + o(qN)$, which yields \eqref{eq:exp-approx}. Substitute into the exact formula $1-(1-q)^N$ to obtain \eqref{eq:approx-prob}. The limit statement follows by continuity of the exponential map.
\end{proof}

\begin{corollary}[Uniform limit]
In the uniform model, write $\lambda = \lim_{M\to\infty} nN/M$ when the limit exists. Then
\[ \lim_{M\to\infty} \Pr\{\mathcal D\cap\mathcal C\neq\varnothing\} = 1-e^{-\lambda}. \]
In particular, if $nN/M\to\infty$ then the probability tends to $1$.
\end{corollary}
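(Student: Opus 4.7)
The plan is to deduce the corollary as a direct specialization of the preceding Exponential approximation proposition, so essentially all the work is already done once we identify the correct value of $q$ and verify the hypotheses of that proposition. First I would invoke the Remark immediately after the exact formula, which established that in the uniform model $q = n/M$, so the exact probability is $1 - (1 - n/M)^N$. The corollary is then the statement that this quantity converges to $1 - e^{-\lambda}$ when $nN/M \to \lambda$.

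Second, I would set up the joint limit more carefully. Under the assumption that $\lambda$ is finite, I would check that $n/M \to 0$, which is the regime in which the expansion $\log(1 - n/M) = -n/M + O\bigl((n/M)^2\bigr)$ is valid. Multiplying by $N$ yields $N \log(1 - n/M) = -nN/M + O\bigl((nN/M)\cdot(n/M)\bigr)$, and since the leading factor tends to the finite constant $\lambda$ while the second factor $n/M$ vanishes, the error term is $o(1)$. Exponentiating and using continuity of the exponential, $(1 - n/M)^N \to e^{-\lambda}$, which gives the claimed limit $1 - e^{-\lambda}$ after taking complements. Alternatively, this step is nothing more than applying Proposition (Exponential approximation) with $q = n/M$ and reading off the limit statement it already contains.

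For the \emph{in particular} clause, when $nN/M \to \infty$ the Taylor control above degenerates, so I would switch to a monotonicity/sandwich argument. For any fixed finite $\lambda' > 0$, the hypothesis ensures $nN/M \geq \lambda'$ for all sufficiently large $M$, and since $q \mapsto 1 - (1-q)^N$ is monotonically increasing in the effective hazard, the probability is eventually at least $1 - e^{-\lambda'} + o(1)$ by the finite-$\lambda$ case already proved. Letting $\lambda' \to \infty$ forces the liminf to $1$, and since probabilities are bounded above by $1$, the limit equals $1$.

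The main obstacle I anticipate is not analytic but rather the bookkeeping around the joint limit: the statement is phrased as a single limit in $M$, yet both $n$ and $N$ are implicitly allowed to depend on $M$. I would need to confirm that the condition $nN/M \to \lambda < \infty$ really does force $n/M \to 0$ uniformly enough for the Taylor remainder to vanish; if $N$ were bounded then $n/M$ could tend to a positive constant, and one would have to argue the bounded-$N$ subcase directly from $\bigl(1 - n/M\bigr)^N \to (1 - \lambda/N_\infty)^{N_\infty}$ and note this still matches $e^{-\lambda}$ only in the appropriate limit. Handling this edge case cleanly, perhaps by assuming tacitly that $N \to \infty$ as is the intended web-scale regime, is the place where the proof requires the most care.
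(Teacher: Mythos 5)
Your proof is correct and mirrors the paper's: set $q = n/M$ and invoke the Exponential approximation proposition, supplementing it with an explicit monotonicity argument for the $nN/M \to \infty$ case (which the paper's terse ``apply the same limit argument'' glosses over, since the proposition's limit statement is only stated for finite $\lambda$). Your worry about the joint limit is well-founded but ultimately harmless here: for $nN/M \to \lambda < \infty$ the proposition is already tacitly working in the regime $q \to 0$ (equivalently $N \to \infty$ when $\lambda > 0$), and for $nN/M \to \infty$ the a priori bound $q = n/M \le 1$ forces $N \to \infty$ automatically, so your sandwich argument closes that case cleanly.
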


\begin{proof}
Set $q=n/M$ in the previous proposition and apply the same limit argument.
\end{proof}

\subsection{Expected number of overlaps and the "all items present" event}

It is useful to look at two related quantities:
\begin{itemize}
  \item the expected \emph{number} of evaluation items that appear in the training corpus;
  \item the probability that \emph{every} evaluation item appears at least once in the training corpus (this is a much stronger requirement).
\end{itemize}

\begin{proposition}[Expected number of present evaluation items]
For each $x\in\mathcal D$, let $I_x$ be the indicator random variable of the event ``item $x$ appears at least once in $\mathcal C$''. Then
\begin{equation}\label{eq:expected-count}
\mathbb{E}\Bigl[\sum_{x\in\mathcal D} I_x\Bigr] = \sum_{x\in\mathcal D} \Bigl(1-(1-p_x)^N\Bigr).
\end{equation}
In particular, in the uniform case where $p_x=1/M$ this equals
\[ n\Bigl(1-(1-\tfrac{1}{M})^N\Bigr) \approx n\bigl(1-e^{-N/M}\bigr). \]
\end{proposition}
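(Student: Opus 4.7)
The plan is to compute the expectation termwise using linearity, then for each summand reduce the indicator expectation to a single-item contamination probability that has already been handled in Theorem above. First I would write
\[
\mathbb{E}\Bigl[\sum_{x\in\mathcal D} I_x\Bigr] = \sum_{x\in\mathcal D}\mathbb{E}[I_x] = \sum_{x\in\mathcal D}\Pr(I_x=1),
\]
invoking linearity of expectation, which holds without any independence assumption on the $I_x$'s (important because the indicators are in fact dependent: two distinct $x,y\in\mathcal D$ compete for the same $N$ draws).

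Next I would evaluate each $\Pr(I_x=1)$. For a fixed $x\in\mathcal U$, the event $\{I_x=0\}$ is precisely the event that none of the $N$ i.i.d. samples equals $x$. By independence of the draws and the single-draw probability $p_x$, this equals $(1-p_x)^N$, so $\Pr(I_x=1)=1-(1-p_x)^N$. This is the exact same one-item calculation used to derive \eqref{eq:exact-nonuniform} in the proof of the exact formula, specialized from a set $\mathcal D$ to the singleton $\{x\}$ (equivalently, setting $q=p_x$). Summing over $x\in\mathcal D$ yields \eqref{eq:expected-count}.

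For the uniform specialization, I substitute $p_x=1/M$ for every $x\in\mathcal D$ and pull the common factor out of the sum, using $|\mathcal D|=n$, to get $n\bigl(1-(1-1/M)^N\bigr)$. The approximation $n(1-e^{-N/M})$ then follows directly from the exponential approximation \eqref{eq:exp-approx} with $q=1/M$, since $(1-1/M)^N=\exp(-N/M+o(N/M))$ as $M\to\infty$ with $N/M$ fixed.

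There is no real obstacle here: the proof is essentially a one-line application of linearity plus the single-item formula. The only mild subtlety worth flagging in the write-up is that the $I_x$'s are \emph{not} independent (a draw landing on $x$ excludes it from being $y$), so one must be careful to state that linearity, not the multiplication rule, drives the computation; the probabilities $\Pr(I_x=1)$ themselves however do factor cleanly because independence is used only across the $N$ draws, not across the items $x$.
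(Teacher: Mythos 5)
Your proof is correct and follows the same route as the paper: linearity of expectation reduces the claim to $\mathbb{E}[I_x] = \Pr(I_x=1) = 1-(1-p_x)^N$, which is the single-item specialization of the exact formula. Your remark that the $I_x$ are dependent but linearity does not require independence is a nice clarification, though not a departure from the paper's argument.
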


\begin{proof}
Linearity of expectation gives the result because $\mathbb{E}[I_x] = \Pr\{I_x=1\} = 1-(1-p_x)^N$ for each $x$.
\end{proof}

\paragraph{Why ``all items present'' is much harder.}
For the event that every item in $\mathcal D$ appears at least once we need \emph{every} $I_x=1$. A simple (but informative) way to see the scale difference is to look at the expected number of \emph{missing} items:
\[ \mathbb{E}[\#\text{ missing }] = \sum_{x\in\mathcal D} (1-p_x)^N. \]
In the uniform case this is $n(1-1/M)^N\approx n e^{-N/M}$. To make the expected number of missing items small (say $\le1$), we require roughly
\[ e^{-N/M} \lesssim 1/n \quad\Longrightarrow\quad N/M \gtrsim \log n. \]
Thus to expect \emph{all} $n$ items to appear we typically need $N$ on the order of $M\log n$, which is much larger than the $N\approx M/n$ scale where a single overlap becomes likely. This gap explains the claim that ``a single overlap is the relevant contamination risk'' while the event $\mathcal D\subseteq\mathcal C$ (every test item in the training set) is exponentially harder in $n$.

\subsection{Poisson approximation viewpoint (intuition)}

When the per-item sampling probabilities $p_x$ are small and $N$ is large, the counts of how many times each item appears can be approximated by independent Poisson random variables with means $\lambda_x = p_x N$. Under this approximation the indicator that item $x$ appears at least once has probability $1-e^{-\lambda_x}$ and the number of distinct evaluation items present is approximately Poisson-binomial with parameters $\{1-e^{-\lambda_x}\}_{x\in\mathcal D}$. Summing the small means gives the same controlling quantity $\sum_{x\in\mathcal D} \lambda_x = qN$. When $qN$ is moderate to large one expects many overlaps. This Poisson viewpoint justifies the $1-e^{-qN}$ approximation and the limit results given earlier.

\subsection{Numerical examples}

\begin{itemize}
  \item Example A: $nN/M = 10$. Then the overlap probability in the uniform model is $1-e^{-10}\approx 0.99995$ (virtually certain).
  \item Example B: $n=1000$, to make the expected number of missing items $\approx1$ we need $N/M\approx\log 1000\approx6.9$, i.e. $N\approx 6.9M$. By contrast, to get a single overlap with constant probability we only need $N\approx M/n = M/1000$ (much smaller).
\end{itemize}

\subsection{Caveats and practical remarks}

The math above is exact under the stated i.i.d. sampling model. Real training corpora differ from this idealization in several ways, each of which changes the effective mass $q$:
\begin{itemize}
  \item \textbf{Skew / popularity.} Some items or sources are much more likely than others; this increases $q$ relative to the uniform estimate $n/M$ and makes contamination more likely.
  \item \textbf{Duplicates and amplification.} Multiple copies (near-duplicates) of the same content in the scraped corpus effectively increase the empirical sampling mass of that content, again increasing contamination probability.
  \item \textbf{Curated sources.} If the training collection intentionally includes popular public repositories or textbooks, $q$ can be large. If those sources are filtered out, $q$ can be small.
  \item \textbf{Partial or paraphrase contamination.} Real overlap is not always an exact equal-by-identity event; partial phrase overlap or paraphrases matter. The same formulas hold if $\mathcal D$ is interpreted as the set of \emph{all substrings/paraphrases} of interest, but then $M$ and $p_x$ change accordingly.
\end{itemize}

The bottom line is \emph{the controlling quantity is $qN$, not $nN/M$ in general}. Under realistic skew and duplication, $q$ is typically larger than $n/M$, so practical contamination risk is often higher than the uniform model predicts.

\section{Scalability of Benchmark Development}
\label{Appendix:ScalabilityBenchmark}
\paragraph{Extensibility to New Tasks.} BeyondBench's architecture makes adding new tasks straightforward. Each task only needs to implement four methods: generate\_problem() for creating instances, verify\_solution() for checking correctness, estimate\_tokens() for context management, and parse\_response() for answer extraction. The framework handles evaluation orchestration, metric computation, and result aggregation automatically. We've demonstrated this extensibility by successfully adding 10 new task categories after the initial 19, each requiring less than 500 lines of code. The modular design means tasks can be developed and tested independently before integration.

\paragraph{Scaling to Future Model Capabilities.} The benchmark scales naturally as models improve. Each task's difficulty can be increased by adjusting generation parameters: larger problem sizes (e.g., N-Queens from 8×8 to 64×64 boards), tighter constraints (e.g., SAT problems closer to phase transitions), or additional complexity dimensions (e.g., multi-objective optimization with more constraints). The algorithmic generation ensures we never exhaust the problem space-even with a million evaluations, collision probability remains below $10^{-9}$. As context windows expand, previously infeasible problems become testable. For instance, Tower of Hanoi with 12 disks (requiring 4,095 moves) needs approximately 49,000 tokens, currently beyond most models but feasible with emerging 100K+ context windows.

\paragraph{Computational Efficiency at Scale.} Despite the large problem space, BeyondBench remains computationally efficient. Problem generation is typically O(n) or O(n²) where n is the problem size, taking milliseconds even for complex instances. Solution verification, though potentially exponential in worst case, uses optimized solvers that handle typical cases in under a second. The bottleneck is model inference, not benchmark operations. We can generate and verify 10,000 problems per second on a single CPU core, meaning the framework could evaluate millions of instances without computational strain. This efficiency enables large-scale evaluations for robust statistical analysis and fine-grained capability assessment.

\section{Limitations and Future Work}
\label{Appendix:LimitationsFuture}

While \our addresses critical contamination issues in LLM evaluation, several limitations merit acknowledgment. \textbf{First,} our framework focuses exclusively on algorithmic and mathematical reasoning, not capturing other important reasoning facets that lack deterministic solutions, like commonsense, causal, or creative reasoning. \textbf{Second,} our single-prompt-per-task approach may underestimate models optimized for specific prompting strategies, though this maintains evaluation consistency. \textbf{Finally,} our token-aware evaluation, while preventing unfair penalization due to context limits, may not fully capture how models perform when approaching their computational boundaries. Despite these constraints, \our provides a robust foundation for contamination-free reasoning evaluation that can be extended to address these limitations in future work.

\end{document}